\newtheorem{theorem}{Theorem}
\newtheorem{lemma}{Lemma}
\newtheorem{definition}{Definition}
\newtheorem{remark}{Remark}
\title{Multi-layer random features and the approximation power of neural networks}
\author[1]{Rustem Takhanov}
\affil[1]{%
    Mathematics Dept.\\
    Nazarbayev University\\
    Astana, Kazakhstan
}
\newif\ifTR
\begin{document}
\maketitle

\begin{abstract}
A neural architecture with randomly initialized weights, in the infinite width limit, is equivalent to a Gaussian Random Field whose covariance function is the so-called Neural Network Gaussian Process kernel (NNGP). We prove that a reproducing kernel Hilbert space (RKHS) defined by the NNGP contains only functions that can be approximated by the architecture. To achieve a certain approximation error the required number of neurons in each layer is defined by the RKHS norm of the target function. Moreover, the approximation can be constructed from a supervised dataset by a random multi-layer representation of an input vector, together with training of the last layer's weights.

For a 2-layer NN and a domain equal to an $n-1$-dimensional sphere in ${\mathbb R}^n$, we compare the number of neurons required by Barron's theorem and by the multi-layer features construction. We show that if eigenvalues of the integral operator of the NNGP decay slower than $k^{-n-\frac{2}{3}}$ where $k$ is an order of an eigenvalue, then our theorem guarantees a more succinct neural network approximation than Barron's theorem. We also make some computational experiments to verify our theoretical findings. Our experiments show that realistic neural networks easily learn target functions even when both theorems do not give any guarantees.
\end{abstract}

\section{Introduction}\label{sec:intro}
Kernel methods in machine learning (ML) is a classical research topic that has found applications in classification/regression~\cite{10.5555/1481236}, dimension reduction~\cite{278d5228-bcc8-303d-9ff2-1c1f4524759b}, generative modeling~\cite{pmlr-v37-li15}, probability density estimation, non-parametric statistics, spline interpolation~\cite{doi:10.1137/1.9781611970128}, and many other areas. Being a fundamental mathematical object, kernels are not only applicable in practice but also suitable for theoretical analysis. Recently the field became quite active again due to the discovered fact that neural networks (NN), under the so-called infinite width limit, behave pretty much like the kernel regression. To any architecture of a neural network one can correspond a specific kernel, called the neural tangent kernel (NTK)~\cite{NEURIPS2018_5a4be1fa}, whose structure is defined by the geometry of a reproducing kernel Hilbert space (RKHS). A major question in this field is to identify aspects of gradient-based learning with this architecture that can be explained by the NTK.

The NTK is not the first kernel that appeared in the theory of neural networks. Another interesting case is the Neural Network Gaussian Process kernel (NNGP), which was suggested earlier by~\cite{Neal1996}. Unlike the NTK, the NNGP does not explain the behavior of NNs trained by gradient descent, but it helps to understand the structure of an NN whose weights are initialized randomly. It turns out that when a distribution of weights is normal (with zero mean and proper scaling of the variance), in the infinite width limit, an NN behaves like a Gaussian Random Field whose covariance function is the NNGP. Moreover, as was shown by~\cite{DBLP:conf/nips/DanielyFS16}, random networks induce representations that approximate the RKHS defined by the NNGP.

The mentioned results motivate us to formulate the following question: is a ball in the RKHS of the NNGP a natural set of functions approximated well by a given architecture of NNs?
To answer the question, we consider a feed-forward NN architecture with a non-linearity $\sigma: {\mathbb R}\to {\mathbb R}$, an architecture with $L$ hidden layers of neurons and a one-dimensional output, i.e. the mapping ${\mathbf x}\to {\mathbf w}^\top\sigma(W^{(L)}\sigma(\cdots \sigma(W^{(1)} {\mathbf x})\cdots)$ parameterized by matrices $W^{(h)}\in {\mathbb R}^{n_h\times n_{h-1}}$, ${\mathbf w}\in {\mathbb R}^{n_L}$ (we will call them $L+1$-NN). In the infinite width limit, such an architecture is fully defined by $n$ and $\sigma$ itself.

Based on our understanding of the RKHS of the NNGP $K$, denoted by $\mathcal{H}_K$, as a space that is ``native'' to the NN architecture, we expect that a statement similar to Barron's theorem should hold, in which the complexity of a function is measured by $\|f\|_{\mathcal{H}_K}$, instead of the Barron norm, denoted by $C_{f, \boldsymbol{\Omega}}$. Indeed, we prove a general statement (Theorem~\ref{main-theorem}) whose specification for $L=1$ looks very analogous to Barron's theorem, with a role of $\|f\|_{\mathcal{H}_K}$ being analogous to the role of $C_{f, \boldsymbol{\Omega}}$.

Theorem~\ref{main-theorem} guarantees that the unit ball in $\mathcal{H}_K$, denoted by $B_{\mathcal{H}_K}$, indeed contains only functions that are very well approximable by our architecture. For $L=1$ we only require that the activation function $\sigma$ is bounded.
For many practical activation functions, this condition is satisfied. For the ReLU it is not satisfied, but it is satisfied for $\sigma_1(x) = {\rm ReLU}(x) - {\rm ReLU}(x - 1)$ and therefore, the number of neurons required to approximate a function $f$ by a ReLU 2-NN is proportional to $\|f\|^2_{\mathcal{H}_K}$ where $K$ is the NNGP for $\sigma_1$. The multi-layer case ($L\geq 2$) requires boundedness of all derivatives up to the fourth degree, which is satisfied for such activation functions as a sigmoid, a hyperbolic tangent, erf, a cosine, and a Gaussian.

To put our findings into a broader context of approximation theory, we question whether an approximation guarantee of Theorem~\ref{main-theorem} for $L=1$ gives any advantage over the classical Barron's theorem. This poses a general problem: how are the Barron space for $\boldsymbol{\Omega}$ and the RKHS $\mathcal{H}_K$ related? Specifically, can we say that some functions for which Theorem~\ref{main-theorem} guarantees the existence of a succinct representation in our architecture, require too many neurons according to Barron's theorem? In other words, which activation functions have an unbounded (or bounded) set $B_{\mathcal{H}_K}$ w.r.t. the norm $C_{f, \boldsymbol{\Omega}}$.
We address this problem in the paper and characterize activation functions for which $B_{\mathcal{H}_K}$ is an unbounded/bounded set w.r.t. the Barron norm for $\boldsymbol{\Omega} = {\mathbb S}^{n-1}$.

{\bf Related work.} Besides the mentioned works, the topic of the approximation power of NNs attracted a lot of attention. The approximation power of 2-NNs was a topic of classical works~\cite{Cybenko1989,HORNIK1991251,LESHNO1993861}, with a key result being Barron's theorem~\cite{Barron}. A certain generalization of Barron's theorem to multi-layer networks is presented in~\cite{DBLP:conf/colt/Lee0MRA17}. An approach to NN training based on random nonlinear features was introduced in~\cite{Rahimi}  and further developed in~\cite{daniely2017random,10.5555/3122009.3122030}.
Improvements in an approximation ability of NNs from increasing depth were demonstrated in~\cite{2015arXiv150908101T,pmlr-v49-eldan16}. Similarities between behaviors of randomly initialized multi-layer NNs in the infinite width limit and Gaussian Processes are discussed in~\cite{NIPS1996_ae5e3ce4,lee2018deep}.

\section{Preliminaries and notations}
Bold-faced lowercase letters ($\mathbf{x}$) denote (random) vectors, and regular lowercase letters ($x$) denote scalars. $\|\cdot\|$ denotes the Euclidean norm: $\|\mathbf{x}\|:=\sqrt{\mathbf{x}^\top\mathbf{x}}$. For any distribution $\mathcal{P}$, sampling $\mathbf{x}$ from $\mathcal{P}$ is denoted by $\mathbf{x}\sim\mathcal{P}$.
Given a Borel set $\boldsymbol{\Omega}$ and a Borel measure $\mu$ on $\boldsymbol{\Omega}$, by $L_{2}(\boldsymbol{\Omega}, \mu)$ we denote the completion of $\mathcal{H}_0$, where $\mathcal{H}_0$ is a space of real-valued functions on $\boldsymbol{\Omega}$ with the inner product $\langle u, v \rangle_{\mathcal{H}_0} = \int_{\boldsymbol{\Omega}} u ({\mathbf x}) v({\mathbf x}) d \mu({\mathbf x})$. The corresponding inner product is denoted by $\langle \cdot, \cdot\rangle_{L_{2}(\boldsymbol{\Omega}, \mu)}$ and the induced norm is then $\| u\|_{L_{2}(\boldsymbol{\Omega}, \mu)} = \sqrt{\langle u,u \rangle_{L_{2}(\boldsymbol{\Omega}, \mu)} }$. If $d\mu = p({\mathbf x})d {\mathbf x}$, then $L_{2}(\boldsymbol{\Omega}, \mu)$ is denoted by $L_{2}(\boldsymbol{\Omega}, p)$. Analogously, the Banach space $L_p(\boldsymbol{\Omega}, \mu)$ is defined.
Given a Mercer kernel $K: \boldsymbol{\Omega}\times \boldsymbol{\Omega}$, $\mathcal{H}_K$ denotes a reproducing kernel Hilbert space defined by $K$. Then, $B_{\mathcal{H}_K}$ denotes a unit ball centered at ${\mathbf 0}$ in the RKHS $\mathcal{H}_K$. The Fourier transform of a function $a: \mathbb{R}^n\to\mathbb{C}$ is denoted by $\widehat{a}$.

Given $f:\,\mathbb{R}\to\mathbb{R}$ and $g:\,\mathbb{R}\to\mathbb{R}_+$, we write $f\ll g$ if there exist universal constants $\alpha, \beta\in\mathbb{R}_+$ such that for all $x>\beta$ we have $|f(x)|\le\alpha g(x)$. When $f,g:\mathbb{R}\to\mathbb{R}_+$, we write $f\asymp g$ if $f\ll g$ and $g\ll f$. If in an equation we are not interested in a factor depending on the dimension $n$ we write $f\propto^n g$, and that means $f=c_n g$ for some constant $c_n$. Analogously, $f\ll^n g$ means $f\ll c_n g$.

Proofs of all given statements can be found in the Appendix.

\section{Fully connected feed-forward neural network and associated kernels}
Let ${\mathbf x}\in {\mathbb R}^{n_0}$ be the input and $n_1, \cdots, n_L$ be dimensions of hidden layers. We denote $\theta = [W^{(1)}, \cdots, W^{(L)}]$ where $W^{(h)}\in {\mathbb R}^{n_h\times n_{h-1}}$. Let us denote $\alpha^{(0)}({\mathbf x}, \theta) = {\mathbf x}$ and
\begin{equation*}
\begin{split}
\tilde{\alpha}^{(h)}({\mathbf x}, \theta) = W^{(h)}\alpha^{(h-1)}({\mathbf x}, \theta),\\
\alpha^{(h)}({\mathbf x}, \theta) = \sigma(\tilde{\alpha}^{(h)}({\mathbf x}, \theta)), h=1, \cdots, L.
\end{split}
\end{equation*}
Then, $\tilde{\alpha}^{(h)} = [\tilde{\alpha}^{(h)}_i]_{i=1}^{n_h}$ are called preactivations and $\alpha^{(h)} = [\alpha^{(h)}_i]_{i=1}^{n_h}$ are called activations. If we sample entries of $W^{(h)} = [W^{(h)}_{ij}]$ independently according to $W^{(1)}_{ij}\sim \mathcal{N}(0, 1)$ and $W^{(h)}_{ij}\sim \mathcal{N}(0, \frac{1}{n_{h-1}}), h=2,\cdots, L$, then sending $n_1, \cdots, n_{L-1}\to +\infty$ makes $\{\tilde{\alpha}^{(h+1)}_i({\mathbf x}, \theta)\}$ the Gaussian Random Field (for any $i\in [n_{h+1}]$) with the covariance function
\begin{equation*}
\begin{split}
{\mathbb E}[\tilde{\alpha}^{(h+1)}_i({\mathbf x}, \theta) \tilde{\alpha}^{(h+1)}_i({\mathbf x}', \theta)] \to \Sigma^{(h)}({\mathbf x},{\mathbf x}'),
\end{split}
\end{equation*}
where the kernels $\Sigma^{(h)}, h=1,\cdots, L$, called Neural Network Gaussian Process (NNGP) kernels, are defined according to
\begin{equation}\label{NNGP-def}
\begin{split}
\Sigma^{(0)}({\mathbf x},{\mathbf x}') = {\mathbf x}^\top {\mathbf x}',\\
\Sigma^{(h+1)}({\mathbf x},{\mathbf x}') = {\mathbb E}_{(u,v)\sim \Lambda^{(h)}({\mathbf x},{\mathbf x}')}[\sigma(u)\sigma(v)],
\end{split}
\end{equation}
where $\Lambda^{(h)}({\mathbf x},{\mathbf x}') = \begin{bmatrix}
\Sigma^{(h)}({\mathbf x},{\mathbf x}) & \Sigma^{(h)}({\mathbf x},{\mathbf x}') \\
\Sigma^{(h)}({\mathbf x},{\mathbf x}') & \Sigma^{(h)}({\mathbf x}',{\mathbf x}')
\end{bmatrix}$.

In the regime of finite $n_1, \cdots, n_{L-1}$, we introduce kernels $\tilde{\Sigma}^{(h)}, h=1,\cdots, L$ that approximate kernels of the infinite-width limit, i.e.
\begin{equation}\label{inner-product-representation}
\begin{split}
\tilde{\Sigma}^{(h)}({\mathbf x},{\mathbf x}') = {\mathbb E}_{W_1, \cdots, W_{h}} [\alpha^{(h)}_i({\mathbf x}, \theta) \alpha^{(h)}_i({\mathbf x}', \theta)].
\end{split}
\end{equation}
Since all entries of $\alpha^{(h)}({\mathbf x}, \theta)$ have the same distribution, the latter expression is the same for any $i\in [n_h]$. It is also natural to approximate $\tilde{\Sigma}^{(h)}$ by its empirical version, i.e. by
\begin{equation}\label{emp-def}
\begin{split}
\Sigma^{(h)}_{\rm emp}({\mathbf x},{\mathbf x}') = \frac{1}{n_{h}} \sum_{i=1}^{n_{h}} \alpha^{(h)}_i({\mathbf x}, \theta) \alpha^{(h)}_i({\mathbf x}', \theta).
\end{split}
\end{equation}
By construction, we have ${\mathbb E}_{W_1, \cdots, W_h}[\Sigma^{(h)}_{\rm emp}({\mathbf x},{\mathbf x}')]= \tilde{\Sigma}^{(h)}({\mathbf x},{\mathbf x}')$ and $\lim_{n_h\to +\infty}\cdots \lim_{n_1\to \infty}\tilde{\Sigma}^{(h)}({\mathbf x},{\mathbf x}') = \Sigma^{(h)}({\mathbf x},{\mathbf x})$. By analogy, we define $\tilde{\Lambda}^{(h)}({\mathbf x},{\mathbf x}') = \begin{bmatrix}
\tilde{\Sigma}^{(h)}({\mathbf x},{\mathbf x}) & \tilde{\Sigma}^{(h)}({\mathbf x},{\mathbf x}') \\
\tilde{\Sigma}^{(h)}({\mathbf x},{\mathbf x}') & \tilde{\Sigma}^{(h)}({\mathbf x}',{\mathbf x}')
\end{bmatrix}$ and $\Lambda^{(h)}_{\rm emp}({\mathbf x},{\mathbf x}') = \begin{bmatrix}
\Sigma^{(h)}_{\rm emp}({\mathbf x},{\mathbf x}) & \Sigma^{(h)}_{\rm emp}({\mathbf x},{\mathbf x}') \\
\Sigma^{(h)}_{\rm emp}({\mathbf x},{\mathbf x}') & \Sigma^{(h)}_{\rm emp}({\mathbf x}',{\mathbf x}')
\end{bmatrix}$.

\section{Main results}\label{major-main-results}

\begin{figure}
\fontsize{5}{5}
\sf
\centering
\adjustbox{trim=0.0cm 0.0cm 0.0cm 0.0cm}{\def\svgwidth{0.8\columnwidth}
\frame{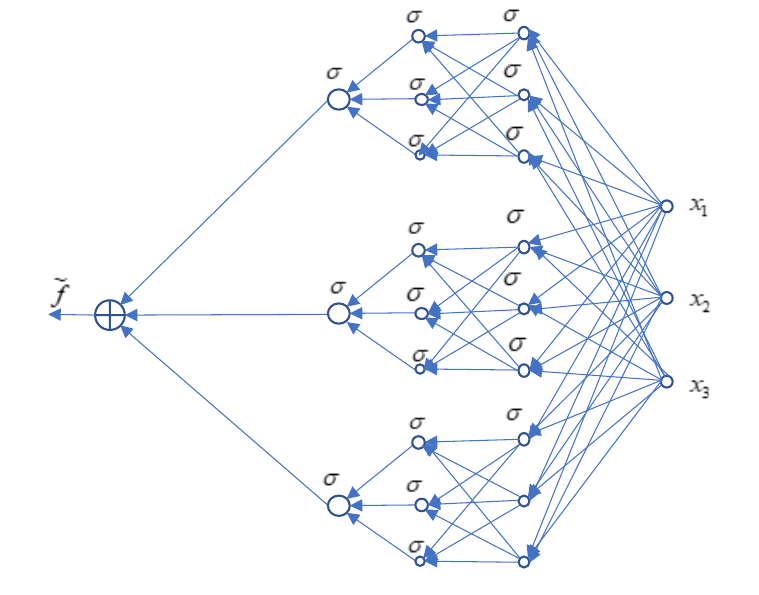}}
\caption{An architecture for $n_0=3, n_1=n_2=3, n_3=1, T=3$.}
\label{fig:arch}
\end{figure}

A starting point of our approach to approximate functions by multi-layer NNs is the following remarkable property of the finite version of the NNGP kernel, $\tilde{\Sigma}^{(L)}$.

\begin{theorem}\label{finite-width-kernel} Let $\mu$ be a probabilistic measure on $\boldsymbol{\Omega}\subseteq {\mathbb R}^n$, $\sigma$ be bounded, and $n_1, \cdots, n_{L}, T\in {\mathbb N}$, $n_L=1$. Then, for any $f\in \mathcal{H}_{\tilde{\Sigma}^{(L)}}$ there exist matrices $W^{(i,h)}\in {\mathbb R}^{n_h\times n_{h-1}}$, where $h=1,\cdots, L$, $ i=1,\cdots, T$, and weights $w_i\in {\mathbb R}, i=1,\cdots, T$ such that
\begin{equation*}
\begin{split}
\|f({\mathbf x})-\tilde{f}({\mathbf x})\|_{L_2(\boldsymbol{\Omega},\mu)} \leq \frac{\|\sigma\|_\infty\|f\|_{\mathcal{H}_{\tilde{\Sigma}^{(L)}}}}{\sqrt{T}}.
\end{split}
\end{equation*}
where $\tilde{f}({\mathbf x}) = \sum_{i=1}^{T}w_{i}\sigma(W^{(i,L)}\sigma(\cdots \sigma(W^{(i,1)} {\mathbf x})\cdots)$.
\end{theorem}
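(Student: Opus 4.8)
The plan is to read the defining identity~\eqref{inner-product-representation} as a \emph{random-feature representation} of the kernel $\tilde{\Sigma}^{(L)}$. Write $\theta=[W^{(1)},\dots,W^{(L)}]$, let $\mathcal{P}$ be its Gaussian law (entries $W^{(1)}_{ij}\sim\mathcal{N}(0,1)$ and $W^{(h)}_{ij}\sim\mathcal{N}(0,1/n_{h-1})$ for $h\ge2$), and set $\psi_\theta({\mathbf x}):=\alpha^{(L)}_1({\mathbf x},\theta)=\sigma(W^{(L)}\sigma(\cdots\sigma(W^{(1)}{\mathbf x})\cdots))$, which is scalar-valued because $n_L=1$. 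Then \eqref{inner-product-representation} reads $\tilde{\Sigma}^{(L)}({\mathbf x},{\mathbf x}')=\int\psi_\theta({\mathbf x})\,\psi_\theta({\mathbf x}')\,d\mathcal{P}(\theta)$, i.e.\ $\theta\mapsto\psi_\theta(\cdot)$ is a feature map into $L_2(\boldsymbol{\Omega},\mu)$ with $\|\psi_\theta\|_{L_2(\boldsymbol{\Omega},\mu)}\le\|\sigma\|_\infty$ (here boundedness of $\sigma$ and the fact that $\mu$ is a probability measure are used). Invoking the standard description of the RKHS generated by a feature map — the synthesis operator $S\colon L_2(\mathcal{P})\to L_2(\boldsymbol{\Omega},\mu)$, $S\beta=\int\beta(\theta)\psi_\theta\,d\mathcal{P}(\theta)$, is bounded, its range is $\mathcal{H}_{\tilde{\Sigma}^{(L)}}$, and $\|f\|_{\mathcal{H}_{\tilde{\Sigma}^{(L)}}}=\min\{\|\beta\|_{L_2(\mathcal{P})}:S\beta=f\}$ with the minimum attained on $(\ker S)^{\perp}$ — I obtain, for the given $f$, a coefficient function $\beta\in L_2(\mathcal{P})$ with $f({\mathbf x})=\int\beta(\theta)\psi_\theta({\mathbf x})\,d\mathcal{P}(\theta)$ and $\|\beta\|_{L_2(\mathcal{P})}=\|f\|_{\mathcal{H}_{\tilde{\Sigma}^{(L)}}}$.

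The second step is a Monte-Carlo (random-features) estimate. I would sample $\theta_1,\dots,\theta_T$ i.i.d.\ from $\mathcal{P}$ — writing $\theta_i=[W^{(i,1)},\dots,W^{(i,L)}]$ — and put $\tilde{f}({\mathbf x})=\tfrac1T\sum_{i=1}^T\beta(\theta_i)\,\psi_{\theta_i}({\mathbf x})$, which is exactly the announced form with $w_i=\beta(\theta_i)/T$. For each fixed ${\mathbf x}$, $\tilde{f}({\mathbf x})$ is an unbiased estimator of $f({\mathbf x})$, so
\begin{equation*}
\mathbb{E}_{\theta_1,\dots,\theta_T}\|f-\tilde{f}\|_{L_2(\boldsymbol{\Omega},\mu)}^2=\tfrac1T\int_{\boldsymbol{\Omega}}\operatorname{Var}_{\theta\sim\mathcal{P}}\!\bigl(\beta(\theta)\psi_\theta({\mathbf x})\bigr)d\mu({\mathbf x})\le\tfrac1T\int_{\boldsymbol{\Omega}}\int\beta(\theta)^2\psi_\theta({\mathbf x})^2\,d\mathcal{P}(\theta)\,d\mu({\mathbf x}).
\end{equation*}
By Tonelli's theorem the last quantity equals $\tfrac1T\int\beta(\theta)^2\bigl(\int_{\boldsymbol{\Omega}}\psi_\theta({\mathbf x})^2 d\mu({\mathbf x})\bigr)d\mathcal{P}(\theta)\le\tfrac{\|\sigma\|_\infty^2}{T}\|\beta\|_{L_2(\mathcal{P})}^2=\tfrac{\|\sigma\|_\infty^2\|f\|_{\mathcal{H}_{\tilde{\Sigma}^{(L)}}}^2}{T}$, again using $|\psi_\theta|\le\|\sigma\|_\infty$ and $\mu(\boldsymbol{\Omega})=1$. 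Since the expectation is below this bound, at least one realization $(\theta_1,\dots,\theta_T)$, hence at least one choice of the matrices $W^{(i,h)}$ and weights $w_i$, attains $\|f-\tilde{f}\|_{L_2(\boldsymbol{\Omega},\mu)}^2\le\|\sigma\|_\infty^2\|f\|_{\mathcal{H}_{\tilde{\Sigma}^{(L)}}}^2/T$; taking square roots yields the claim.

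I expect the only genuinely delicate point to be the first paragraph: verifying the hypotheses of the feature-map/RKHS correspondence for this particular $\psi_\theta$. Concretely, one must check that $\psi_\theta\in L_2(\boldsymbol{\Omega},\mu)$ for $\mathcal{P}$-a.e.\ $\theta$ and that $(\theta,{\mathbf x})\mapsto\psi_\theta({\mathbf x})$ is jointly measurable (both immediate from boundedness and continuity of $\sigma$ together with continuity of the finite composition), that $S$ is well defined as a Bochner integral, and that the infimum defining $\|f\|_{\mathcal{H}_{\tilde{\Sigma}^{(L)}}}$ is actually a minimum — which follows because $\ker S$ is a closed subspace, so any representing $\beta$ can be replaced by its orthogonal projection onto $(\ker S)^{\perp}$ without increasing its norm. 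The remaining ingredients — the variance computation, Tonelli, and the probabilistic-method selection of a good realization — are routine, and matching the sampled $\theta_i$ to the weight matrices of the architecture is pure bookkeeping.
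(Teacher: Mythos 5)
Your proposal is correct and follows essentially the same route as the paper's proof: both read \eqref{inner-product-representation} as a feature-map factorization of $\tilde{\Sigma}^{(L)}$, obtain a representer $g=\beta$ in $L_2$ of the weight distribution with $\|g\|=\|f\|_{\mathcal{H}_{\tilde{\Sigma}^{(L)}}}$ and $f({\mathbf x})={\mathbb E}_\theta[g(\theta)\alpha^{(L)}({\mathbf x},\theta)]$, and then conclude by the Monte-Carlo variance bound plus the probabilistic method. The only cosmetic difference is that you invoke the standard feature-map/RKHS correspondence (minimal-norm representer on $(\ker S)^\perp$) where the paper derives the same integral representation by hand via Cauchy sequences of kernel sections and the isometry onto $L_2(\theta)$.
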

A proof of the latter statement is based on the representation~\eqref{inner-product-representation} of the kernel $\tilde{\Sigma}^{(L)}$ as an inner product between functions $\alpha^{(L)}({\mathbf x}, \cdot)$ and $\alpha^{(L)}({\mathbf x}', \cdot)$ in the corresponding space and is in line with some earlier results obtained for other classes of functions (see Proposition 4.1 from~\cite{Rahimi} or Corollary 4 from~\cite{daniely2017random}). The approximating function $\tilde{f}$ can be viewed as a feed-forward neural network with $L+1$ layers whose neurons of the first $L$ layers are divided into $T$ parts of equal size that are connected by a final $L+1$-st layer (an example of that architecture is shown in Figure~\ref{fig:arch}).
From arguments of the proof it is clear that $\tilde{f}$ has the following structure: all matrices $W^{(i,h)}$, $h=1,\cdots, L$, $ i=1,\cdots, T$ are sampled independently according to $W^{(i,1)}_{kl}\sim \mathcal{N}(0, 1)$, $W^{(i,h)}_{kl}\sim \mathcal{N}(0, \frac{1}{n_{h-1}}), h=2,\cdots, L$; afterward, we set $w_i = \frac{1}{T}g(W^{(i,1)}, \cdots, W^{(i,L)})$ where $g$ is some function. That is, the last layer's weights are defined by the previous layer's random initialization. In practice, the weight vector ${\mathbf w} = [w_i]_{i=1}^{T}$ can be computed from a supervised dataset $\{({\mathbf x}_s, f({\mathbf x}_s))\}_{s=1}^N, \{{\mathbf x}_i\}\sim^{\rm iid} \mu$ by a standard linear regression formula ${\mathbf w} = (X^\top X)^{-1}X^\top [f({\mathbf x}_s)]_{s=1}^N$ where $X \in {\mathbb R}^{N\times n_{L}}$ is a design matrix whose $s$th row is a representation of ${\mathbf x}_s$ by $T$ activations $\{\sigma(W^{(i,L)}\sigma(\cdots \sigma(W^{(i,1)} {\mathbf x}_s)\cdots)\}_{i=1}^{T}$. It is natural to call this approach to construct the approximating function $\tilde{f}$ {\em a multi-layer random feature model (ML-RFM)}.

$\tilde{\Sigma}^{(L)}$, unlike the NNGP kernel $\Sigma^{(L)}$, is hard to analyze both analytically and numerically. So, our first goal was to study the cost of substituting the empirical kernel $\tilde{\Sigma}^{(L)}_{\rm emp}$ or the NNGP kernel $\Sigma^{(L)}$ for $\tilde{\Sigma}^{(L)}$. The empirical kernel $\tilde{\Sigma}^{(L)}_{\rm emp}$ is a random variable whose mean is $\tilde{\Sigma}^{(L)}$, and a variance of it is a natural measure of the distance between them. As the following theorem demonstrates, each layer contributes to this variance a term inverse proportional to the layer's size.

\begin{theorem}\label{emp-kernel-concentration}
For bounded $\sigma, \sigma', \sigma''$ and $h=1, \cdots, L$, we have
\begin{equation*}
\begin{split}
{\rm Var}[\Sigma^{(h)}_{\rm emp}({\mathbf x},{\mathbf x}')] \leq 2 \|\sigma\|_\infty^4\sum_{i=1}^{h}\frac{C^{h-i}}{n_{i}},
\end{split}
\end{equation*}
where $C = 4\max(\|\sigma''\|_\infty\|\sigma\|_\infty, \|\sigma'\|^2_\infty)^2$.
\end{theorem}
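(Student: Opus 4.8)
The plan is to induct on $h$, peeling off one layer per step via the law of total variance. Condition on $W^{(1)},\dots,W^{(h-1)}$. Given these, the activations $\alpha^{(h-1)}_j({\mathbf x},\theta)$ and $\alpha^{(h-1)}_j({\mathbf x}',\theta)$ are fixed, and across $i=1,\dots,n_h$ the preactivation pairs $(\tilde{\alpha}^{(h)}_i({\mathbf x},\theta),\tilde{\alpha}^{(h)}_i({\mathbf x}',\theta))$ become i.i.d.\ centered bivariate Gaussians --- they involve disjoint rows of $W^{(h)}$ --- with common covariance matrix $\Lambda^{(h-1)}_{\rm emp}({\mathbf x},{\mathbf x}')$ for $h\ge2$ (since $W^{(h)}_{kl}\sim\mathcal{N}(0,1/n_{h-1})$ makes the induced covariance equal to $\frac{1}{n_{h-1}}\sum_j\alpha^{(h-1)}_j({\mathbf x},\theta)\alpha^{(h-1)}_j({\mathbf x}',\theta)=\Sigma^{(h-1)}_{\rm emp}({\mathbf x},{\mathbf x}')$, and similarly for the diagonal entries), and equal to the deterministic $\Lambda^{(0)}({\mathbf x},{\mathbf x}')$ for $h=1$. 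Writing $F(\Lambda):={\mathbb E}_{(u,v)\sim\mathcal{N}({\mathbf 0},\Lambda)}[\sigma(u)\sigma(v)]$, the law of total variance gives
\[
{\rm Var}[\Sigma^{(h)}_{\rm emp}({\mathbf x},{\mathbf x}')]={\mathbb E}\left[{\rm Var}[\Sigma^{(h)}_{\rm emp}({\mathbf x},{\mathbf x}')\mid W^{(1)},\dots,W^{(h-1)}]\right]+{\rm Var}[F(\Lambda^{(h-1)}_{\rm emp}({\mathbf x},{\mathbf x}'))].
\]
Conditionally, $\Sigma^{(h)}_{\rm emp}({\mathbf x},{\mathbf x}')$ is an average of $n_h$ i.i.d.\ terms of modulus $\le\|\sigma\|_\infty^2$, so the first term is $\le\|\sigma\|_\infty^4/n_h$. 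For $h=1$ the argument of $F$ is constant, the second term vanishes, and ${\rm Var}[\Sigma^{(1)}_{\rm emp}]\le\|\sigma\|_\infty^4/n_1\le2\|\sigma\|_\infty^4/n_1$, which settles the base case.

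The heart of the inductive step is a Lipschitz estimate for $F$ on the cone of positive semidefinite $2\times2$ matrices. With $a=\Lambda_{11}$, $b=\Lambda_{22}$, $c=\Lambda_{12}$, differentiating under the Gaussian expectation (integration by parts, i.e.\ the heat-equation identities for Gaussian densities) gives $\partial_c F={\mathbb E}[\sigma'(u)\sigma'(v)]$, $\partial_a F=\tfrac12{\mathbb E}[\sigma''(u)\sigma(v)]$ and $\partial_b F=\tfrac12{\mathbb E}[\sigma(u)\sigma''(v)]$, hence $|\partial_a F|,|\partial_b F|\le\tfrac12\|\sigma''\|_\infty\|\sigma\|_\infty$ and $|\partial_c F|\le\|\sigma'\|_\infty^2$. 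Since the segment joining two PSD matrices stays PSD, integrating $F$ along it and applying Cauchy--Schwarz with weights $(\tfrac12,\tfrac12,1)$ yields, with $M:=\max(\|\sigma''\|_\infty\|\sigma\|_\infty,\|\sigma'\|_\infty^2)$,
\[
(F(\Lambda)-F(\Lambda'))^2\le M^2\left(|a-a'|^2+|b-b'|^2+2|c-c'|^2\right).
\]

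Apply this with $\Lambda=\Lambda^{(h-1)}_{\rm emp}({\mathbf x},{\mathbf x}')$ and $\Lambda'=\tilde{\Lambda}^{(h-1)}({\mathbf x},{\mathbf x}')={\mathbb E}[\Lambda^{(h-1)}_{\rm emp}]$; using ${\rm Var}[Y]\le{\mathbb E}[(Y-c_0)^2]$ with the constant $c_0=F(\tilde{\Lambda}^{(h-1)})$ and taking expectations, the three mean-square deviations of the entries become ${\rm Var}[\Sigma^{(h-1)}_{\rm emp}({\mathbf x},{\mathbf x})]$ and ${\rm Var}[\Sigma^{(h-1)}_{\rm emp}({\mathbf x}',{\mathbf x}')]$ (the diagonal specializations ${\mathbf x}'={\mathbf x}$) together with ${\rm Var}[\Sigma^{(h-1)}_{\rm emp}({\mathbf x},{\mathbf x}')]$ weighted by $2$. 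By the induction hypothesis each of the three is at most $2\|\sigma\|_\infty^4\sum_{i=1}^{h-1}C^{h-1-i}/n_i$, so
\[
{\rm Var}[F(\Lambda^{(h-1)}_{\rm emp})]\le 4M^2\cdot2\|\sigma\|_\infty^4\sum_{i=1}^{h-1}\frac{C^{h-1-i}}{n_i}=C\cdot2\|\sigma\|_\infty^4\sum_{i=1}^{h-1}\frac{C^{h-1-i}}{n_i}
\]
since $C=4M^2$. Adding the first-term bound $\|\sigma\|_\infty^4/n_h\le2\|\sigma\|_\infty^4/n_h$ gives exactly $2\|\sigma\|_\infty^4\sum_{i=1}^{h}C^{h-i}/n_i$, completing the induction; the value of $C$ in the statement is chosen precisely so that this recursion balances.

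The step I expect to be the main obstacle is making the differentiation-under-the-expectation identities for $F$ fully rigorous, especially for rank-deficient $\Lambda$ --- exactly the regime ${\mathbf x}'={\mathbf x}$ needed to control the diagonal terms. I would handle it by regularizing, $\Lambda\mapsto\Lambda+\varepsilon I$, where the density is smooth and the integration-by-parts identities are classical, observing that the derivative bounds above involve only $\|\sigma\|_\infty$, $\|\sigma'\|_\infty$, $\|\sigma''\|_\infty$ and hence hold uniformly in $\varepsilon$, and then letting $\varepsilon\downarrow0$ using continuity of $F$ (immediate from boundedness of $\sigma$ and continuity of $\Lambda\mapsto\Lambda^{1/2}$). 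The remaining ingredients --- that the conditional preactivations are i.i.d.\ across neurons, and the elementary bound ${\rm Var}[\tfrac1n\sum_i Y_i]\le\|\sigma\|_\infty^4/n$ for i.i.d.\ $Y_i$ of modulus $\le\|\sigma\|_\infty^2$ --- are routine.
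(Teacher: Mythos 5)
Your proposal is correct and follows essentially the same route as the paper: law of total variance conditioning on the first $h-1$ layers, Gaussian integration-by-parts bounds on the derivatives of $\Lambda\mapsto{\mathbb E}_{(u,v)\sim\mathcal{N}({\mathbf 0},\Lambda)}[\sigma(u)\sigma(v)]$ yielding a Frobenius-Lipschitz constant $M$, and a layer-by-layer recursion with $C=4M^2$. The only (immaterial) differences are that the paper bounds the variance of the Lipschitz image via an independent copy rather than centering at $F(\tilde{\Lambda}^{(h-1)})$, and tracks the weighted sum $\gamma^{(h)}$ of the three variances rather than inducting on the single-entry bound directly.
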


A proof of the latter theorem is based on the following observation. From the law of total variance it is clear that ${\rm Var}[\Sigma^{(h)}_{\rm emp}({\mathbf x},{\mathbf x}')]$ consists of two parts: the first part is an expectation of ${\rm Var}[\Sigma^{(h)}_{\rm emp}({\mathbf x},{\mathbf x}')\mid W_1, \cdots, W_{h-1}]$, and the second part is the variance of ${\mathbb E}[\Sigma^{(h)}_{\rm emp}({\mathbf x},{\mathbf x}')\mid W_1, \cdots, W_{h-1}]$. From the definition~\eqref{emp-def} it can be seen that the first part behaves like $\mathcal{O}(\frac{1}{n_h})$ as $\Sigma^{(h)}_{\rm emp}({\mathbf x},{\mathbf x}')$ is an average of $n_h$ independent terms (given $W_1, \cdots, W_{h-1}$). We show that the second term is bounded by a combination of variances of $\Sigma^{(h-1)}_{\rm emp}({\mathbf x},{\mathbf x}')$, $\Sigma^{(h-1)}_{\rm emp}({\mathbf x},{\mathbf x})$, and $\Sigma^{(h-1)}_{\rm emp}({\mathbf x}',{\mathbf x}')$. This allows us to bound the needed variance by variances of empirical kernels of lower layers. Applying this argument iteratively leads us to the bound of Theorem~\ref{emp-kernel-concentration}.

Being interesting in itself, the previous theorem is instrumental in proving the following estimate of the difference between the finite version of the NNGP, $\tilde{\Sigma}^{(L)}$, and the NNGP.
\begin{theorem}\label{deviation}
Let $\sigma$ be such that $\sigma, \sigma', \sigma'', \sigma''', \sigma''''$ are bounded and continuous. Then, there exists a universal constant $R$ such that
\begin{equation*}
\begin{split}
|\tilde{\Sigma}^{(L)}({\mathbf x},{\mathbf x}')-\Sigma^{(L)}({\mathbf x},{\mathbf x}')|\leq \\
R \|\sigma\|_\infty^4\max(\|\sigma''''\|_{\infty}\|\sigma\|_{\infty}, \|\sigma'''\|_{\infty}\|\sigma'\|_{\infty}, \|\sigma''\|^2_{\infty})\\
\sum_{j=1}^{L-1}\frac{\max(2\|\sigma''\|_\infty\|\sigma\|_\infty, 2\|\sigma'\|^2_\infty,\frac{5}{2})^{2L-2j} (L-j)}{n_{j}}.
\end{split}
\end{equation*}
\end{theorem}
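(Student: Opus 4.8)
The plan is to propagate the deviation from layer to layer. For fixed ${\mathbf x},{\mathbf x}'$, write $\delta^{(h)}$ for the maximum of $|\tilde{\Sigma}^{(h)}-\Sigma^{(h)}|$ over the three pairs that can be formed from ${\mathbf x}$ and ${\mathbf x}'$; the left-hand side of the theorem is at most $\delta^{(L)}$, so it suffices to produce a recursion for $\delta^{(h)}$. The starting point is the identity, obtained by conditioning on $W_1,\ldots,W_h$ and noting that given those weights the pair $(\tilde{\alpha}^{(h+1)}_i({\mathbf x}),\tilde{\alpha}^{(h+1)}_i({\mathbf x}'))$ is a centered Gaussian with covariance matrix exactly $\Lambda^{(h)}_{\rm emp}({\mathbf x},{\mathbf x}')$,
\[ \tilde{\Sigma}^{(h+1)}({\mathbf x},{\mathbf x}') = {\mathbb E}_{W_1,\ldots,W_h}\bigl[F\bigl(\Lambda^{(h)}_{\rm emp}({\mathbf x},{\mathbf x}')\bigr)\bigr],\qquad \Sigma^{(h+1)}({\mathbf x},{\mathbf x}')=F\bigl(\Lambda^{(h)}({\mathbf x},{\mathbf x}')\bigr), \]
where $F(\Lambda):={\mathbb E}_{(u,v)\sim\mathcal{N}(0,\Lambda)}[\sigma(u)\sigma(v)]$ is defined on $2\times2$ positive semidefinite matrices. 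In particular $\delta^{(1)}=0$, since $\tilde{\Sigma}^{(1)}=F(\Lambda^{(0)})=\Sigma^{(1)}$.

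For the inductive step I would decompose
\[ \tilde{\Sigma}^{(h+1)}-\Sigma^{(h+1)} = \underbrace{{\mathbb E}_{W}\bigl[F(\Lambda^{(h)}_{\rm emp})-F(\tilde{\Lambda}^{(h)})\bigr]}_{(A)} + \underbrace{F(\tilde{\Lambda}^{(h)})-F(\Lambda^{(h)})}_{(B)}, \]
using ${\mathbb E}_W[\Lambda^{(h)}_{\rm emp}]=\tilde{\Lambda}^{(h)}$. Term $(B)$ needs only a Lipschitz bound on $F$: differentiating under the integral via Price's theorem gives $\partial_{\Lambda_{12}}F={\mathbb E}[\sigma'(u)\sigma'(v)]$, $\partial_{\Lambda_{11}}F=\tfrac12{\mathbb E}[\sigma''(u)\sigma(v)]$ (and symmetrically in $\Lambda_{22}$), so $\nabla F$ is bounded, uniformly in $\Lambda$, by a multiple of $\max(\|\sigma''\|_\infty\|\sigma\|_\infty,\|\sigma'\|^2_\infty)$; since the entries of $\tilde{\Lambda}^{(h)}-\Lambda^{(h)}$ are each at most $\delta^{(h)}$ in absolute value, this yields $|(B)|\ll\max(\|\sigma''\|_\infty\|\sigma\|_\infty,\|\sigma'\|^2_\infty)\,\delta^{(h)}$. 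For term $(A)$ I would Taylor-expand $F$ to second order about $\tilde{\Lambda}^{(h)}$; the linear term has zero ${\mathbb E}_W$-expectation because ${\mathbb E}_W[\Lambda^{(h)}_{\rm emp}-\tilde{\Lambda}^{(h)}]=0$, so $|(A)|\le\tfrac12\|\nabla^2F\|_\infty\,{\mathbb E}_W\|\Lambda^{(h)}_{\rm emp}-\tilde{\Lambda}^{(h)}\|^2$. Differentiating the Price formulas once more, the entries of $\nabla^2F$ are $\tfrac14{\mathbb E}[\sigma''''(u)\sigma(v)]$, $\tfrac12{\mathbb E}[\sigma'''(u)\sigma'(v)]$, ${\mathbb E}[\sigma''(u)\sigma''(v)]$ and the like, so $\|\nabla^2F\|_\infty$ is bounded, uniformly in $\Lambda$, by a multiple of $M:=\max(\|\sigma''''\|_\infty\|\sigma\|_\infty,\|\sigma'''\|_\infty\|\sigma'\|_\infty,\|\sigma''\|^2_\infty)$, while ${\mathbb E}_W\|\Lambda^{(h)}_{\rm emp}-\tilde{\Lambda}^{(h)}\|^2$ equals the sum of the variances of $\Sigma^{(h)}_{\rm emp}$ at the three pairs, which Theorem~\ref{emp-kernel-concentration} bounds by $\ll\|\sigma\|_\infty^4\sum_{i=1}^h C^{h-i}/n_i$ with $C=\max(2\|\sigma''\|_\infty\|\sigma\|_\infty,2\|\sigma'\|^2_\infty)^2$.

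Combining the two pieces gives a recursion $\delta^{(h+1)}\le b\,\delta^{(h)}+c\,M\|\sigma\|_\infty^4\sum_{i=1}^h C^{h-i}/n_i$ with $b=\max(2\|\sigma''\|_\infty\|\sigma\|_\infty,2\|\sigma'\|^2_\infty)$, $C=b^2$, and a universal $c$. Since $\delta^{(1)}=0$, I would unroll this recursion and read off the coefficient of $1/n_j$ as a sum of at most $L-j$ powers of $b$, each of the form $b^{(L-1-k)+2(k-j)}=b^{L-1+k-2j}$ for $j\le k\le L-1$; bounding each such power by $B^{2L-2j}$ with $B:=\max(b,\tfrac52)$ — the constant $\tfrac52$ being a convenient lower bound that makes the geometric-type sum collapse cleanly, including when the derivative norms of $\sigma$ are small — yields $\delta^{(L)}\ll M\|\sigma\|_\infty^4\sum_{j=1}^{L-1}B^{2L-2j}(L-j)/n_j$, which is exactly the claimed estimate with all numerical constants absorbed into $R$.

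The step I expect to be the main obstacle is making the Price-type differentiation and the second-order Taylor expansion of $F$ rigorous near the boundary of the positive semidefinite cone, where the covariance matrices $\Lambda^{(h)}_{\rm emp}$, $\tilde{\Lambda}^{(h)}$, $\Lambda^{(h)}$ can degenerate: strictly speaking the derivative identities are valid on the interior, so one has to argue — using continuity of $F$ on the closed cone and the fact that degeneracy of $\Lambda^{(h)}_{\rm emp}$ is a measure-zero event, hence the relevant segment lies in the interior almost surely — that the estimates extend to all the matrices that actually occur. Everything else (the conditional-Gaussian identity, the Price formulas, the variance input from Theorem~\ref{emp-kernel-concentration}, and the unrolling of the recursion) is routine bookkeeping.
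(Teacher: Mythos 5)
Your proposal follows essentially the same route as the paper's proof: the same decomposition of $\tilde{\Sigma}^{(h+1)}-\Sigma^{(h+1)}$ into a second-order Taylor term about $\tilde{\Lambda}^{(h)}$ (with vanishing linear part) plus a Lipschitz term, the same Price-type derivative bounds on $\overline{\sigma}$, the same variance input from Theorem~\ref{emp-kernel-concentration}, and the same unrolled recursion with the $\max(\cdot,\tfrac52)$ device. The argument is correct as outlined.
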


If the RHS of the inequality from Theorem~\ref{deviation} is small, then it is natural to expect that two spaces, $\mathcal{H}_{\tilde{\Sigma}^{(L)}}$ and $\mathcal{H}_{\Sigma^{(L)}}$, approximate each other. This allows to translate the desirable property of $\mathcal{H}_{\tilde{\Sigma}^{(L)}}$ from Theorem~\ref{finite-width-kernel} to the space $\mathcal{H}_{\Sigma^{(L)}}$.

Further, we assume that $\boldsymbol{\Omega}\subseteq {\mathbb R}^n$ is compact. A Borel measure $\mu$ on $\boldsymbol{\Omega}$ is called nondegenerate if for any open set $S\subseteq {\mathbb R}^n$ such that $S\cap \boldsymbol{\Omega}\ne \emptyset$, we have $\mu(S\cap \boldsymbol{\Omega})\ne 0$.
\begin{theorem}\label{main-theorem} Let $\mu$ be a probabilistic nondegenerate Borel measure on compact $\boldsymbol{\Omega}\subseteq {\mathbb R}^n$ and $\sigma$ be such that $\sigma, \sigma', \sigma'', \sigma''', \sigma''''$ are bounded and continuous. We also assume that $n_1, \cdots, n_{L}, T\in {\mathbb N}$, $n_L=1$. Then, for any $f\in \mathcal{H}_{\Sigma^{(L)}}$ there exist matrices $W^{(i,h)}\in {\mathbb R}^{n_h\times n_{h-1}}$, where $h=1,\cdots, L$, $ i=1,\cdots, T$, and weights $w_i\in {\mathbb R}, i=1,\cdots, T$ such that
\begin{equation*}
\begin{split}
\|f({\mathbf x})-\tilde{f}({\mathbf x})\|_{L_2(\boldsymbol{\Omega},\mu)} \leq \|f\|_{\mathcal{H}_{\Sigma^{(L)}}}\Big(\frac{\|\sigma\|_\infty}{\sqrt{T}}+\\
cC_1
\big(\sum_{j=1}^{L-1}\frac{C_2^{2L-2j} (L-j)}{n_{j}}\big)^{1/2}\Big).
\end{split}
\end{equation*}
where $\tilde{f}({\mathbf x}) = \sum_{i=1}^{T}w_{i}\sigma(W^{(i,L)}\sigma(\cdots \sigma(W^{(i,1)} {\mathbf x})\cdots)$, $c$ is a universal constant and $$C_1 = \|\sigma\|_\infty^2\sqrt{\max(\|\sigma''''\|_{\infty}\|\sigma\|_{\infty}, \|\sigma'''\|_{\infty}\|\sigma'\|_{\infty}, \|\sigma''\|^2_{\infty})},$$
$$C_2 = \max(2\|\sigma''\|_\infty\|\sigma\|_\infty, 2\|\sigma'\|^2_\infty,\frac{5}{2}).$$
\end{theorem}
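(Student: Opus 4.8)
The plan is to reduce \Cref{main-theorem} to \Cref{finite-width-kernel} (which already handles the finite-width kernel $\tilde{\Sigma}^{(L)}$) using the closeness estimate of \Cref{deviation}, via a general RKHS-perturbation argument: if two Mercer kernels on $\boldsymbol{\Omega}$ are uniformly close, then every function in the RKHS of one of them admits a surrogate in the RKHS of the other that is $L_2(\boldsymbol{\Omega},\mu)$-close to it and has no larger RKHS norm.

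First I would set up the integral-operator description of the two RKHS's. Let $T_1, T_2 : L_2(\boldsymbol{\Omega},\mu)\to L_2(\boldsymbol{\Omega},\mu)$ be the integral operators with kernels $K_1=\Sigma^{(L)}$ and $K_2=\tilde{\Sigma}^{(L)}$. Both kernels are continuous on the compact set $\boldsymbol{\Omega}$ (this follows by induction on the layers from continuity and boundedness of $\sigma$ and its derivatives, together with dominated convergence applied to the defining expectations) and nonnegative-definite by construction, so Mercer's theorem applies. I would then use the standard identification $\mathcal{H}_{K_i}=T_i^{1/2}\big(L_2(\boldsymbol{\Omega},\mu)\big)$ with $\|T_i^{1/2}h\|_{\mathcal{H}_{K_i}}=\|P_i h\|_{L_2(\boldsymbol{\Omega},\mu)}$, where $P_i$ is the orthogonal projection onto $\overline{\mathrm{range}\,T_i}=(\ker T_i)^\perp$. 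Nondegeneracy of $\mu$ guarantees that a continuous function is determined by its $L_2(\boldsymbol{\Omega},\mu)$-class, so the embedding $\mathcal{H}_{K_i}\hookrightarrow L_2(\boldsymbol{\Omega},\mu)$ is injective and the identification is unambiguous.

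Next I would quantify closeness. By \Cref{deviation}, $|K_1(\mathbf{x},\mathbf{x}')-K_2(\mathbf{x},\mathbf{x}')|\le\delta$ on $\boldsymbol{\Omega}\times\boldsymbol{\Omega}$ with $\delta = R\|\sigma\|_\infty^4\max(\|\sigma''''\|_\infty\|\sigma\|_\infty,\|\sigma'''\|_\infty\|\sigma'\|_\infty,\|\sigma''\|_\infty^2)\sum_{j=1}^{L-1}\frac{C_2^{2L-2j}(L-j)}{n_j}$, and since $\mu$ is a probability measure a direct estimate gives $\|T_1-T_2\|_{\mathrm{op}}\le\delta$. The operator fact I would invoke is $\|T_1^{1/2}-T_2^{1/2}\|_{\mathrm{op}}\le\|T_1-T_2\|_{\mathrm{op}}^{1/2}\le\sqrt{\delta}$: from $-\delta I\preceq T_1-T_2\preceq\delta I$ we get $T_1\preceq T_2+\delta I$, hence by operator monotonicity and subadditivity of the square root $T_1^{1/2}\preceq(T_2+\delta I)^{1/2}\preceq T_2^{1/2}+\sqrt{\delta}\,I$, and symmetrically $T_2^{1/2}\preceq T_1^{1/2}+\sqrt{\delta}\,I$. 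Given $f\in\mathcal{H}_{\Sigma^{(L)}}$, write $f=T_1^{1/2}h$ with $h\in(\ker T_1)^\perp$ and $\|h\|_{L_2(\boldsymbol{\Omega},\mu)}=\|f\|_{\mathcal{H}_{\Sigma^{(L)}}}$, and set $g:=T_2^{1/2}h\in\mathcal{H}_{\tilde{\Sigma}^{(L)}}$. Then $\|g\|_{\mathcal{H}_{\tilde{\Sigma}^{(L)}}}=\|P_2 h\|_{L_2(\boldsymbol{\Omega},\mu)}\le\|h\|_{L_2(\boldsymbol{\Omega},\mu)}=\|f\|_{\mathcal{H}_{\Sigma^{(L)}}}$ and $\|f-g\|_{L_2(\boldsymbol{\Omega},\mu)}=\|(T_1^{1/2}-T_2^{1/2})h\|_{L_2(\boldsymbol{\Omega},\mu)}\le\sqrt{\delta}\,\|f\|_{\mathcal{H}_{\Sigma^{(L)}}}$. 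Applying \Cref{finite-width-kernel} to $g$ produces matrices $W^{(i,h)}$ and weights $w_i$ with $\tilde{f}(\mathbf{x})=\sum_{i=1}^T w_i\sigma(W^{(i,L)}\sigma(\cdots\sigma(W^{(i,1)}\mathbf{x})\cdots))$ and $\|g-\tilde{f}\|_{L_2(\boldsymbol{\Omega},\mu)}\le\frac{\|\sigma\|_\infty\|g\|_{\mathcal{H}_{\tilde{\Sigma}^{(L)}}}}{\sqrt{T}}\le\frac{\|\sigma\|_\infty}{\sqrt{T}}\|f\|_{\mathcal{H}_{\Sigma^{(L)}}}$; the triangle inequality then gives the stated bound, since $\sqrt{\delta}=\sqrt{R}\,C_1\big(\sum_{j=1}^{L-1}\frac{C_2^{2L-2j}(L-j)}{n_j}\big)^{1/2}$ with $C_1=\|\sigma\|_\infty^2\sqrt{\max(\|\sigma''''\|_\infty\|\sigma\|_\infty,\|\sigma'''\|_\infty\|\sigma'\|_\infty,\|\sigma''\|_\infty^2)}$ and $c=\sqrt{R}$.

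I expect the main care points to be the two foundational ingredients rather than the final assembly: (i) justifying the integral-operator representation $\mathcal{H}_{K_i}=T_i^{1/2}(L_2(\boldsymbol{\Omega},\mu))$ with the exact norm identities, in particular verifying continuity (hence Mercer-ness) of $\Sigma^{(L)}$ and $\tilde{\Sigma}^{(L)}$ by induction over layers and handling the projections $P_i$ so that the RKHS-norm inequality points in the $\le$ direction; and (ii) the operator square-root bound $\|T_1^{1/2}-T_2^{1/2}\|_{\mathrm{op}}\le\|T_1-T_2\|_{\mathrm{op}}^{1/2}$, which is classical but must be invoked through the operator monotonicity and subadditivity of $\sqrt{\cdot}$. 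Everything else is a triangle inequality plus bookkeeping of the constants.
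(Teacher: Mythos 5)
Your proposal is correct and follows the same overall route as the paper: bound $|\tilde{\Sigma}^{(L)}-\Sigma^{(L)}|$ via Theorem~\ref{deviation}, convert kernel closeness into closeness of the square roots of the integral operators, use the ${\rm O}_K^{1/2}[L_2]$ characterization of the RKHS to produce a surrogate $g\in\mathcal{H}_{\tilde{\Sigma}^{(L)}}$ with controlled norm and $L_2$-distance to $f$, apply Theorem~\ref{finite-width-kernel} to $g$, and finish with the triangle inequality. This is exactly the chain Lemma~\ref{peller} $\to$ Lemma~\ref{rkhs-correspondence} $\to$ Theorem~\ref{finite-width-kernel} in the paper.

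The one genuine difference is how you prove the square-root perturbation bound. The paper invokes the Aleksandrov--Peller operator-H\"older estimate $\|\varphi(A)-\varphi(B)\|\leq c(1-\alpha)^{-1}\|A-B\|^{\alpha}$ for $\varphi\in\Lambda^{\alpha}$ applied to $\varphi(x)=\sqrt{\max(x,0)}$, which is a deep result and is the source of the unspecified universal constant $c$ in the statement. You instead derive $\|T_1^{1/2}-T_2^{1/2}\|\leq\|T_1-T_2\|^{1/2}$ elementarily: $T_1\preceq T_2+\delta I$ plus L\"owner--Heinz monotonicity and the scalar inequality $\sqrt{t+\delta}\leq\sqrt{t}+\sqrt{\delta}$ (applied via functional calculus to the commuting pair $T_2$, $T_2+\delta I$) give $-\sqrt{\delta}I\preceq T_1^{1/2}-T_2^{1/2}\preceq\sqrt{\delta}I$, hence the norm bound for the self-adjoint difference. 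This is self-contained and yields constant $1$ in that step, so the universal constant in the theorem reduces to $\sqrt{R}$ from Theorem~\ref{deviation}. You are also slightly more careful than the paper on the norm identity: by writing $\|T_2^{1/2}h\|_{\mathcal{H}_{\tilde{\Sigma}^{(L)}}}=\|P_2h\|\leq\|h\|$ you avoid the paper's unqualified claim that the surrogate has \emph{equal} RKHS norm (which would require $h\perp\ker T_2$); the inequality is all that is needed and points the right way. Both approaches are valid; yours is more elementary and gives a marginally cleaner constant, while the paper's citation-based route is shorter on the page.
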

\begin{remark}\label{2nn-remark} If $L=1$, then the second term in the RHS of the latter inequality is absent.
It can be seen from the proof of Theorem~\ref{main-theorem} that this case requires only that $\sigma$ is bounded. Theorem says that for any $f\in \mathcal{H}_{\Sigma^{(1)}}$ and $T\in {\mathbb N}$
there exist ${\mathbf a}_1, \cdots, {\mathbf a}_T\in {\mathbb R}^n$, $b_1,\cdots, b_T\in {\mathbb R}$ such that
\begin{equation}\label{rkhs-barron}
\|f-\sum_{i=1}^T b_i\sigma({\mathbf a}_i^\top {\mathbf x})\|_{L_2(\boldsymbol{\Omega}, \mu)}\leq \frac{\|\sigma\|_\infty \|f\|_{\mathcal{H}_{\Sigma^{(1)}}}}{\sqrt{T}}.
\end{equation}
\end{remark}
\begin{remark} If we assume that all derivatives of $\sigma$ up to fourth degree and $L$ are bounded by some universal constant, then we have
\begin{equation*}
\begin{split}
\|f({\mathbf x})-\tilde{f}({\mathbf x})\|_{L_2(\boldsymbol{\Omega},\mu)} \ll
\frac{\|f\|_{\mathcal{H}_{\Sigma^{(L)}}}}{\sqrt{\min(T,n_{1}, \cdots, n_{L-1})}}.
\end{split}
\end{equation*}
Thus, to achieve $\|f({\mathbf x})-\tilde{f}({\mathbf x})\|_{L_2(\boldsymbol{\Omega},\mu)}=\mathcal{O}(\varepsilon)$ we need $\min(T,n_{1}, \cdots, n_{L-1}) =\mathcal{O}\big( \frac{\|f\|_{\mathcal{H}_{\Sigma^{(L)}}}^2}{\varepsilon^2}\big)$.
\end{remark}
\subsection{A relationship with the Barron space}\label{main-results}
Let us consider the case of $L=1$.
There is a direct analogy between the inequality~\eqref{rkhs-barron} and Barron's theorem.
To demonstrate that, let us introduce the Barron norm using a recent exposition from~\cite{DBLP:conf/colt/Lee0MRA17}.
\begin{definition} For a bounded set $\boldsymbol{\Omega}\subseteq {\mathbb R}^n$ let us define $\|\boldsymbol{\omega}\|_{\boldsymbol{\Omega}} = \sup_{{\mathbf x}\in \boldsymbol{\Omega}}|\boldsymbol{\omega}^\top {\mathbf x}|$. Let $\mathcal{F}_{\boldsymbol{\Omega}}$ be a set of functions $g: {\mathbb R}^n\to {\mathbb R}$ with existing Fourier transform $\widehat{g}$ such that
\begin{equation*}
\begin{split}
\forall {\mathbf x}\in \boldsymbol{\Omega}, \,\, g({\mathbf x}) -g({\mathbf 0}) = \int_{{\mathbb R}^n}(e^{{\rm i}\boldsymbol{\omega}^\top {\mathbf x}}-1)\widehat{g}(\boldsymbol{\omega}) d\boldsymbol{\omega}.
\end{split}
\end{equation*}
Then, for a function $f: \boldsymbol{\Omega}\to {\mathbb R}$ we define its $\boldsymbol{\Omega}$-norm as
\begin{equation}\label{inf}
\begin{split}
C_{f, \boldsymbol{\Omega}} =\inf\limits_{g\in \mathcal{F}_{\boldsymbol{\Omega}}: g|_{\boldsymbol{\Omega}}=f}\int_{{\mathbb R}^n}\|\boldsymbol{\omega}\|_{\boldsymbol{\Omega}} |\widehat{g}(\boldsymbol{\omega})| d\boldsymbol{\omega}.
\end{split}
\end{equation}
With a slight abuse of terminology we call the set of functions with a finite $\boldsymbol{\Omega}$-norm the Barron space of $\boldsymbol{\Omega}$.
\end{definition}
Since the infimum in~\eqref{inf} is taken over all possible extensions of $f$, even an approximate computation of it is a non-trivial problem. Barron's theorem claims that any function $f$ from the Barron space of $\boldsymbol{\Omega}$ can be approximated by a two-layer neural network $\tilde{f}({\mathbf x}) = \sum_{i=1}^T b_i\sigma({\mathbf a}_i^\top {\mathbf x}+c_i)$ in such a way that
$\|f-\tilde{f}\|_{L_2(\boldsymbol{\Omega}, \mu)}\ll \frac{C_{f, \boldsymbol{\Omega}}}{\sqrt{T}}$.
The norm $C_{f, \boldsymbol{\Omega}}$ in Barron's theorem plays the role of the function's complexity and is analogous to $\|f\|_{\mathcal{H}_{\Sigma^{(1)}}}$ in~\eqref{rkhs-barron}.

This subsection is dedicated to describing a relationship between these two norms, $C_{f, \boldsymbol{\Omega}}$ and $\|f\|_{\mathcal{H}_{\Sigma^{(1)}}}$, for a special domain $\boldsymbol{\Omega}={\mathbb S}^{n-1}$. The case $\boldsymbol{\Omega}={\mathbb S}^{n-1}$ plays a special role in the analysis of NNGPs~\cite{10.5555/3122009.3122028,NEURIPS2020_1006ff12,chen2021deep} due to the fact that $\Sigma^{(L)}({\mathbf x}, {\mathbf y}) = k({\mathbf x}^\top {\mathbf y})$ for some function $k$, i.e. the NNGP $\Sigma^{(L)}$ is the so-called zonal kernel. We analyze this issue to find the conditions under which an approximation error guaranteed by the random features model (RFM) is better than an approximation error guaranteed by Barron's theorem. Since our results hold for any zonal kernel (not necessarily the NNGP kernel), we will formulate them for a general zonal kernel $K$.

A well-known fact from the theory of RKHSs states that $\mathcal{H}_K$ is isomorphic to $\mathcal{L} = {\rm O}_K^{1/2}[L_2(\boldsymbol{\Omega}, \nu)]$ equipped with the inner product $\langle {\rm O}_K^{1/2}[f], {\rm O}_K^{1/2}[g]\rangle_{\mathcal{L}} = \langle f, g\rangle_{L_2(\boldsymbol{\Omega}, \nu)}$ where ${\rm O}_K: L_2(\boldsymbol{\Omega}, \nu)\to L_2(\boldsymbol{\Omega}, \nu)$ is defined by ${\rm O}_K[f]({\mathbf x}) = \int_{\boldsymbol{\Omega}} K({\mathbf x},{\mathbf y})f({\mathbf y})d\nu({\mathbf y})$ and $\nu$ is assumed to be non-degenerate on $\boldsymbol{\Omega}$~\cite{cucker_zhou_2007}.

A measure $\nu$ can be defined as the surface volume measure on ${\mathbb S}^{n-1}$ and eigenvectors of ${\rm O}_K$ are real-valued spherical harmonics. Let $Y_{k,j}: {\mathbb S}^{n-1}\to {\mathbb R}, j=1,\cdots, N(n,k)$ be an orthonormal basis in a space of spherical harmonics of order $k=0,1,\cdots$ (w.r.t. the inner product in $L_2({\mathbb S}^{n-1}, \nu)$). Then for any ${\mathbf x}, {\mathbf y}\in {\mathbb S}^{n-1}$ we have
\begin{equation*}
K({\mathbf x}, {\mathbf y}) = \sum_{k=0}^\infty\lambda_k \sum_{j=1}^{N(n,k)}Y_{k,j}({\mathbf x}) Y_{k,j}({\mathbf y}),
\end{equation*}
where ${\rm O}_K [Y_{k,j}]=\lambda_kY_{k,j}$, i.e. $\lambda_k$ is an eigenvalue of ${\rm O}_K$.
For more information on spherical harmonics, we refer to~\cite{frye2012spherical}.

Thus, the RKHS $\mathcal{H}_K$ can be characterized as the set
\begin{equation*}
\{\sum_{k=0}^\infty\sigma_k \sum_{j=1}^{N(n,k)}x_{kj}Y_{k,j} \mid \sum_{k=0}^\infty \sum_{j=1}^{N(n,k)} x_{kj}^2<\infty\}.
\end{equation*}
where $\sigma_k = \sqrt{\lambda_k}$.

Our first result claims that if eigenvalues $\{\lambda_k\}$ decay slowly enough, then $B_{\mathcal{H}_K}$ is an unbounded set w.r.t. the norm $C_{f, {\mathbb S}^{n-1}}$.
\begin{theorem}\label{negative} Let $K$ be a zonal Mercer kernel and $\{\lambda_k\}$ be its eigenvalues.
If $\limsup\limits_{k\to +\infty} \frac{\lambda_k k^{n+\frac{2}{3}}}{\sqrt{\log k}} = +\infty$, then $B_{\mathcal{H}_K}$ is an unbounded set in the Barron space of ${\mathbb S}^{n-1}$.
\end{theorem}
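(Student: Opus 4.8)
The plan is to show that the unit ball $B_{\mathcal{H}_K}$ contains functions whose Barron norm $C_{f,{\mathbb S}^{n-1}}$ is arbitrarily large, by exhibiting explicit single spherical harmonics as candidate functions. For a single harmonic $Y_{k,j}$ of degree $k$, its $\mathcal{H}_K$-norm is exactly $1/\sqrt{\lambda_k}$, so the normalized function $f_k := \sqrt{\lambda_k}\,Y_{k,j}$ lies on the boundary of $B_{\mathcal{H}_K}$. The strategy is then to lower-bound $C_{f_k,{\mathbb S}^{n-1}}$ and show that $\sqrt{\lambda_k}\cdot C_{Y_{k,j},{\mathbb S}^{n-1}}\to\infty$ along a subsequence of $k$'s where $\lambda_k$ decays slowly, which is precisely what the hypothesis $\limsup_k \lambda_k k^{n+2/3}/\sqrt{\log k}=+\infty$ provides.

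First I would establish a lower bound on the Barron norm of a degree-$k$ spherical harmonic on the sphere. The key mechanism: any $g\in\mathcal{F}_{\boldsymbol\Omega}$ extending $Y_{k,j}$ from ${\mathbb S}^{n-1}$ has $g({\mathbf x})-g({\mathbf 0}) = \int(e^{{\rm i}\boldsymbol\omega^\top{\mathbf x}}-1)\widehat g(\boldsymbol\omega)\,d\boldsymbol\omega$, and since $\|\boldsymbol\omega\|_{{\mathbb S}^{n-1}} = \|\boldsymbol\omega\|$ on the sphere, the integral $\int\|\boldsymbol\omega\|\,|\widehat g(\boldsymbol\omega)|\,d\boldsymbol\omega$ controls a Lipschitz-type / smoothness quantity of $g$ on $\boldsymbol\Omega$. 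Concretely, this bounds (up to universal constants) the $L_\infty$ norm of any directional derivative of $g$, hence of the tangential gradient of $Y_{k,j}$ on the sphere. Since $Y_{k,j}$ satisfies $\Delta_{{\mathbb S}^{n-1}}Y_{k,j} = -k(k+n-2)Y_{k,j}$, integrating by parts gives $\int_{{\mathbb S}^{n-1}}\|\nabla_{{\mathbb S}^{n-1}}Y_{k,j}\|^2\,d\nu = k(k+n-2)\|Y_{k,j}\|_{L_2}^2 = k(k+n-2)$, so the gradient cannot be uniformly small; combined with a pointwise bound of the form $\|Y_{k,j}\|_\infty \le \sqrt{N(n,k)} \asymp^n k^{(n-2)/2}$ one extracts $C_{Y_{k,j},{\mathbb S}^{n-1}}\gg^n k \cdot (\text{something})$. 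I expect the cleanest route is: a bound on $C_{f,\boldsymbol\Omega}$ forces $g$ to have Fourier mass concentrated at frequencies $\lesssim C_{f,\boldsymbol\Omega}$, which by a Bernstein-type inequality controls $\|Y_{k,j}\|_{L_2({\mathbb S}^{n-1})}$ from above in terms of $C_{Y_{k,j},{\mathbb S}^{n-1}}/k$ or similar, yielding a lower bound $C_{Y_{k,j},{\mathbb S}^{n-1}}\gg^n k^{?}$. One must be careful to account for the normalization of $Y_{k,j}$ in $L_2({\mathbb S}^{n-1},\nu)$ versus its $L_\infty$ size; getting the exponent exactly right is what makes the threshold $n+\tfrac23$ appear.

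Given such a bound, say $C_{Y_{k,j},{\mathbb S}^{n-1}} \gg^n k^{\,n/2 + 1/3}/\sqrt{\log k}$ (the precise exponent being dictated by the $n+\tfrac23$ in the statement once one divides by $\sqrt{\lambda_k}$ and squares), the conclusion follows: $\|f_k\|_{\mathcal{H}_K}=1$ but $C_{f_k,{\mathbb S}^{n-1}} = \sqrt{\lambda_k}\,C_{Y_{k,j},{\mathbb S}^{n-1}} \gg^n \sqrt{\lambda_k}\, k^{n/2+1/3}/\sqrt{\log k} = \sqrt{\lambda_k k^{n+2/3}/\log k}$, and the hypothesis makes this unbounded along a subsequence $k_m\to\infty$. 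Hence $B_{\mathcal{H}_K}$ contains the sequence $\{f_{k_m}\}$ with Barron norms tending to infinity, so $B_{\mathcal{H}_K}$ is unbounded in the Barron space of ${\mathbb S}^{n-1}$.

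The main obstacle I anticipate is the second step: proving a sharp lower bound $C_{Y_{k,j},{\mathbb S}^{n-1}}\gg^n k^{n/2+1/3}/\sqrt{\log k}$. The difficulty is that $C_{f,\boldsymbol\Omega}$ is an infimum over all extensions $g$ of $f$ off the sphere, so one cannot work with $Y_{k,j}$ directly — one must show that \emph{every} extension has large $\int\|\boldsymbol\omega\|\,|\widehat g|$. The right tool is likely a duality/restriction argument: pick a clever test function or test measure $T$ on $\boldsymbol\Omega$ with $\langle T, Y_{k,j}\rangle$ large while $\sup_{\boldsymbol\omega}\frac{|\widehat{T d\nu}(\boldsymbol\omega)|}{\|\boldsymbol\omega\|}$ is small, so that $|\langle T, Y_{k,j}\rangle| = |\langle T, g\rangle| \le \int\|\boldsymbol\omega\|\,|\widehat g(\boldsymbol\omega)|\cdot\sup_{\boldsymbol\omega}\frac{|\widehat{Td\nu}(\boldsymbol\omega)|}{\|\boldsymbol\omega\|}$ forces the Barron integral to be large. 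Constructing this test object and tracking the dependence on $k$ through the asymptotics of Bessel-function / Funk–Hecke integrals on ${\mathbb S}^{n-1}$ is where the $\tfrac23$ (equivalently, an extra $k^{1/3}$ factor beyond the naive $k^{n/2}$) is forced, and is the technical heart of the argument.
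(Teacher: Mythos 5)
Your overall architecture matches the paper's: normalize a degree-$k$ harmonic by $\sqrt{\lambda_k}$ to put it on the boundary of $B_{\mathcal{H}_K}$, lower-bound its Barron norm by testing every extension $g$ against the harmonic itself via the plane-wave/Funk--Hecke expansion, and extract the extra $k^{1/3}$ from Bessel-function asymptotics (the paper uses Meissel's formula plus Krasikov's uniform bound $J_\nu \ll \nu^{-1/3}$ to get $\sup_r r^{-n/2}|J_{k+\frac{n-2}{2}}(r)| \ll^n k^{-\frac{n}{2}-\frac{1}{3}}$). Your second ``duality'' paragraph is essentially the paper's computation with the test measure $T = Y_k\,d\nu$. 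However, there is a genuine gap in your choice of candidate function. The duality bound reads
\begin{equation*}
1 = \langle Y_k, Y_k\rangle_{L_2} \;\leq\; \Big(\int_{\mathbb{R}^n}\|\boldsymbol{\omega}\|\,|\widehat{g}(\boldsymbol{\omega})|\,d\boldsymbol{\omega}\Big)\cdot \|Y_k\|_{L_\infty(\mathbb{S}^{n-1})}\cdot \sup_r \frac{|J_{k+\frac{n-2}{2}}(r)|}{r^{n/2}},
\end{equation*}
so the sup-norm of the harmonic enters \emph{multiplicatively}. You propose to use a single basis element $Y_{k,j}$ together with the worst-case bound $\|Y_{k,j}\|_\infty \leq \sqrt{N(n,k)} \asymp^n k^{\frac{n-2}{2}}$; plugging that in only yields $C_{Y_{k,j},\mathbb{S}^{n-1}} \gg^n k^{4/3}$, i.e. $C_{f_k} \gg \sqrt{\lambda_k k^{8/3}}$, which is \emph{not} unbounded under the hypothesis $\limsup \lambda_k k^{n+\frac{2}{3}}/\sqrt{\log k}=+\infty$ once $n\geq 3$ (the hypothesis allows $\lambda_k \asymp k^{-n-\frac{2}{3}}\sqrt{\log k}$, for which $\lambda_k k^{8/3}\to 0$). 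The missing idea is the paper's Lemma~\ref{key-lemma}: one must choose a \emph{special} unit vector in the degree-$k$ eigenspace, namely a random combination $Y_k=\sum_j z_j Y_{k,j}$ with $\mathbf{z}$ uniform on $\mathbb{S}^{N(n,k)-1}$, which by a Levy-mean estimate of Kushpel--Tozoni satisfies $\|Y_k\|_{L_\infty}\ll^n \sqrt{\log N(n,k)}\asymp\sqrt{\log k}$. That replaces the lossy factor $k^{\frac{n-2}{2}}$ by $\sqrt{\log k}$ and is precisely what produces the $\sqrt{\log k}$ in the theorem's hypothesis. Without it, the argument as you outline it proves a strictly weaker statement.

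Separately, your proposal defers the entire quantitative lower bound (``the technical heart'') rather than carrying it out, and your first suggested mechanism (the $\Delta_{\mathbb{S}^{n-1}}Y_{k,j}=-k(k+n-2)Y_{k,j}$ / Bernstein route) would at best see the frequency scale $k$ and cannot produce the sharp exponent $\frac{n}{2}+\frac{1}{3}$; only the Funk--Hecke/Bessel route does.
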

This result can be directly applied to almost all popular activation functions. E.g., for the function $\sigma(x) = x_+^\alpha$ where $x_+ = \frac{x+|x|}{2}$, eigenvalues of the NNGP for a neural network with a single hidden layer were calculated in~\cite{10.5555/3122009.3122028}. It was shown that $\lambda_k \asymp c_n k^{-n}$ if $k>0$ is even.
This case captures the step function ($\alpha=0$) and the ReLU activation function ($\alpha=1$).
The previous theorem implies that $\frac{C_{f, {\mathbb S}^{n-1}}}{\|f\|_{\mathcal{H}_K}}$ can be made arbitrarily large. If an activation function is bounded additionally, e.g. as the step function, then according to Remark~\ref{2nn-remark}, some functions have a succinct representation as a 2-Layer NNs (that can be found using RFM), with a much better approximation error than the one guaranteed by Barron's theorem.

A corresponding inclusion result is given below.
\begin{theorem}\label{positive} Let $K$ be a zonal Mercer kernel and $\{\lambda_k\}$ be its eigenvalues.
If $\sum_{k=0}^\infty \lambda_k k^{n+\frac{2}{3}} < +\infty$, then $B_{\mathcal{H}_K}$ is a bounded set in the Barron space of ${\mathbb S}^{n-1}$.
\end{theorem}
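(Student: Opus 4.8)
\emph{Proof plan.} The idea is to bound the Barron ${\mathbb S}^{n-1}$-norm of a function in $B_{\mathcal{H}_K}$ term by term in its spherical-harmonic expansion, reducing the statement to a single estimate on the Barron norm of one spherical harmonic, which I then reduce to an oscillatory Bessel integral via Bochner's relation for solid harmonics.

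\textbf{Step 1 (reduction to a single harmonic).} Because the Fourier transform is linear, $g\mapsto\int_{\mathbb{R}^n}\|\boldsymbol{\omega}\|\,|\widehat{g}(\boldsymbol{\omega})|\,d\boldsymbol{\omega}$ is subadditive and absolutely homogeneous, so $C_{\cdot,{\mathbb S}^{n-1}}$ is a seminorm on $\mathcal{F}_{{\mathbb S}^{n-1}}$. Write $f=\sum_{k\ge 0}f_k$ with $f_k=\sqrt{\lambda_k}\sum_j x_{kj}Y_{k,j}$ the degree-$k$ component, so $\|f_k\|_{L_2({\mathbb S}^{n-1},\nu)}^2=\lambda_k\sum_j x_{kj}^2$ and $\sum_{k,j}x_{kj}^2\le 1$ for $f\in B_{\mathcal{H}_K}$. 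Put $M_k:=\sup\{C_{Y,{\mathbb S}^{n-1}}:\deg Y=k,\ \|Y\|_{L_2({\mathbb S}^{n-1},\nu)}=1\}$. Taking near-optimal extensions $g_k$ of $f_k$ and $g=\sum_k g_k$, the bound obtained below forces $\sum_k\int\|\boldsymbol{\omega}\|\,|\widehat{g_k}|<\infty$, so $g\in\mathcal{F}_{{\mathbb S}^{n-1}}$ and $g|_{{\mathbb S}^{n-1}}=f$ (the series $\sum_k f_k$ converges uniformly, hence $f$ is continuous, because $\sum_k\lambda_k N(n,k)\ll^n\sum_k\lambda_k k^{n+2/3}<\infty$). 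Cauchy--Schwarz then yields $C_{f,{\mathbb S}^{n-1}}\le\sum_k M_k\|f_k\|_{L_2({\mathbb S}^{n-1},\nu)}\le\big(\sum_k M_k^2\lambda_k\big)^{1/2}$, so the theorem follows once $M_k\ll^n k^{n/2+1/3}$ is established.

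\textbf{Step 2 (an extension and Bochner's relation).} Fix a degree-$k$ harmonic $Y$ with $\|Y\|_{L_2({\mathbb S}^{n-1},\nu)}=1$, let $P_k(\mathbf{x})=\|\mathbf{x}\|^k Y(\mathbf{x}/\|\mathbf{x}\|)$ be the associated solid harmonic, and take $\chi\in C^\infty_c(\mathbb{R})$ with $0\le\chi\le 1$, $\chi(1)=1$ and ${\rm supp}\,\chi\subseteq[1-\delta,1+\delta]$ where $\delta:=k^{-2/3}$ (small $k$ is trivial). Set $g(\mathbf{x})=\|\mathbf{x}\|^{-n/2}Y(\mathbf{x}/\|\mathbf{x}\|)\chi(\|\mathbf{x}\|)=P_k(\mathbf{x})f_0(\|\mathbf{x}\|)$ with $f_0(r)=r^{-k-n/2}\chi(r)$. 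Then $g\in C^\infty_c(\mathbb{R}^n)$ (its support avoids $\mathbf{0}$) and $g|_{{\mathbb S}^{n-1}}=Y$, so $C_{Y,{\mathbb S}^{n-1}}\le\int_{\mathbb{R}^n}\|\boldsymbol{\omega}\|\,|\widehat{g}(\boldsymbol{\omega})|\,d\boldsymbol{\omega}$. By Bochner's relation, with $\tau:=k+\tfrac{n}{2}-1$ one gets $\widehat{g}(\boldsymbol{\omega})=c_n\,{\rm i}^{-k}\|\boldsymbol{\omega}\|^{1-n/2}Y(\boldsymbol{\omega}/\|\boldsymbol{\omega}\|)\,G_\tau(\|\boldsymbol{\omega}\|)$, where $G_\tau(s):=\int_0^\infty\chi(r)J_\tau(rs)\,dr$ — the factor $r^{-k-n/2}$ in $f_0$ exactly cancels the $r^{k+n/2}$ appearing in the order-$\tau$ Hankel transform. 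Passing to polar coordinates and using $\int_{{\mathbb S}^{n-1}}|Y|\,d\nu\ll^n 1$ gives $C_{Y,{\mathbb S}^{n-1}}\ll^n\int_0^\infty s^{n/2+1}|G_\tau(s)|\,ds$, while Plancherel for $\widehat{g}$ in the same coordinates gives $\int_0^\infty s\,|G_\tau(s)|^2\,ds\ll^n\|g\|_{L_2(\mathbb{R}^n)}^2=\int_0^\infty r^{-1}\chi(r)^2\,dr\ll\delta$. Thus it remains to show $\int_0^\infty s^{n/2+1}|G_\tau(s)|\,ds\ll^n\tau^{n/2+1/3}$.

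\textbf{Step 3 (the Bessel estimate; the main obstacle).} The reason for choosing $\delta\asymp\tau^{-2/3}$ is that then, as $r$ ranges over ${\rm supp}\,\chi$, the argument $rs$ sweeps an interval of length $\asymp s\delta\ll\tau^{1/3}$ — precisely the scale of the Airy transition window of $J_\tau$ — so $G_\tau$ is essentially concentrated on $|s-\tau|\ll\tau^{1/3}$. I would split the integral into three ranges. \emph{(a) Transition window $|s-\tau|\le C\tau^{1/3}$:} Cauchy--Schwarz together with the Plancherel bound gives $\int s^{n/2+1}|G_\tau|\ll\big(\tau^{n+2}\tau^{1/3}\big)^{1/2}\big(\delta/\tau\big)^{1/2}=\tau^{n/2+2/3}\delta^{1/2}=\tau^{n/2+1/3}$. \emph{(b) Shadow $s<\tau-C\tau^{1/3}$:} the uniform estimate $|J_\tau(z)|\ll\tau^{-1/3}\exp(-c((\tau-z)\tau^{-1/3})^{3/2})$ for $z<\tau$ gives $|G_\tau(s)|\ll\delta\tau^{-1/3}\exp(-c((\tau-s)\tau^{-1/3})^{3/2})$, whose contribution is $\ll\tau^{n/2+1}\delta\,\varepsilon(C)=\tau^{n/2+1/3}\varepsilon(C)$ with $\varepsilon(C)\to 0$ as $C\to\infty$. \emph{(c) Oscillatory range $s>\tau+C\tau^{1/3}$:} the $r$-derivative of the Debye phase $\Phi_\tau(rs)$ is $s\sqrt{1-\tau^2/(rs)^2}\ne 0$, so repeated integration by parts in $r$ — tracking $\|\chi^{(m)}\|_\infty\asymp\delta^{-m}$ and $|J_\tau(z)|\ll(z^2-\tau^2)^{-1/4}$ — yields $|G_\tau(s)|\ll_m\tau^{-1}((s-\tau)\tau^{-1/3})^{-1/4-m/2}$ on $\tau+C\tau^{1/3}<s\le 2\tau$ and $|G_\tau(s)|\ll_m\delta^{1-m}s^{-1/2-m}$ on $s>2\tau$, so taking $m=m(n)$ large makes both tails $\ll^n\tau^{n/2+1/3}$. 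Summing (a)--(c) gives $M_k\ll^n k^{n/2+1/3}$, and with Step 1 this produces $C_{f,{\mathbb S}^{n-1}}\ll^n\big(\sum_k\lambda_k k^{n+2/3}\big)^{1/2}<\infty$ uniformly over $f\in B_{\mathcal{H}_K}$, i.e.\ $B_{\mathcal{H}_K}$ is bounded in the Barron space of ${\mathbb S}^{n-1}$. The delicate point I expect to be the real work is part (c): proving honest $n$-uniform decay of $G_\tau$ past the turning point while the cutoff width $\delta$ is itself $\tau$-dependent, which requires Olver-type uniform asymptotics for $J_\tau$ and careful bookkeeping of the powers of $\tau$ lost when differentiating $\chi$; parts (a)--(b) and Steps 1--2 are comparatively routine.
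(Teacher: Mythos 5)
Your proposal is correct in architecture and reaches the right bound, but it takes a genuinely different route from the paper in the key construction. Step 1 coincides with the paper: both decompose $f\in B_{\mathcal{H}_K}$ into harmonic components, extend each separately, and apply Cauchy--Schwarz to reduce everything to the single-harmonic estimate $M_k\ll^n k^{n/2+1/3}$, which then yields $C_{f,{\mathbb S}^{n-1}}\ll^n\big(\sum_k\lambda_k k^{n+2/3}\big)^{1/2}$. Where you diverge is in how that estimate is obtained. The paper works on the frequency side: it prescribes $\widehat{G_k}(\boldsymbol{\omega})=\sigma_k t_k(\|\boldsymbol{\omega}\|)\,\delta\big(\|\boldsymbol{\omega}\|-(k+\tfrac{n-2}{2})\big)Z_k(\widehat{\boldsymbol{\omega}})$, uses the plane-wave expansion to see that the restriction to ${\mathbb S}^{n-1}$ picks out the single value $J_{\nu}(\nu)\nu^{n/2}$ with $\nu=k+\tfrac{n-2}{2}$, and reads off the exponent from $J_\nu(\nu)\asymp\nu^{-1/3}$; the delta is then mollified by a Gaussian of width $2^{-2^k}$. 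You instead work on the physical side, supporting $g$ in a shell of width $\delta=k^{-2/3}$ around the sphere and controlling $\widehat{g}$ via Bochner's relation and uniform Bessel asymptotics in the shadow, transition, and oscillatory regimes. The two constructions are essentially dual, and the exponent $1/3$ has the same origin in both (the Airy transition of $J_\nu$ at its turning point: the value $\nu^{-1/3}$ in the paper, the window width $\nu^{1/3}$ matched by your choice of $\delta$ in yours). What each buys: the paper's delta-function ansatz makes the restriction identity exact and the computation one line, at the cost of a distributional Fourier transform and an unexamined mollification step (the effect of replacing $\delta$ by $N_{\varepsilon_k}$ on $g_k|_{{\mathbb S}^{n-1}}$ and on $\int\|\boldsymbol{\omega}\||\widehat{g_k}|$ is asserted, not checked); your construction stays entirely within honest Schwartz-class functions but shifts the burden onto the stationary-phase analysis in your part (c), which you correctly identify as the real work and leave as claimed estimates. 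At the level of detail actually supplied, your argument is comparable in rigor to the paper's own proof sketch (Appendix~\ref{inclusion-proof}), and your Plancherel trick in the transition window (a) is a clean way to avoid pointwise Airy asymptotics there. I see no step that would fail; the claimed decay rates in (b) and (c) are the standard uniform Olver-type bounds and the power counting you give does close.
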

Thus, if eigenvalues decay substantially faster than $\frac{1}{k^{n+\frac{2}{3}}}$, e.g. exponentially fast, one can derive that $C_{f,{\mathbb S}^{n-1}}\leq c\|f\|_{\mathcal{H}_K}$ for some constant $c>0$. This is the case when the representation guaranteed by Theorem~\ref{main-theorem} is not shorter than the representation of Barron's theorem. Examples of activation functions for which eigenvalues decay very fast include a) the Gaussian function, b) the cosine function, and c) the sine function. Indeed, the following theorems hold (their proofs can be found in the Appendix~\ref{gaussian-case} and the Appendix~\ref{cos-and-sin}).

\begin{theorem}\label{gaussian} Let $\sigma(x) = e^{-\frac{x^2}{2}}$, $\boldsymbol{\Omega}={\mathbb S}^{n-1}$ and $K$ is the NNGP kernel given by~\eqref{NNGP-def}. Then, $\lambda_{2k+1}=0$ and $\lambda_{2k} \ll^n 2^{-2k}k^{-\frac{n}{2}}$.
\end{theorem}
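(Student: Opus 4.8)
The plan is to pass, via the Funk--Hecke formula, from the kernel on ${\mathbb S}^{n-1}$ to a one--dimensional weighted integral of an explicit function, read off the parity claim for free, and then prove the decay bound by a Taylor--tail estimate.

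\emph{Step 1 (the kernel is zonal with an explicit profile).} Restricting~\eqref{NNGP-def} with $L=1$ to ${\mathbf x},{\mathbf x}'\in{\mathbb S}^{n-1}$ gives $\Sigma^{(1)}({\mathbf x},{\mathbf x}')=\kappa({\mathbf x}^\top{\mathbf x}')$ with $\kappa(t)={\mathbb E}_{(u,v)\sim{\mathcal N}({\mathbf 0},\Lambda_t)}\big[e^{-u^2/2}e^{-v^2/2}\big]$ and $\Lambda_t=\begin{bmatrix}1&t\\ t&1\end{bmatrix}$. This is a Gaussian integral: completing the square (equivalently, diagonalising $\Lambda_t$ and using ${\mathbb E}[e^{-az^2/2}]=(1+a)^{-1/2}$) gives $\kappa(t)=\det(I+\Lambda_t)^{-1/2}=(4-t^2)^{-1/2}$. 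Hence $K$ is a zonal Mercer kernel whose profile $\kappa$ is even and real-analytic on $[-1,1]$ (indeed on $(-2,2)$). For larger $L$ the profile changes but remains even and analytic near $[-1,1]$, so the argument below adapts verbatim.

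\emph{Step 2 (Funk--Hecke and the odd eigenvalues).} By the spherical-harmonic expansion recalled above, the Funk--Hecke formula gives $\lambda_k=\gamma_n\int_{-1}^{1}\kappa(t)\,P_k(t)\,d\mu_n(t)$, where $\gamma_n>0$ depends only on $n$, $d\mu_n(t)=c_n(1-t^2)^{(n-3)/2}\,dt$ is the probability law of ${\mathbf x}^\top{\mathbf y}$ for uniform ${\mathbf x},{\mathbf y}\in{\mathbb S}^{n-1}$, and $P_k=C_k^{(\alpha)}/C_k^{(\alpha)}(1)$ with $\alpha=(n-2)/2$ is the degree-$k$ Legendre polynomial of ${\mathbb S}^{n-1}$ (so $P_k(1)=1$, $P_{2k+1}$ is odd, and the standard normalisation is $\|P_k\|_{L_2(\mu_n)}^2=1/N(n,k)$). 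Since $\kappa$ is even and $\mu_n$ symmetric, the integrand defining $\lambda_{2k+1}$ is odd, so $\lambda_{2k+1}=0$, which is the first assertion.

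\emph{Step 3 (the even eigenvalues).} I would expand $\kappa(t)=\tfrac12\sum_{m\ge0}\binom{2m}{m}16^{-m}t^{2m}$, uniformly convergent on $[-1,1]$, so interchange of sum and integral is legitimate. Because $P_{2k}$ is $\mu_n$-orthogonal to every polynomial of degree $<2k$, only the tail survives: $\lambda_{2k}=\gamma_n\langle R_{2k},P_{2k}\rangle_{L_2(\mu_n)}$ with $R_{2k}(t)=\tfrac12\sum_{m\ge k}\binom{2m}{m}16^{-m}t^{2m}$. Using $\binom{2m}{m}16^{-m}\le(\pi m)^{-1/2}4^{-m}$ and $1-t^2/4\ge 3/4$ on $[-1,1]$, a geometric series yields $|R_{2k}(t)|\ll k^{-1/2}4^{-k}t^{2k}$; combined with the Beta-integral asymptotics $\int_{-1}^1 t^{4k}\,d\mu_n(t)=\frac{\Gamma(2k+\frac12)\,\Gamma(\frac n2)}{\sqrt\pi\,\Gamma(2k+\frac n2)}\asymp^n k^{-(n-1)/2}$, this gives $\|R_{2k}\|_{L_2(\mu_n)}\ll^n 4^{-k}k^{-(n+1)/4}$. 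Finally Cauchy--Schwarz together with $\|P_{2k}\|_{L_2(\mu_n)}=N(n,2k)^{-1/2}\asymp^n k^{-(n-2)/2}$ yields
\begin{equation*}
\lambda_{2k}\ \le\ \gamma_n\,\|R_{2k}\|_{L_2(\mu_n)}\,\|P_{2k}\|_{L_2(\mu_n)}\ \ll^n\ 4^{-k}\,k^{-\frac{n+1}{4}-\frac{n-2}{2}}\ =\ 2^{-2k}\,k^{-\frac{3(n-1)}{4}}\ \le\ 2^{-2k}k^{-\frac n2},
\end{equation*}
the last inequality holding because $\tfrac{3(n-1)}{4}\ge\tfrac n2$ for $n\ge3$.

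\emph{Main obstacle.} The one delicate point is extracting the exact power $k^{-n/2}$: the crude Taylor-tail bound wastes a polynomial factor and, for $n=2$, falls short of $k^{-n/2}$ by $k^{1/4}$. The clean remedy is to note that $\kappa(t)=(4-t^2)^{-1/2}$ is holomorphic inside the Bernstein ellipse with foci $\pm1$ through $t=\pm2$, i.e.\ with parameter $\rho=2+\sqrt3$; the standard bound on Gegenbauer coefficients of a function analytic in such an ellipse (for $n=2$ this is just decay of the Fourier coefficients of $\phi\mapsto(4-\cos^2\phi)^{-1/2}$, analytic in the strip $|{\rm Im}\,\phi|<\ln(2+\sqrt3)$) gives $\lambda_{2k}\ll^n\rho^{-2k}=(7-4\sqrt3)^k$, and since $7-4\sqrt3<\tfrac14$ this dominates $2^{-2k}k^{-n/2}$ for every $n$ with room to spare. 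Everything else --- the determinant computation in Step~1, the interchange of sum and integral in Step~3, and the Beta-integral asymptotics --- is routine bookkeeping.
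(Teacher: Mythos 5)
Your proposal is correct, and Steps 1--2 (the explicit determinant computation giving the zonal profile $(4-t^2)^{-1/2}$, Funk--Hecke, and parity killing the odd eigenvalues) coincide with the paper's proof. Where you genuinely diverge is the decay estimate. The paper evaluates each Gegenbauer--moment integral $\int_0^1 t^{2i}P_{2k}(t)(1-t^2)^{\frac{n-3}{2}}dt$ exactly via Rodrigues' formula, integration by parts and a Beta integral, and then controls the resulting series term by term with Stirling's formula and a convexity inequality for $(x+\tfrac12)\log x$; this yields $2^{-2k}k^{-n/2}$ directly for every $n$. You instead bound the Taylor tail $R_{2k}$ in $L_\infty$ and apply Cauchy--Schwarz against $\|P_{2k}\|_{L_2(\mu_n)}=N(n,2k)^{-1/2}$, which is softer and shorter but loses a polynomial factor: your exponent $\tfrac{3(n-1)}{4}$ beats $\tfrac n2$ only for $n\ge3$, as you correctly note. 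Since the theorem carries no restriction on $n$ (and the paper does informally use $n=2$ elsewhere), your main argument alone leaves a gap at $n=2$; your Bernstein-ellipse remedy closes it --- and in fact gives the much stronger rate $(2+\sqrt3)^{-2k}$ up to polynomial factors, which dominates $2^{-2k}k^{-n/2}$ for all $n$ --- but it is the one place where you lean on an external result (decay of ultraspherical coefficients of functions analytic in $E_\rho$) stated without proof; for $n=2$ this reduces to elementary Fourier analysis, but for general $n$ you should either cite a precise version (with its polynomial prefactor, which is harmless here since $\rho>2$) or fall back on the paper's self-contained Stirling computation. In short: your route buys brevity and a conceptually cleaner explanation of why the decay is really exponential with a better base, at the cost of either an external citation or a restriction to $n\ge3$; the paper's route is longer but self-contained and uniform in $n$.
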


\begin{theorem}\label{cosine} Let $\boldsymbol{\Omega}={\mathbb S}^{n-1}$ and $K$ be defined by~\eqref{NNGP-def}. For the case $\sigma(x) = \cos(ax)$, we have $\lambda_{2k+1}=0$ and $\lambda_{2k} \ll^n \frac{a^{4k} }{\sqrt{k}2^{2k} \Gamma(2k+\frac{n-1}{2})}$.
Analogously, for the case $\sigma(x) = \sin(ax)$, we have $\lambda_{2k}=0$ and $\lambda_{2k+1} \ll^n \frac{a^{4k+2}}{\sqrt{k}2^{2k} \Gamma(2k+\frac{n+1}{2})}$.
\end{theorem}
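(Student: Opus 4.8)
\emph{Proof strategy.} The plan is to reduce the statement to an explicit evaluation of the zonal function $k$ with $K(\mathbf{x},\mathbf{y})=k(\mathbf{x}^\top\mathbf{y})$ for the single-hidden-layer case ($L=1$, as in the rest of this subsection), and then to read the eigenvalue decay off the Gegenbauer expansion of $k$. For $\mathbf{x},\mathbf{y}\in{\mathbb S}^{n-1}$ the pair $(u,v)$ appearing in~\eqref{NNGP-def} is a centered bivariate Gaussian with unit variances and correlation $t=\mathbf{x}^\top\mathbf{y}$. Applying the product-to-sum identity $\cos(au)\cos(av)=\tfrac12\bigl(\cos(a(u-v))+\cos(a(u+v))\bigr)$ together with $u\pm v\sim\mathcal{N}(0,2\pm2t)$ and ${\mathbb E}\,e^{{\rm i}sZ}=e^{-s^2{\rm Var}(Z)/2}$, the kernel becomes $k(t)=\tfrac12\bigl(e^{-a^2(1-t)}+e^{-a^2(1+t)}\bigr)=e^{-a^2}\cosh(a^2t)$; the analogous manipulation with $\sin(au)\sin(av)=\tfrac12\bigl(\cos(a(u-v))-\cos(a(u+v))\bigr)$ yields $k(t)=e^{-a^2}\sinh(a^2t)$ in the sine case.

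For the vanishing eigenvalues I would invoke the Funk--Hecke formula, $\lambda_\ell\propto^n\int_{-1}^{1}k(t)\,C_\ell^{(\nu)}(t)/C_\ell^{(\nu)}(1)\,(1-t^2)^{\frac{n-3}{2}}\,dt$ with $\nu=\tfrac{n-2}{2}$ (read in the Chebyshev/Fourier sense when $n=2$). Because $\cosh(a^2t)$ is even and $C_\ell^{(\nu)}$ has parity $(-1)^\ell$, the integrand is odd whenever $\ell$ is odd, so $\lambda_{2k+1}=0$; symmetrically, $\sinh(a^2t)$ is odd, which forces $\lambda_{2k}=0$ in the sine case.

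For the surviving eigenvalues the key input is the classical Gegenbauer expansion of the exponential, $e^{a^2t}=\Gamma(\nu)(a^2/2)^{-\nu}\sum_{\ell\ge0}(\ell+\nu)I_{\ell+\nu}(a^2)\,C_\ell^{(\nu)}(t)$, where $I_\mu$ denotes the modified Bessel function. Summing the expansions of $e^{a^2t}$ and $e^{-a^2t}$ and using $C_\ell^{(\nu)}(-t)=(-1)^\ell C_\ell^{(\nu)}(t)$ isolates the Gegenbauer coefficients of $\cosh(a^2t)$ at even $\ell$ and of $\sinh(a^2t)$ at odd $\ell$, each equal to a positive constant (depending on $n$ and $a$ but not $\ell$) times $(\ell+\nu)I_{\ell+\nu}(a^2)$. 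Turning a Gegenbauer coefficient into an eigenvalue of ${\rm O}_K$ via the zonal-harmonic identity $\sum_{j=1}^{N(n,\ell)}Y_{\ell,j}(\mathbf{x})Y_{\ell,j}(\mathbf{y})=\tfrac{N(n,\ell)}{\omega_{n-1}}\,C_\ell^{(\nu)}(t)/C_\ell^{(\nu)}(1)$ then gives $\lambda_\ell\propto^n(\ell+\nu)I_{\ell+\nu}(a^2)\,C_\ell^{(\nu)}(1)/N(n,\ell)$; all factors are positive, as they must be for a Mercer kernel.

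It remains to extract the $k$-asymptotics. Since $C_\ell^{(\nu)}(1)$ and $N(n,\ell)$ are explicit ratios of Gamma functions with $C_\ell^{(\nu)}(1)/N(n,\ell)\asymp^n\ell^{-1}$, one obtains $\lambda_\ell\asymp^n I_{\ell+\nu}(a^2)$ up to an $\ell$-independent factor. In the power series $I_{\ell+\nu}(a^2)=\sum_{m\ge0}(a^2/2)^{\ell+\nu+2m}/(m!\,\Gamma(\ell+\nu+m+1))$ consecutive terms have ratio $O(1/\ell)$, so for large $\ell$ the $m=0$ term dominates and $I_{\ell+\nu}(a^2)=(1+o(1))(a^2/2)^{\ell+\nu}/\Gamma(\ell+\nu+1)$. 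Setting $\ell=2k$ (resp.\ $\ell=2k+1$), noting that $(a^2/2)^{\pm\nu}$ cancels and $(a^2/2)^{2k}=a^{4k}/2^{2k}$ (resp.\ $(a^2/2)^{2k+1}=a^{4k+2}/2^{2k+1}$), and using $\Gamma(x+\tfrac12)\asymp\sqrt x\,\Gamma(x)$ to replace $\Gamma(2k+\tfrac n2)=\Gamma(2k+\nu+1)$ by $\sqrt k\,\Gamma(2k+\tfrac{n-1}{2})$ (resp.\ $\Gamma(2k+\tfrac n2+1)$ by $\sqrt k\,\Gamma(2k+\tfrac{n+1}{2})$) produces exactly the two stated bounds. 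I expect the main obstacle to be purely technical: pinning down the normalization constants in the exponential--Gegenbauer expansion and in the coefficient-to-eigenvalue conversion, carrying the $\omega_{n-1},N(n,\ell),C_\ell^{(\nu)}(1)$ factors through, bounding the Bessel tail uniformly in $\ell$, and treating the degenerate dimensions $n\le2$ (where Gegenbauer polynomials reduce to Chebyshev ones but the same final estimates survive).
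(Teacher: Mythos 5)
Your proposal is correct, and it reaches the same two bounds the paper proves, but the quantitative core is argued by a genuinely different route. The first two steps coincide exactly with the paper: the same product-to-sum identity plus the Gaussian characteristic function gives $K_{\cos}(\mathbf{x},\mathbf{x}')=e^{-a^2}\cosh(a^2\,\mathbf{x}^\top\mathbf{x}')$ and $K_{\sin}(\mathbf{x},\mathbf{x}')=e^{-a^2}\sinh(a^2\,\mathbf{x}^\top\mathbf{x}')$ on the sphere, and the same parity argument through the Funk--Hecke formula kills the odd (resp.\ even) eigenvalues. Where you diverge is the decay estimate: the paper stays entirely inside the Funk--Hecke integral, applies Rodrigues' formula for the Gegenbauer polynomial, integrates by parts $2k$ times so that the derivative lands on $\cosh(a^2 t)$ and produces the factor $a^{4k}/(2^{2k}\Gamma(2k+\tfrac{n-1}{2}))$ explicitly, and then bounds the remaining integral by ${\rm Beta}(\tfrac12,2k+\tfrac{n-1}{2})\ll k^{-1/2}$. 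You instead import the classical Gegenbauer--Bessel expansion $e^{zt}=\Gamma(\nu)(z/2)^{-\nu}\sum_\ell(\ell+\nu)I_{\ell+\nu}(z)C_\ell^{(\nu)}(t)$, convert coefficients to eigenvalues via the addition theorem, and extract the asymptotics from the leading term of the power series of $I_{\ell+\nu}(a^2)$; the Stirling step $\Gamma(x+\tfrac12)\asymp\sqrt{x}\,\Gamma(x)$ then reproduces the $\sqrt{k}$ in the denominator. The two computations are of course secretly the same (the paper's integration by parts is a hand derivation of the leading asymptotics of $I_{\ell+\nu}(a^2)$), but your version buys a cleaner conceptual statement---$\lambda_\ell\asymp^n I_{\ell+\nu}(a^2)$ up to $\ell$-independent constants, which is sharper than the one-sided bound the paper records---at the cost of having to cite and normalize the expansion correctly and to patch the degenerate case $\nu=0$ ($n=2$) separately, whereas the paper's elementary route is self-contained and needs only the Beta-function identity. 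I see no gap: the ratio test you invoke for the Bessel tail is valid for fixed $a$ and large order, and the bookkeeping of $C_\ell^{(\nu)}(1)/N(n,\ell)\asymp^n\ell^{-1}$ is right, so the remaining work is exactly the normalization chasing you already flagged.
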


To summarize, we demonstrate that there is a sharp difference between two types of activation functions, those for which eigenvalues of ${\rm O}_K$ decay slower than $k^{-n-\frac{2}{3}}$ (modulo a logarithmic factor) and those for which eigenvalues decay much faster than $k^{-n-\frac{2}{3}}$. For the first type of activation functions, we can guarantee that 2-NNs trained by RFM can approximate functions that are not captured by Barron's theorem.

\begin{remark}
In the proof of Theorem~\ref{negative} we construct a function $Y_k$ (its structure is described in Lemma~\ref{key-lemma}) that belongs to the space of harmonics of order $k$ and has a unit $L_2({\mathbb S}^{n-1})$-norm as well as a moderate $L_\infty({\mathbb S}^{n-1})$-norm.
Our analysis shows that norms of that function in the Barron space of ${\mathbb S}^{n-1}$ and in $\mathcal{H}_{\Sigma^{(1)}}$ (for all popular activation functions $\sigma$) rapidly grow with an increase of $k$ and blow up for moderate $k$.
In the experimental part of the paper (Section~\ref{experiments}) we study the learnability of this function using 2-NNs by RFM and a gradient-based algorithm. Our results show that $Y_k$ is not a hard target for a gradient-based algorithm even for moderate $k$'s.
We discuss that this example shows that the approximation power of 2-NNs, as well as their learnability by the gradient descent, are definitely beyond the guarantees of Barron's theorem, the RFM, and the NTK theory.
\end{remark}

\ifTR
\begin{figure*}
    \centering
\,\includegraphics[width=.24\textwidth]{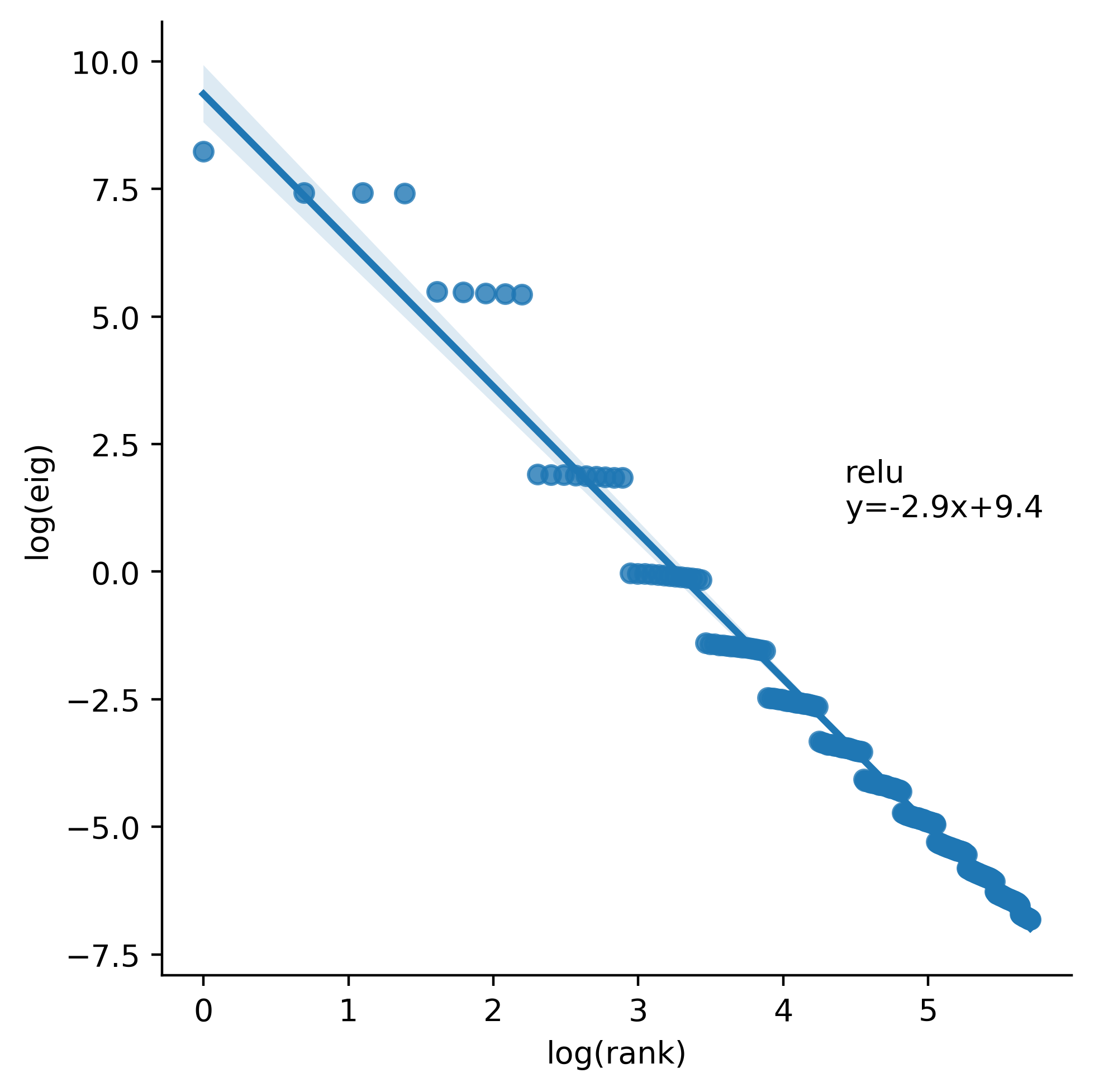}\includegraphics[width=.24\textwidth]{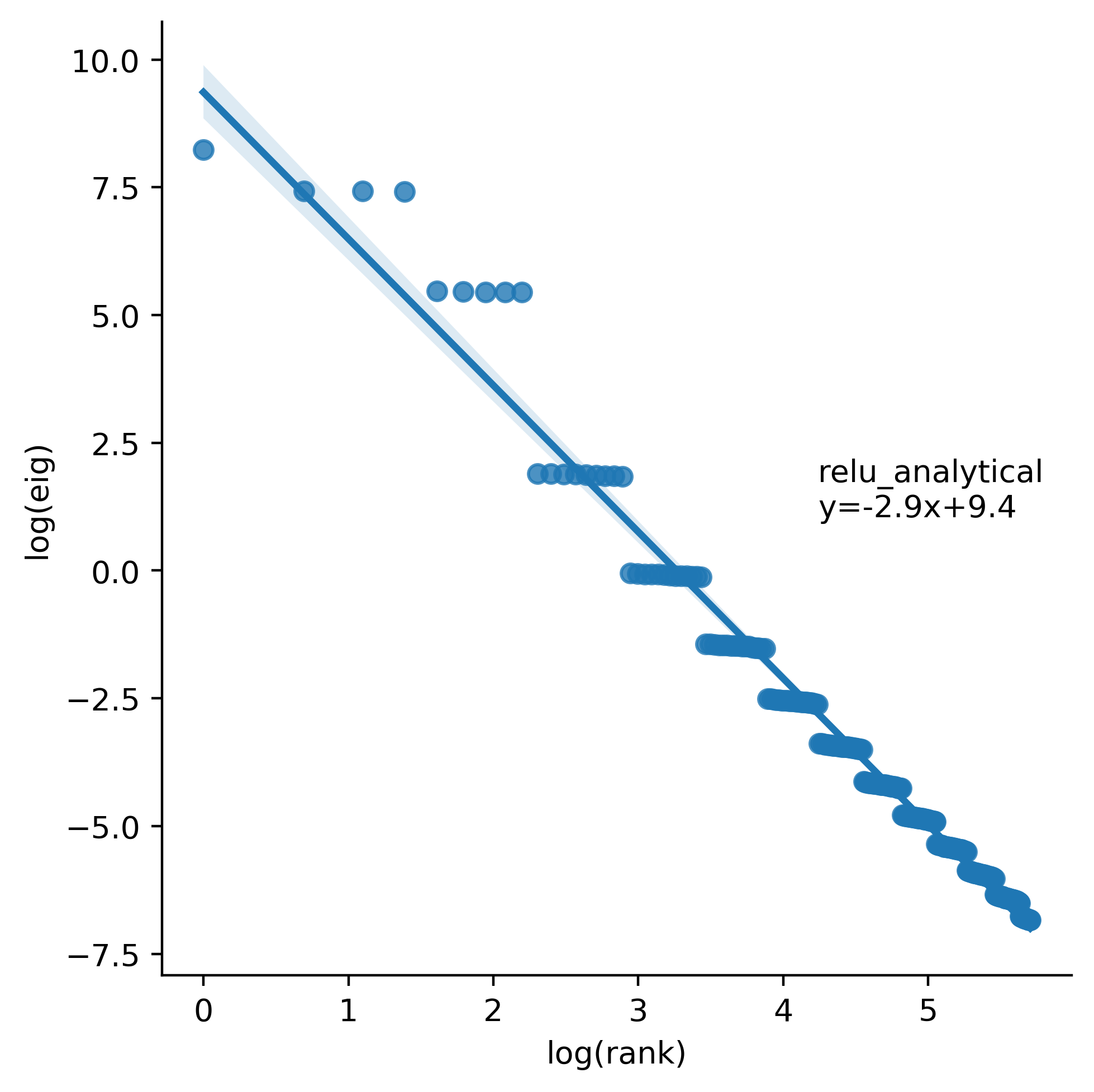}\includegraphics[width=.24\textwidth]{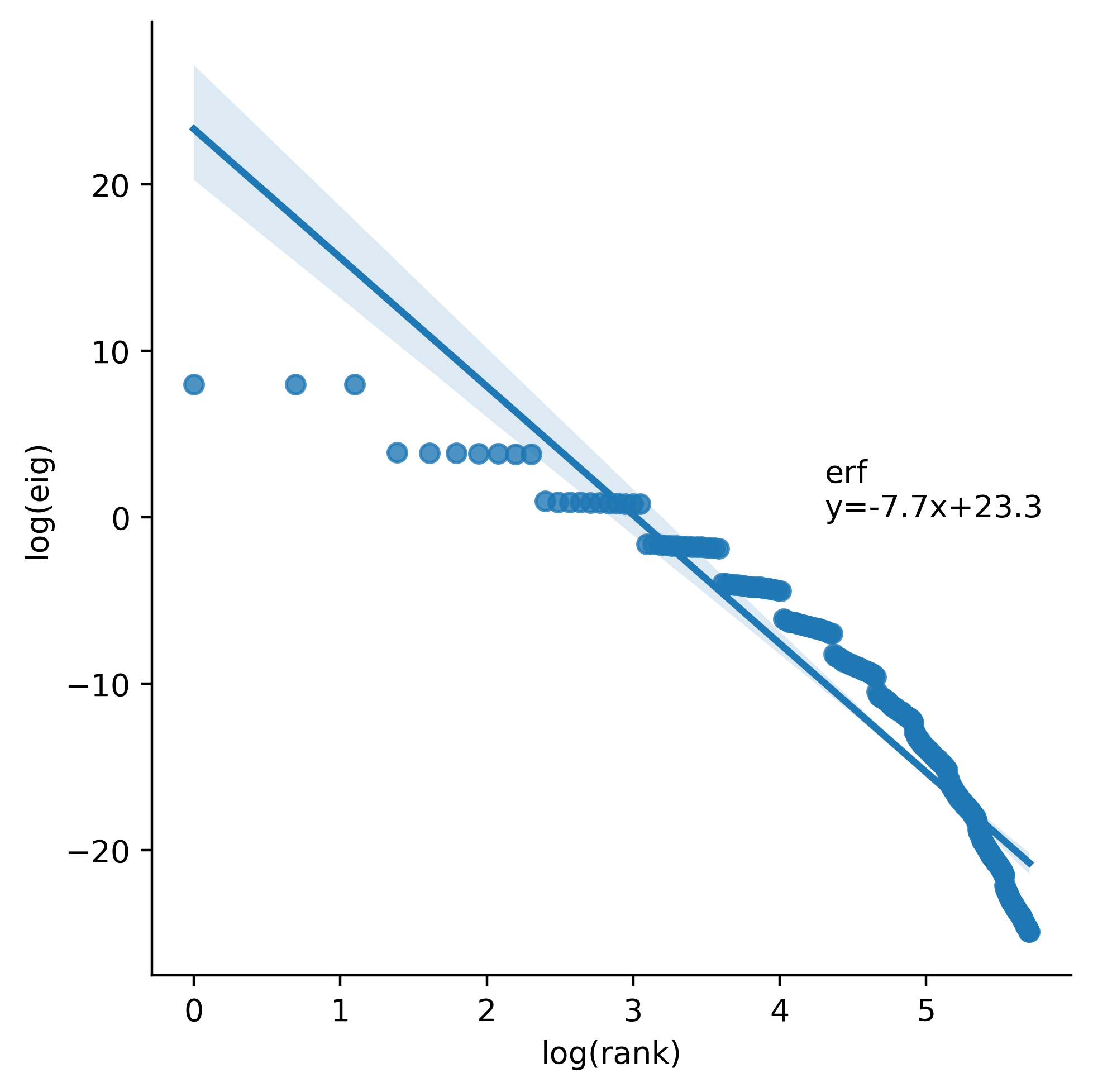}\includegraphics[width=.24\textwidth]{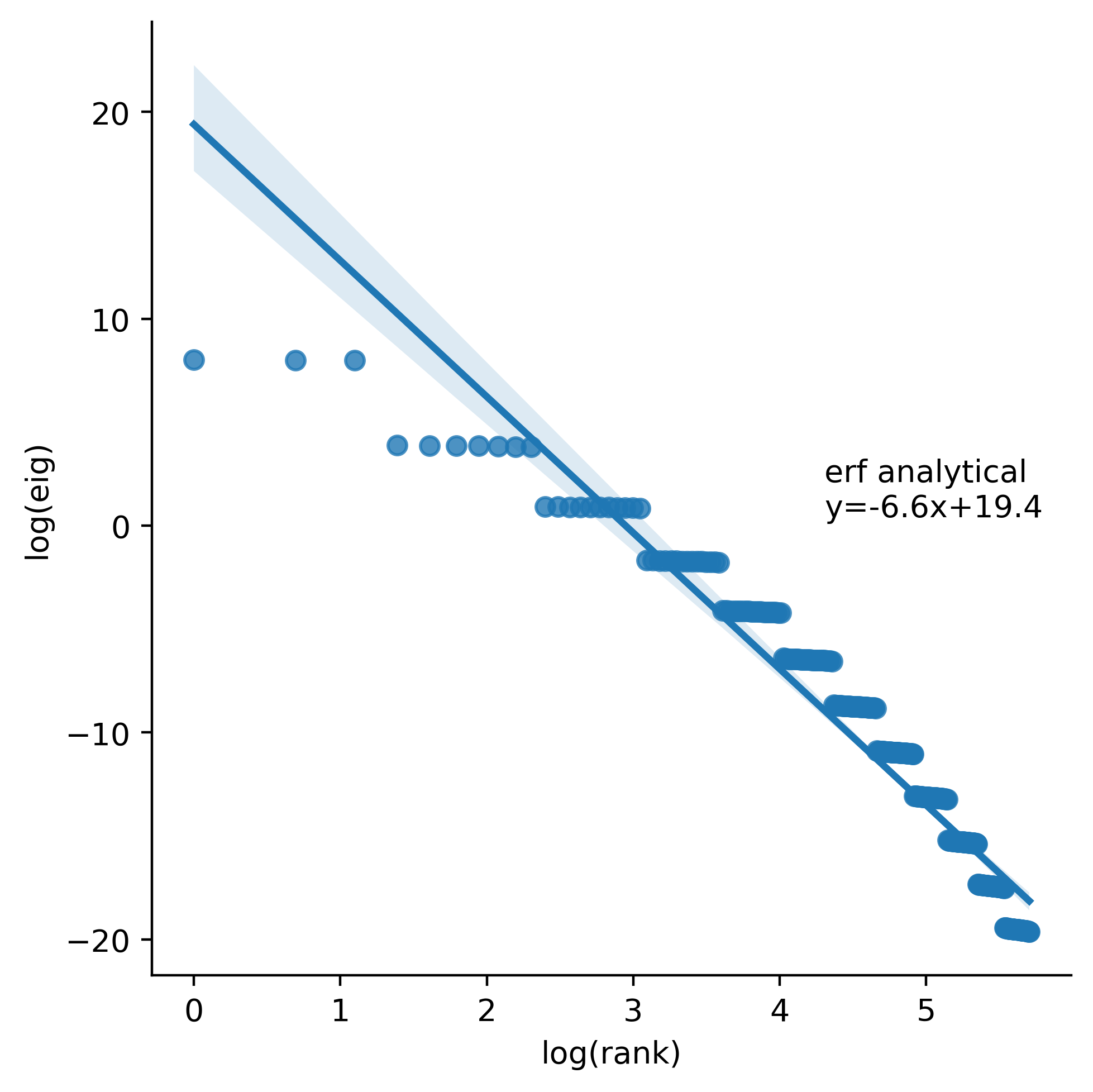}\qquad \\
\includegraphics[width=.19\textwidth]{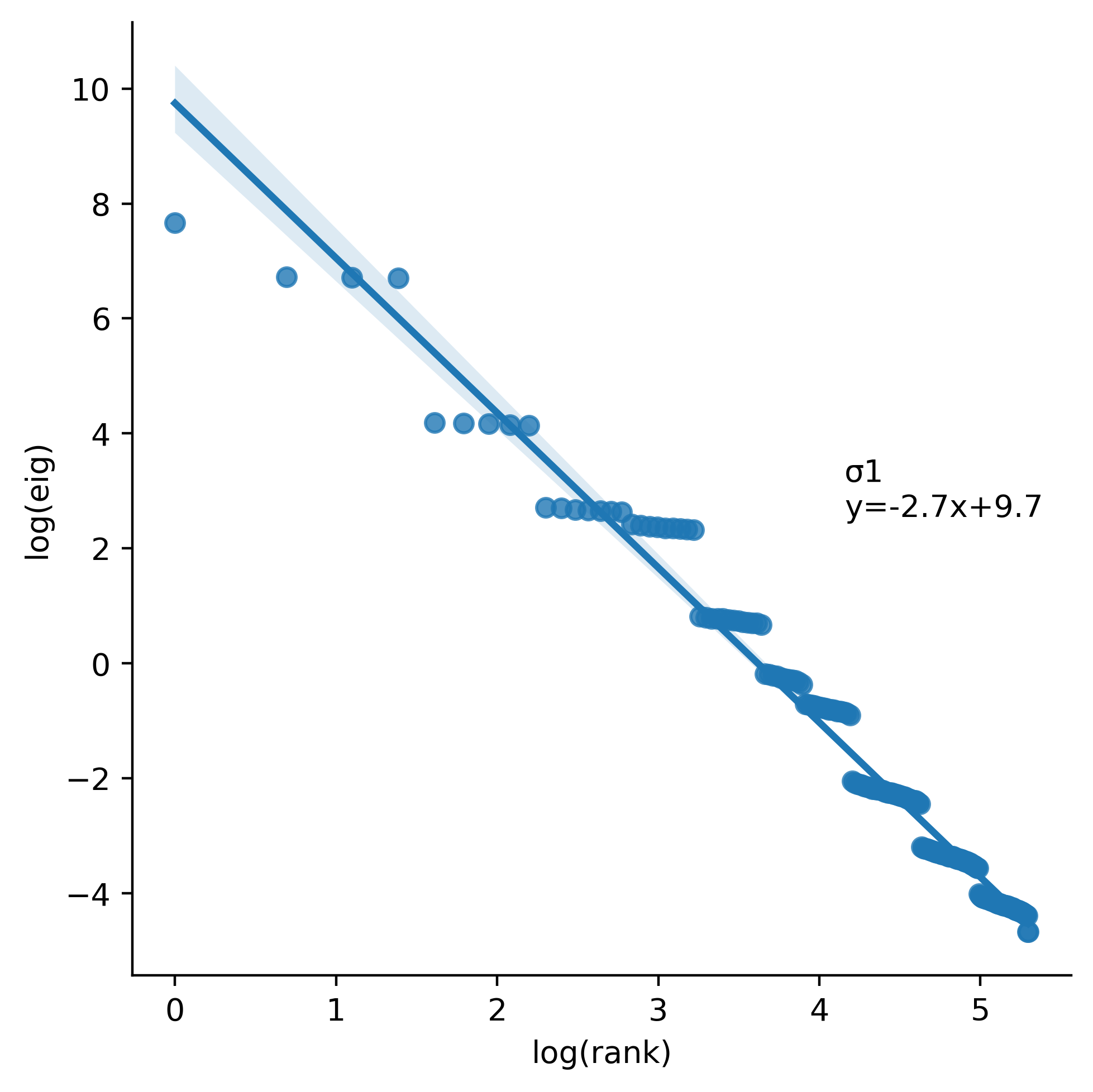}\includegraphics[width=.19\textwidth]{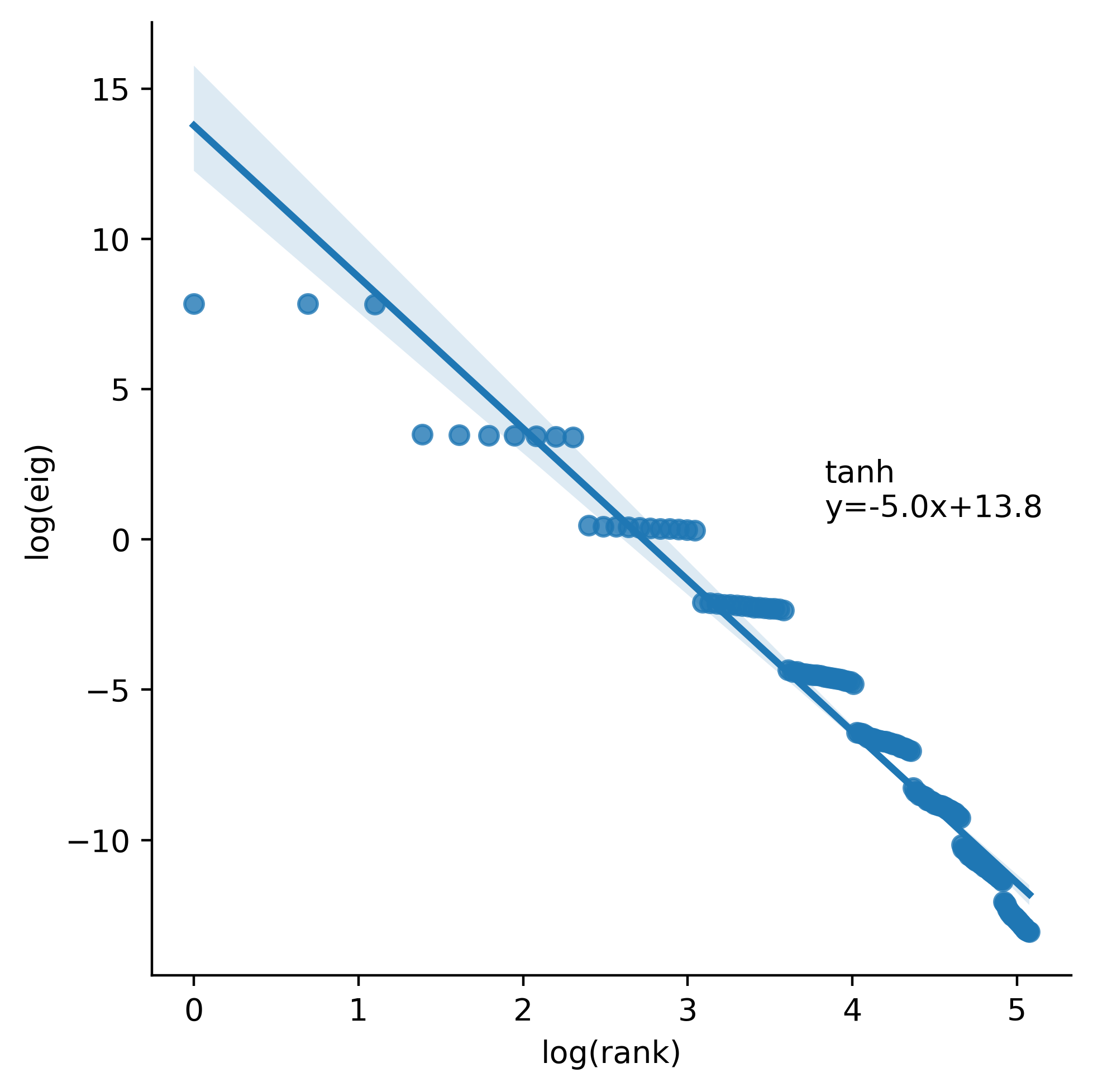}\includegraphics[width=.19\textwidth]{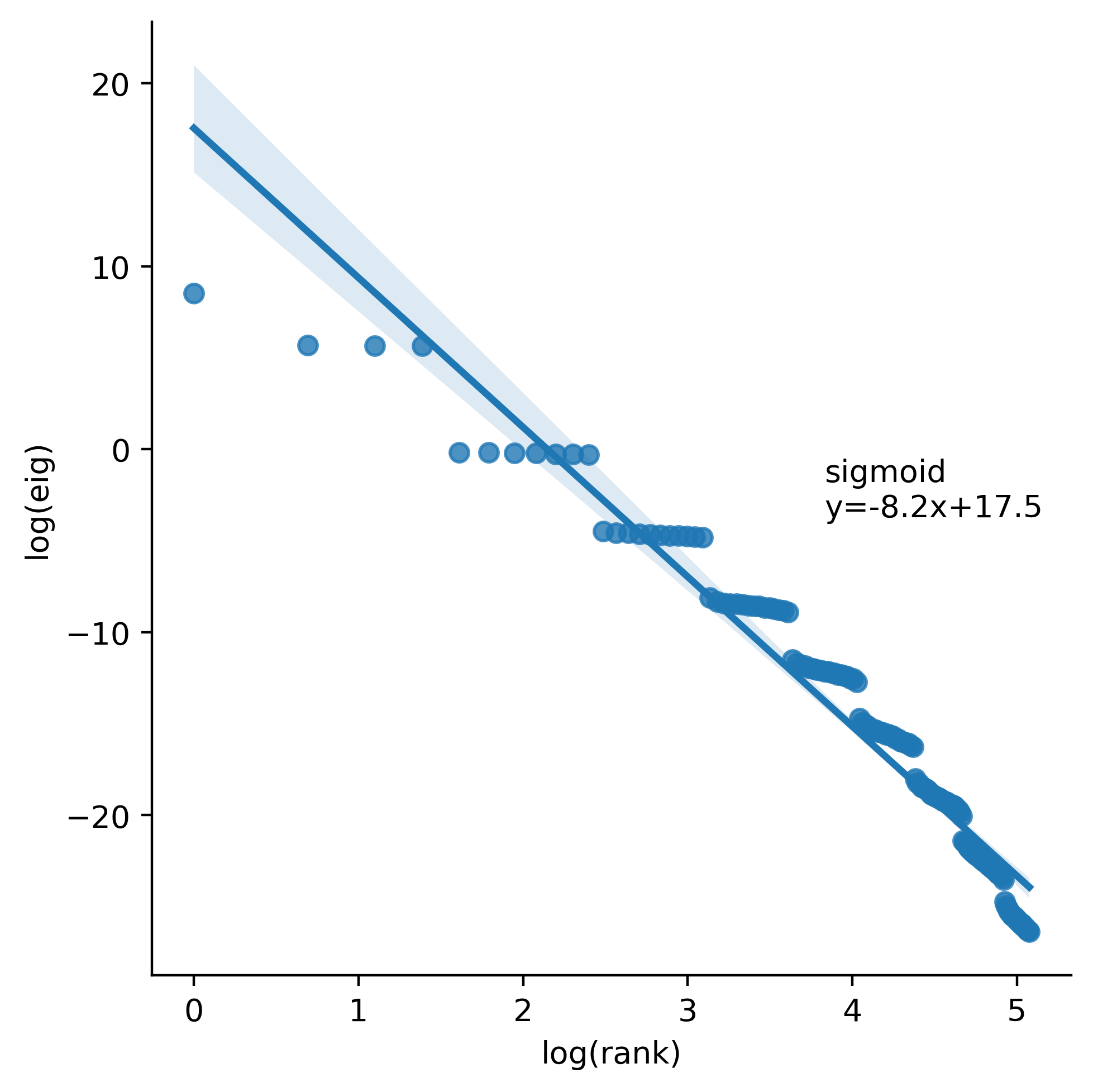}\includegraphics[width=.19\textwidth]{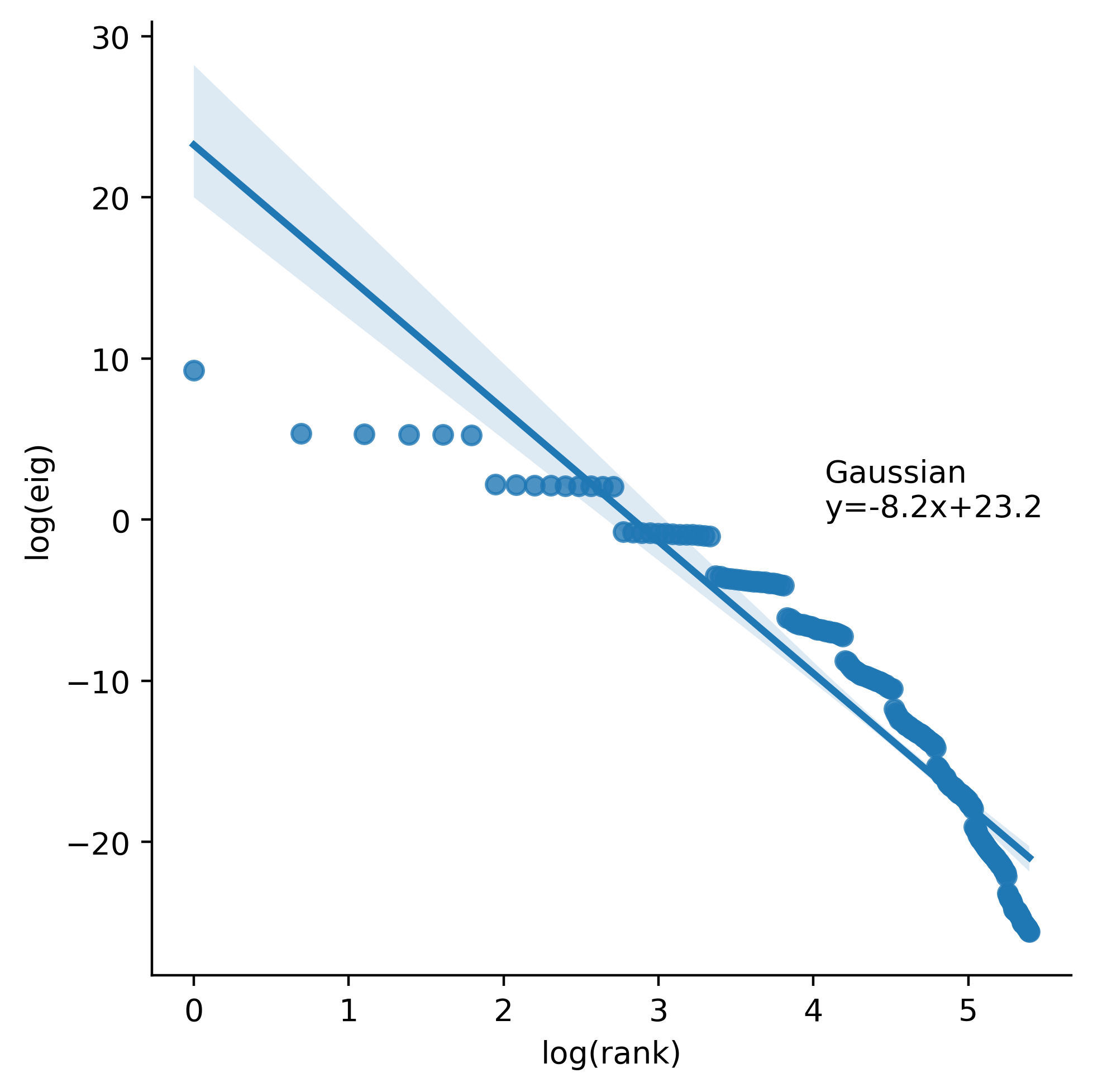}\includegraphics[width=.19\textwidth]{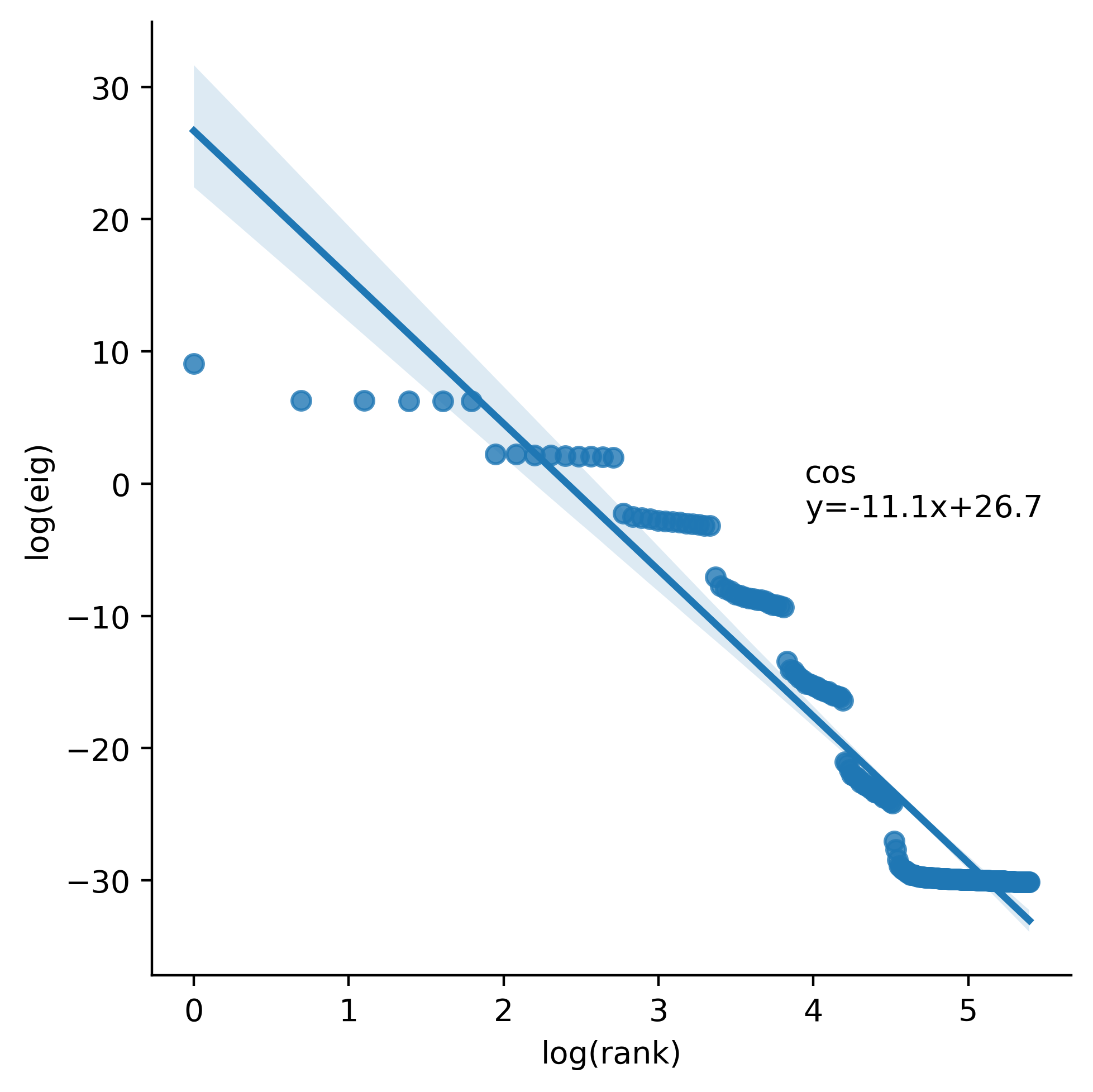} \\
    \caption{$\log(\hat{\mu}_i)$ versus $\log(i)$ scatter plots for different activation functions with linear regression lines. For relu and erf, eigenvalues of analytically computed NNGP kernels are given for comparison. } 
    \label{fig:eigenvalues}
\end{figure*}
\else
\fi

\section{Experiments}\label{experiments}
{\bf Decay rate of eigenvalues for popular activation functions.} As pointed out in Section~\ref{main-results}, activation functions can be conventionally classified into two classes: those for which Theorem~\ref{main-theorem} guarantees the existence of functions which has (a) a large norm in the Barron space and (b) approximable by 2-NN, and those for which such guarantees can not be made.
For the domain $\boldsymbol{\Omega} = {\mathbb S}^{n-1}$, the difference between them depends on the behavior of eigenvalues of degree $k$ of the integral operator ${\rm O}_K$. Let $\mu_i$ be an eigenvalue of rank $i$ in a set of eigenvalues of ${\rm O}_K$ listed in decreasing order (counting multiplicities).

An empirical method that distinguishes between these two classes of activation functions is based on drawing a scatter plot and making a linear regression between $\log(i)$ and $\log(\hat{\mu}_i)$, where $\hat{\mu}_i$ is an eigenvalue of rank $i$ of the empirical kernel matrix $[K^{\rm emp}({\mathbf x}_i,{\mathbf x}_j)]_{i,j=1}^N$ where $K^{\rm emp}({\mathbf x},{\mathbf y}) = \frac{1}{M}\sum_{i=1}^M \sigma(\boldsymbol{\omega}^\top_i{\mathbf x})\sigma(\boldsymbol{\omega}^\top_i{\mathbf y})$ and $\{\boldsymbol{\omega}_i\}_{1}^M$ are sampled according to $\mathcal{N}({\mathbf 0}, I_n)$, $\{{\mathbf x}_i\}_{1}^N$ are sampled uniformly on a sphere ${\mathbb S}^{n-1}$. A justification of this method is based on the fact that $\hat{\mu}_i \approx \mu_i$ if $i$ is substantially smaller than $N$~\cite{JMLR:v7:braun06a} and $M$ is chosen large enough to estimate the NNGP kernel accurately. In our experiments we set $M=N=20000$. Since the multiplicity of an eigenvalue of order $k$ is $N(n,k)$, a list of $\mu_i$'s should contain segments of equal eigenvalues. We observe this pattern in empirical $\hat{\mu}_i$'s at the beginning of their list. This allowed us (without any substantiation) to use the following rule of thumb to identify the number of eigenvalues to be included in a training set for linear regression: as eigenvalues which we associate with some order $k$ align into a group with an angle of inclination smaller than $\frac{\pi}{4}$, we assume them to be close to theoretical values.

For popular activation functions and $n=3$, our results are given in Figure~\ref{fig:eigenvalues}. For comparison, we also give 2 plots (for ReLU and erf) for which eigenvalues were computed from an empirical kernel matrix but the kernel function itself was given by an analytical formula.
As expected, scatter plots for ReLU and $\sigma_1(x) = {\rm ReLU}(x)-{\rm ReLU}(x-1)$ are almost identical. Since eigenvalues satisfy $\lambda_k \asymp^n k^{-n}$ for ReLU~\cite{NEURIPS2020_1006ff12}, it is natural to conjecture that the same decay rate holds for $\sigma_1$ too. Exponential decay rates for the Gaussian and the cosine activation functions are proved in Appendix~\ref{gaussian-case} and Appendix~\ref{cos-and-sin}, and scatter plots fully verify those estimates. For the sigmoid and the hyperbolic tangent functions, eigenvalues seem also to decay exponentially, though our interpretation of scatter plots is indecisive due to the lack of any other evidence on the form of the NNGP in that case.

To summarize, we include ReLU and $\sigma_1$ in the first class and the Gaussian, the cosine, and the sine (and likely, sigmoid and tanh) in the second class. Note that the multiplicity of an eigenvalue of order $k$, i.e. of $\lambda_k$, is $N(n,k)\asymp^n k^{n-2}$. Therefore, if $\lambda_k\asymp^n k^{-n-\frac{2}{3}}$, then an eigenvalue of rank $i$ asymptotically behaves like $ \mu_i\asymp^n i^{-\frac{n+\frac{2}{3}}{n-1}}$. Activation functions of the first class should have an absolute value of the slope of the regression function smaller than $\frac{n+\frac{2}{3}}{n-1}$, or $\frac{11}{6}\approx 1.83$ for $n=3$. Definitely, a plot for ReLU should have the slope $-\frac{n}{n-1} = -1.5$, due to the fact that $\lambda_k\asymp^n k^{-n}$. This is in tension with the first two scatter plots of Figure~\ref{fig:eigenvalues} where the slope is larger, i.e. 2.7-2.9. We attribute this to the insufficiency in the number of accurately computed eigenvalues, i.e. probably the slope decreases slightly for larger ranks.

\ifTR
\begin{figure*}
\begin{center}
\,\includegraphics[width=.3\textwidth]{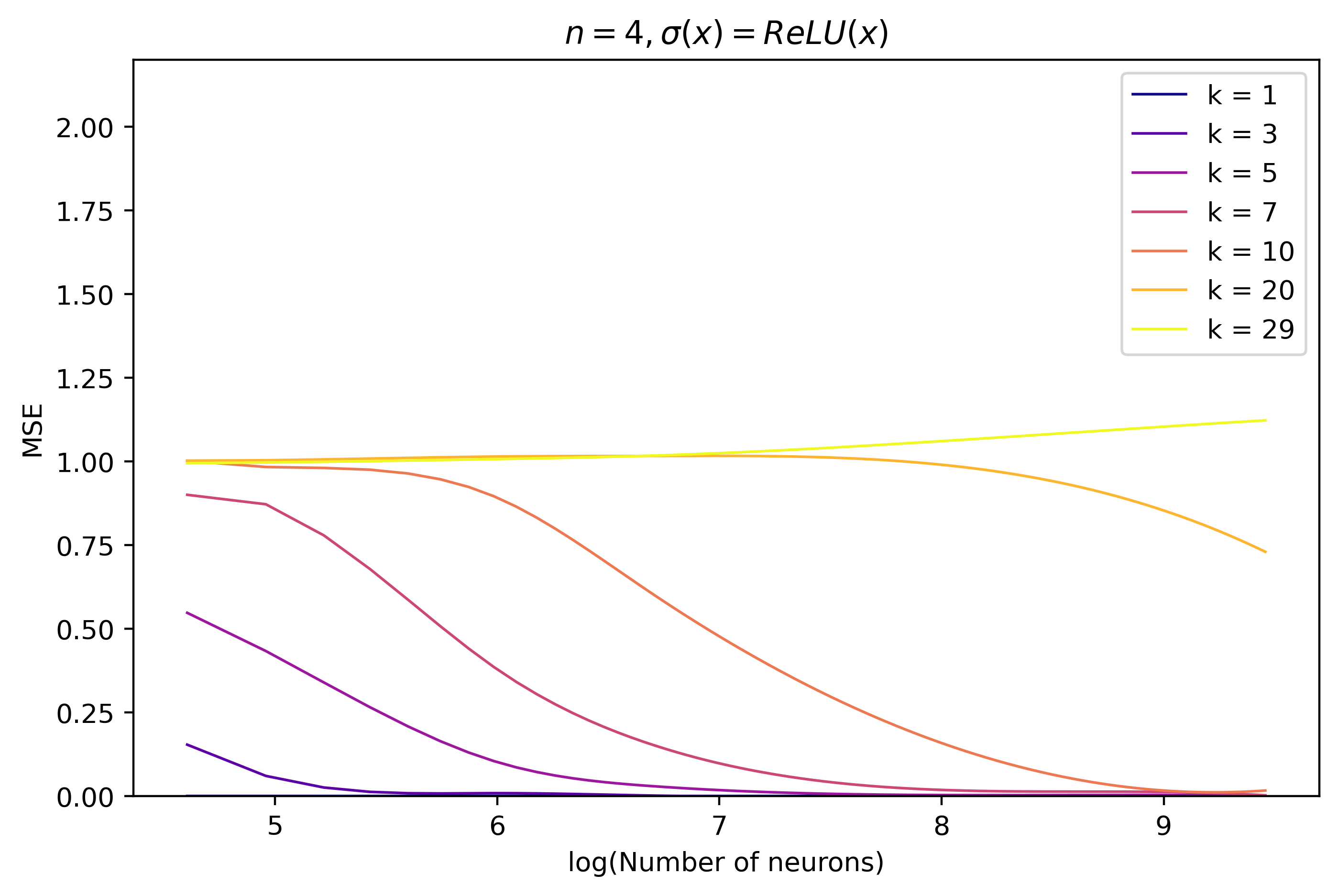}\hfill\includegraphics[width=.3\textwidth]{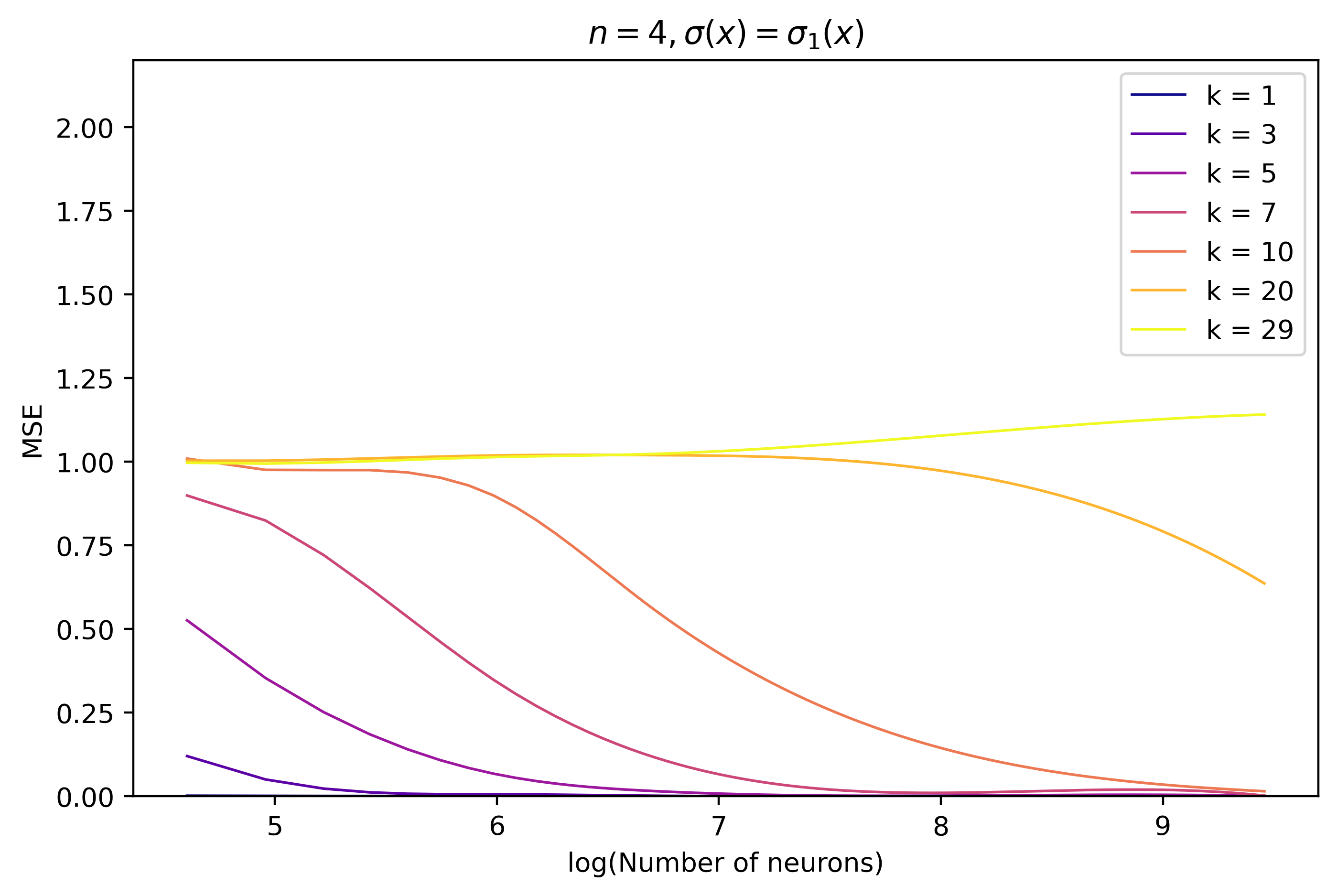}\hfill\includegraphics[width=.3\textwidth]{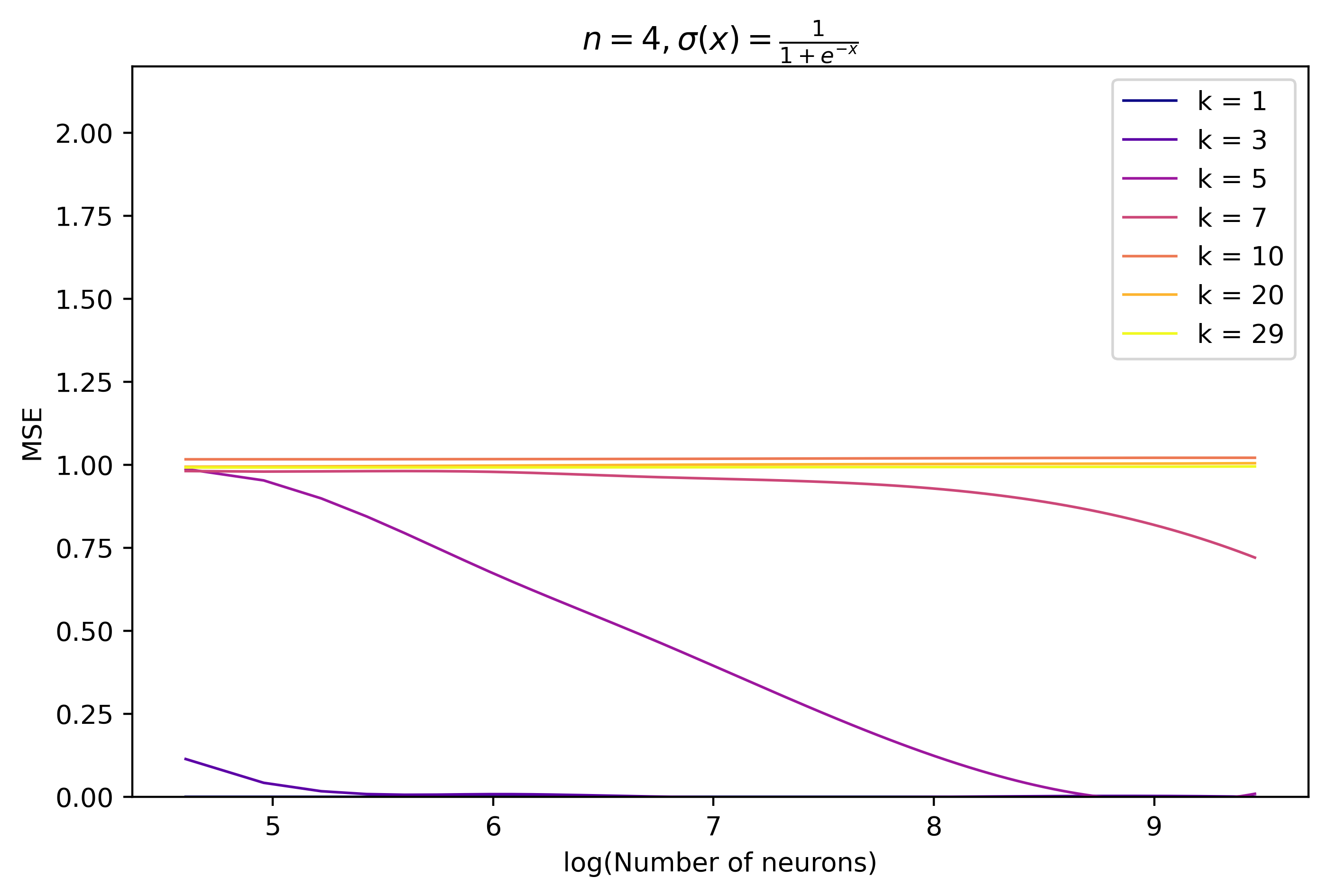}\qquad \\
\,\includegraphics[width=.3\textwidth]{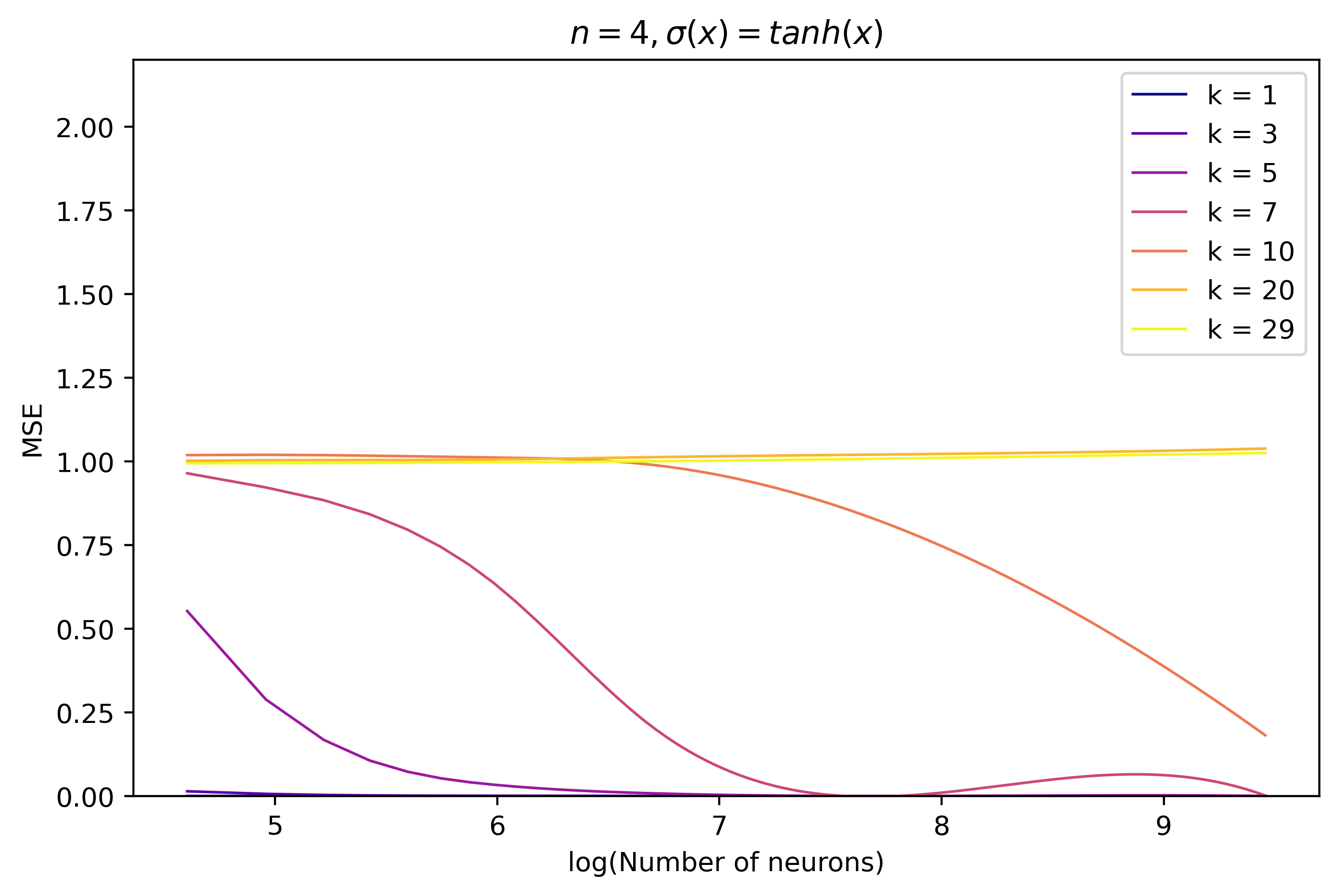}\hfill\includegraphics[width=.3\textwidth]{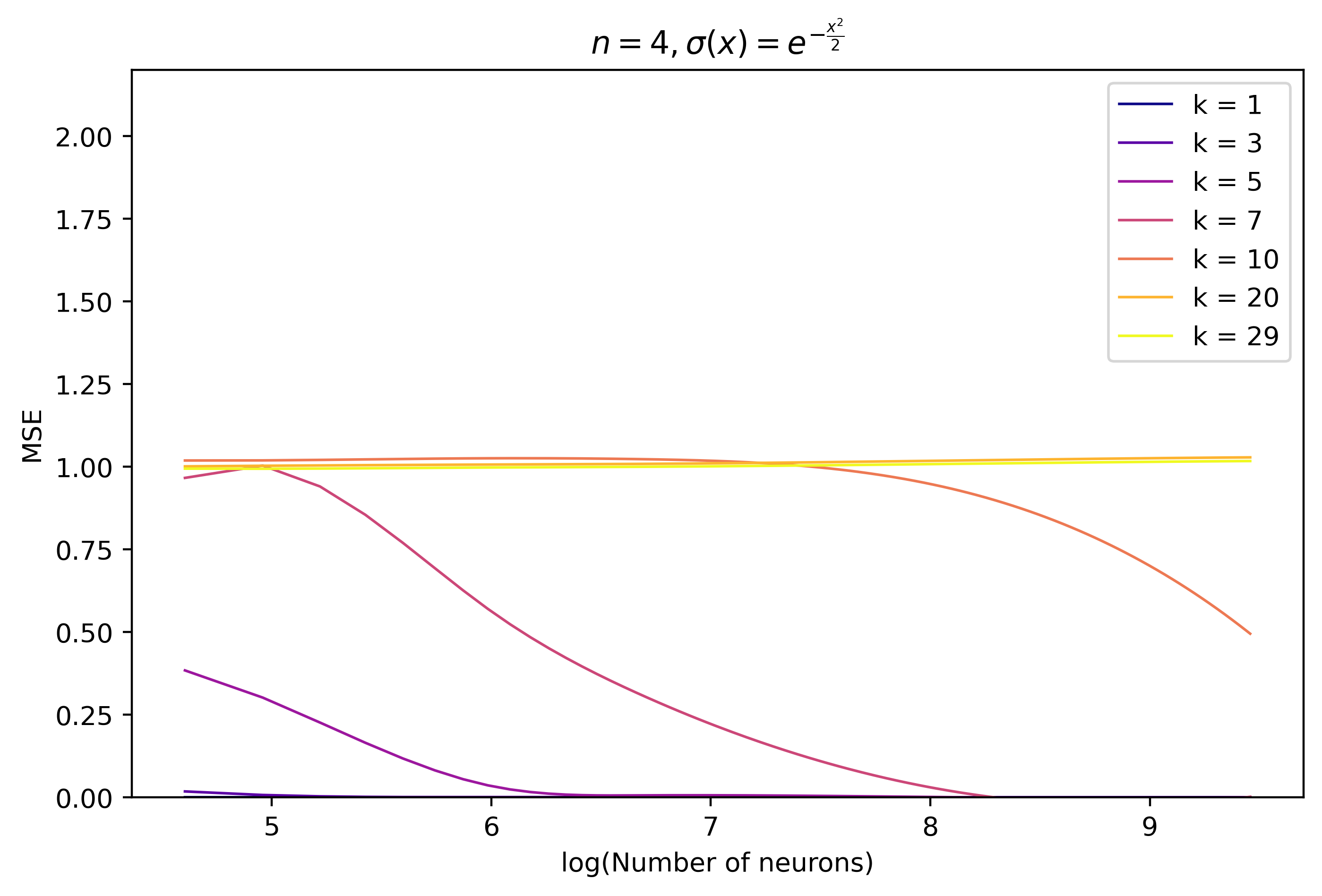}\hfill\includegraphics[width=.3\textwidth]{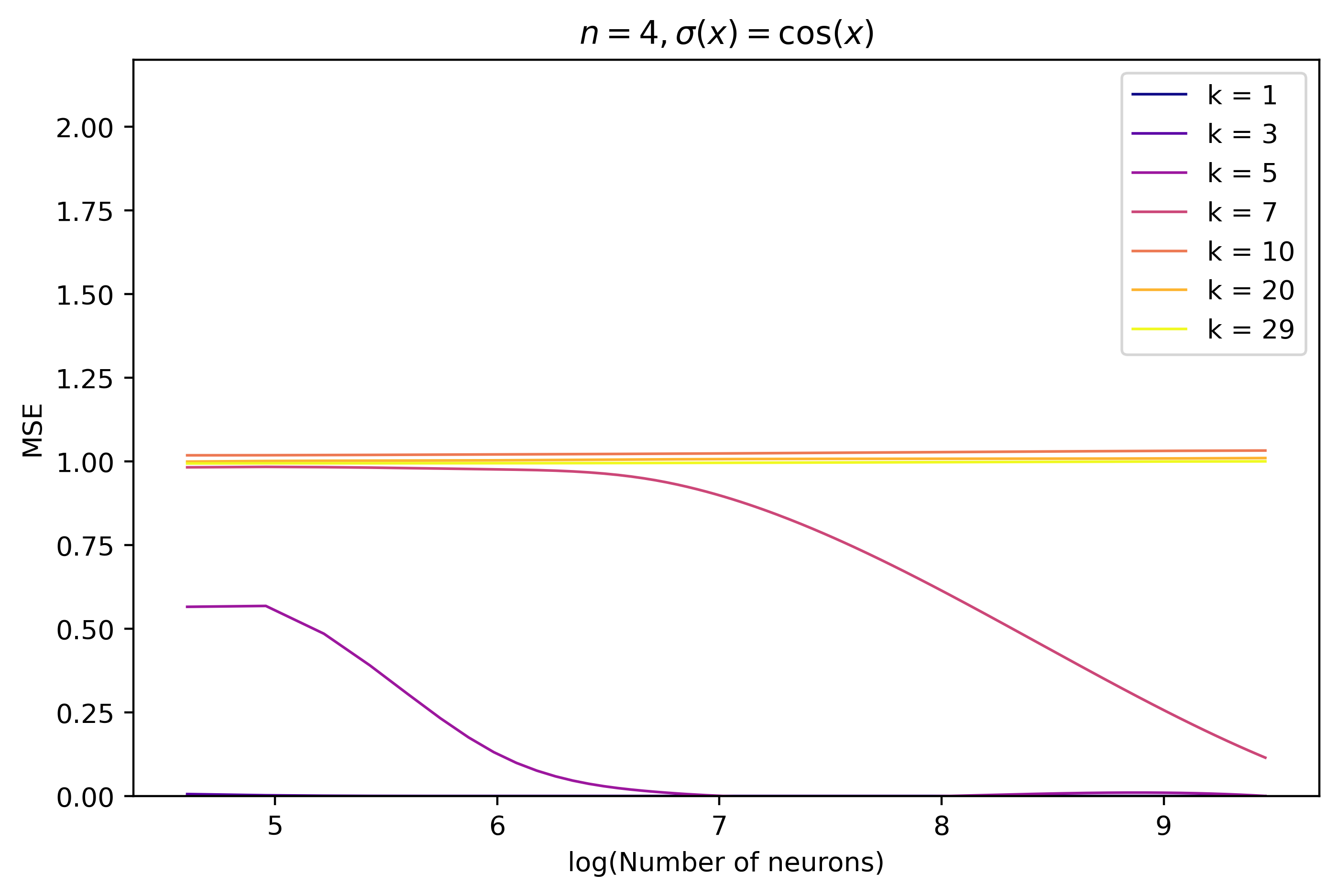}\qquad \\
\end{center}
\caption{Achieved MSE when learning $Y_k$ by random features model as a function of the number of hidden neurons ($n=4$). Pictures for other $n$ can be found in the Appendix.}\label{RFM-figure}
\end{figure*}
\else
\fi
{\bf Learnability of $Y_k$ by random features model (RFM).} In the proof of Theorem~\ref{negative} a lower bound on $C_{Y_k, {\mathbb S}^{n-1}}$ was given for a certain function $Y_k$. The function itself was described in the proof of Lemma~\ref{key-lemma}. It can be simply defined by $Y_k = \sum_{j=1}^{N(n,k)}x_j Y_{k,j}$ where ${\mathbf x}\in {\mathbb R}^{N(n,k)}$ is a random vector distributed according to the uniform distribution on ${\mathbb S}^{n-1}$.

We experimented with the learnability of $Y_k$ by random features model (RFM), i.e. a 2-NN with a single layer of hidden neurons in which only the output layer's weights are trained (in fact, they are also not trained, but analytically computed using a linear regression formula). Also, we experimented with an optimized RFM (RFM+opt), which is a method in which we first compute weights by RFM and afterward train weights (of both the first and the second layer) by Adam.
According to the inequality~\eqref{rkhs-barron}, the square of the $\mathcal{H}_{\Sigma^{(1)}}$-norm is proportional to the number of neurons that is enough to approximate the target function (also, from the construction of Theorem~\ref{main-theorem} it is clear that $\frac{\|Y_k\|^2_{\mathcal{H}_{\Sigma^{(1)}}}}{\varepsilon^2}$ is an upper bound on the number of neurons needed for RFM to $\varepsilon$-approximate the target function). Since eigenvalues of ReLU ($\sigma_1$, Tanh, sigmoid) NNGP kernel $\Sigma^{(1)}$ decay slower than in the case of the cosine/gaussian activation, $Y_k$ has a smaller RKHS norm ($\|Y_k\|_{\mathcal{H}_{\Sigma^{(1)}}}=\frac{1}{\lambda_k}$), and it is natural to expect that $Y_k$ will be better approximated by the first type of networks than by cosine/gaussian networks. As Figure~\ref{RFM-figure} shows, this is indeed the case for RFM (figures for RFM+opt can be found in the Appendix and they show that the role of the initialization step fade away as we train all weights). This means that our separation of activation functions into two classes can be understood in the following way. For the first class of activations ($\lambda_k \gg k^{-n-\frac{2}{3}}\log^{1/2} k$), RFM often allows to simply construct an approximation that is better than the one that is guaranteed by Barron's theorem. For the second class of activations, this cannot be done by RFM. Note that, unlike $\mathcal{H}_{\Sigma^{(1)}}$-norm, Barron's norm does not depend on $\sigma$. Unlike RFM, Barron's approximation is quite non-constructive. For the second type of activation function, it is an interesting open problem how to simply and ``without any optimization'' approximate a function better than Barron's approximation.

\ifTR
\begin{figure*}
\centering
\,\includegraphics[width=.3\textwidth]{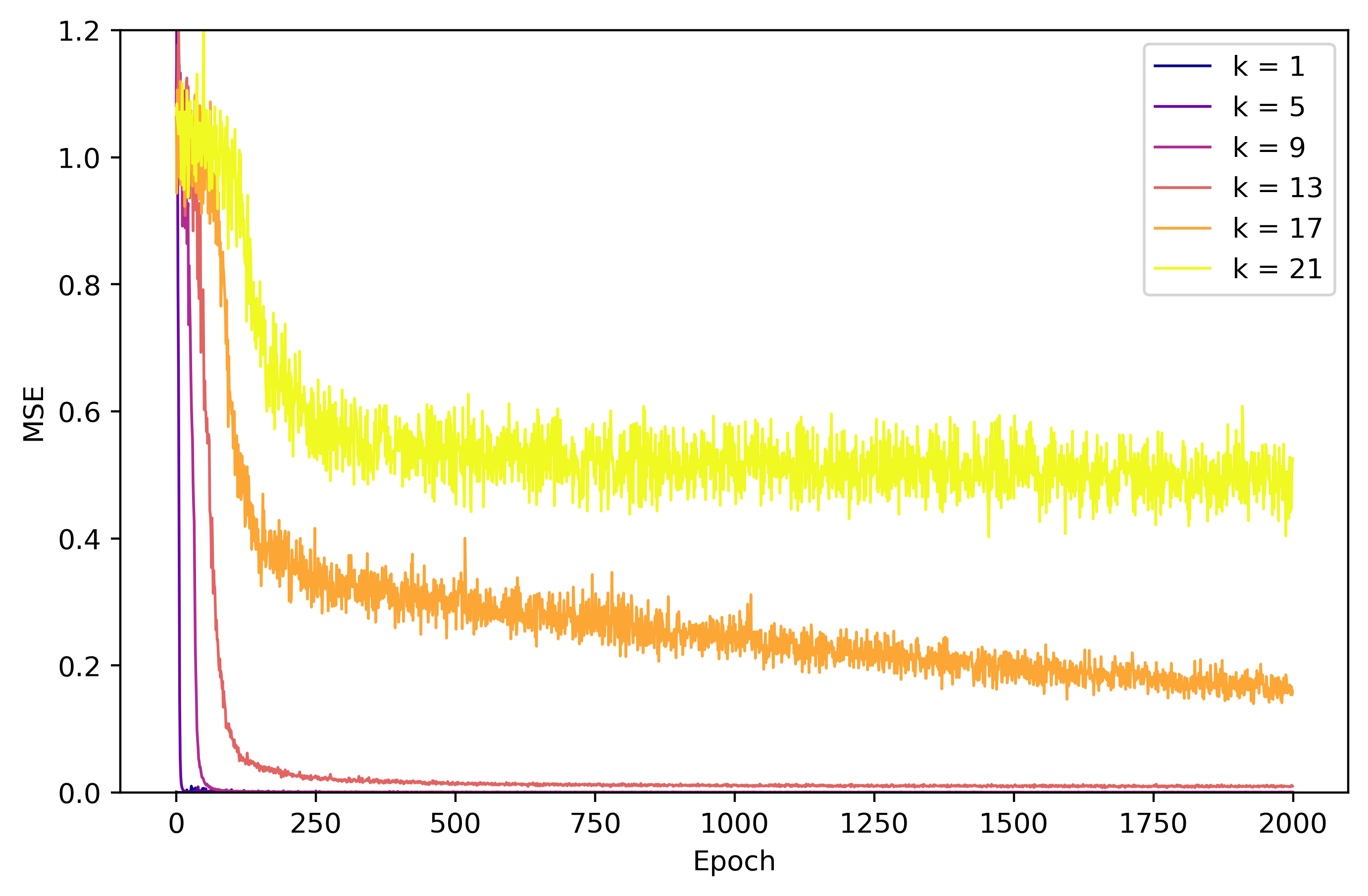}\hfill\includegraphics[width=.3\textwidth]{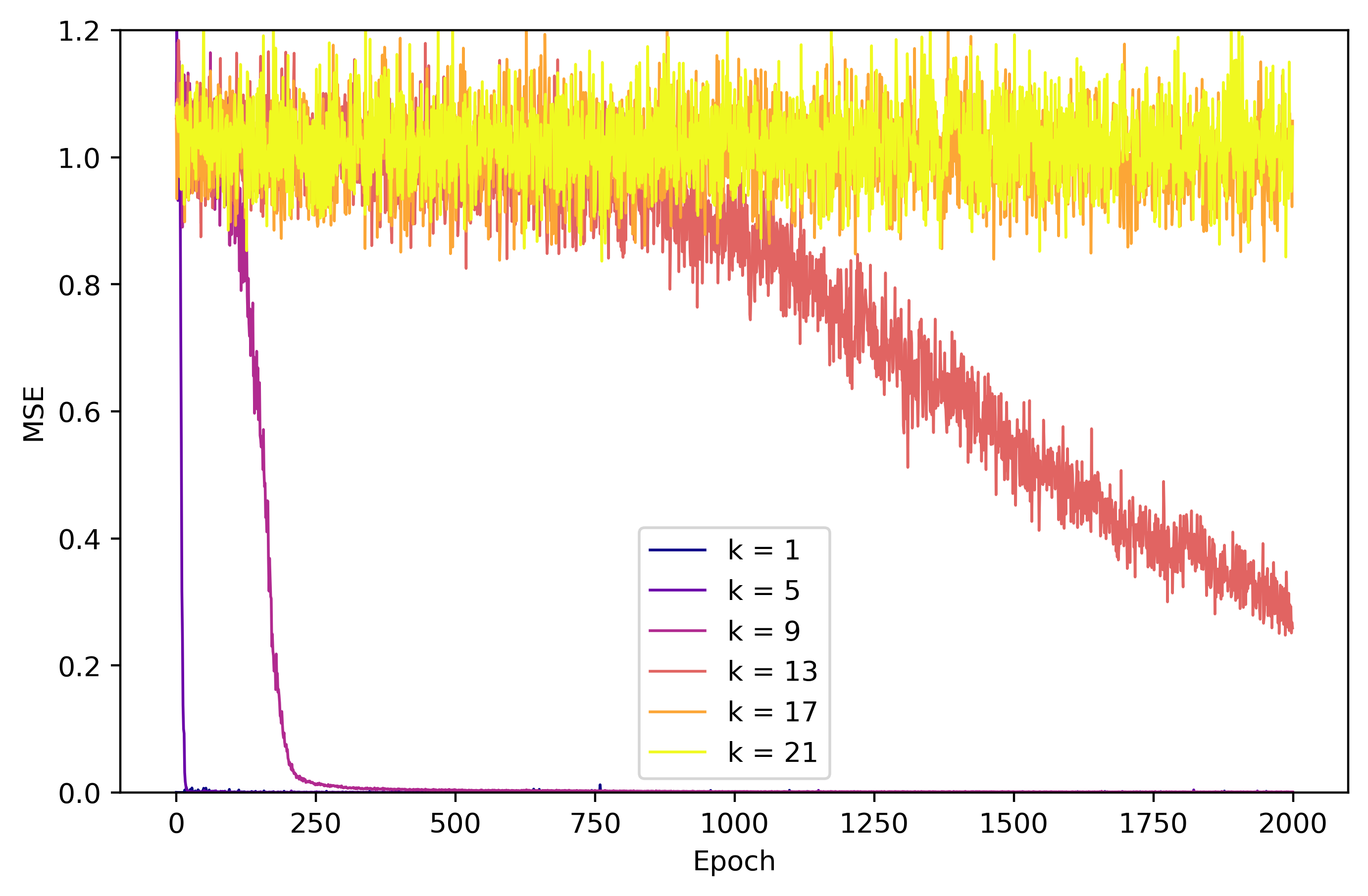}\hfill\includegraphics[width=.3\textwidth]{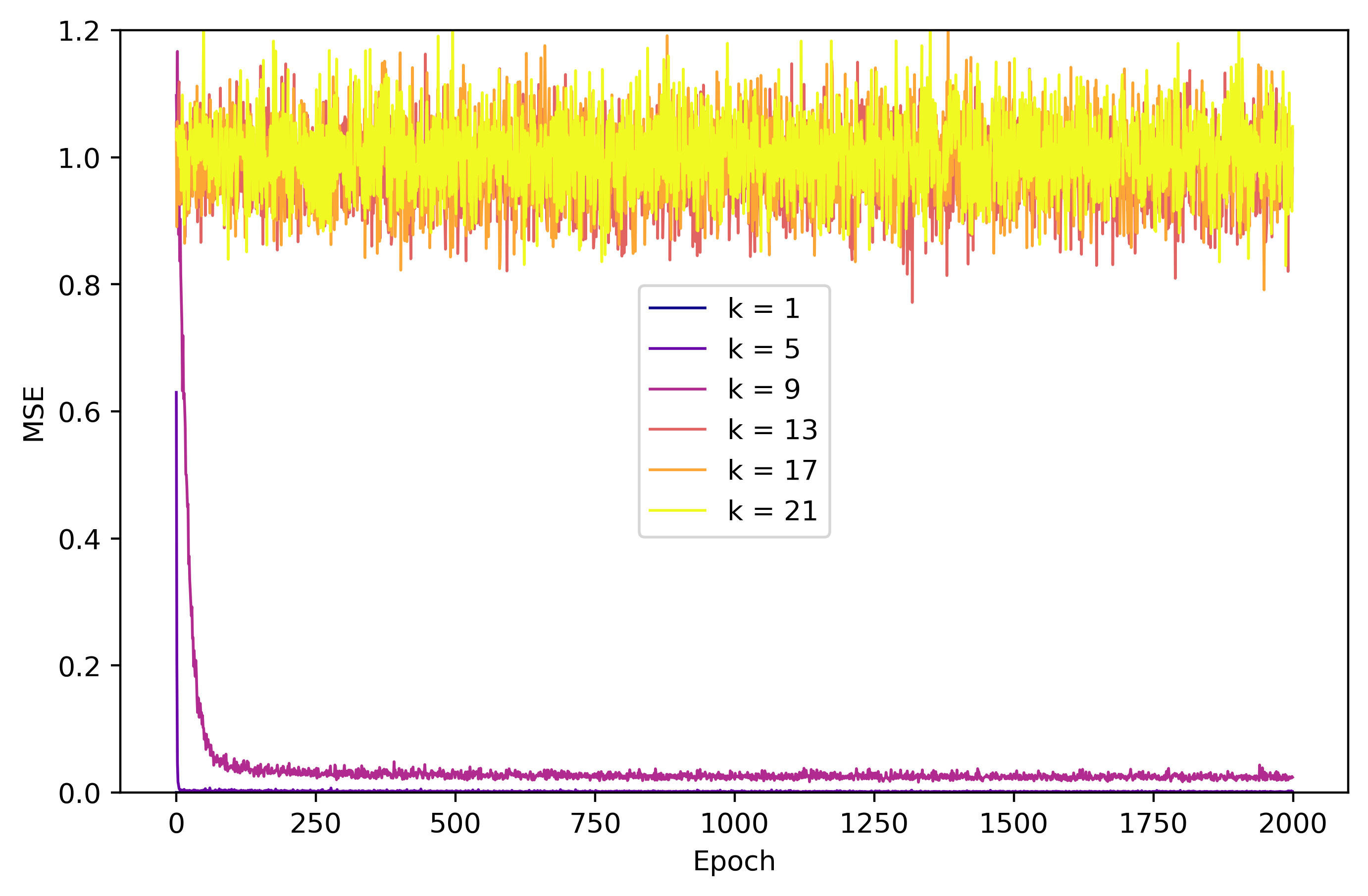}\qquad \\
\,\includegraphics[width=.3\textwidth]{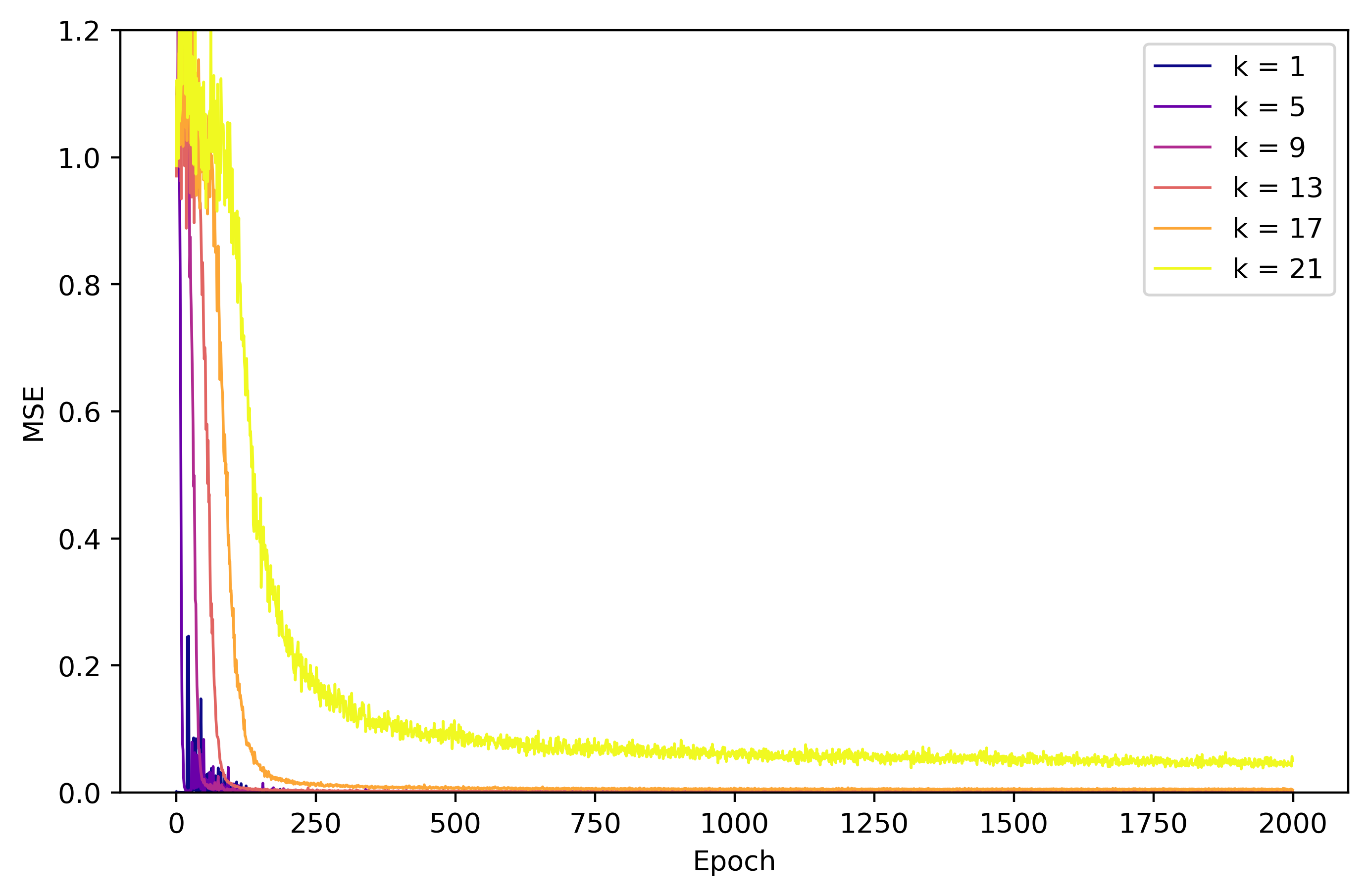}\hfill\includegraphics[width=.3\textwidth]{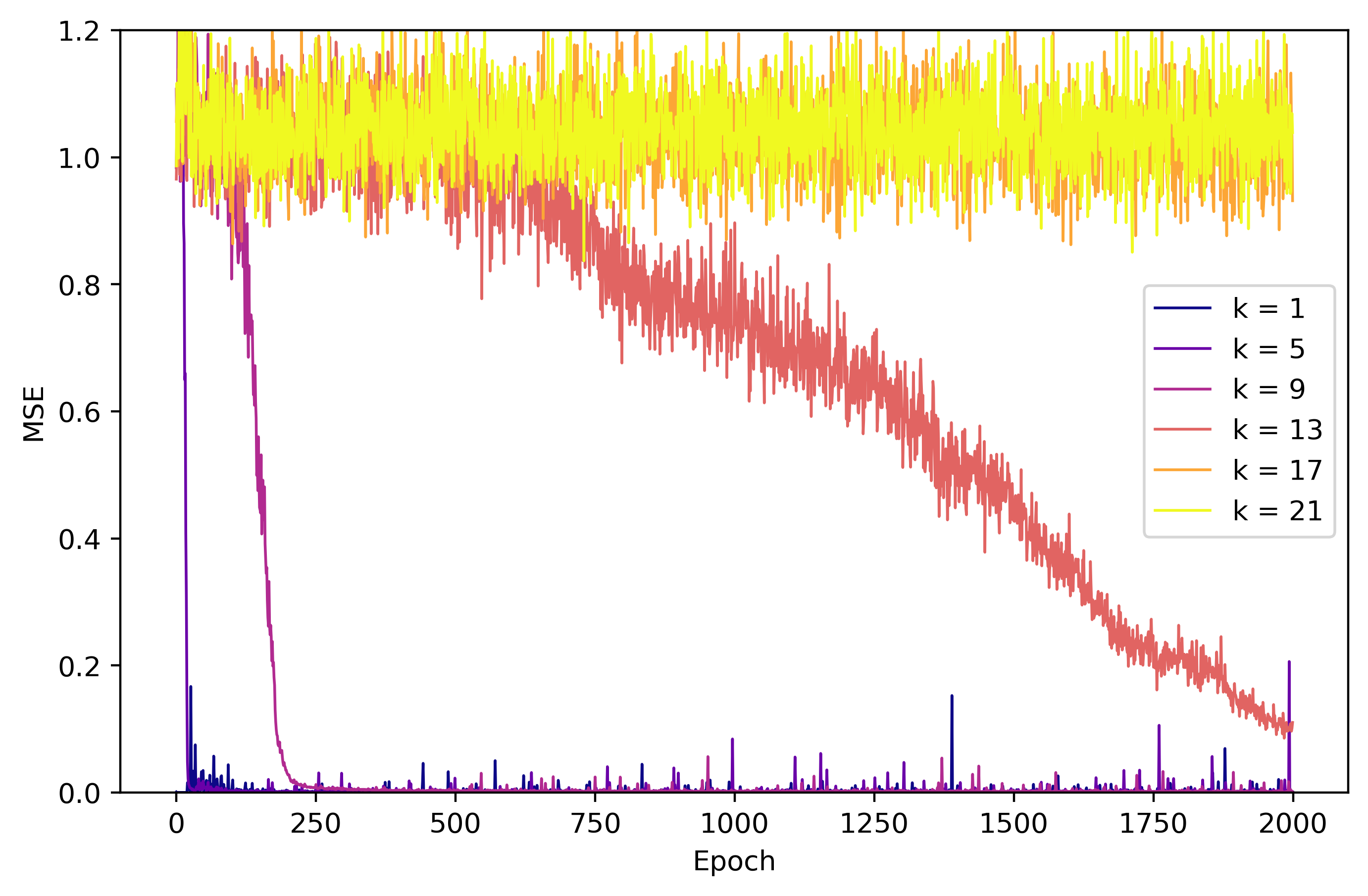}\hfill\includegraphics[width=.3\textwidth]{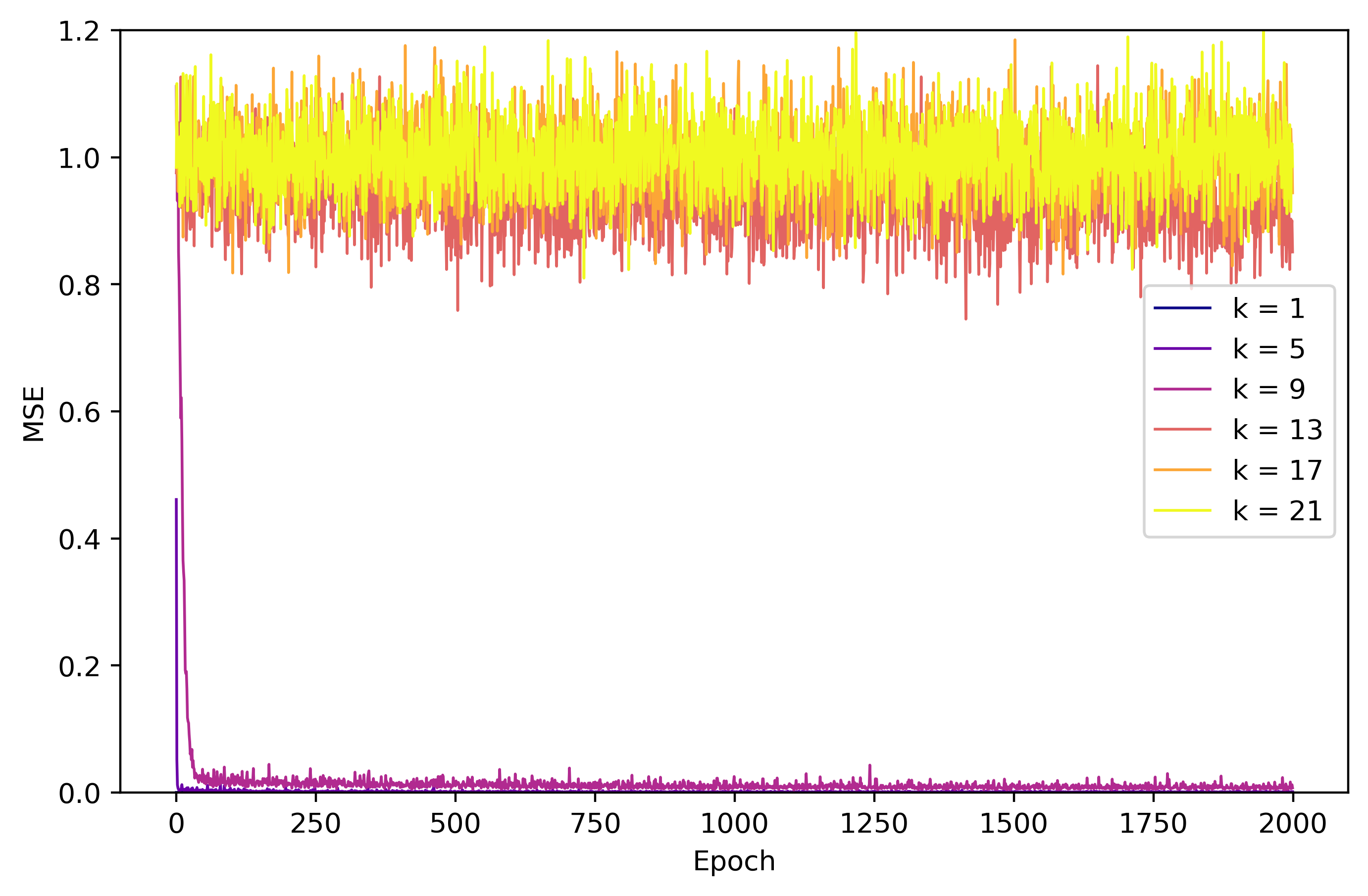}\qquad \\
\caption{MSE dynamics during learning $Y_k$ by a 2-NN with the number of hidden neurons 256 and 1024 (rows) and an activation function (columns): (a) $\sigma(x) = e^{-\frac{x^2}{2}}$, (b) $\sigma(x) = \cos(x)$, (c) $\sigma(x) = {\rm ReLU}(x)$.}
\label{fig:spherical}
\end{figure*}
\else
\fi

{\bf Learnability of $Y_k$ by gradient-based methods: a tension with the NTK theory. } For the domain $\boldsymbol{\Omega} = {\mathbb S}^{n-1}$ not only the NNGP kernel is zonal, but also the NTK is. Therefore, theorems~\ref{negative} and~\ref{positive} can be applied to the NTK. Let us denote the NTK of a 2-layer NN by $K$. According to Remark~\ref{key-remark}, both the norm of $Y_k$ in the Barron space and the norm of $Y_k$ in $\mathcal{H}_K$ grow very rapidly with $k$. In other words, neither Barron's theorem nor Theorem~\ref{main-theorem} guarantees the existence of a short 2-NN that approximates $Y_k$. Moreover, according to the NTK theory, this function must be a hard target for the gradient descent training of 2-NNs, in the infinite width limit. Let us show that.

Recall that 2-NNs trained by the gradient descent, in the infinite width limit, with a weight vector properly initialized to ${\mathbf w}_0$ and with a regularization term $\lambda\|{\mathbf w}-{\mathbf w}_0\|^2$, are equivalent to the Kernel Ridge Regression, i.e. to the optimization task $\min_{f\in \mathcal{H}_K}{\rm MSE}(f)+\lambda\|f\|_{\mathcal{H}_K}^2$~\cite{Hu2020Simple}. Therefore, a large norm of $Y_k$ in $\mathcal{H}_K$ means that $\lambda$ must be a small parameter for such a 2-NN to succeed, or alternatively, the optimal weight vector should be located far from the initialization ${\mathbf w}_0$.

Those considerations make us expect that this function is hard to approximate by a 2-NN, and is especially hard to learn if an activation function has the NTK eigenvalues decreasing exponentially fast (like the Gaussian or the cosine functions). However, our experiments show that $Y_k$ can be successfully trained by gradient-based methods. Moreover, the performance of different activation functions contradicts the NTK theory. Certainly, this outcome is due to the finiteness of real neural networks.

A synthetic supervised dataset with $Y_k$ as a target function was generated.
A loss function to minimize was set to the mean squared error (MSE), and an optimization algorithm was set to the Adam optimizer with the learning rate 0.01~\cite{DBLP:journals/corr/KingmaB14}. We experimented with the number of neurons of a hidden layer equal to 256 and 1024.
On Figure~\ref{fig:spherical} one can see the behavior of the loss function (averaged over 5 independent repeated experiments) during the training process for 2-NNs with activation functions (a) $\sigma(x) = e^{-\frac{x^2}{2}}$, (b) $\sigma(x) = \cos(x)$, (c) $\sigma(x) = {\rm ReLU}(x)$.

As we see, the Gaussian function outperforms the cosine function, and the cosine function outperforms the ReLU for all orders $k$. The achieved MSE for the Gaussian function is non-trivial (i.e. smaller than the baseline 1.0 corresponding to a trained zero function) for all $k=1,\cdots, 21$, while the cosine function fails for $k\geq 14$ and the ReLU fails for $k\geq 10$. Plots are given for $n=3$, though we report very similar results for spherical harmonics in higher dimensions (using a code with precomputed spherical harmonics~\cite{10.5555/3524938.3525200}). Possibly, looking into the case of $n=2$ allows us to explain the described picture. In that case, ${\mathbb S}^1$ is isomorphic to $[0,2\pi]$ with its endpoints identified, $L_2({\mathbb S}^1)$ corresponds to periodic functions on $[0,2\pi]$, and $Y_k$ can be given as $Y_k(x) = \cos(kx+\phi)$. Therefore, it is not surprising that $Y_k$ can be trained by a 2-NN with the cosine activation function or the Gaussian function (the latter is capable of approximating cosine's waves).

To summarize, we conclude that even if a function's norms blow up in $\mathcal{H}_K$, this does not necessarily imply its hardness as a target for the gradient descent training of 2-NNs.
These figures demonstrate that the approximation theorems that we analyzed are responsible only for certain aspects of the approximation power of NNs. Moreover, the Neural Tangent Kernel theory definitely does not explain the learnability of functions by 2-NNs with a finite width of hidden layers.

Other experiments can be found in the Appendix. Our code is available on \href{https://github.com/k-nic/NNGP}{github} to facilitate the reproducibility of the results.

\section{Conclusions}
The paper is dedicated to an approximation theory of multi-layer feedforward neural networks based on the NNGP kernel. We show that if a function has a moderate norm in the RKHS defined by the NNGP kernel, then it can be successfully approximated by a corresponding NN.

Besides this, we compare two functional norms, the Barron norm and the RKHS norm of zonal kernels. We classified all activation functions into two groups, those for which the norm in the Barron space is not dominated by the RKHS norm, and those for which the opposite is true. We gave examples of activation functions for both classes. We observed that random spherical harmonics of order $k$ have large norms in both spaces, yet are very well learnable by gradient-based methods with realistic neural networks. It is a topic of future research to study theoretically why such functions are accurately approximable and easily learnable by practical NNs.

\newcommand{\noopsort}[1]{}

\newpage

\onecolumn

\title{Multi-layer random features and the approximation power of neural networks\\(Supplementary Material)}
\maketitle

\appendix

\section{Proof of Theorem~\ref{finite-width-kernel}}
\begin{proof}
By construction,
\begin{equation}\label{nn-embedding}
\begin{split}
\tilde{\Sigma}^{(L)}({\mathbf x},{\mathbf x}') = {\mathbb E}_{W^{(1)}, \cdots, W^{(L)}} [\alpha^{(L)}({\mathbf x}, \theta) \alpha^{(L)}({\mathbf x}', \theta)].
\end{split}
\end{equation}
where $W^{(1)}_{ij}\sim \mathcal{N}(0, 1)$ and $W^{(h)}_{ij}\sim \mathcal{N}(0, \frac{1}{n_{h-1}}), h=2,\cdots, L$.

Let $L_2(\theta)$ denote the Hilbert space of real-valued functions on $\prod_{h=1}^{L-1} {\mathbb R}^{n_h\times n_{h-1}}\times {\mathbb R}^{n_L}$ with the inner product $\langle f, g\rangle_{L_2(\theta) } = {\mathbb E}_{W^{(1)}, \cdots, W^{(L)}} [f(W^{(1)}, \cdots, W^{(L)})g(W^{(1)}, \cdots, W^{(L)})]$. The latter object is simply a weighted $L_2$-space.

Let $\mathcal{H}_0$ be a span of $\{\tilde{\Sigma}^{(L)}({\mathbf x}, \cdot)\}_{{\mathbf x}\in \boldsymbol{\Omega}}$ equipped with the inner product $\langle \sum_{i=1}^k a_i \tilde{\Sigma}^{(L)}({\mathbf x}_i, \cdot), \sum_{i=1}^l b_i \tilde{\Sigma}^{(L)}({\mathbf y}_i, \cdot)\rangle_{\mathcal{H}_0} = \sum_{i=1}^k \sum_{j=1}^l a_i b_j \tilde{\Sigma}^{(L)}({\mathbf x}_i, {\mathbf y}_j)$.
Now let us assume that $f\in \mathcal{H}_{\tilde{\Sigma}^{(L)}}$. Recall that $\mathcal{H}_{\tilde{\Sigma}^{(L)}}$ is a completion $\mathcal{H}_0$, therefore, we have
\begin{equation*}
\begin{split}
\forall {\mathbf x}\in \boldsymbol{\Omega}, f({\mathbf x}) = \lim_{i\to +\infty}f_i({\mathbf x}),
\end{split}
\end{equation*}
where $f_i = \sum_{j=1}^{m_i} a_{ij}\tilde{\Sigma}^{(L)}({\mathbf x}_{ij}, \cdot)$ and $\lim\limits_{i\to +\infty}\sup\limits_{p\in {\mathbb N}} \|f_{i+p}-f_i\|_{\mathcal{H}_0}=0$.
Using~\eqref{nn-embedding} we conclude that the Cauchy sequence $f_i = \sum_{j=1}^{m_i} a_{ij}\tilde{\Sigma}^{(L)}({\mathbf x}_{ij}, \cdot)$ satisfies
\begin{equation*}
\begin{split}
\langle f_i, f_{i'} \rangle_{\mathcal{H}_{\tilde{\Sigma}^{(L)}}} =
\sum_{j=1}^{m_i}\sum_{j'=1}^{m_{i'}}
a_{ij} a_{i'j'}{\mathbb E}_{\theta} [\alpha^{(L)}({\mathbf x}_{ij}, \theta) \alpha^{(L)}({\mathbf x}_{i'j'}, \theta)] =
\langle\sum_{j=1}^{m_i}a_{ij} \alpha^{(L)}({\mathbf x}_{ij}, \cdot), \sum_{j'=1}^{m_{i'}}a_{i'j'} \alpha^{(L)}({\mathbf x}_{i'j'}, \cdot)\rangle_{L_2(\theta)}.
\end{split}
\end{equation*}
Thus, $\langle f_i, f_{i'} \rangle_{\mathcal{H}_K} = \langle g_i, g_{i'} \rangle_{L_2(\theta)}$ where
\begin{equation*}
\begin{split}
g_i = \sum_{j=1}^{m_i}a_{ij} \alpha^{(L)}({\mathbf x}_{ij}, \cdot).
\end{split}
\end{equation*}
From $\|f_i-f_{i'}\|_{\mathcal{H}_{\tilde{\Sigma}^{(L)}}} = \|g_i-g_{i'}\|_{L_2(\theta)}$ we conclude that $\{g_i\}$ is also a Cauchy sequence, but in $L_2(\theta)$. Let us denote its limit in $L_2(\theta)$ by $g$. Note that $\|g\|_{L_2(\theta)} = \|f\|_{\mathcal{H}_{\tilde{\Sigma}^{(L)}}}$.

Since $\sigma$ is bounded we conclude that $\alpha^{(L)}({\mathbf x}, \cdot)\in L_2(\theta)$ for any ${\mathbf x}\in \boldsymbol{\Omega}$. Thus, we conclude
\begin{equation*}
\begin{split}
\langle g, \alpha^{(L)}({\mathbf x}, \cdot)\rangle_{L_2(\theta)} =
\lim_{k\to +\infty}\langle g_k, \alpha^{(L)}({\mathbf x}, \cdot)\rangle_{L_2(\theta)} =
\lim_{i\to +\infty} \sum_{j=1}^{m_i}a_{ij} \tilde{\Sigma}^{(L)}({\mathbf x}_{ij}, {\mathbf x})=f({\mathbf x}).
\end{split}
\end{equation*}
Thus, we obtained a key integral representation for $f$:
\begin{equation*}
\begin{split}
f({\mathbf x}) = {\mathbb E}_{W^{(1)}, \cdots, W^{(L)}} [g(W^{(1)}, \cdots, W^{(L)})\alpha^{(L)}({\mathbf x}, W^{(1)}, \cdots, W^{(L)})].
\end{split}
\end{equation*}
Let us introduce $T$ independent copies of $\theta$: $\theta_1, \cdots, \theta_{T}$. We define
\begin{equation*}
\begin{split}
\tilde{f}({\mathbf x}, \{\theta_i\}^{T}_{i=1}) = \frac{1}{T}\sum_{i=1}^{T}g(\theta_i)\alpha^{(L)}({\mathbf x}, \theta_i).
\end{split}
\end{equation*}
By construction, $f({\mathbf x}) = {\mathbb E}_{\theta_i} [\tilde{f}({\mathbf x}, \{\theta_i\}^{T}_{i=1})]$.
Further, we bound the variance of the distance between $f$ and $\tilde{f}$ by
\begin{equation*}
\begin{split}
{\mathbb E}_{\theta_i}\big[\|f-\tilde{f}(\cdot, \{\theta_i\}^{T}_{i=1}) \|_{L_2(\boldsymbol{\Omega}, \mu)}^2\big] = \\
{\mathbb E}_{\theta_i}\big(\langle f,f\rangle_{L_2(\boldsymbol{\Omega}, \mu)}-
2\langle f,\tilde{f}(\cdot, \{\theta_i\}^{T}_{i=1})\rangle_{L_2(\boldsymbol{\Omega}, \mu)}+
\|\tilde{f}(\cdot, \{\theta_i\}^{T}_{i=1})\|^2_{L_2(\boldsymbol{\Omega}, \mu)}\big)= \\
{\mathbb E}_{X\sim \mu}{\mathbb E}_{\theta_i}\big[|\tilde{f}(X, \{\theta_i\}^{T}_{i=1})|^2 \big]-\langle f,f\rangle_{L_2(\boldsymbol{\Omega}, \mu)} =
{\mathbb E}_{X\sim \mu}\big[{\rm Var}_{\theta_i}[\tilde{f}(X, \{\theta_i\}^{T}_{i=1})\mid X]\big].
\end{split}
\end{equation*}
One can unfold ${\rm Var}_{\theta_i}[\tilde{f}(X, \{\theta_i\}^{T}_{i=1})\mid X]$ in the following way:
\begin{equation*}
\begin{split}
{\rm Var}_{\theta_i}[\frac{1}{T}\sum_{i=1}^{T}g(\theta_i)\alpha^{(L)}(X, \theta_i)\mid X] =
\frac{{\rm Var}_{\theta_i}[g(\theta_i)\alpha^{(L)}(X, \theta_i)\mid X] }{T}.
\end{split}
\end{equation*}
Using boundedness of $\sigma$ and ${\mathbb E}_{\theta_i}\big[|g(\theta_i)|^2\big] = \|f\|^2_{\mathcal{H}_{\tilde{\Sigma}^{(L)}}}$ we conclude that
\begin{equation*}
\begin{split}
{\rm Var}_{\theta}[g(\theta_i)\alpha^{(L)}(X, \theta_i)]\leq {\mathbb E}[|g(\theta_i)\alpha^{(L)}(X, \theta_i)|^2]\leq \|\sigma\|_\infty^2\|f\|^2_{\mathcal{H}_{\tilde{\Sigma}^{(L)}}},
\end{split}
\end{equation*}
and
\begin{equation*}
\begin{split}
{\mathbb E}_{\theta_i}\big[\|f-\tilde{f}(\cdot, \{\theta_i\}^{n_{L+1}}_{i=1}) \|_{L_2(\boldsymbol{\Omega}, \mu)}^2\big] = {\mathbb E}_{X\sim \mu}\big[{\rm Var}_{\theta_i}[\tilde{f}(X, \{\theta_i\}^{n_{L+1}}_{i=1})\mid X]\big]\leq \frac{\|\sigma\|_\infty^2\|f\|^2_{\mathcal{H}_{\tilde{\Sigma}^{(L)}}}}{T}.
\end{split}
\end{equation*}
Since the latter expected value is smaller than $\frac{\|\sigma\|_\infty^2\|f\|^2_{\mathcal{H}_{\tilde{\Sigma}^{(L)}}}}{T}$, then there exist $\{\theta_i\}$ such that $\|f-\tilde{f}(\cdot, \{\theta_i\}^{T}_{i=1}) \|_{L_2(\boldsymbol{\Omega}, \mu)}^2\leq \frac{\|\sigma\|_\infty^2\|f\|^2_{\mathcal{H}_{\tilde{\Sigma}^{(L)}}}}{T}$,
from which the statement of the theorem follows directly.
\end{proof}

\section{Proof of Theorem~\ref{emp-kernel-concentration}: concentration of $\Sigma^{(h)}_{\rm emp}({\mathbf x},{\mathbf x}')$ around its mean}
Let us define $\mathcal{M}_+\subseteq {\mathbb R}^{2\times 2}$ as a set of positive definite $2\times 2$-matrices.
For a given function $\psi: {\mathbb R}\to {\mathbb R}$, let us introduce the mapping $\overline{\psi}: \mathcal{M}_+\to {\mathbb R}$ by $\overline{\psi}(\Sigma) = {\mathbb E}_{(u,v)\sim \mathcal{N}({\mathbf 0},\Sigma)}[\psi(u)\psi(v)]$. For completeness, properties of $\overline{\psi}$ that we will need (with their proofs) are given in Section~\ref{sigma_bar}.

Let us denote
\begin{equation}
\begin{split}
\gamma^{(h)}=\sup_{{\mathbf x},{\mathbf x}'}{\rm Var}[\Sigma^{(h)}_{\rm emp}({\mathbf x},{\mathbf x})]+{\rm Var}[\Sigma^{(h)}_{\rm emp}({\mathbf x}',{\mathbf x}')]+2{\rm Var}[\Sigma^{(h)}_{\rm emp}({\mathbf x},{\mathbf x}')].
\end{split}
\end{equation}

\begin{lemma} For $h=0,\cdots,L-1$, we have
\begin{equation*}
\begin{split}
{\rm Var}[\Sigma^{(h+1)}_{\rm emp}({\mathbf x},{\mathbf x}')] \leq\frac{{\mathbb E}[\overline{\sigma^2}(\Lambda^{(h)}_{\rm emp}({\mathbf x},{\mathbf x}'))]}{n_{h+1}}+{\rm Var}[\overline{\sigma}(\Lambda^{(h)}_{\rm emp}({\mathbf x},{\mathbf x}'))] .
\end{split}
\end{equation*}

\end{lemma}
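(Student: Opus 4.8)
The plan is to apply the law of total variance, conditioning on the weight matrices of the first $h$ layers, which I will denote collectively by $\mathcal{W} = (W^{(1)},\dots,W^{(h)})$. Write $\Sigma^{(h+1)}_{\rm emp}({\mathbf x},{\mathbf x}') = \frac{1}{n_{h+1}}\sum_{i=1}^{n_{h+1}} Z_i$ with $Z_i = \sigma(\tilde{\alpha}^{(h+1)}_i({\mathbf x},\theta))\,\sigma(\tilde{\alpha}^{(h+1)}_i({\mathbf x}',\theta))$. The structural fact driving the whole argument is that conditionally on $\mathcal{W}$ the vectors $\alpha^{(h)}({\mathbf x},\theta)$ and $\alpha^{(h)}({\mathbf x}',\theta)$ are fixed, and the rows of $W^{(h+1)}$ are independent centered Gaussian vectors (entries $\mathcal{N}(0,\tfrac1{n_h})$, or $\mathcal{N}(0,1)$ when $h=0$); hence the pairs $\big(\tilde{\alpha}^{(h+1)}_i({\mathbf x},\theta),\tilde{\alpha}^{(h+1)}_i({\mathbf x}',\theta)\big)$, $i=1,\dots,n_{h+1}$, are i.i.d.\ centered Gaussian given $\mathcal{W}$, and a direct second-moment computation, $\tfrac1{n_h}\sum_j \alpha^{(h)}_j({\mathbf x},\theta)\alpha^{(h)}_j({\mathbf x}',\theta) = \Sigma^{(h)}_{\rm emp}({\mathbf x},{\mathbf x}')$ and likewise for the diagonal entries, shows that their common covariance matrix is exactly $\Lambda^{(h)}_{\rm emp}({\mathbf x},{\mathbf x}')$. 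Consequently the $Z_i$ are i.i.d.\ given $\mathcal{W}$.

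For the ``expectation of the conditional variance'' term: since the $Z_i$ are i.i.d.\ given $\mathcal{W}$, we get ${\rm Var}[\Sigma^{(h+1)}_{\rm emp}({\mathbf x},{\mathbf x}')\mid\mathcal{W}] = \frac{1}{n_{h+1}}{\rm Var}[Z_1\mid\mathcal{W}] \le \frac{1}{n_{h+1}}{\mathbb E}[Z_1^2\mid\mathcal{W}]$. Using the conditional Gaussianity with covariance $\Lambda^{(h)}_{\rm emp}({\mathbf x},{\mathbf x}')$ together with the definition $\overline{\psi}(\Sigma) = {\mathbb E}_{(u,v)\sim\mathcal{N}({\mathbf 0},\Sigma)}[\psi(u)\psi(v)]$ applied to $\psi=\sigma^2$, this last conditional expectation equals $\overline{\sigma^2}(\Lambda^{(h)}_{\rm emp}({\mathbf x},{\mathbf x}'))$. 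Taking the (outer) expectation over $\mathcal{W}$ bounds this contribution by $\frac{1}{n_{h+1}}{\mathbb E}\big[\overline{\sigma^2}(\Lambda^{(h)}_{\rm emp}({\mathbf x},{\mathbf x}'))\big]$.

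For the ``variance of the conditional expectation'' term: each $Z_i$ has ${\mathbb E}[Z_i\mid\mathcal{W}] = {\mathbb E}_{(u,v)\sim\mathcal{N}({\mathbf 0},\Lambda^{(h)}_{\rm emp}({\mathbf x},{\mathbf x}'))}[\sigma(u)\sigma(v)] = \overline{\sigma}(\Lambda^{(h)}_{\rm emp}({\mathbf x},{\mathbf x}'))$, independently of $i$, so averaging gives ${\mathbb E}[\Sigma^{(h+1)}_{\rm emp}({\mathbf x},{\mathbf x}')\mid\mathcal{W}] = \overline{\sigma}(\Lambda^{(h)}_{\rm emp}({\mathbf x},{\mathbf x}'))$ exactly, and hence this contribution is precisely ${\rm Var}\big[\overline{\sigma}(\Lambda^{(h)}_{\rm emp}({\mathbf x},{\mathbf x}'))\big]$. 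Summing the two contributions via the law of total variance yields the stated inequality.

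The only point that requires genuine care is the conditional-Gaussianity claim and the identification of the conditional covariance with $\Lambda^{(h)}_{\rm emp}$; once that is in place the rest is bookkeeping. A minor degenerate case is $h=0$, where $\alpha^{(0)}({\mathbf x},\theta)={\mathbf x}$ and $\Lambda^{(0)}_{\rm emp}=\Lambda^{(0)}$ is deterministic, so the second term vanishes and the bound reduces to ${\rm Var}[\Sigma^{(1)}_{\rm emp}({\mathbf x},{\mathbf x}')]\le \frac{1}{n_1}{\mathbb E}[\overline{\sigma^2}(\Lambda^{(0)})]$, consistent with the general argument (the conditioning on $\mathcal{W}$ is then vacuous).
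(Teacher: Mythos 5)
Your proposal is correct and follows essentially the same route as the paper's proof: the law of total variance conditioned on $W^{(1)},\dots,W^{(h)}$, conditional i.i.d.-ness of the summands with conditional covariance $\Lambda^{(h)}_{\rm emp}({\mathbf x},{\mathbf x}')$, the bound ${\rm Var}[Z_1\mid\mathcal{W}]\le{\mathbb E}[Z_1^2\mid\mathcal{W}]=\overline{\sigma^2}(\Lambda^{(h)}_{\rm emp}({\mathbf x},{\mathbf x}'))$, and the exact identification of the conditional mean with $\overline{\sigma}(\Lambda^{(h)}_{\rm emp}({\mathbf x},{\mathbf x}'))$. Your explicit verification of the conditional covariance and the remark on the degenerate case $h=0$ are details the paper leaves implicit, but the argument is the same.
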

\begin{proof}
$\Sigma^{(h+1)}_{\rm emp}({\mathbf x},{\mathbf x}')$ can be represented as
\begin{equation*}
\begin{split}
\Sigma^{(h+1)}_{\rm emp}({\mathbf x},{\mathbf x}') = \frac{1}{n_{h+1}} \sum_{i=1}^{n_{h+1}} \sigma(\sum_{j=1}^{n_h}W^{(h+1)}_{ij}\alpha^{(h)}_j({\mathbf x}, \theta)) \sigma(\sum_{j=1}^{n_h}W^{(h+1)}_{ij}\alpha^{(h)}_j({\mathbf x}', \theta)).
\end{split}
\end{equation*}
Given $W_1, \cdots, W_h$, $\sigma(\sum_{j=1}^{n_h}W^{(h+1)}_{ij}\alpha^{(h)}_j({\mathbf x}, \theta))$ are independent for different $i = 1, \cdots, n_{h+1}$.
Therefore,
\begin{equation*}
\begin{split}
{\rm Var}[\Sigma^{(h+1)}_{\rm emp}({\mathbf x},{\mathbf x}')\mid W_1, \cdots, W_h] = \\
\frac{1}{n^2_{h+1}} \sum_{i=1}^{n_{h+1}} {\rm Var}[\sigma(\sum_{j=1}^{n_h}W^{(h+1)}_{ij}\alpha^{(h)}_j({\mathbf x}, \theta)) \sigma(\sum_{j=1}^{n_h}W^{(h+1)}_{ij}\alpha^{(h)}_j({\mathbf x}', \theta))\mid W_1, \cdots, W_h]= \\
\frac{1}{n_{h+1}}(\overline{\sigma^2}(\Lambda^{(h)}_{\rm emp}({\mathbf x},{\mathbf x}'))-\overline{\sigma}(\Lambda^{(h)}_{\rm emp}({\mathbf x},{\mathbf x}'))^2)\leq \frac{\overline{\sigma^2}(\Lambda^{(h)}_{\rm emp}({\mathbf x},{\mathbf x}'))}{n_{h+1}}.
\end{split}
\end{equation*}
By the law of total variance, we have
\begin{equation*}
\begin{split}
{\rm Var}[\Sigma^{(h+1)}_{\rm emp}({\mathbf x},{\mathbf x}')] = {\mathbb E}_{W^{(1)}, \cdots, W^{(h)}}[{\rm Var}[\Sigma^{(h+1)}_{\rm emp}({\mathbf x},{\mathbf x}')\mid W^{(1)}, \cdots, W^{(h)}]]+\\
{\rm Var}_{W^{(1)}, \cdots, W^{(h)}}[{\mathbb E}[\Sigma^{(h+1)}_{\rm emp}({\mathbf x},{\mathbf x}')\mid W^{(1)}, \cdots, W^{(h)}]].
\end{split}
\end{equation*}
From the former, we conclude that the first term is bounded by $\frac{{\mathbb E}[\overline{\sigma^2}(\Lambda^{(h)}_{\rm emp}({\mathbf x},{\mathbf x}'))]}{n_{h+1}}$. The expression inside the second term, by construction, is
\begin{equation*}
\begin{split}
{\mathbb E}[\Sigma^{(h+1)}_{\rm emp}({\mathbf x},{\mathbf x}')\mid W^{(1)}, \cdots, W^{(h)}] = \overline{\sigma}(\Lambda^{(h)}_{\rm emp}({\mathbf x},{\mathbf x}')).
\end{split}
\end{equation*}
After we plug in $\overline{\sigma}(\Lambda^{(h)}_{\rm emp}({\mathbf x},{\mathbf x}'))$ into the second term we obtain the needed inequality.
\end{proof}

By construction, $|\overline{\sigma^2}(\Sigma)|\leq \|\sigma\|_{\infty}^4$, therefore, the first term in the latter lemma is bounded by $\frac{\|\sigma\|_{\infty}^4}{n_{h+1}}$. From Lemma~\ref{prop-sigma} we obtain that $\overline{\sigma}(\Sigma)$ is Lipschitz w.r.t. to the Frobenius norm if $\sigma$, $\sigma'$ and $\sigma''$ are all bounded. The following lemma specifies our bound for such activation functions $\sigma$.

\begin{lemma} If $\overline{\sigma^2}$ is bounded by $c_1$ and $\overline{\sigma}$ is $c_2$-Lipschitz w.r.t. the Frobenius norm, then
\begin{equation*}
\begin{split}
{\rm Var}[\Sigma^{(h+1)}_{\rm emp}({\mathbf x},{\mathbf x}')] \leq \frac{c_1}{n_{h+1}}+c^2_2\gamma^{(h)}.
\end{split}
\end{equation*}
\end{lemma}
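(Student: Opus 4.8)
The plan is to take the inequality from the preceding lemma,
\[
{\rm Var}[\Sigma^{(h+1)}_{\rm emp}({\mathbf x},{\mathbf x}')] \leq\frac{{\mathbb E}[\overline{\sigma^2}(\Lambda^{(h)}_{\rm emp}({\mathbf x},{\mathbf x}'))]}{n_{h+1}}+{\rm Var}[\overline{\sigma}(\Lambda^{(h)}_{\rm emp}({\mathbf x},{\mathbf x}'))],
\]
and bound the two summands separately. The first is immediate: since $\overline{\sigma^2}$ is bounded by $c_1$, we get ${\mathbb E}[\overline{\sigma^2}(\Lambda^{(h)}_{\rm emp}({\mathbf x},{\mathbf x}'))]\le c_1$, hence the first term is at most $\frac{c_1}{n_{h+1}}$.

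For the second term I would use the elementary fact ${\rm Var}[X]\le{\mathbb E}[(X-c)^2]$ valid for any constant $c$, and center at $c=\overline{\sigma}(\tilde{\Lambda}^{(h)}({\mathbf x},{\mathbf x}'))$, the value of $\overline{\sigma}$ at the mean matrix $\tilde{\Lambda}^{(h)}({\mathbf x},{\mathbf x}')={\mathbb E}[\Lambda^{(h)}_{\rm emp}({\mathbf x},{\mathbf x}')]$. Applying the $c_2$-Lipschitz property of $\overline{\sigma}$ with respect to the Frobenius norm gives
\[
{\rm Var}[\overline{\sigma}(\Lambda^{(h)}_{\rm emp}({\mathbf x},{\mathbf x}'))]\le c_2^2\,{\mathbb E}\big[\|\Lambda^{(h)}_{\rm emp}({\mathbf x},{\mathbf x}')-\tilde{\Lambda}^{(h)}({\mathbf x},{\mathbf x}')\|_F^2\big].
\]
I would then expand the squared Frobenius norm of the $2\times2$ symmetric difference matrix entrywise into $(\Sigma^{(h)}_{\rm emp}({\mathbf x},{\mathbf x})-\tilde{\Sigma}^{(h)}({\mathbf x},{\mathbf x}))^2+(\Sigma^{(h)}_{\rm emp}({\mathbf x}',{\mathbf x}')-\tilde{\Sigma}^{(h)}({\mathbf x}',{\mathbf x}'))^2+2(\Sigma^{(h)}_{\rm emp}({\mathbf x},{\mathbf x}')-\tilde{\Sigma}^{(h)}({\mathbf x},{\mathbf x}'))^2$. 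Since each entry of $\tilde{\Lambda}^{(h)}$ is exactly the expectation of the corresponding entry of $\Lambda^{(h)}_{\rm emp}$, taking expectations turns each squared deviation into a variance, so the bound becomes ${\rm Var}[\Sigma^{(h)}_{\rm emp}({\mathbf x},{\mathbf x})]+{\rm Var}[\Sigma^{(h)}_{\rm emp}({\mathbf x}',{\mathbf x}')]+2{\rm Var}[\Sigma^{(h)}_{\rm emp}({\mathbf x},{\mathbf x}')]\le\gamma^{(h)}$ by the definition of $\gamma^{(h)}$. Combining the two estimates yields ${\rm Var}[\Sigma^{(h+1)}_{\rm emp}({\mathbf x},{\mathbf x}')]\le\frac{c_1}{n_{h+1}}+c_2^2\gamma^{(h)}$.

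The argument is essentially routine once the previous lemma is available; the only point requiring care is that one cannot simply center the variance at ${\mathbb E}[\overline{\sigma}(\Lambda^{(h)}_{\rm emp})]$ and invoke a Jensen-type step, which is why I center at $\overline{\sigma}(\tilde{\Lambda}^{(h)})$ and use the suboptimality bound ${\rm Var}[X]\le{\mathbb E}[(X-c)^2]$. A secondary check is that $\Lambda^{(h)}_{\rm emp}({\mathbf x},{\mathbf x}')$ lies (almost surely) in the domain where the Lipschitz estimate for $\overline{\sigma}$ is assumed to hold, which follows from positive semidefiniteness of the empirical Gram matrix of the activations.
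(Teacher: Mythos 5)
Your proposal is correct and follows essentially the same route as the paper: bound the first term of the preceding lemma by $c_1/n_{h+1}$ via boundedness of $\overline{\sigma^2}$, and bound the variance of $\overline{\sigma}(\Lambda^{(h)}_{\rm emp})$ by $c_2^2$ times the expected squared Frobenius deviation, which expands into exactly the combination of variances defining $\gamma^{(h)}$. The only (immaterial) difference is that the paper uses the independent-copy identity ${\rm Var}[X]=\tfrac{1}{2}{\mathbb E}[(X-X')^2]$ whereas you center at the mean matrix via ${\rm Var}[X]\le{\mathbb E}[(X-c)^2]$; both land on the identical bound $c_2^2\gamma^{(h)}$.
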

\begin{proof}
Let us denote by $\Lambda^{(h)}_{\rm emp-c}({\mathbf x},{\mathbf x}')$ an independent copy of $\Lambda^{(h)}_{\rm emp}({\mathbf x},{\mathbf x}')$. Then, from $c_2$-Lipschitzness of $\overline{\sigma}$ we obtain
\begin{equation*}
\begin{split}
{\rm Var}[\overline{\sigma}(\Lambda^{(h)}_{\rm emp}({\mathbf x},{\mathbf x}'))] = \frac{1}{2}{\mathbb E}[\big(\overline{\sigma}(\Lambda^{(h)}_{\rm emp}({\mathbf x},{\mathbf x}'))-\overline{\sigma}(\Lambda^{(h)}_{\rm emp-c}({\mathbf x},{\mathbf x}'))\big)^2]\leq \\
\frac{c^2_2}{2}{\mathbb E}[\|\Lambda^{(h)}_{\rm emp}({\mathbf x},{\mathbf x}')-\Lambda^{(h)}_{\rm emp-c}({\mathbf x},{\mathbf x}')\|_F^2] = c^2_2{\rm Var}[\Sigma^{(h)}_{\rm emp}({\mathbf x},{\mathbf x})]+
c^2_2{\rm Var}[\Sigma^{(h)}_{\rm emp}({\mathbf x}',{\mathbf x}')]+2c^2_2{\rm Var}[\Sigma^{(h)}_{\rm emp}({\mathbf x},{\mathbf x}')].
\end{split}
\end{equation*}
After we plug in the latter bound into the R.H.S. of the previous lemma, we obtain the needed statement.
\end{proof}

\begin{proof}[Proof of Theorem~\ref{emp-kernel-concentration}.]
The previous lemma, together with Lemma~\ref{prop-sigma}, indicates that $\gamma^{(h+1)}$ satisfies
\begin{equation*}
\begin{split}
\gamma^{(h+1)} \leq \frac{4 c_1}{n_{h+1}}+4c^2_2\gamma^{(h)}.
\end{split}
\end{equation*}
where $c_1 = \|\sigma\|_\infty^4$ and $c_2 = \max(\|\sigma''\|_\infty\|\sigma\|_\infty, \|\sigma'\|^2_\infty)$.

Since $\gamma^{(0)}=0$, by applying the latter $h$ times we obtain
\begin{equation}\label{gamma_h}
\begin{split}
\gamma^{(h)} \leq 4 c_1(\frac{1}{n_{h}}+\frac{4 c^2_2}{n_{h-1}}+\cdots+\frac{(4c^2_2)^{h-1}}{n_{1}}).
\end{split}
\end{equation}
Finally,
\begin{equation*}
\begin{split}
2{\rm Var}[\Sigma^{(h)}_{\rm emp}({\mathbf x},{\mathbf x}')]\leq \gamma^{(h)} \leq 4 c_1(\frac{1}{n_{h}}+\frac{4 c^2_2}{n_{h-1}}+\cdots+\frac{(4c^2_2)^{h-1}}{n_{1}}),
\end{split}
\end{equation*}
and the proof is completed.
\end{proof}

\section{Proof of Theorem~\ref{deviation}: An approximation of $\tilde{\Sigma}^{(h)}({\mathbf x},{\mathbf x}')$ by $\Sigma^{(h)}({\mathbf x},{\mathbf x}')$}
Note that $\Sigma^{(h+1)}({\mathbf x},{\mathbf x}') = \overline{\sigma}(\Lambda^{(h)}({\mathbf x},{\mathbf x}'))$. Relationship between their finite versions, i.e. $\tilde{\Sigma}^{(h+1)}({\mathbf x},{\mathbf x}')$ and $\tilde{\Sigma}^{(h)}({\mathbf x},{\mathbf x}')$ is trickier.
\begin{lemma}\label{backward} If $\overline{\sigma}$ is twice continuously differentiable and $|\frac{\partial^2\overline{\sigma}(\Sigma)}{\partial \Sigma_{a,b}\partial \Sigma_{c,d}}|\leq C$, we have
\begin{equation*}
\begin{split}
|\tilde{\Sigma}^{(h+1)}({\mathbf x},{\mathbf x}') - \overline{\sigma}( \tilde{\Lambda}^{(h)}({\mathbf x},{\mathbf x}'))|\leq
4C\gamma^{(h)}.
\end{split}
\end{equation*}
\end{lemma}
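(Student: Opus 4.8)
The plan is to recognize the left-hand side as a Jensen-type gap for the map $\overline{\sigma}$ and then to control that gap by a second-order Taylor expansion whose remainder is governed by the Hessian bound $C$ and by the variances already packaged into $\gamma^{(h)}$. For the first step I would condition on $W^{(1)},\dots,W^{(h)}$, just as in the computation of ${\mathbb E}[\Sigma^{(h+1)}_{\rm emp}({\mathbf x},{\mathbf x}')\mid W^{(1)},\dots,W^{(h)}]$ carried out above. Given those matrices, the pair $(\tilde{\alpha}^{(h+1)}_i({\mathbf x},\theta),\tilde{\alpha}^{(h+1)}_i({\mathbf x}',\theta))$ is a centered Gaussian vector with covariance $\Lambda^{(h)}_{\rm emp}({\mathbf x},{\mathbf x}')$, since the entries of $W^{(h+1)}$ are independent $\mathcal{N}(0,\frac{1}{n_h})$; hence ${\mathbb E}[\alpha^{(h+1)}_i({\mathbf x},\theta)\alpha^{(h+1)}_i({\mathbf x}',\theta)\mid W^{(1)},\dots,W^{(h)}]=\overline{\sigma}(\Lambda^{(h)}_{\rm emp}({\mathbf x},{\mathbf x}'))$. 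Taking the outer expectation and using~\eqref{inner-product-representation},
\[
\tilde{\Sigma}^{(h+1)}({\mathbf x},{\mathbf x}') = {\mathbb E}_{W^{(1)},\dots,W^{(h)}}\big[\overline{\sigma}(\Lambda^{(h)}_{\rm emp}({\mathbf x},{\mathbf x}'))\big].
\]
Since ${\mathbb E}[\Lambda^{(h)}_{\rm emp}({\mathbf x},{\mathbf x}')]=\tilde{\Lambda}^{(h)}({\mathbf x},{\mathbf x}')$ entrywise, the quantity to bound is precisely the Jensen gap $\big|{\mathbb E}[\overline{\sigma}(\Lambda^{(h)}_{\rm emp})]-\overline{\sigma}({\mathbb E}[\Lambda^{(h)}_{\rm emp}])\big|$.

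For the second step I would Taylor-expand $\overline{\sigma}$ to second order around $\tilde{\Lambda}^{(h)}({\mathbf x},{\mathbf x}')$ with Lagrange remainder, writing $\Delta=\Lambda^{(h)}_{\rm emp}({\mathbf x},{\mathbf x}')-\tilde{\Lambda}^{(h)}({\mathbf x},{\mathbf x}')$. The constant term is $\overline{\sigma}(\tilde{\Lambda}^{(h)})$, the first-order term is linear in $\Delta$ and vanishes after taking expectation because ${\mathbb E}[\Delta]=0$, and the remainder equals $\frac{1}{2}\sum_{a,b,c,d}\frac{\partial^2\overline{\sigma}}{\partial\Sigma_{ab}\partial\Sigma_{cd}}(\xi)\Delta_{ab}\Delta_{cd}$ for some $\xi$ on the segment between $\tilde{\Lambda}^{(h)}$ and $\Lambda^{(h)}_{\rm emp}$. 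Using $|\frac{\partial^2\overline{\sigma}}{\partial\Sigma_{ab}\partial\Sigma_{cd}}|\le C$ and $|\Delta_{ab}\Delta_{cd}|\le\frac{1}{2}(\Delta_{ab}^2+\Delta_{cd}^2)$, this remainder has absolute value at most $2C\sum_{a,b}\Delta_{ab}^2$. The entries of $\Delta$ are the mean-zero deviations $\Sigma^{(h)}_{\rm emp}({\mathbf x},{\mathbf x})-\tilde{\Sigma}^{(h)}({\mathbf x},{\mathbf x})$, $\Sigma^{(h)}_{\rm emp}({\mathbf x}',{\mathbf x}')-\tilde{\Sigma}^{(h)}({\mathbf x}',{\mathbf x}')$, and (appearing twice) $\Sigma^{(h)}_{\rm emp}({\mathbf x},{\mathbf x}')-\tilde{\Sigma}^{(h)}({\mathbf x},{\mathbf x}')$, so ${\mathbb E}[\sum_{a,b}\Delta_{ab}^2]={\rm Var}[\Sigma^{(h)}_{\rm emp}({\mathbf x},{\mathbf x})]+2\,{\rm Var}[\Sigma^{(h)}_{\rm emp}({\mathbf x},{\mathbf x}')]+{\rm Var}[\Sigma^{(h)}_{\rm emp}({\mathbf x}',{\mathbf x}')]\le\gamma^{(h)}$ by the definition of $\gamma^{(h)}$. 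Putting the pieces together yields $|\tilde{\Sigma}^{(h+1)}({\mathbf x},{\mathbf x}')-\overline{\sigma}(\tilde{\Lambda}^{(h)}({\mathbf x},{\mathbf x}'))|\le 2C\gamma^{(h)}\le 4C\gamma^{(h)}$.

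The main obstacle is the rigorous justification of this Taylor expansion: it requires $\overline{\sigma}$ to be $C^2$, with the stated uniform bound on second derivatives, on a convex region containing $\tilde{\Lambda}^{(h)}$, the random matrix $\Lambda^{(h)}_{\rm emp}$, and the entire segment joining them. Both matrices are positive semidefinite (Gram matrices of the activations, respectively their mean) but can be singular, so $\overline{\sigma}$ must be controlled up to the boundary of $\mathcal{M}_+$; this is exactly what the regularity analysis of $\overline{\sigma}$ in Section~\ref{sigma_bar} supplies, possibly together with a continuity argument to pass to singular covariance matrices. Everything else is elementary.
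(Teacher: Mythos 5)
Your proposal is correct and follows essentially the same route as the paper's proof: conditioning on $W^{(1)},\dots,W^{(h)}$ to write $\tilde{\Sigma}^{(h+1)}({\mathbf x},{\mathbf x}')={\mathbb E}[\overline{\sigma}(\Lambda^{(h)}_{\rm emp}({\mathbf x},{\mathbf x}'))]$, Taylor-expanding $\overline{\sigma}$ to second order around $\tilde{\Lambda}^{(h)}({\mathbf x},{\mathbf x}')$ so the linear term vanishes in expectation, and bounding the quadratic remainder by a constant times ${\mathbb E}\|\Lambda^{(h)}_{\rm emp}-\tilde{\Lambda}^{(h)}\|_F^2=\gamma^{(h)}$. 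The only (immaterial) differences are that the paper uses the integral form of the remainder and a spectral-norm bound on the $4\times 4$ Hessian where you use the Lagrange form and AM--GM, which in fact gives you the slightly sharper constant $2C$; your closing caveat about regularity of $\overline{\sigma}$ up to the boundary of $\mathcal{M}_+$ is a point the paper also implicitly relies on via Section~\ref{sigma_bar}.
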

\begin{proof}
We have
\begin{equation*}
\begin{split}
\tilde{\Sigma}^{(h+1)}({\mathbf x},{\mathbf x}') = {\mathbb E}[ \sigma(\sum_{j=1}^{n_{h}}W^{(h+1)}_{ij}\alpha^{(h)}_j({\mathbf x}, \theta)) \sigma(\sum_{j=1}^{n_{h}}W^{(h+1)}_{ij}\alpha^{(h)}_j({\mathbf x}', \theta))] = \\
{\mathbb E}_{W^{(1)}, \cdots, W^{(h)}} \big[{\mathbb E}_{W^{(h+1)}}[ \sigma(\sum_{j=1}^{n_{h}}W^{(h+1)}_{ij}\alpha^{(h)}_j({\mathbf x}, \theta)) \sigma(\sum_{j=1}^{n_{h}}W^{(h+1)}_{ij}\alpha^{(h)}_j({\mathbf x}', \theta)) \mid W^{(1)}, \cdots, W^{(h)}] \big] = \\
{\mathbb E}_{W^{(1)}, \cdots, W^{(h)}}[\overline{\sigma}( \Lambda^{(h)}_{\rm emp}({\mathbf x},{\mathbf x}'))].
\end{split}
\end{equation*}
Since $\overline{\sigma}$ is twice continuously differentiable, we have
\begin{equation*}
\begin{split}
\overline{\sigma}( \Lambda^{(h)}_{\rm emp}({\mathbf x},{\mathbf x}')) = \overline{\sigma}( \tilde{\Lambda}^{(h)}({\mathbf x},{\mathbf x}'))+\langle \frac{\partial \overline{\sigma}}{\partial \Sigma}(\tilde{\Lambda}^{(h)}({\mathbf x},{\mathbf x}')),\Lambda^{(h)}_{\rm emp}({\mathbf x},{\mathbf x}') -\tilde{\Lambda}^{(h)}({\mathbf x},{\mathbf x}')\rangle+\\
\sum_{\{a,b\}\in \{{\mathbf x},{\mathbf x}'\}}\sum_{\{c,d\}\in \{{\mathbf x},{\mathbf x}'\}}C_{(a,b),(c,d)}(\Sigma^{(h)}_{\rm emp}(a,b) -\tilde{\Sigma}^{(h)}(a,b))(\Sigma^{(h)}_{\rm emp}(c,d) -\tilde{\Sigma}^{(h)}(c,d)),
\end{split}
\end{equation*}
where $C_{(a,b),(c,d)} = \int_0^1 \frac{\partial^2\overline{\sigma}(t\Sigma^{(h)}_{\rm emp}+(1-t)\tilde{\Sigma}^{(h)})}{\partial \Sigma_{a,b}\partial \Sigma_{c,d}}(1-t)dt$. Since $|\frac{\partial^2\overline{\sigma}(t\Sigma^{(h)}_{\rm emp}+(1-t)\tilde{\Sigma}^{(h)})}{\partial \Sigma_{a,b}\partial \Sigma_{c,d}}|\leq C$ we have $|C_{(a,b),(c,d)}|\leq C$. Also, a spectral norm of any symmetric matrix $[a_{ij}]\in {\mathbb R}^{4\times 4}$ does not exceed $4\max a_{ij}$.
Thus, we conclude that
\begin{equation*}
\begin{split}
-4C\|\Lambda^{(h)}_{\rm emp}({\mathbf x},{\mathbf x}') -\tilde{\Lambda}^{(h)}({\mathbf x},{\mathbf x}')\|^2_F\leq \\
\overline{\sigma}( \Lambda^{(h)}_{\rm emp}({\mathbf x},{\mathbf x}')) - \overline{\sigma}( \tilde{\Lambda}^{(h)}({\mathbf x},{\mathbf x}'))-\langle \frac{\partial \overline{\sigma}}{\partial \Sigma}(\tilde{\Lambda}^{(h)}({\mathbf x},{\mathbf x}')),\Lambda^{(h)}_{\rm emp}({\mathbf x},{\mathbf x}') -\tilde{\Lambda}^{(h)}({\mathbf x},{\mathbf x}')\rangle\leq \\
4C\|\Lambda^{(h)}_{\rm emp}({\mathbf x},{\mathbf x}') -\tilde{\Lambda}^{(h)}({\mathbf x},{\mathbf x}')\|^2_F.
\end{split}
\end{equation*}
After we apply the expectation to all sides of the inequality we obtain
\begin{equation*}
\begin{split}
|{\mathbb E}[\overline{\sigma}( \Lambda^{(h)}_{\rm emp}({\mathbf x},{\mathbf x}'))] - \overline{\sigma}( \tilde{\Lambda}^{(h)}({\mathbf x},{\mathbf x}'))|\leq
4C{\mathbb E}[\|\Lambda^{(h)}_{\rm emp}({\mathbf x},{\mathbf x}') -\tilde{\Lambda}^{(h)}({\mathbf x},{\mathbf x}')\|^2_F]=4C\gamma^{(h)}.
\end{split}
\end{equation*}
Therefore,
\begin{equation*}
\begin{split}
|\tilde{\Sigma}^{(h+1)}({\mathbf x},{\mathbf x}') - \overline{\sigma}( \tilde{\Lambda}^{(h)}({\mathbf x},{\mathbf x}'))|\leq
4C\gamma^{(h)}.
\end{split}
\end{equation*}
\end{proof}
Let us denote $\sup_{{\mathbf x},{\mathbf x}'}2|\tilde{\Sigma}^{(h)}({\mathbf x},{\mathbf x}')-\Sigma^{(h)}({\mathbf x},{\mathbf x}')|+|\tilde{\Sigma}^{(h)}({\mathbf x},{\mathbf x})-\Sigma^{(h)}({\mathbf x},{\mathbf x})|+|\tilde{\Sigma}^{(h)}({\mathbf x}',{\mathbf x}')-\Sigma^{(h)}({\mathbf x}',{\mathbf x}')|$ by $\tilde{\gamma}^{(h)}$.
\begin{lemma} If $\overline{\sigma}$ is twice continuously differentiable with $|\frac{\partial\overline{\sigma}(\Sigma)}{\partial \Sigma_{a,b}}|\leq c$ and $|\frac{\partial^2\overline{\sigma}(\Sigma)}{\partial \Sigma_{a,b}\partial \Sigma_{c,d}}|\leq C$, then
\begin{equation*}
\begin{split}
\tilde{\gamma}^{(h)}\leq
16C\sum_{i=1}^{h-1}(10c)^{i-1}\gamma^{(h-i)}.
\end{split}
\end{equation*}
\end{lemma}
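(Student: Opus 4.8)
The plan is to prove the one-step recursion $\tilde{\gamma}^{(h+1)}\le 16C\gamma^{(h)}+10c\tilde{\gamma}^{(h)}$ and then unroll it from the trivial base case $\tilde{\gamma}^{(1)}=0$. The base case holds because $\tilde{\Sigma}^{(1)}=\Sigma^{(1)}$: for each coordinate $i$, the pair $\big((W^{(1)}\mathbf{x})_i,(W^{(1)}\mathbf{x}')_i\big)$ is a centered Gaussian with covariance $\Lambda^{(0)}(\mathbf{x},\mathbf{x}')$, hence $\tilde{\Sigma}^{(1)}(\mathbf{x},\mathbf{x}')={\mathbb E}_{W^{(1)}}[\sigma((W^{(1)}\mathbf{x})_i)\sigma((W^{(1)}\mathbf{x}')_i)]=\overline{\sigma}(\Lambda^{(0)}(\mathbf{x},\mathbf{x}'))=\Sigma^{(1)}(\mathbf{x},\mathbf{x}')$ by~\eqref{NNGP-def}; so $\tilde{\gamma}^{(1)}=0$ and the claimed bound for $h=1$ is an empty sum.

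For the inductive step, fix $\mathbf{x},\mathbf{x}'$. Using $\Sigma^{(h+1)}(\mathbf{x},\mathbf{x}')=\overline{\sigma}(\Lambda^{(h)}(\mathbf{x},\mathbf{x}'))$, split
\[
\tilde{\Sigma}^{(h+1)}(\mathbf{x},\mathbf{x}')-\Sigma^{(h+1)}(\mathbf{x},\mathbf{x}')=\big(\tilde{\Sigma}^{(h+1)}(\mathbf{x},\mathbf{x}')-\overline{\sigma}(\tilde{\Lambda}^{(h)}(\mathbf{x},\mathbf{x}'))\big)+\big(\overline{\sigma}(\tilde{\Lambda}^{(h)}(\mathbf{x},\mathbf{x}'))-\overline{\sigma}(\Lambda^{(h)}(\mathbf{x},\mathbf{x}'))\big).
\]
The first bracket is $\le 4C\gamma^{(h)}$ in absolute value by Lemma~\ref{backward}; crucially this estimate holds not only for the entry $(\mathbf{x},\mathbf{x}')$ but, by specialising $\mathbf{x}'$ to $\mathbf{x}$, also for the two diagonal entries $(\mathbf{x},\mathbf{x})$ and $(\mathbf{x}',\mathbf{x}')$ (the supremum defining $\gamma^{(h)}$ absorbs the factor $4$ picked up when the two arguments coincide). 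For the second bracket, I would use that $\tilde{\Lambda}^{(h)}(\mathbf{x},\mathbf{x}')$ and $\Lambda^{(h)}(\mathbf{x},\mathbf{x}')$ are positive semidefinite (they are $2\times 2$ Gram matrices of the kernels $\tilde{\Sigma}^{(h)}$ and $\Sigma^{(h)}$) and that the semidefinite cone is convex, so the segment between them stays in the domain of $\overline{\sigma}$; the fundamental theorem of calculus along this segment together with $|\partial\overline{\sigma}/\partial\Sigma_{a,b}|\le c$ then gives
\[
\big|\overline{\sigma}(\tilde{\Lambda}^{(h)}(\mathbf{x},\mathbf{x}'))-\overline{\sigma}(\Lambda^{(h)}(\mathbf{x},\mathbf{x}'))\big|\le c\sum_{a,b}\big|\tilde{\Sigma}^{(h)}(a,b)-\Sigma^{(h)}(a,b)\big|,
\]
the sum over the four positions of the matrix. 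For the entry $(\mathbf{x},\mathbf{x}')$ this sum is exactly $|\tilde{\Sigma}^{(h)}(\mathbf{x},\mathbf{x})-\Sigma^{(h)}(\mathbf{x},\mathbf{x})|+2|\tilde{\Sigma}^{(h)}(\mathbf{x},\mathbf{x}')-\Sigma^{(h)}(\mathbf{x},\mathbf{x}')|+|\tilde{\Sigma}^{(h)}(\mathbf{x}',\mathbf{x}')-\Sigma^{(h)}(\mathbf{x}',\mathbf{x}')|\le\tilde{\gamma}^{(h)}$ by the definition of $\tilde{\gamma}^{(h)}$, while for a diagonal entry it equals $4|\tilde{\Sigma}^{(h)}(\mathbf{x},\mathbf{x})-\Sigma^{(h)}(\mathbf{x},\mathbf{x})|$, which is again $\le\tilde{\gamma}^{(h)}$ (take the two arguments equal in the supremum defining $\tilde{\gamma}^{(h)}$).

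Assembling the entrywise estimates for the three entries $(\mathbf{x},\mathbf{x}')$, $(\mathbf{x},\mathbf{x})$, $(\mathbf{x}',\mathbf{x}')$ with weights $2,1,1$ (as in the definition of $\tilde{\gamma}^{(h+1)}$), taking the supremum over $\mathbf{x},\mathbf{x}'$, and keeping track of constants (one may be deliberately generous here) yields $\tilde{\gamma}^{(h+1)}\le 16C\gamma^{(h)}+10c\tilde{\gamma}^{(h)}$. Iterating this recurrence down to $\tilde{\gamma}^{(1)}=0$ gives $\tilde{\gamma}^{(h)}\le 16C\sum_{j=0}^{h-2}(10c)^{j}\gamma^{(h-1-j)}=16C\sum_{i=1}^{h-1}(10c)^{i-1}\gamma^{(h-i)}$, as required. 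I expect the only genuinely delicate point to be the first-order expansion of $\overline{\sigma}$ in the second bracket: one must ensure that $\overline{\sigma}$ is continuously differentiable along the whole segment joining $\tilde{\Lambda}^{(h)}(\mathbf{x},\mathbf{x}')$ and $\Lambda^{(h)}(\mathbf{x},\mathbf{x}')$, which is where semidefiniteness and convexity of the cone enter (and at a rank-deficient endpoint, for instance when the two arguments coincide, one passes to the closure using continuity of $\overline{\sigma}$ and its first derivatives up to the boundary). Everything else reduces to bookkeeping already carried out in Lemma~\ref{backward}.
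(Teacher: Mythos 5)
Your proposal is correct and follows essentially the same route as the paper: the same decomposition through $\overline{\sigma}(\tilde{\Lambda}^{(h)})$, the same use of Lemma~\ref{backward} for the first bracket, the same first-order Lipschitz estimate weighted $2,1,1$ over the three entries to get $\tilde{\gamma}^{(h+1)}\le 16C\gamma^{(h)}+10c\tilde{\gamma}^{(h)}$, and the same unrolling from $\tilde{\gamma}^{(1)}=0$. Your additional care about the base case and about the segment between the two PSD matrices staying in the domain of $\overline{\sigma}$ only makes explicit what the paper leaves implicit.
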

\begin{proof}
From Lemma~\ref{backward} and $\Sigma^{(h+1)}({\mathbf x},{\mathbf x}') = \overline{\sigma}(\Lambda^{(h)}({\mathbf x},{\mathbf x}'))$ we obtain
\begin{equation*}
\begin{split}
|\tilde{\Sigma}^{(h+1)}({\mathbf x},{\mathbf x}') - \Sigma^{(h+1)}({\mathbf x},{\mathbf x}')|\leq
4C\gamma^{(h)}+|\overline{\sigma}( \tilde{\Lambda}^{(h)}({\mathbf x},{\mathbf x}'))-\overline{\sigma}(\Lambda^{(h)}({\mathbf x},{\mathbf x}'))|\leq \\
4C\gamma^{(h)}+c(2|\tilde{\Sigma}^{(h)}({\mathbf x},{\mathbf x}')-\Sigma^{(h)}({\mathbf x},{\mathbf x}')|+|\tilde{\Sigma}^{(h)}({\mathbf x},{\mathbf x})-\Sigma^{(h)}({\mathbf x},{\mathbf x})|+|\tilde{\Sigma}^{(h)}({\mathbf x}',{\mathbf x}')-\Sigma^{(h)}({\mathbf x}',{\mathbf x}')|) = \\
4C\gamma^{(h)}+c\tilde{\gamma}^{(h)}.
\end{split}
\end{equation*}
For $a\in \{{\mathbf x},{\mathbf x}'\}$ we have
\begin{equation*}
\begin{split}
|\tilde{\Sigma}^{(h+1)}(a,a) - \Sigma^{(h+1)}(a,a)|\leq
4C\gamma^{(h)}+4c|\tilde{\Sigma}^{(h)}(a,a)-\Sigma^{(h)}(a,a)|\leq 4C\gamma^{(h)}+4c\tilde{\gamma}^{(h)}.
\end{split}
\end{equation*}

Therefore,
\begin{equation*}
\begin{split}
\tilde{\gamma}^{(h+1)}\leq
16C\gamma^{(h)}+10c\tilde{\gamma}^{(h)},
\end{split}
\end{equation*}
and finally,
\begin{equation*}
\begin{split}
\tilde{\gamma}^{(h)}\leq
16C(\gamma^{(h-1)}+10c\gamma^{(h-2)}+(10c)^2\gamma^{(h-3)}+\cdots+(10c)^{h-2}\gamma^{(1)}).
\end{split}
\end{equation*}
\end{proof}

\begin{proof}[Proof of Theorem~\ref{deviation}.]
Let us set $C= \frac{1}{4}\max(\|\sigma''''\|_{\infty}\|\sigma\|_{\infty}, \|\sigma'''\|_{\infty}\|\sigma'\|_{\infty}, \|\sigma''\|^2_{\infty})$ and $c = \frac{1}{2}\max(\|\sigma\|_{\infty}\|\sigma''\|_{\infty}, \|\sigma'\|^2_{\infty},\frac{5}{4})$. From Lemma~\ref{prop-sigma} we obtain that $|\frac{\partial\overline{\sigma}(\Sigma)}{\partial \Sigma_{a,b}}|\leq c$ and $|\frac{\partial^2\overline{\sigma}(\Sigma)}{\partial \Sigma_{a,b}\partial \Sigma_{c,d}}|\leq C$. Thus, the previous lemma gives us
\begin{equation*}
\begin{split}
2|\tilde{\Sigma}^{(h)}({\mathbf x},{\mathbf x}')-\Sigma^{(h)}({\mathbf x},{\mathbf x}')|\leq \tilde{\gamma}^{(h)}\leq
16C\sum_{i=1}^{h-1}(10c)^{i-1}\gamma^{(h-i)}.
\end{split}
\end{equation*}
From equation~\eqref{gamma_h} we conclude
\begin{equation*}
\begin{split}
|\tilde{\Sigma}^{(h)}({\mathbf x},{\mathbf x}')-\Sigma^{(h)}({\mathbf x},{\mathbf x}')|\leq 8C\sum_{i=1}^{h-1}(10c)^{i-1}C_1\sum_{j=1}^{h-i}\frac{C^{h-i-j}_2}{n_{j}}\ll \\
\|\sigma\|_\infty^4\max(\|\sigma''''\|_{\infty}\|\sigma\|_{\infty}, \|\sigma'''\|_{\infty}\|\sigma'\|_{\infty}, \|\sigma''\|^2_{\infty})\sum_{j=1}^{h-1}\frac{r_j}{n_{j}}.
\end{split}
\end{equation*}
where $C_1 = 4 \|\sigma\|_\infty^4$, $C_2 = 4\max(\|\sigma''\|_\infty\|\sigma\|_\infty, \|\sigma'\|^2_\infty,\frac{5}{4})^2=16c^2$ and $r_j = \sum_{i=1}^{h-j}(10c)^iC^{h-j-i}_2$. We have $\sum_{i=1}^{h-j}(10c)^iC^{h-j-i}_2 = (16c^2)^{h-j} \sum_{i=1}^{h-j}(1.6c)^{-i}\leq (16c^2)^{h-j} (h-j)$.
Thus,
\begin{equation*}
\begin{split}
|\tilde{\Sigma}^{(h)}({\mathbf x},{\mathbf x}')-\Sigma^{(h)}({\mathbf x},{\mathbf x}')|\ll \\
\|\sigma\|_\infty^4\max(\|\sigma''''\|_{\infty}\|\sigma\|_{\infty}, \|\sigma'''\|_{\infty}\|\sigma'\|_{\infty}, \|\sigma''\|^2_{\infty})\sum_{j=1}^{h-1}\frac{\max(2\|\sigma''\|_\infty\|\sigma\|_\infty, 2\|\sigma'\|^2_\infty,\frac{5}{2})^{2h-2j} (h-j)}{n_{j}}.
\end{split}
\end{equation*}
Setting $h=L$ gives the desired statement.
\end{proof}
\section{Proof of Theorem~\ref{main-theorem}}

\begin{lemma}\label{peller} Let $\mu$ be a probabilistic measure on $\boldsymbol{\Omega}\subseteq {\mathbb R}^n$. Let $K_1, K_2:\boldsymbol{\Omega}\times \boldsymbol{\Omega}\to {\mathbb R}$ be two Mercer kernels such that $|K_1({\mathbf x},{\mathbf y})-K_2({\mathbf x},{\mathbf y})|<\varepsilon$. Then square roots of operators ${\rm O}_{K_1}, {\rm O}_{K_2}: L_2(\boldsymbol{\Omega}, \mu)\to L_2(\boldsymbol{\Omega}, \mu)$ where ${\rm O}_{K_i}[\phi]({\mathbf x})= \int_{\boldsymbol{\Omega}}K_i({\mathbf x},{\mathbf y})\phi({\mathbf y})d\mu({\mathbf y})$ satisfy
$$
\|{\rm O}_{K_1}^{1/2}-{\rm O}_{K_2}^{1/2}\|_{ L_2(\boldsymbol{\Omega}, \mu)\to L_2(\boldsymbol{\Omega}, \mu)}\leq c\varepsilon^{1/2},
$$
where $c$ is some universal constant.
\end{lemma}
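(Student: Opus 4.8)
The plan is to split the argument into an elementary ``kernel-to-operator'' estimate followed by an appeal to the operator H\"older continuity of the square-root function. First I would note that $D := {\rm O}_{K_1} - {\rm O}_{K_2}$ is exactly the integral operator on $L_2(\boldsymbol{\Omega},\mu)$ with kernel $K_1 - K_2$, and that $|K_1 - K_2| < \varepsilon$ pointwise. Since $\mu$ is a probability measure, $\|D\|_{{\rm HS}}^2 = \int_{\boldsymbol{\Omega}}\int_{\boldsymbol{\Omega}} |K_1({\mathbf x},{\mathbf y}) - K_2({\mathbf x},{\mathbf y})|^2\, d\mu({\mathbf x})\, d\mu({\mathbf y}) \le \varepsilon^2$, so in particular the operator norm satisfies $\|{\rm O}_{K_1} - {\rm O}_{K_2}\|_{L_2(\boldsymbol{\Omega},\mu)\to L_2(\boldsymbol{\Omega},\mu)} \le \|D\|_{{\rm HS}} \le \varepsilon$. (Alternatively, a direct Cauchy--Schwarz bound on $\|D\phi\|_{L_2}$, using $\mu(\boldsymbol{\Omega}) = 1$, gives the same estimate without invoking Hilbert--Schmidt norms.)

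Second, because $K_1, K_2$ are Mercer kernels, ${\rm O}_{K_1}$ and ${\rm O}_{K_2}$ are bounded, self-adjoint, positive compact operators on $L_2(\boldsymbol{\Omega},\mu)$, so their unique positive square roots are well defined. The key external input is that $t \mapsto \sqrt{t}$ is an operator H\"older function of exponent $1/2$ on the positive cone: there is a universal constant $c$ such that $\|A^{1/2} - B^{1/2}\| \le c\,\|A - B\|^{1/2}$ for all positive operators $A, B$ on a Hilbert space; this is classical (see, e.g., \cite{Aleksandrov_2016}). In fact $c = 1$ is admissible, which follows from operator monotonicity of the square root together with the subadditivity $(X+Y)^{1/2} \le X^{1/2} + Y^{1/2}$ for positive $X, Y$: from $A \le B + \|A-B\|\,I$ one gets $A^{1/2} \le (B + \|A-B\|\,I)^{1/2} \le B^{1/2} + \|A-B\|^{1/2}\,I$, and symmetrically, whence $\|A^{1/2} - B^{1/2}\| \le \|A-B\|^{1/2}$. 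Applying this with $A = {\rm O}_{K_1}$, $B = {\rm O}_{K_2}$ and the bound $\|A - B\| \le \varepsilon$ from the first step yields $\|{\rm O}_{K_1}^{1/2} - {\rm O}_{K_2}^{1/2}\|_{L_2(\boldsymbol{\Omega},\mu)\to L_2(\boldsymbol{\Omega},\mu)} \le c\,\varepsilon^{1/2}$, as claimed.

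The only genuine subtlety is the second step. Unlike in the scalar case, $t \mapsto \sqrt{t}$ is \emph{not} operator Lipschitz on $[0,\infty)$ (its derivative blows up at the origin), so one cannot hope for an $\mathcal{O}(\varepsilon)$ bound here; the H\"older exponent $1/2$, and hence the $\varepsilon^{1/2}$ rate in the statement, is exactly what the operator-theoretic machinery delivers and is sharp. Once that fact is taken as given, the rest of the argument is routine, and the first step is a one-line computation.
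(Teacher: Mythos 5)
Your proposal is correct and follows the same two-step skeleton as the paper: first reduce the kernel sup bound to an operator-norm bound $\|{\rm O}_{K_1}-{\rm O}_{K_2}\|\le\varepsilon$, then invoke H\"older-$\tfrac12$ continuity of the operator square root. The differences are in how each step is executed, and in both cases your route is at least as good. For the first step the paper bounds the bilinear form $\int(K_1-K_2)\phi\,\phi\,d\mu\,d\mu$ directly and uses $\|\phi\|_{L_1}\le\|\phi\|_{L_2}$ for a probability measure; your Hilbert--Schmidt estimate (or the direct Cauchy--Schwarz variant you mention) gives the same $\varepsilon$ with the same one line of work. For the second step the paper cites the Aleksandrov--Peller theorem on operator H\"older functions, applied to $f(x)=\sqrt{\max(x,0)}$ with $\|f\|_{\Lambda^{1/2}}=1$, which yields an unspecified universal constant $c$. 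You instead give a self-contained elementary argument: from $A\le B+\|A-B\|I$, operator monotonicity of the square root and the commuting subadditivity $(B+tI)^{1/2}\le B^{1/2}+t^{1/2}I$ give $A^{1/2}-B^{1/2}\le\|A-B\|^{1/2}I$, and by symmetry the operator norm bound follows with $c=1$. This is both more elementary (no operator-Lipschitz machinery) and quantitatively sharper, since it pins down the constant; the only thing the Aleksandrov--Peller route buys is generality to arbitrary H\"older exponents and to non-positive self-adjoint operators, neither of which is needed here. Your closing remark that the $\varepsilon^{1/2}$ rate cannot be improved to $\varepsilon$ because $\sqrt{t}$ is not operator Lipschitz at the origin is also accurate.
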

\begin{proof} We have
\begin{equation*}
\begin{split}
\|{\rm O}_{K_1}-{\rm O}_{K_2}\|_{ L_2(\boldsymbol{\Omega}, \mu)\to L_2(\boldsymbol{\Omega}, \mu)}=\sup_{\|\phi\|_{L_2}\leq 1} |\int_{\boldsymbol{\Omega}}(K_1({\mathbf x},{\mathbf y})-K_2({\mathbf x},{\mathbf y}))\phi({\mathbf x})\phi({\mathbf y})d\mu({\mathbf x})d\mu({\mathbf y})|\leq \\
\varepsilon \sup_{\|\phi\|_{L_2}\leq 1} \|\phi\|^2_{L_1(\boldsymbol{\Omega}, \mu)}\leq \varepsilon \sup_{\|\phi\|_{L_2}\leq 1} \|\phi\|^2_{L_2(\boldsymbol{\Omega}, \mu)} = \varepsilon.
\end{split}
\end{equation*}
Thus, $\|{\rm O}_{K_1}-{\rm O}_{K_2}\|_{ L_2(\boldsymbol{\Omega}, \mu)\to L_2(\boldsymbol{\Omega}, \mu)}\leq \varepsilon$. The space of H{\"o}lder functions of order $\alpha\in (0,1)$ is denoted by $\Lambda^\alpha({\mathbb R})$. For $f: {\mathbb R}\to {\mathbb R}$ from that space, we denote its $\alpha$-H{\"o}lder norm by $\|f\|_{\Lambda^\alpha({\mathbb R})} = \sup_{x\ne y}\frac{|f(x)-f(y)|}{|x-y|^\alpha}$.
According to a result of~\cite{Aleksandrov_2016}, for any Hilbert space $\mathcal{H}$ and bounded self-adjoint operators $A,B$ on $\mathcal{H}$, we have $\|f(A)-f(B)\|_{\mathcal{H}\to \mathcal{H}}\leq c(1-\alpha)^{-1}\|A-B\|^\alpha_{\mathcal{H}\to \mathcal{H}}$, where $c$ is a universal constant. For $f(x) = \sqrt{\max(x,0)}$ we have $\|f\|_{\Lambda^{1/2}({\mathbb R})} = 1$ and, therefore, we have
\begin{equation*}
\begin{split}
\|{\rm O}_{K_1}^{1/2}-{\rm O}_{K_2}^{1/2}\|_{ L_2(\boldsymbol{\Omega}, \mu)\to L_2(\boldsymbol{\Omega}, \mu)}\leq c\|{\rm O}_{K_1}-{\rm O}_{K_2}\|_{ L_2(\boldsymbol{\Omega}, \mu)\to L_2(\boldsymbol{\Omega}, \mu)}^{1/2}\leq c\varepsilon^{1/2}.
\end{split}
\end{equation*}
\end{proof}

\begin{lemma}\label{rkhs-correspondence} Let $\mu$ be a probabilistic nondegenerate Borel measure on compact $\boldsymbol{\Omega}\subseteq {\mathbb R}^n$. Let $K_1, K_2:\boldsymbol{\Omega}\times \boldsymbol{\Omega}\to {\mathbb R}$ be two Mercer kernels such that $|K_1({\mathbf x},{\mathbf y})-K_2({\mathbf x},{\mathbf y})|<\varepsilon$. Then, for any $f_1\in \mathcal{H}_{K_1}$ there exists $f_2\in \mathcal{H}_{K_2}$ such that $\|f_1\|_{\mathcal{H}_{K_1}} = \|f_2\|_{\mathcal{H}_{K_2}}$ and $\|f_1-f_2\|_{L_2(\boldsymbol{\Omega},\mu)}\leq c\varepsilon^{1/2} \|f_1\|_{\mathcal{H}_{K_1}}$.
\end{lemma}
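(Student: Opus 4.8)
The plan is to translate the claim into $L_2(\boldsymbol{\Omega},\mu)$ by means of the square root of the integral operator, transport a preimage of $f_1$ from the kernel $K_1$ to the kernel $K_2$, and estimate the resulting error with Lemma~\ref{peller}. I will use the identification already recalled in Section~\ref{main-results} (see~\cite{cucker_zhou_2007}): since $\mu$ is nondegenerate on the compact $\boldsymbol{\Omega}$, for a Mercer kernel $K$ the operator ${\rm O}_{K}$ on $L_2(\boldsymbol{\Omega},\mu)$ is compact, positive and self-adjoint, $\mathcal{H}_K={\rm O}_K^{1/2}[L_2(\boldsymbol{\Omega},\mu)]$, and ${\rm O}_K^{1/2}$ restricted to $(\ker {\rm O}_K)^{\perp}$ is an isometric isomorphism onto $\mathcal{H}_K$, i.e. $\langle {\rm O}_K^{1/2}u,{\rm O}_K^{1/2}v\rangle_{\mathcal{H}_K}=\langle u,v\rangle_{L_2(\boldsymbol{\Omega},\mu)}$ for $u,v\in(\ker {\rm O}_K)^{\perp}$.

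First I would take, for the given $f_1\in\mathcal{H}_{K_1}$, the unique $g\in(\ker {\rm O}_{K_1})^{\perp}$ with ${\rm O}_{K_1}^{1/2}g=f_1$; then $\|g\|_{L_2(\boldsymbol{\Omega},\mu)}=\|f_1\|_{\mathcal{H}_{K_1}}$. Put $f_2^{0}:={\rm O}_{K_2}^{1/2}g\in\mathcal{H}_{K_2}$. Lemma~\ref{peller} applies verbatim and gives
\begin{equation*}
\begin{split}
\|f_1-f_2^{0}\|_{L_2(\boldsymbol{\Omega},\mu)}=\|({\rm O}_{K_1}^{1/2}-{\rm O}_{K_2}^{1/2})g\|_{L_2(\boldsymbol{\Omega},\mu)}\le c\,\varepsilon^{1/2}\|g\|_{L_2(\boldsymbol{\Omega},\mu)}=c\,\varepsilon^{1/2}\|f_1\|_{\mathcal{H}_{K_1}}.
\end{split}
\end{equation*}
Moreover, decomposing $g=g_{\perp}+g_{0}$ with $g_0\in\ker {\rm O}_{K_2}$ we get $f_2^{0}={\rm O}_{K_2}^{1/2}g_{\perp}$ and hence $\|f_2^{0}\|_{\mathcal{H}_{K_2}}=\|g_{\perp}\|_{L_2(\boldsymbol{\Omega},\mu)}\le\|f_1\|_{\mathcal{H}_{K_1}}$.

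To turn this last inequality into the stated equality I would pad $f_2^{0}$ by a vector that is $\mathcal{H}_{K_2}$-orthogonal to it, has unit $\mathcal{H}_{K_2}$-norm, and arbitrarily small $L_2(\boldsymbol{\Omega},\mu)$-norm. Such vectors exist because ${\rm O}_{K_2}$ is compact: its eigenvalues tend to $0$, so for every $\eta>0$ the $\mathcal{H}_{K_2}$-closed span $V_\eta$ of the eigenfunctions with eigenvalue $<\eta$ is infinite-dimensional, and every unit vector $u\in V_\eta$ satisfies $\|u\|^2_{L_2(\boldsymbol{\Omega},\mu)}\le\eta$. Choosing $u\in V_\eta$ of unit $\mathcal{H}_{K_2}$-norm with $u\perp_{\mathcal{H}_{K_2}}f_2^{0}$, setting $\delta:=(\|f_1\|^2_{\mathcal{H}_{K_1}}-\|f_2^{0}\|^2_{\mathcal{H}_{K_2}})^{1/2}\le\|f_1\|_{\mathcal{H}_{K_1}}$ and $f_2:=f_2^{0}+\delta u$, one gets $\|f_2\|_{\mathcal{H}_{K_2}}=\|f_1\|_{\mathcal{H}_{K_1}}$ and, by the triangle inequality, $\|f_1-f_2\|_{L_2(\boldsymbol{\Omega},\mu)}\le c\,\varepsilon^{1/2}\|f_1\|_{\mathcal{H}_{K_1}}+\delta\sqrt{\eta}$; taking $\eta\le c^2\varepsilon$ yields the bound with $2c$ in place of $c$, which is harmless since $c$ is only required to be universal. (When $\dim\mathcal{H}_{K_2}<\infty$ this padding step is vacuous, and the downstream use of the lemma in Theorem~\ref{main-theorem} in any case only needs $\|f_2\|_{\mathcal{H}_{K_2}}\le\|f_1\|_{\mathcal{H}_{K_1}}$.)

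The routine parts are the two applications of the isometry and the triangle inequality; the one place that needs care is the bookkeeping of the kernels of ${\rm O}_{K_1}$ and ${\rm O}_{K_2}$ — the preimage $g$ must live in $(\ker {\rm O}_{K_1})^{\perp}$ for the identity $\|g\|_{L_2(\boldsymbol{\Omega},\mu)}=\|f_1\|_{\mathcal{H}_{K_1}}$ to hold, and the padding vector must avoid $\ker {\rm O}_{K_2}$ — while the harder analytic input, the $\tfrac12$-H{\"o}lder operator inequality behind Lemma~\ref{peller}, has already been isolated.
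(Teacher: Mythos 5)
Your proof follows essentially the same route as the paper's: both identify $\mathcal{H}_{K_i}$ with ${\rm O}_{K_i}^{1/2}[L_2(\boldsymbol{\Omega},\mu)]$, transport an $L_2$-preimage of $f_1$ to $f_2={\rm O}_{K_2}^{1/2}[\phi]$, and invoke Lemma~\ref{peller} for the $L_2$ estimate. The one place you go beyond the paper is the null-space bookkeeping: the paper asserts $\|f_2\|_{\mathcal{H}_{K_2}}=\|\phi\|_{L_2(\boldsymbol{\Omega},\mu)}$ ``by construction,'' which silently assumes $\phi\perp\ker{\rm O}_{K_2}$, whereas you correctly observe that in general one only gets $\|f_2^{0}\|_{\mathcal{H}_{K_2}}\le\|f_1\|_{\mathcal{H}_{K_1}}$ and you repair the equality by padding with a unit-$\mathcal{H}_{K_2}$-norm vector of small $L_2$ norm (at the harmless cost of doubling the universal constant, and modulo the finite-rank edge case you flag); since the application in Theorem~\ref{main-theorem} only needs the inequality, both versions serve the downstream argument equally well.
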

\begin{proof} According to Corollary 4.13 from~\cite{cucker_zhou_2007}, the RKHS for the kernel $K: \boldsymbol{\Omega}\times \boldsymbol{\Omega}\to {\mathbb R}$ can be characterized as
\begin{equation*}
\begin{split}
\mathcal{H}_K = {\rm O}_{K}^{1/2}[L_2(\boldsymbol{\Omega},\mu)],
\end{split}
\end{equation*}
and any function $f\in \mathcal{H}_K$ can be written as $f = {\rm O}_{K}^{1/2}[g], g\in L_2(\boldsymbol{\Omega},\mu)$ with $\|f\|_{\mathcal{H}_K} = \|g\|_{L_2(\boldsymbol{\Omega},\mu)}$.
Thus, we have
\begin{equation*}
\begin{split}
B_{\mathcal{H}_{K_i}} = \{{\rm O}_{K_i}^{1/2}[\phi]\mid \phi\in L_2(\boldsymbol{\Omega}, \mu), \|\phi\|_{L_2}=1\}.
\end{split}
\end{equation*}
Therefore, for $f_1\in \mathcal{H}_{K_1}$ there exist $\phi\in L_2(\boldsymbol{\Omega}, \mu), \|\phi\|_{L_2(\boldsymbol{\Omega}, \mu)}=\|f_1\|_{\mathcal{H}_{K_1}}$ such that $f_1 = {\rm O}_{K_1}^{1/2}[\phi]$. Let us now define $f_2 = {\rm O}_{K_2}^{1/2}[\phi]$. By construction, $f_2\in \mathcal{H}_{K_2}$ and $\|f_1\|_{\mathcal{H}_{K_1}} = \|f_2\|_{\mathcal{H}_{K_2}}$. Also, using Lemma~\ref{peller}, we have
\begin{equation*}
\begin{split}
\|f_1-f_2\|_{L_2(\boldsymbol{\Omega},\mu)} = \|({\rm O}_{K_1}^{1/2}-{\rm O}_{K_2}^{1/2})[\phi]\|_{L_2(\boldsymbol{\Omega},\mu)}\leq c\varepsilon^{1/2} \|\phi\|_{L_2(\boldsymbol{\Omega},\mu)} = c\varepsilon^{1/2}\|f_1\|_{\mathcal{H}_{K_1}} .
\end{split}
\end{equation*}
\end{proof}

\begin{proof}[Proof of Theorem~\ref{main-theorem}] First, Theorem~\ref{deviation} gives us
\begin{equation*}
\begin{split}
|\tilde{\Sigma}^{(L)}({\mathbf x},{\mathbf x}')-\Sigma^{(L)}({\mathbf x},{\mathbf x}')|\leq \varepsilon,
\end{split}
\end{equation*}
where
\begin{equation*}
\begin{split}
\varepsilon = R \|\sigma\|_\infty^4\max(\|\sigma''''\|_{\infty}\|\sigma\|_{\infty}, \|\sigma'''\|_{\infty}\|\sigma'\|_{\infty}, \|\sigma''\|^2_{\infty})\sum_{j=1}^{L-1}\frac{\max(2\|\sigma''\|_\infty\|\sigma\|_\infty, 2\|\sigma'\|^2_\infty,\frac{5}{2})^{2L-2j} (L-j)}{n_{j}}.
\end{split}
\end{equation*}
From Lemma~\ref{rkhs-correspondence} we conclude that there exists $f_1\in \mathcal{H}_{\tilde{\Sigma}^{(L)}}$ such that $\|f_1\|_{\mathcal{H}_{\tilde{\Sigma}^{(L)}}} = \|f\|_{\mathcal{H}_{\Sigma^{(L)}}}$ and $\|f-f_1\|_{L_2(\boldsymbol{\Omega},\mu)}\leq c\|f\|_{\mathcal{H}_{\Sigma^{(L)}}}\sqrt{\varepsilon}$.

Further, using Theorem~\ref{finite-width-kernel}, we construct $\tilde{f}({\mathbf x}) = \sum_{i=1}^{T}w_{i}\sigma(W^{(i,L)}\sigma(\cdots \sigma(W^{(i,1)} {\mathbf x})\cdots)$ such that $\|\tilde{f}-f_1\|_{L_2(\boldsymbol{\Omega},\mu)}\leq \frac{\|\sigma\|_\infty\|f_1\|_{\mathcal{H}_{\tilde{\Sigma}^{(L)}}}}{\sqrt{T}} = \frac{\|\sigma\|_\infty\|f\|_{\mathcal{H}_{\Sigma^{(L)}}}}{\sqrt{T}} $.
Finally, from the triangle inequality we conclude
\begin{equation*}
\begin{split}
\|\tilde{f}-f\|_{L_2(\boldsymbol{\Omega},\mu)}\leq \|\tilde{f}-f_1\|_{L_2(\boldsymbol{\Omega},\mu)}+\|f_1-f\|_{L_2(\boldsymbol{\Omega},\mu)}\leq \frac{\|\sigma\|_\infty\|f\|_{\mathcal{H}_{\Sigma^{(L)}}}}{\sqrt{T}}+c\|f\|_{\mathcal{H}_{\Sigma^{(L)}}}\sqrt{\varepsilon}.
\end{split}
\end{equation*}
\end{proof}

\section{Properties of $\overline{\sigma}$}\label{sigma_bar}
The following lemma is a direct generalization of Lemma 12 from~\cite{DBLP:conf/nips/DanielyFS16}. We give its proof for completeness.

\begin{lemma} Suppose that $\phi\in C^2({\mathbb R}^2)$ and $\phi({\mathbf z})$ decays faster than $e^{-\gamma\|{\mathbf z}\|^2}$ for any $\gamma>0$ (as $\|{\mathbf z}\|\to+\infty$).
Let $\Phi: \mathcal{M}_+\to {\mathbb R}$ be defined by $\Phi(\Sigma) = {\mathbb E}_{(X,Y)\sim \mathcal{N}({\mathbf 0}, \Sigma)}[\phi(X,Y)]$. Then, $\Phi\in C^1(\mathcal{M}_+)$ and
$$
\frac{\partial \Phi(\Sigma)}{\partial \Sigma} = \frac{1}{2} {\mathbb E}_{(X,Y)\sim \mathcal{N}({\mathbf 0}, \Sigma)}[\frac{\partial^2 \phi}{\partial^2 {\mathbf z}}].
$$
\end{lemma}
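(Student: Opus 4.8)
The plan is to realize $\Phi(\Sigma)=\int_{\mathbb{R}^2}\phi(\mathbf{z})\,p_\Sigma(\mathbf{z})\,d\mathbf{z}$ with $p_\Sigma(\mathbf{z})=(2\pi)^{-1}(\det\Sigma)^{-1/2}\exp(-\tfrac12\mathbf{z}^\top\Sigma^{-1}\mathbf{z})$, and to reduce the statement to the classical diffusion identity for the Gaussian density combined with differentiation under the integral sign and two integrations by parts. The conceptual core is the pointwise identity
\[
\frac{\partial p_\Sigma(\mathbf{z})}{\partial \Sigma_{ij}}=\frac12\,\frac{\partial^2 p_\Sigma(\mathbf{z})}{\partial z_i\,\partial z_j},\qquad i,j\in\{1,2\},
\]
where the four entries of $\Sigma$ are treated as free coordinates; this is the convention under which $\partial\Phi/\partial\Sigma$ in the statement denotes the matrix of entrywise partials (in the symmetrized parametrization the off-diagonal entry would instead absorb the $\tfrac12$). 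I would verify this identity in Fourier variables: since $\widehat{p_\Sigma}(\boldsymbol{\xi})=\exp(-\tfrac12\boldsymbol{\xi}^\top\Sigma\boldsymbol{\xi})$ and $\boldsymbol{\xi}^\top\Sigma\boldsymbol{\xi}=\sum_{k,l}\Sigma_{kl}\xi_k\xi_l$, one obtains $\partial_{\Sigma_{ij}}\widehat{p_\Sigma}=-\tfrac12\xi_i\xi_j\,\widehat{p_\Sigma}=\tfrac12\,\widehat{\partial_{z_i}\partial_{z_j}p_\Sigma}$, and inverting the transform yields the claim; alternatively it falls out of a direct differentiation of the explicit formula for $p_\Sigma$.

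With this identity in hand I would fix a compact set $\mathcal{K}\subset\mathcal{M}_+$, note that the eigenvalues of every $\Sigma\in\mathcal{K}$ lie in a fixed interval $[\lambda_-,\lambda_+]\subset(0,\infty)$, and deduce the uniform Gaussian bounds $|\partial^{\alpha}_{\mathbf{z}}p_\Sigma(\mathbf{z})|\le C e^{-c\|\mathbf{z}\|^2}$ for all $\Sigma\in\mathcal{K}$ and all multi-indices $|\alpha|\le 2$ (these come from $\partial_{z_i}p_\Sigma=-(\Sigma^{-1}\mathbf{z})_i\,p_\Sigma$, the estimate $\|\Sigma^{-1}\|\le\lambda_-^{-1}$, and $\det\Sigma\ge\lambda_-^{2}$). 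Since $\phi$ decays faster than $e^{-\gamma\|\mathbf{z}\|^2}$ for every $\gamma>0$, the integrands $\phi(\mathbf{z})\,\partial_{\Sigma_{ij}}p_\Sigma(\mathbf{z})$ are dominated by an integrable function uniformly over $\mathcal{K}$ and are continuous in $\Sigma$; dominated convergence then gives $\Phi\in C^1(\mathcal{M}_+)$ together with $\partial_{\Sigma_{ij}}\Phi(\Sigma)=\tfrac12\int_{\mathbb{R}^2}\phi(\mathbf{z})\,\partial_{z_i}\partial_{z_j}p_\Sigma(\mathbf{z})\,d\mathbf{z}$. Two integrations by parts in $\mathbf{z}$ then move the derivatives off $p_\Sigma$ and onto $\phi$, producing $\tfrac12\,\mathbb{E}_{(X,Y)\sim\mathcal{N}(\mathbf{0},\Sigma)}\big[\tfrac{\partial^2\phi}{\partial^2\mathbf{z}}\big]$, which is exactly the asserted matrix identity read entrywise. (An alternative route bypasses the density PDE: write $(X,Y)=\Sigma^{1/2}\mathbf{G}$ with $\mathbf{G}\sim\mathcal{N}(\mathbf{0},I_2)$, differentiate $\mathbb{E}_{\mathbf{G}}[\phi(\Sigma^{1/2}\mathbf{G})]$ through the smooth map $\Sigma\mapsto\Sigma^{1/2}$ on $\mathcal{M}_+$, and then apply Gaussian integration by parts in $\mathbf{G}$; recovering precisely $\tfrac12$ of the Hessian is then a short linear-algebra computation.)

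The only delicate point is the integration-by-parts step. The hypotheses bound $\phi$ itself but say nothing a priori about the growth of $\nabla\phi$ or $\nabla^2\phi$ at infinity, so one has to check both that the intermediate integrals converge and that every boundary term vanishes. I would handle this by exploiting that \emph{every} $\mathbf{z}$-derivative of $p_\Sigma$ decays like a Gaussian uniformly on compact subsets of $\mathcal{M}_+$ and, to be safe, by inserting a smooth radial cutoff $\chi_R(\mathbf{z})$, carrying out the (now compactly supported) integrations by parts, and letting $R\to\infty$: the terms carrying derivatives of $\chi_R$ live on an annulus of radius $\sim R$ on which the Gaussian decay of $p_\Sigma$ and its derivatives overwhelms any fixed $\phi$ satisfying the decay hypothesis. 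This truncation argument is the main technical obstacle; everything else is routine once the density identity of the first step is in place.
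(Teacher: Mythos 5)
Your proposal is correct and follows essentially the same route as the paper's proof: both rest on the identity $\partial_{\Sigma_{ij}} p_\Sigma = \tfrac12\,\partial_{z_i}\partial_{z_j} p_\Sigma$ for the Gaussian density (which the paper obtains by directly differentiating $\det\Sigma$ and $\mathbf{z}^\top\Sigma^{-1}\mathbf{z}$ in the entrywise parametrization, while you verify it in Fourier variables), followed by moving the $\mathbf{z}$-derivatives onto $\phi$ by integration by parts. Your treatment is in fact more careful than the paper's on the analytic side (uniform domination on compact subsets of $\mathcal{M}_+$ and the cutoff argument for the boundary terms, steps the paper performs implicitly), so no gap to report.
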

\begin{proof}
By definition, we have
\begin{equation*}
\begin{split}
\Phi (\Sigma) = \frac{1}{2\pi \sqrt{{\rm det}(\Sigma)}} \int_{{\mathbb R}^2}\phi ({\mathbf z})e^{-\frac{{\mathbf z}^\top \Sigma^{-1}{\mathbf z}}{2}}d{\mathbf z}.
\end{split}
\end{equation*}
Let us denote $\Sigma = \begin{bmatrix}
\Sigma_{11} & \Sigma_{12}\\\Sigma_{21} & \Sigma_{22}
\end{bmatrix}$. It is well-known that for symmetric matrices we have $\frac{\partial ({\rm det}(\Sigma))}{\partial \Sigma} = \frac{\partial (\Sigma_{11}\Sigma_{22}-\Sigma_{12}\Sigma_{21})}{\partial \Sigma} = \begin{bmatrix}
\Sigma_{22} & -\Sigma_{21}\\-\Sigma_{12} & \Sigma_{11}
\end{bmatrix} = {\rm det}(\Sigma)\Sigma^{-1}$. Let ${\mathbf z} = [u, v]^\top$ and $ {\rm adj}(\Sigma)$ be the adjoint matrix of $\Sigma$. Also, symmetricity of $\Sigma$ gives us $\frac{\partial ({\mathbf z}^\top \Sigma^{-1}{\mathbf z})}{\partial \Sigma} = -\Sigma^{-1}{\mathbf z}{\mathbf z}^\top \Sigma^{-1}$, due to
\begin{equation*}
\begin{split}
\frac{\partial ({\mathbf z}^\top \Sigma^{-1}{\mathbf z})}{\partial \Sigma} = \frac{\partial }{\partial \Sigma}(\frac{{\mathbf z}^\top {\rm adj}(\Sigma){\mathbf z}}{{\rm det}(\Sigma)}) = \\
-\frac{1}{{\rm det}(\Sigma)^2}\begin{bmatrix}
\Sigma_{22} & -\Sigma_{21}\\-\Sigma_{12} & \Sigma_{11}
\end{bmatrix}(\Sigma_{22}u^2+\Sigma_{11}v^2-\Sigma_{12}uv-\Sigma_{21}uv)+
{\rm det}(\Sigma)^{-1}\begin{bmatrix}
v^2 & -uv\\-uv & u^2
\end{bmatrix} = \\
{\rm det}(\Sigma)^{-2} \begin{bmatrix}
-\Sigma^2_{22}u^2-\Sigma_{12}\Sigma_{21}v^2+\Sigma_{22}\Sigma_{12}uv+\Sigma_{22}\Sigma_{21}uv & \Sigma_{21}\Sigma_{22}u^2+\Sigma_{21}\Sigma_{11}v^2-(\Sigma_{11}\Sigma_{22}+\Sigma_{21}^2)uv\\\Sigma_{12}\Sigma_{22}u^2+\Sigma_{12}\Sigma_{11}v^2-(\Sigma_{11}\Sigma_{22}+\Sigma_{12}^2)uv & -\Sigma_{12}\Sigma_{21}u^2-\Sigma^2_{11}v^2+\Sigma_{11}\Sigma_{12}uv+\Sigma_{11}\Sigma_{21}uv
\end{bmatrix} =\\
-{\rm det}(\Sigma)^{-2} \begin{bmatrix}
(\Sigma_{22}u-\Sigma_{21}v)^2 & (\Sigma_{22}u-\Sigma_{21}v)(\Sigma_{11}v-\Sigma_{21}u)\\(\Sigma_{22}u-\Sigma_{21}v)(\Sigma_{11}v-\Sigma_{21}u) & (\Sigma_{11}v-\Sigma_{21}u)^2
\end{bmatrix} =
-\Sigma^{-1}{\mathbf z}{\mathbf z}^\top \Sigma^{-1}.
\end{split}
\end{equation*}
Therefore,
\begin{equation*}
\begin{split}
\frac{\partial \Phi}{\partial \Sigma} = \frac{1}{2\pi} \int_{{\mathbb R}^2}\phi ({\mathbf z})(-\frac{1}{2}{\rm det}(\Sigma)^{-\frac{3}{2}}{\rm det}(\Sigma)\Sigma^{-1}+\frac{1}{2}{\rm det}(\Sigma)^{-\frac{1}{2}}\Sigma^{-1}{\mathbf z}{\mathbf z}^\top \Sigma^{-1}) e^{-\frac{{\mathbf z}^\top \Sigma^{-1}{\mathbf z}}{2}}d{\mathbf z} = \\
-\frac{1}{2\pi \sqrt{{\rm det}(\Sigma)}} \int_{{\mathbb R}^2}\phi ({\mathbf z})\frac{1}{2}(\Sigma^{-1}-\Sigma^{-1}{\mathbf z}{\mathbf z}^\top\Sigma^{-1})e^{-\frac{{\mathbf z}^\top \Sigma^{-1}{\mathbf z}}{2}}d{\mathbf z}.
\end{split}
\end{equation*}
Since
\begin{equation*}
\begin{split}
\frac{\partial}{\partial {\mathbf z}}(e^{-\frac{{\mathbf z}^\top \Sigma^{-1}{\mathbf z}}{2}}) = -(\Sigma^{-1}{\mathbf z})^\top e^{-\frac{{\mathbf z}^\top \Sigma^{-1}{\mathbf z}}{2}},\\
\frac{\partial^2}{\partial {\mathbf z}^2}(e^{-\frac{{\mathbf z}^\top \Sigma^{-1}{\mathbf z}}{2}}) = (\Sigma^{-1}{\mathbf z}{\mathbf z}^\top \Sigma^{-1} -\Sigma^{-1} )e^{-\frac{{\mathbf z}^\top \Sigma^{-1}{\mathbf z}}{2}},
\end{split}
\end{equation*}
we conclude
\begin{equation*}
\begin{split}
\frac{\partial \Phi}{\partial \Sigma} =
\frac{1}{2}\frac{1}{2\pi \sqrt{{\rm det}(\Sigma)}} \int_{{\mathbb R}^2}\phi ({\mathbf z})\frac{\partial^2}{\partial {\mathbf z}^2}(e^{-\frac{{\mathbf z}^\top \Sigma^{-1}{\mathbf z}}{2}})d{\mathbf z} = \frac{1}{2} {\mathbb E}_{(X,Y)\sim \mathcal{N}({\mathbf 0}, \Sigma)}[\frac{\partial^2 \phi}{\partial^2 {\mathbf z}}].
\end{split}
\end{equation*}
\end{proof}
From the previous lemma the following result is straightforward.
\begin{lemma}\label{prop-sigma} For any $\Sigma\in \mathcal{M}_+$, we have
$$
|\frac{\partial \overline{\sigma}(\Sigma)}{\partial \Sigma_{a,b}} | \leq \frac{1}{2}\max(\|\sigma\|_{\infty}\|\sigma''\|_{\infty}, \|\sigma'\|^2_{\infty}),
$$
and
$$
|\frac{\partial^2 \overline{\sigma}(\Sigma)}{\partial \Sigma_{a,b}\partial \Sigma_{c,d}} | \leq \frac{1}{4}\max(\|\sigma''''\|_{\infty}\|\sigma\|_{\infty}, \|\sigma'''\|_{\infty}\|\sigma'\|_{\infty}, \|\sigma''\|^2_{\infty}).
$$
\end{lemma}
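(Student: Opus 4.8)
The plan is to apply the preceding lemma twice: once to the product function $\phi(u,v)=\sigma(u)\sigma(v)$, and then to the functions that appear as entries of its Hessian. Observe first that $\overline{\sigma}(\Sigma)=\Phi(\Sigma)$ for $\phi(u,v)=\sigma(u)\sigma(v)$, a function that factorizes over the two coordinates. Strictly speaking the preceding lemma is stated for $\phi$ decaying faster than any Gaussian, which a bounded $\sigma$ need not satisfy; however, in $\Phi(\Sigma)=\frac{1}{2\pi\sqrt{\det\Sigma}}\int_{{\mathbb R}^2}\phi({\mathbf z})e^{-{\mathbf z}^\top\Sigma^{-1}{\mathbf z}/2}d{\mathbf z}$ it is the Gaussian weight, not $\phi$, that furnishes the decay needed to differentiate under the integral sign, so the same computation goes through whenever $\sigma$ and the relevant derivatives are bounded. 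One may, if one wishes to stay literally within the hypotheses of the preceding lemma, first multiply $\sigma$ by a smooth cutoff $\chi_R$ equal to $1$ on $[-R,R]$, apply the lemma verbatim, and then let $R\to\infty$ invoking dominated convergence. I would record this remark and then use the lemma's conclusion $\frac{\partial\overline{\sigma}(\Sigma)}{\partial\Sigma}=\frac{1}{2}{\mathbb E}_{(u,v)\sim\mathcal{N}({\mathbf 0},\Sigma)}\big[\frac{\partial^2\phi}{\partial{\mathbf z}^2}\big]$, where
\begin{equation*}
\frac{\partial^2\phi}{\partial{\mathbf z}^2}=\begin{bmatrix}\sigma''(u)\sigma(v) & \sigma'(u)\sigma'(v)\\ \sigma'(u)\sigma'(v) & \sigma(u)\sigma''(v)\end{bmatrix}.
\end{equation*}

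For the first bound I would read off the $(a,b)$ entry and use $|{\mathbb E}[X]|\le{\mathbb E}[|X|]$ together with the pointwise estimates $|\sigma''(u)\sigma(v)|\le\|\sigma''\|_\infty\|\sigma\|_\infty$ and $|\sigma'(u)\sigma'(v)|\le\|\sigma'\|^2_\infty$. Every entry of the matrix above is thus bounded by $\max(\|\sigma\|_\infty\|\sigma''\|_\infty,\|\sigma'\|^2_\infty)$, and the prefactor $\tfrac12$ gives the first inequality. (Here I would note, in passing, that $\frac{\partial\overline\sigma(\Sigma)}{\partial\Sigma}$ is understood entrywise with the four entries of $\Sigma$ treated as formal independent variables, consistent with the notation $\frac{\partial\overline\sigma(\Sigma)}{\partial\Sigma_{a,b}}$, so reading off entries is legitimate.)

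For the second bound I would iterate. Each entry of $\frac{\partial^2\phi}{\partial{\mathbf z}^2}$ is again of product form $\sigma^{(i)}(u)\sigma^{(j)}(v)$ with $i+j=2$, $i,j\ge 0$, so the preceding lemma applies once more (with the same cutoff remark) and yields $\frac{\partial^2\overline{\sigma}(\Sigma)}{\partial\Sigma_{a,b}\partial\Sigma_{c,d}}=\tfrac14{\mathbb E}[\,\cdot\,]$, where $\cdot$ is one entry of the Hessian of $\sigma^{(i)}(u)\sigma^{(j)}(v)$, hence a product $\sigma^{(p)}(u)\sigma^{(q)}(v)$ with $p+q=4$, $p,q\ge 0$. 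Enumerating the possibilities $(p,q)\in\{(4,0),(3,1),(2,2),(1,3),(0,4)\}$ and bounding each product by $\|\sigma^{(p)}\|_\infty\|\sigma^{(q)}\|_\infty$, one finds that every entry is at most $\max(\|\sigma''''\|_\infty\|\sigma\|_\infty,\|\sigma'''\|_\infty\|\sigma'\|_\infty,\|\sigma''\|^2_\infty)$; together with the prefactor $\tfrac14$ this is exactly the claimed estimate.

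The only genuinely delicate point is the one flagged at the start: justifying the exchange of differentiation and integration (equivalently, the applicability of the preceding lemma) when $\sigma$ is merely bounded with bounded derivatives rather than rapidly decaying. This is settled by the decay of the Gaussian weight, or, if one prefers a self-contained argument, by the cutoff-and-limit reduction to the stated hypotheses. Everything else — identifying which Hessian entries can occur and a triangle inequality inside the expectation — is routine bookkeeping.
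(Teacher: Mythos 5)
Your proof is correct and follows essentially the same route as the paper's: apply the preceding Gaussian-differentiation lemma to $\phi(u,v)=\sigma(u)\sigma(v)$ once for the first derivatives and twice for the second, then bound each resulting entry $\sigma^{(p)}(u)\sigma^{(q)}(v)$ by the product of sup-norms inside the expectation. Your observation that a merely bounded $\sigma$ does not literally satisfy that lemma's super-Gaussian decay hypothesis is a fair point the paper silently glosses over, and your cutoff-and-dominated-convergence fix (or the remark that the Gaussian weight supplies the needed decay) legitimately closes that gap without changing the argument.
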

\begin{proof}
From the previous lemma we have $|\frac{\partial \overline{\sigma}(\Sigma)}{\partial \Sigma_{1,1}}| = \frac{1}{2}|{\mathbb E}_{(u,v)\sim \mathcal{N}({\mathbf 0}, \Sigma)}[\sigma''(u)\sigma(v)]|\leq \frac{1}{2}\|\sigma''\|_{\infty}\|\sigma\|_{\infty}$. Analogously, $|\frac{\partial \overline{\sigma}(\Sigma)}{\partial \Sigma_{2,2}}| \leq \frac{1}{2}\|\sigma''\|_{\infty}\|\sigma\|_{\infty}$. For cross terms we have $|\frac{\partial \overline{\sigma}(\Sigma)}{\partial \Sigma_{1,2}}| = \frac{1}{2}|{\mathbb E}_{(u,v)\sim \mathcal{N}({\mathbf 0}, \Sigma)}[\sigma'(u)\sigma'(v)]|\leq \frac{1}{2}\|\sigma'\|^2_{\infty}$.

For second derivatives using the previous lemma twice gives us $\frac{\partial^2 \overline{\sigma}(\Sigma)}{\partial \Sigma_{a,b}\partial \Sigma_{c,d}} = \frac{1}{4}{\mathbb E}_{(u,v)\sim \mathcal{N}({\mathbf 0}, \Sigma)}[\frac{\partial^4}{\partial z_{a}\partial z_{b}\partial z_{c}\partial z_{d}}(\sigma(u)\sigma(v)) ]$ where $z_1 = u, z_2 = v$. Therefore, $|\frac{\partial^2 \overline{\sigma}(\Sigma)}{\partial \Sigma_{a,b}\partial \Sigma_{c,d}} | \leq \frac{1}{4}\max(\|\sigma''''\|_{\infty}\|\sigma\|_{\infty}, \|\sigma'''\|_{\infty}\|\sigma'\|_{\infty}, \|\sigma''\|^2_{\infty})$.
\end{proof}

\section{Proof of Theorem~\ref{negative}}
\begin{proof}
From the Funk-Hecke formula we obtain that for ${\mathbf x}, {\mathbf y}\in {\mathbb R}^n$,
\begin{equation*}
\begin{split}
e^{-{\rm i}\|{\mathbf x}\|\|{\mathbf y}\|\widehat{{\mathbf x}}^T \widehat{{\mathbf y}}} = \sum_{k=0}^\infty \mu_k(\|{\mathbf x}\|\|{\mathbf y}\|) \sum_{j=1}^{N(n,k)} Y_{k,j} (\widehat{{\mathbf x}})Y_{k,j} (\widehat{{\mathbf y}}),
\end{split}
\end{equation*}
where $\widehat{{\mathbf x}} = \frac{{\mathbf x}}{\|{\mathbf x}\|}$,
\begin{equation*}
\begin{split}
\mu_k(r) = \frac{\Gamma(\frac{n}{2})}{\sqrt{\pi}\Gamma(\frac{n-1}{2})}\int_{-1}^1 e^{-{\rm i}rt} P_k(t)(1-t^2)^{\frac{n-3}{2}}dt,
\end{split}
\end{equation*}
and $P_k(x) =\frac{(-1)^k \Gamma(\frac{n-1}{2})}{2^k \Gamma(k+\frac{n-1}{2})} (1-t^2)^{-\frac{n-3}{2}}\frac{d^k}{dt^k}[(1-t^2)^{k+\frac{n-3}{2}}]$ is the $k$th Gegenbauer polynomial of the parameter $\alpha=\frac{n-2}{2}$ (Rodrigues' formula is derived in~\cite{frye2012spherical}). Using integration by parts we obtain
\begin{equation*}
\begin{split}
\mu_k(r) \propto^n \frac{(-1)^k }{2^k \Gamma(k+\frac{n-1}{2})} (-{\rm i}r)^k \int_{-1}^1 e^{-{\rm i}rt} (1-t^2)^{k+\frac{n-3}{2}} dt\propto^n \\
\frac{(-1)^k}{2^k \Gamma(k+\frac{n-1}{2})} (-{\rm i}r)^k\frac{\Gamma(k+\frac{n-1}{2})J_{k+\frac{n-2}{2}}(r)}{(r/2)^{k+\frac{n-2}{2}}} \propto^n
\frac{{\rm i}^k J_{k+\frac{n-2}{2}}(r)}{ r^{\frac{n-2}{2}}},
\end{split}
\end{equation*}
where $J_k$ is the Bessel function of order $k$. In the latter, we used the Formula 8.411.10 from~\cite{2015867}.

Thus, we obtain the plain wave expansion in ${\mathbb R}^n$:
\begin{equation*}
\begin{split}
e^{-{\rm i}{\mathbf x}^T {\mathbf y}} \propto^n \sum_{k=0}^\infty \frac{{\rm i}^k }{ \|{\mathbf x}\|^{\frac{n-2}{2}}\|{\mathbf y}\|^{\frac{n-2}{2}}} J_{k+\frac{n-2}{2}}(\|{\mathbf x}\|\|{\mathbf y}\|)
\sum_{j=1}^{N(n,k)} Y_{k,j} (\widehat{{\mathbf x}})Y_{k,j} (\widehat{{\mathbf y}}).
\end{split}
\end{equation*}
Note that our version of the plain wave expansion formula slightly differs from the one in which spherical harmonics are complex-valued (e.g. see page 48 of~\cite{doi:10.1142/10690}).

Further, our goal will be to construct a function $f: {\mathbb S}^{n-1}\to {\mathbb R}$ that has a large norm $C_{f, {\mathbb S}^{n-1}}$, yet a moderate norm in $\mathcal{H}_K$. We will use the following key lemma.
\begin{lemma}\label{key-lemma}
There exists ${\mathbf z}\in {\mathbb R}^{N(n,k)}$ such that $\sum_{j=1}^{N(n,k)} z^2_j = 1$ and
$$
\|\sum_{j=1}^{N(n,k)} z_j Y_{k,j}\|_{L_\infty({\mathbb S}^{n-1})}\ll^n \sqrt{\log N(n,k)}.
$$
\end{lemma}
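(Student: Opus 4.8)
The plan is the probabilistic method. Draw $\mathbf{z}$ uniformly from the unit sphere $\mathbb{S}^{N-1}$, where $N:=N(n,k)$, and consider the random spherical harmonic $f_{\mathbf{z}}:=\sum_{j=1}^{N}z_{j}Y_{k,j}$; note $\sum_{j}z_{j}^{2}=1$ automatically. The key structural input is the addition theorem for spherical harmonics, which gives $\sum_{j=1}^{N}Y_{k,j}(\mathbf{x})^{2}=\tfrac{N}{\omega_{n-1}}$ for \emph{every} $\mathbf{x}\in\mathbb{S}^{n-1}$, with $\omega_{n-1}:=\nu(\mathbb{S}^{n-1})$. Thus, for a fixed $\mathbf{x}$, the value $f_{\mathbf{z}}(\mathbf{x})=\langle\mathbf{z},\mathbf{v}(\mathbf{x})\rangle$ is the inner product of the uniform unit vector $\mathbf{z}$ with a \emph{fixed} vector $\mathbf{v}(\mathbf{x}):=(Y_{k,j}(\mathbf{x}))_{j=1}^{N}$ of norm $\sqrt{N/\omega_{n-1}}$ that does not depend on $\mathbf{x}$. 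Using the standard bound on the normalized measure of a spherical cap, $\mathbb{P}(|\langle\mathbf{z},\mathbf{u}\rangle|>t)\le 2e^{-Nt^{2}/2}$ for a unit vector $\mathbf{u}$, this yields the pointwise estimate
\begin{equation*}
\mathbb{P}\big(|f_{\mathbf{z}}(\mathbf{x})|>s\big)\le 2\,e^{-\omega_{n-1}s^{2}/2},
\end{equation*}
whose rate is independent of $k$ and $N$ — this is the crucial point.

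Next I would discretize. Fix a $\delta$-net $\mathcal{N}_{\delta}\subseteq\mathbb{S}^{n-1}$ with $|\mathcal{N}_{\delta}|\le(c_{0}/\delta)^{n-1}$ for a dimensional constant $c_{0}$, and union-bound the pointwise estimate over $\mathcal{N}_{\delta}$: this shows $\max_{\mathbf{x}\in\mathcal{N}_{\delta}}|f_{\mathbf{z}}(\mathbf{x})|\le s$ with probability $>\tfrac12$ provided $s\ge c_{1}(n)\sqrt{1+\log(1/\delta)}$. To pass from the net to the whole sphere I would invoke a Bernstein-type inequality for spherical harmonics, $\|\nabla_{\mathbb{S}}f_{\mathbf{z}}\|_{L_\infty}\le\kappa_{n}k\,\|f_{\mathbf{z}}\|_{L_\infty}$ (valid since $\|\mathbf z\|=1$ makes $f_{\mathbf{z}}$ a degree-$k$ harmonic). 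Writing $M:=\|f_{\mathbf{z}}\|_{L_\infty}$ and comparing the maximizer of $|f_{\mathbf{z}}|$ with the nearest net point gives $M\le s+\kappa_{n}k\,M\,\delta$; choosing $\delta:=1/(2\kappa_{n}k)$ forces $M\le 2s$ on the positive-probability event above. With this $\delta$ we get $s\asymp^{n}\sqrt{1+\log k}$, and since $N(n,k)\asymp^{n}k^{n-2}$ we have $\log k\asymp^{n}\log N(n,k)$ for $k\ge 2$ (the cases $k\le 1$ or $n=2$ are immediate, as then $\|f_{\mathbf{z}}\|_{L_\infty}$ is bounded by a dimensional constant), so $M\ll^{n}\sqrt{\log N(n,k)}$; in particular a suitable deterministic $\mathbf{z}$ exists. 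One can avoid citing the Bernstein inequality (and the mild bootstrap) by differentiating the addition theorem twice, which gives $\sum_{j}\|\nabla_{\mathbb{S}}Y_{k,j}(\mathbf{x})\|^{2}\asymp^{n}Nk^{2}$, hence $\|\nabla_{\mathbb{S}}f_{\mathbf{z}}\|_{L_\infty}\ll^{n}k\sqrt{N}$ by Cauchy--Schwarz; then $\delta:=1/(k\sqrt{N})$ gives the same conclusion.

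The delicate part is the matching of scales in the net argument, not any individual estimate. The off-net error is controlled by $\|\nabla_{\mathbb{S}}f_{\mathbf{z}}\|_{L_\infty}\asymp k\,\|f_{\mathbf{z}}\|_{L_\infty}$, which forces the mesh to be of order $1/k$, and the union bound over the resulting $\asymp^{n}k^{n-1}$ net points is \emph{exactly} what converts the $k$-independent per-point tail into the $\sqrt{\log k}\asymp^{n}\sqrt{\log N(n,k)}$ bound. Everything therefore hinges on the per-point concentration rate being independent of $k$, which in turn relies on the addition theorem making $\|\mathbf{v}(\mathbf{x})\|$ constant over $\mathbb{S}^{n-1}$; if $\|\mathbf{v}(\mathbf{x})\|$ were allowed to be large at special points, the rate would degrade and the bound would weaken. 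A secondary point to get right is that the bootstrap $M\le s+\kappa_{n}k M\delta$ is only useful when $\kappa_{n}k\delta<1$, which dictates the precise choice $\delta=1/(2\kappa_{n}k)$.
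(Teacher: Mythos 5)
Your proof is correct, but it follows a genuinely different route from the paper. The paper's argument is essentially a one-liner modulo a citation: it defines the norm $\|\boldsymbol{\xi}\|_{(\infty)}=\|\sum_j \xi_j Y_{k,j}\|_{L_\infty({\mathbb S}^{n-1})}$ on the coefficient space ${\mathbb R}^{N(n,k)}$, invokes Theorem 1 of Kushpel and Tozoni to bound its Levy mean by $C\log^{1/2}N(n,k)$, and concludes by a first-moment (averaging) argument that some unit coefficient vector ${\mathbf z}$ achieves at most the mean. You instead give a self-contained probabilistic-method proof: the addition theorem makes $\|{\mathbf v}({\mathbf x})\|$ constant in ${\mathbf x}$, so the spherical-cap concentration bound yields a per-point tail $2e^{-\omega_{n-1}s^2/2}$ whose rate is independent of $k$ and $N$; a union bound over a $\delta$-net of ${\mathbb S}^{n-1}$ with $\delta\asymp 1/k$ plus the Bernstein inequality for degree-$k$ spherical harmonics (or your alternative via $\sum_j\|\nabla_{\mathbb S}Y_{k,j}\|^2\asymp^n Nk^2$ and Cauchy--Schwarz, which avoids the bootstrap) converts this into a sup-norm bound of order $\sqrt{\log k}\asymp^n\sqrt{\log N(n,k)}$ with probability exceeding $\tfrac12$. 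What your version buys is transparency and self-containedness --- it is, in effect, an unpacking of the kind of argument that underlies the cited Levy-mean estimate --- and it directly establishes that a uniformly random ${\mathbf z}$ works with probability greater than $\tfrac12$, which is exactly how the paper actually generates $Y_k$ in its remark and experiments (the averaging argument alone only gives existence, or a weaker high-probability statement via Markov). What the paper's version buys is brevity. Your identification of the delicate point (the $k$-independence of the per-point rate, forced by the addition theorem, against the $\asymp^n k^{n-1}$-point net) is accurate, and your handling of the degenerate cases $n=2$ and small $k$ is appropriate given that $\ll^n$ is an asymptotic statement in $k$.
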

As can be seen from the proof below, the vector ${\mathbf z}$ can be simply generated according to the uniform distribution on ${\mathbb S}^{n-1}$.
\begin{proof}
Let us define a new norm $\|\cdot\|_{(\infty)}$ on ${\mathbb R}^{N(n,k)}$ by
\begin{equation*}
\begin{split}
\|\boldsymbol{\xi}\|_{(\infty)} = \|\sum_{j=1}^{N(n,k)} \xi_j Y_{k,j}\|_{L_\infty({\mathbb S}^{n-1})}.
\end{split}
\end{equation*}
The Levy mean of the norm $\|\cdot\|_{(\infty)}$ is defined by
\begin{equation*}
\begin{split}
M(\|\cdot\|_{(\infty)}) = \big(\int_{{\mathbb S}^{n-1}}\|\boldsymbol{\xi}\|_{(\infty)}^2d\nu (\boldsymbol{\xi})\big)^{1/2}.
\end{split}
\end{equation*}
From Theorem 1 of~\cite{tozoni} we have
\begin{equation*}
\begin{split}
M(\|\cdot\|_{(\infty)}) \leq C \log^{1/2} N(n,k).
\end{split}
\end{equation*}
where $C$ is some universal constant, or $\int_{{\mathbb S}^{n-1}}\|\boldsymbol{\xi}\|_{(\infty)}^2d\nu (\boldsymbol{\xi})\ll \log N(n,k)$. Therefore, there exists ${\mathbf z}\in {\mathbb R}^{N(n,k)}$ such that $\|{\mathbf z}\|_{(\infty)}^2\ll \omega^{-1}_{n-1}\log N(n,k)$ and this completes the proof.
\end{proof}

Let $\sigma_k = \sqrt{\lambda_k}$. Let us denote $Y_k = \sum_{j=1}^{N(n,k)} z_j Y_{k,j}$ where ${\mathbf z}$ satisfies the condition from the previous lemma.
Since $\|\sigma_k Y_{k}\|_{\mathcal{H}_K}=1$ we will be interested in the norm of $\sigma_k Y_{k}$ in the Barron space.

Let $g: {\mathbb R}^n\to {\mathbb R}$ be such that $g|_{{\mathbb S}^{n-1}} = \sigma_k Y_{k}$, $\int_{{\mathbb R}^n} \|\boldsymbol{\omega}\| |\widehat{g}(\boldsymbol{\omega})|d\boldsymbol{\omega} < +\infty$ and
\begin{equation*}
\begin{split}
g({\mathbf x}) = g({\mathbf 0})+\int_{{\mathbb R}^n}(e^{{\rm i}\boldsymbol{\omega}^\top {\mathbf x}}-1)\widehat{g}(\boldsymbol{\omega}) d\boldsymbol{\omega}.\\
\end{split}
\end{equation*}
Then, Fubini's theorem combined with the plane wave expansion gives us
\begin{equation*}
\begin{split}
\sigma_k = \int_{{\mathbb S}^{n-1}}g({\mathbf x})Y_{k} ({\mathbf x})d\nu({\mathbf x}) =
\int_{{\mathbb R}^n} \widehat{g}(\boldsymbol{\omega})\int_{{\mathbb S}^{n-1}} e^{{\rm i}\boldsymbol{\omega}^\top {\mathbf x}}Y_{k} ({\mathbf x})d\nu({\mathbf x}) d\boldsymbol{\omega}\propto^n
\int_{{\mathbb R}^n} \widehat{g}(\boldsymbol{\omega}) \frac{(-{\rm i})^{k} }{ \|\boldsymbol{\omega}\|^{\frac{n-2}{2}}} J_{k+\frac{n-2}{2}}(\|\boldsymbol{\omega}\|)Y_{k} (\widehat{\boldsymbol{\omega}})d\boldsymbol{\omega}.
\end{split}
\end{equation*}

By construction, we have $|Y_{k} (\widehat{\boldsymbol{\omega}})|\ll^n \sqrt{\log N(n,k)}$. After using the Hölder's inequality, we plug in the latter inequality into the former and obtain
\begin{equation*}
\begin{split}
\sigma_k\leq^n
\sqrt{\log N(n,k)}\int_{{\mathbb R}^n} \|\boldsymbol{\omega}\| |\widehat{g}(\boldsymbol{\omega})|d\boldsymbol{\omega} \sup_{\boldsymbol{\omega}\in {\mathbb R}^n} \frac{|J_{k+\frac{n-2}{2}}(\|\boldsymbol{\omega}\|)|}{\|\boldsymbol{\omega}\|^{\frac{n}{2}}}.
\end{split}
\end{equation*}
Let us fix $0<\gamma<1$. For the Bessel function $J_{\nu}, \nu=k+\frac{n-2}{2}$ we have the Meissel's formula (see page 227 in~\cite{Watson}, also see an equivalent Formula 8.452 from~\cite{2015867}),
\begin{equation*}
\begin{split}
J_{\nu}(\nu z) \asymp \frac{(\nu z)^{\nu}e^{\nu\sqrt{1-z^2}}}{e^{\nu}\Gamma(\nu+1)(1-z^2)^{\frac{1}{4}}(1+\sqrt{1-z^2})^{\nu}},
\end{split}
\end{equation*}
which holds for any $z\in [0,\gamma]$ and a large $\nu$. Therefore,
\begin{equation*}
\begin{split}
\max_{r\in [0,\nu\gamma]}r^{-\frac{n}{2}}J_{\nu}(r) = \max_{z\in [0,\gamma]}(\nu z)^{-\frac{n}{2}}J_{\nu}(\nu z) \ll
\max_{z\in [0,\gamma]}\frac{(\nu z)^{\nu-\frac{n}{2}}e^{\nu\sqrt{1-z^2}}}{e^{\nu}\Gamma(\nu+1)(1-z^2)^{\frac{1}{4}}(1+\sqrt{1-z^2})^{\nu}}.
\end{split}
\end{equation*}
A derivative of $(\nu-\frac{n}{2})\log z+\nu\sqrt{1-z^2}-\nu\log(1+\sqrt{1-z^2})-\frac{1}{4}\log (1-z^2)$ is $z(\frac{\nu-\frac{n}{2}}{z^2}-\frac{\nu }{1+\sqrt{1-z^2}}+\frac{1}{2-2z^2})$. The function $\frac{\nu-\frac{n}{2}}{t}+\frac{1}{2-2t}$ attains its minimum in $[0,1]$ when
$(\frac{\nu-\frac{n}{2}}{t}+\frac{1}{2-2t})'=-\frac{\nu-\frac{n}{2}}{t^2}+\frac{1}{2(1-t)^2}=0$, i.e. when $t=((2\nu-n)^{-1/2}+1)^{-1}$. For large $\nu$ we have $\frac{\nu-\frac{n}{2}}{((2\nu-n)^{-1/2}+1)^{-1}} = \nu-\frac{n}{2}+\frac{1}{\sqrt{2}}\sqrt{\nu-\frac{n}{2}}\geq \nu$.
Thus, if $\nu$ is sufficiently large we always have
\begin{equation*}
\begin{split}
\frac{\nu-\frac{n}{2}}{t}+\frac{1}{2-2t}\geq \nu\geq \frac{\nu }{1+\sqrt{1-t}} \,\,\forall t\in [0,1].
\end{split}
\end{equation*}

Therefore, the RHS of Meissel's formula is a growing function of $z$ and the maximum is attained at $z = \gamma$. In other words, we reduced the maximization of $r^{-\frac{n}{2}}J_\nu(r)$ over $[0,+\infty)$ to the maximization over $[\nu\gamma,+\infty)$.
Using a uniform bound from~\cite{Krasikov+2006+83+91}, i.e. $J_\nu(r)\ll \nu^{-\frac{1}{3}}$, we conclude that $\max_{r\in [\nu\gamma,+\infty]}r^{-\frac{n}{2}}J_{\nu}(r)\ll^n \nu^{-\frac{n}{2}-\frac{1}{3}}$.

Thus, we have
\begin{equation*}
\begin{split}
\sigma_k\leq^n \sqrt{\log N(n,k)} \nu^{-\frac{n}{2}-\frac{1}{3}} \int_{{\mathbb R}^n} \|\boldsymbol{\omega}\| |\widehat{g}(\boldsymbol{\omega})|d\boldsymbol{\omega} .
\end{split}
\end{equation*}
The latter directly gives a lower bound $\frac{\sigma_k k^{\frac{n}{2}+\frac{1}{3}}}{\sqrt{\log N(n,k)}}$ on the norm of $\sigma_k Y_{k}\in \mathcal{H}_K$ in the Barron's space of ${\mathbb S}^{n-1}$ and completes the proof.
\end{proof}
\begin{remark}\label{key-remark}
The lower bound on $C_{\sigma_k Y_k, {\mathbb S}^{n-1}}$ that was given in the latter proof can be turned into a lower bound on $C_{Y_k, {\mathbb S}^{n-1}}$, that is $C_{Y_k, {\mathbb S}^{n-1}}\geq^n \frac{k^{\frac{n}{2}+\frac{1}{3}}}{\sqrt{\log N(n,k)}}$. Since $\|\sigma_k Y_k\|_{\mathcal{H}_K} = 1$ we also have $\|Y_k\|_{\mathcal{H}_K} = \sigma_k^{-1}$. Thus, for most of popular activation functions, both $C_{Y_k, {\mathbb S}^{n-1}}$ and $\|Y_k\|_{\mathcal{H}_K}$ grow quite rapidly with $k$. This property makes $Y_k$ a target function for testing boundaries of our approximation theory, Barron's theorem, and the NTK theory.
\end{remark}

\section{Proof sketch of Theorem~\ref{positive}}\label{inclusion-proof}
Any function $f\in B_{\mathcal{H}_K}$ can be represented as
\begin{equation*}
\begin{split}
f = \sum_{k=0}^\infty \sigma_k x_k Z_k
\end{split}
\end{equation*}
where $\sum_{k=0}^\infty x_k^2\leq 1$ and $Z_k: {\mathbb S}^{n-1}\to {\mathbb R}$ is a spherical harmonics of order $k$ such that $\|Z_k\|_{L_2({\mathbb S}^{n-1})}=1$. Our goal is to construct $g: {\mathbb R}^{n}\to {\mathbb R}$ such that $g|_{{\mathbb S}^{n-1}} = f$ and the integral $\int_{{\mathbb R}^n}\|\boldsymbol{\omega}\||\widehat{g}(\boldsymbol{\omega})|d\boldsymbol{\omega}$ is as small as possible. First we will define $g_k$ such that $g_k|_{{\mathbb S}^{n-1}} = \sigma_k Z_k$ and then set $g=\sum_{k=0}^\infty x_k g_k$. The key inequality that bounds the latter integral is the following one:
\begin{equation*}
\begin{split}
\int_{{\mathbb R}^n}\|\boldsymbol{\omega}\||\widehat{g}(\boldsymbol{\omega})|d\boldsymbol{\omega} \leq \sum_{k=0}^\infty |x_k|\int_{{\mathbb R}^n}\|\boldsymbol{\omega}\||\widehat{g_k}(\boldsymbol{\omega})|d\boldsymbol{\omega}\leq
\big(\sum_{k=0}^\infty (\int_{{\mathbb R}^n}\|\boldsymbol{\omega}\||\widehat{g_k}(\boldsymbol{\omega})|d\boldsymbol{\omega})^2\big)^{1/2}.
\end{split}
\end{equation*}
Thus, we only need the series $\sum_{k=0}^\infty (\int_{{\mathbb R}^n}\|\boldsymbol{\omega}\||\widehat{g_k}(\boldsymbol{\omega})|d\boldsymbol{\omega})^2$ to be converging and this will guarantee that $B_{\mathcal{H}_K}$ is bounded in the Barron space.

First, let us define $G_k$ in such a way that
\begin{equation*}
\begin{split}
\widehat{G_k}(\boldsymbol{\omega}) = \sigma_k t_k(\|\boldsymbol{\omega}\|)\delta(\|\boldsymbol{\omega}\|-(k+\frac{n-2}{2}))Z_{k} (\widehat{\boldsymbol{\omega}}),
\end{split}
\end{equation*}
where $t_k$ is to be specified later in order to satisfy $G_k|_{{\mathbb S}^{n-1}} = \sigma_k Z_k$ and $\delta$ is the Dirac delta function. Note that $\widehat{G_k}$ is a tempered distribution, not an ordinary function. Thus, $G_k$ equals
\begin{equation*}
\begin{split}
G_k({\mathbf x}) = \int_{{\mathbb R}^n} \widehat{G_k}(\boldsymbol{\omega})e^{{\rm i}\boldsymbol{\omega}^\top {\mathbf x}}d\boldsymbol{\omega} =
\int_{{\mathbb R}^n} \widehat{G_k}(\boldsymbol{\omega}) \sum_{k'=0}^\infty \frac{(-{\rm i})^{k'} }{ \|{\mathbf x}\|^{\frac{n-2}{2}}\|\boldsymbol{\omega}\|^{\frac{n-2}{2}}} J_{k'+\frac{n-2}{2}}(\|{\mathbf x}\|\|\boldsymbol{\omega}\|) \sum_{j=1}^{N(n,k')} Y_{k',j} (\widehat{{\mathbf x}})Y_{k',j} (\widehat{\boldsymbol{\omega}})d\boldsymbol{\omega} = \\
\frac{(-{\rm i})^k \sigma_k Z_{k} (\widehat{{\mathbf x}})}{\|{\mathbf x}\|^{\frac{n-2}{2}}} \int_0^\infty t_k(r)\delta(r-(k+\frac{n-2}{2})) J_{k+\frac{n-2}{2}}(\|{\mathbf x}\|r)r^{\frac{n}{2}}dr.
\end{split}
\end{equation*}
If ${\mathbf x}\in {\mathbb S}^{n-1}$, then $G_k({\mathbf x}) = (-{\rm i})^k \sigma_k Z_{k} ({\mathbf x}) t_k (k+\frac{n-2}{2})J_{k+\frac{n-2}{2}}(k+\frac{n-2}{2}) (k+\frac{n-2}{2})^{\frac{n}{2}}$. Thus, in order to have $G_k({\mathbf x}) = \sigma_k Z_{k} ({\mathbf x})$ we need to set $t_k$ to any smooth function such that $ t_k (k+\frac{n-2}{2}) = \frac{{\rm i}^k}{J_{k+\frac{n-2}{2}}(k+\frac{n-2}{2}) (k+\frac{n-2}{2})^{\frac{n}{2}}}$. Since $J_{k+\frac{n-2}{2}}(k+\frac{n-2}{2})\asymp (k+\frac{n-2}{2})^{-1/3}$, we conclude that
\begin{equation*}
\begin{split}
|t(k+\frac{n-2}{2})| \ll^n k^{-\frac{n}{2}+\frac{1}{3}}.
\end{split}
\end{equation*}
Therefore,
\begin{equation*}
\begin{split}
\int_{{\mathbb R}^n} \|\boldsymbol{\omega}\| |\widehat{G_k}(\boldsymbol{\omega})|d\boldsymbol{\omega} =
\sigma_k\int_{0}^\infty t_k(r)r^n \delta(r-(k+\frac{n-2}{2})) dr \int_{{\mathbb S}^{n-1}} |Z_{k} (\widehat{\boldsymbol{\omega}})|d\nu(\widehat{\boldsymbol{\omega}}) \ll^n \\
\sigma_k t_k(k+\frac{n-2}{2})(k+\frac{n-2}{2})^n \ll^n \sigma_k k^{\frac{n}{2}+\frac{1}{3}}.
\end{split}
\end{equation*}
Now it remains to define $g_k$ in such a way that $\widehat{g_k}$ is an ordinary function, unlike $\widehat{G_k}$. This can be done by simply substituting the delta function with the Gaussian $N_{\varepsilon}(x) = (2\pi\epsilon^2)^{-\frac{1}{2}} e^{-\frac{x^2}{2\epsilon^2}}$, i.e. by setting
\begin{equation*}
\begin{split}
\widehat{g_k}(\boldsymbol{\omega}) = \sigma_k t_k(\|\boldsymbol{\omega}\|)N_{\varepsilon_k} (\|\boldsymbol{\omega}\|-(k+\frac{n-2}{2}))Z_{k} (\widehat{\boldsymbol{\omega}}),
\end{split}
\end{equation*}
for the sequence $\varepsilon_k = 2^{-2^k}$. After that, both $g_k$ and $\widehat{g_k}$ are ordinary functions and this completes the proof.

\section{The case of the Gaussian activation function}\label{gaussian-case}
Let us now consider the case $\sigma(x) = e^{-\frac{x^2}{2}}$.
For this case, the NNGP can be computed directly.
By definition, we have
\begin{equation*}
\begin{split}
K({\mathbf x}, {\mathbf x}') = \int_{{\mathbb R}^2}\sigma(u)\sigma(v)G(u,v|\Sigma_{{\mathbf x}, {\mathbf x}'})dudv,
\end{split}
\end{equation*}
where $G({\mathbf s}|\Sigma_{{\mathbf x}, {\mathbf x}'}) = \frac{1}{2\pi {\rm det}(\Sigma_{{\mathbf x}, {\mathbf x}'})^{1/2}}{\rm exp}(-\frac{{\mathbf s}^\top \Sigma_{{\mathbf x}, {\mathbf x}'}^{-1}{\mathbf s}}{2})$. Therefore,
\begin{equation*}
\begin{split}
\frac{1}{2\pi}\int_{{\mathbb R}^{2}} e^{-\frac{\|{\mathbf s}\|^2}{2}}\frac{1}{{\rm det}(\Sigma_{{\mathbf x}, {\mathbf x}'})^{1/2}}{\rm exp} (-\frac{{\mathbf s}^\top \Sigma_{{\mathbf x}, {\mathbf x}'}^{-1}{\mathbf s}}{2})d{\mathbf s} =
\frac{1}{{\rm det}(I_2+\Sigma_{{\mathbf x}, {\mathbf x}'})^{1/2}}.
\end{split}
\end{equation*}
Since ${\rm det}(I_2+\Sigma_{{\mathbf x}, {\mathbf x}'}) = 1+{\mathbf x}^\top {\mathbf x}+{\mathbf x}'^\top {\mathbf x}'+\|{\mathbf x}\|^2\cdot \|{\mathbf x}'\|^2 - ({\mathbf x}^\top {\mathbf x}')^2$,
we conclude
\begin{equation}
\begin{split}
K({\mathbf x}, {\mathbf x}') = \frac{1}{( 1+{\mathbf x}^\top {\mathbf x}+{\mathbf x}'^\top {\mathbf x}'+\|{\mathbf x}\|^2\cdot \|{\mathbf x}'\|^2 - ({\mathbf x}^\top {\mathbf x}')^2)^{1/2}}.
\end{split}
\end{equation}
Let us analyze further the case $\boldsymbol{\Omega}={\mathbb S}^{n-1}$.

\begin{proof}[Proof of Theorem~\ref{gaussian}]
For that case we have
\begin{equation*}
\begin{split}
K({\mathbf x}, {\mathbf x}') = \frac{1}{( 4 - ({\mathbf x}^\top {\mathbf x}')^2)^{1/2}} = f({\mathbf x}^\top {\mathbf x}'),
\end{split}
\end{equation*}
where $f(t) = \frac{1}{\sqrt{4-t^2}}$.

From the Funk-Hecke formula we obtain
\begin{equation*}
\begin{split}
\lambda_k = \frac{\Gamma(\frac{n}{2})}{\sqrt{\pi}\Gamma(\frac{n-1}{2})}\int_{-1}^1 f(t)P_k(t)(1-t^2)^{\frac{n-3}{2}}dt
\end{split}
\end{equation*}
where $P_k$ is the $k$th Gegenbauer polynomial of the parameter $\alpha=\frac{n-2}{2}$. Since $P_{2k+1}$ is an odd function, we conclude that $\lambda_{2k+1}=0$. Let us now concentrate on the calculation of $\lambda_{2k}$.

For $t\in [-1,1]$ we have
\begin{equation*}
\begin{split}
f(t) = \frac{1}{(4-t^2)^{1/2}} =
\frac{1}{2}\sum_{i=0}^\infty (-1)^i {-1/2 \choose i}(\frac{t}{2})^{2i} =
\frac{1}{2}\sum_{i=0}^\infty \frac{{2i \choose i}}{2^{4i}}t^{2i} \Rightarrow \\
\int_{-1}^1 f(t)P_{2k}(t)(1-t^2)^{\frac{n-3}{2}}dt = \sum_{i=0}^\infty \frac{{2i \choose i}}{2^{4i}} a^k_{i},
\end{split}
\end{equation*}
where $a^k_{i} = \int_{0}^1 t^{2i}P_{2k}(t)(1-t^2)^{\frac{n-3}{2}}dt$. Note that $a^k_i=0$ for $k>i$ due to the fact that $P_{2k}$ is orthogonal to $x^{2i}$.
Using Rodrigues' formula we conclude
\begin{equation*}
\begin{split}
a^k_{i} =\frac{\Gamma(\frac{n-1}{2})}{2^{2k} \Gamma(2k+\frac{n-1}{2})} \int_{0}^1 t^{2i}\frac{d}{dt^{2k}}[(1-t^2)^{2k+\frac{n-3}{2}}]dt.
\end{split}
\end{equation*}
The expression $\int_{0}^1 t^{2i}\frac{d}{dt^{2k}}[(1-t^2)^{2k+\frac{n-3}{2}}]dt$ is nonzero for $i\geq k$ and integration by parts gives us
\begin{equation*}
\begin{split}
\int_{0}^1 t^{2i}\frac{d}{dt^{2k}}[(1-t^2)^{2k+\frac{n-3}{2}}]dt =
\frac{(2i)!}{(2i-2k)!}\int_{0}^1 t^{2i-2k}(1-t^2)^{2k+\frac{n-3}{2}}dt = \\
\frac{(2i)!}{(2i-2k)!} \int_{0}^1 u^{2k+\frac{n-3}{2}} (1-u)^{i-k} \frac{du}{2\sqrt{1-u}} =
\frac{(2i)!}{2(2i-2k)!} \frac{\Gamma(2k+\frac{n-1}{2})\Gamma(i-k+\frac{1}{2})}{\Gamma(k+i+\frac{n}{2})}.
\end{split}
\end{equation*}
Thus,
\begin{equation*}
\begin{split}
a^k_{i} = \frac{\Gamma(\frac{n-1}{2})}{2^{2k+1} } \frac{(2i)!\Gamma(i-k+\frac{1}{2})}{(2i-2k)!\Gamma(k+i+\frac{n}{2})},
\end{split}
\end{equation*}
and, therefore,
\begin{equation*}
\begin{split}
\int_{-1}^1 f(t)P_{2k}(t)(1-t^2)^{\frac{n-3}{2}}dt =
\frac{\Gamma(\frac{n-1}{2})}{2^{2k+1} } \sum_{i=k}^\infty \frac{{2i \choose i}}{2^{4i}} \frac{(2i)!\Gamma(i-k+\frac{1}{2})}{(2i-2k)!\Gamma(k+i+\frac{n}{2})}.
\end{split}
\end{equation*}
Using Stirling's formula, the first term in the latter sum can be bounded by $\frac{{2k \choose k}}{2^{4k}} \frac{(2k)!\Gamma(\frac{1}{2})}{\Gamma(2k+\frac{n}{2})}\ll \frac{2^{-2k}}{k^{1/2}}(2k)^{-\frac{n}{2}+1}\ll^n k^{-\frac{n}{2}}$.
For terms starting from the second, we can apply Stirling's formula for $(2i-2k)!$ also:
\begin{equation*}
\begin{split}
\frac{{2i \choose i}(2i)!}{2^{4i}} \frac{\Gamma(i-k+\frac{1}{2})}{(2i-2k)!\Gamma(k+i+\frac{n}{2})}\asymp \\
\frac{(2i)^{4i+1}e^{-4i}}{2^{4i} i^{2i+1}e^{-2i}}\frac{1}{(2i-2k)^{2i-2k+\frac{1}{2}}e^{-(2i-2k)}}
\frac{(i-k-\frac{1}{2})^{i-k}e^{-(i-k)}}{(k+i+\frac{n}{2}-1)^{k+i+\frac{n-1}{2}}e^{-(k+i+\frac{n}{2})}} \ll \\
2^{-(2i-2k)}e^{\frac{n}{2}}\frac{i^{2i}}{(i-k)^{i-k+\frac{1}{2}}(k+i+\frac{n}{2}-1)^{k+i+\frac{n-1}{2}}}.
\end{split}
\end{equation*}
Since $(x+\frac{1}{2})\log x$ is convex for $x>
\frac{1}{2}$, we conclude that
\begin{equation*}
\begin{split}
2(i+\frac{n}{4})\log (i+\frac{n-2}{4})\leq
(i-k+\frac{1}{2})\log (i-k)+
(k+i+\frac{n-1}{2})\log(k+i+\frac{n}{2}-1).
\end{split}
\end{equation*}
Therefore, we can proceed by bounding the previous expression with
\begin{equation*}
\begin{split}
2^{-(2i-2k)}e^{\frac{n}{2}}\frac{i^{2i}}{(i+\frac{n-2}{4})^{2(i+\frac{n}{4})}}\ll 2^{-(2i-2k)} i^{-\frac{n}{2}}.
\end{split}
\end{equation*}
Thus, we obtained $2^{-(2k+1)}\sum_{i=k}^\infty \frac{{2i \choose i}}{2^{4i}} \frac{(2i)!\Gamma(i-k+\frac{1}{2})}{(2i-2k)!\Gamma(k+i+\frac{n}{2})}\ll^n \sum_{i=k}^\infty 2^{-2i}i^{-\frac{n}{2}}$ and this leads to the final conclusion
\begin{equation*}
\begin{split}
\lambda_{2k} \ll^n \int_{k}^\infty 2^{-2x}x^{-\frac{n}{2}}dx\ll 2^{-2k}k^{-\frac{n}{2}}.
\end{split}
\end{equation*}
\end{proof}

\section{The case of the cosine and the sine activation functions}\label{cos-and-sin}
Since we could not find the derivation of the NNGP of a 2-NN for the cosine (or sine) activation function, we give it here for completeness.
The NNGP for $\sigma(x) = \cos(ax)$ equals
\begin{equation*}
\begin{split}
K_{\cos} ({\mathbf x}, {\mathbf x}') = {\mathbb E}_{\boldsymbol{\omega}\sim \mathcal{N}({\mathbf 0}, I_n)}[\cos(a\boldsymbol{\omega}^T {\mathbf x})\cos(a\boldsymbol{\omega}^T {\mathbf x}')] = \\
{\mathbb E}_{\boldsymbol{\omega}\sim \mathcal{N}({\mathbf 0}, I_n)}\big[\frac{1}{4} (e^{{\rm i}a\boldsymbol{\omega}^T ({\mathbf x}-{\mathbf x}')}+e^{-{\rm i}a\boldsymbol{\omega}^T ({\mathbf x}-{\mathbf x}')}+
e^{{\rm i}a\boldsymbol{\omega}^T ({\mathbf x}+{\mathbf x}')}+e^{-{\rm i}a\boldsymbol{\omega}^T ({\mathbf x}+{\mathbf x}')})\big] = \\
\frac{1}{2}e^{-\frac{a^2\|{\mathbf x}-{\mathbf x}'\|^2}{2}}+\frac{1}{2}e^{-\frac{a^2\|{\mathbf x}+{\mathbf x}'\|^2}{2}} =
e^{-\frac{a^2\|{\mathbf x}\|^2}{2}}e^{-\frac{a^2\|{\mathbf x}'\|^2}{2}}\cosh (a^2 {\mathbf x}^\top {\mathbf x}'),
\end{split}
\end{equation*}
and for the sine case equals
\begin{equation*}
\begin{split}
K_{\sin} ({\mathbf x}, {\mathbf x}') ={\mathbb E}_{\boldsymbol{\omega}\sim \mathcal{N}({\mathbf 0}, I_n)}[\sin(a\boldsymbol{\omega}^T {\mathbf x})\sin(a\boldsymbol{\omega}^T {\mathbf x}')] = \\
{\mathbb E}_{\boldsymbol{\omega}\sim \mathcal{N}({\mathbf 0}, I_n)}\big[\frac{1}{4}(e^{{\rm i}a\boldsymbol{\omega}^T ({\mathbf x}-{\mathbf x}')}+e^{-{\rm i}a\boldsymbol{\omega}^T ({\mathbf x}-{\mathbf x}')}-
e^{{\rm i}a\boldsymbol{\omega}^T ({\mathbf x}+{\mathbf x}')}-e^{-{\rm i}a\boldsymbol{\omega}^T ({\mathbf x}+{\mathbf x}')})\big] = \\
\frac{1}{2}e^{-\frac{a^2\|{\mathbf x}-{\mathbf x}'\|^2}{2}}-\frac{1}{2}e^{-\frac{a^2\|{\mathbf x}+{\mathbf x}'\|^2}{2}} =
e^{-\frac{a^2\|{\mathbf x}\|^2}{2}}e^{-\frac{a^2\|{\mathbf x}'\|^2}{2}}\sinh (a^2 {\mathbf x}^\top {\mathbf x}').
\end{split}
\end{equation*}

\begin{proof}[Proof of Theorem~\ref{cosine}]
For ${\mathbf x}, {\mathbf x}'\in {\mathbb S}^{n-1}$ we have $K_{\cos} ({\mathbf x}, {\mathbf x}') = e^{-a^2} \cosh (a^2{\mathbf x}^\top {\mathbf x}')$ and $K_{\sin} ({\mathbf x}, {\mathbf x}') = e^{-a^2} \sinh (a^2{\mathbf x}^\top {\mathbf x}')$. Let $\lambda_k$ be the eigenvalue of order $k$ of ${\rm O}_{K_{\cos}}$.
The Funk-Hecke formula gives us $\lambda_k \propto^n \int_{-1}^1 \cosh (a^2 t) P_k(t)(1-t^2)^{\frac{n-3}{2}}dt$. The latter expression is zero for an odd $k$. For an even $k$, using Rodrigues' formula, we have
\begin{equation*}
\begin{split}
\lambda_{2k} \propto^n e^{-a^2}\frac{\Gamma(\frac{n-1}{2})}{2^{2k} \Gamma(2k+\frac{n-1}{2})} \int_{-1}^1 \cosh (a^2 t) \frac{d}{dt^{2k}}[(1-t^2)^{2k+\frac{n-3}{2}}]dt \propto^n \\
\frac{a^{4k}e^{-a^2}}{2^{2k} \Gamma(2k+\frac{n-1}{2})} \int_{-1}^1 \cosh (a^2 t) (1-t^2)^{2k+\frac{n-3}{2}}dt \leq
\frac{a^{4k}e^{-a^2} \cosh (a^2)}{2^{2k} \Gamma(2k+\frac{n-1}{2})} \int_{0}^1 (1-u)^{-\frac{1}{2}} u^{2k+\frac{n-3}{2}}du = \\
\frac{a^{4k} \cosh (a^2)e^{-a^2} {\rm Beta}(\frac{1}{2}, 2k+\frac{n-1}{2})}{2^{2k} \Gamma(2k+\frac{n-1}{2})} \ll^n
\frac{a^{4k}}{\sqrt{k}2^{2k} \Gamma(2k+\frac{n-1}{2})}.
\end{split}
\end{equation*}
Analogously, let $\lambda_k'$ be the eigenvalue of degree $k$ of ${\rm O}_{K_{\sin}}$. Then we have $\lambda'_k = 0$ for an even $k$ and $\lambda'_{2k+1}\ll^n \frac{a^{4k+2} }{\sqrt{k}2^{2k} \Gamma(2k+\frac{n+1}{2})}$.
\end{proof}

\section{Other experiments}
In Figure~\ref{fig:RFMfull},  results of RFM for dimensions $n=3,6$ are given. They only verify the conclusion made in the main part of the paper: the speed of decay of the NNGP kernel's eigenvalues define which activation functions succeed in training $Y_k$ by RFM.

We made experiments with training of a 2-NN after weights were initialized by RFM. The number of hidden neurons was set to 1000, and the optimization was made by Adam with learning rate 0.01. 
In Figure~\ref{fig:RFMopt}, plots of the MSE dynamics for different activation functions are given. We see the same pattern that was observed by an ordinary training (i.e. without an RFM initialization) --- the best performance was demonstrated by the cosine and the Gaussian activation functions. Recall that ReLU outperformed both the cosine and the Gaussian activation function for RFM (see Figure~\ref{RFM-figure}). In Section~\ref{experiments} we explained a better performance of ReLU by a less rapid decay of eigenvalues of the corresponding NNGP kernel. 
This additionally indicates that the structure of the NNGP kernel only reflects properties of the initialization step, and a  proper training of weights ``forgets'' that specifics.

In figure~\ref{fig:spherical2}, plots the MSE dynamics during training of $Y_k$ by Adam with a standard initialization are given, for $n=3,6$. 
Again, the cosine and the gaussian activations outperform ReLU.

Our computing infrastructure for these experiments is as follows: CPU Intel
Core i9-10900X CPU @ 3.70GHz, GPU 2× Nvidia RTX 3090, RAM
128 Gb, Operating System Ubuntu 22.04.1 LTS, torch 1.13.1, cuda 11.7,
pandas 1.5.3.

\ifTR
\begin{figure*}
\centering
\,\includegraphics[width=.19\textwidth]{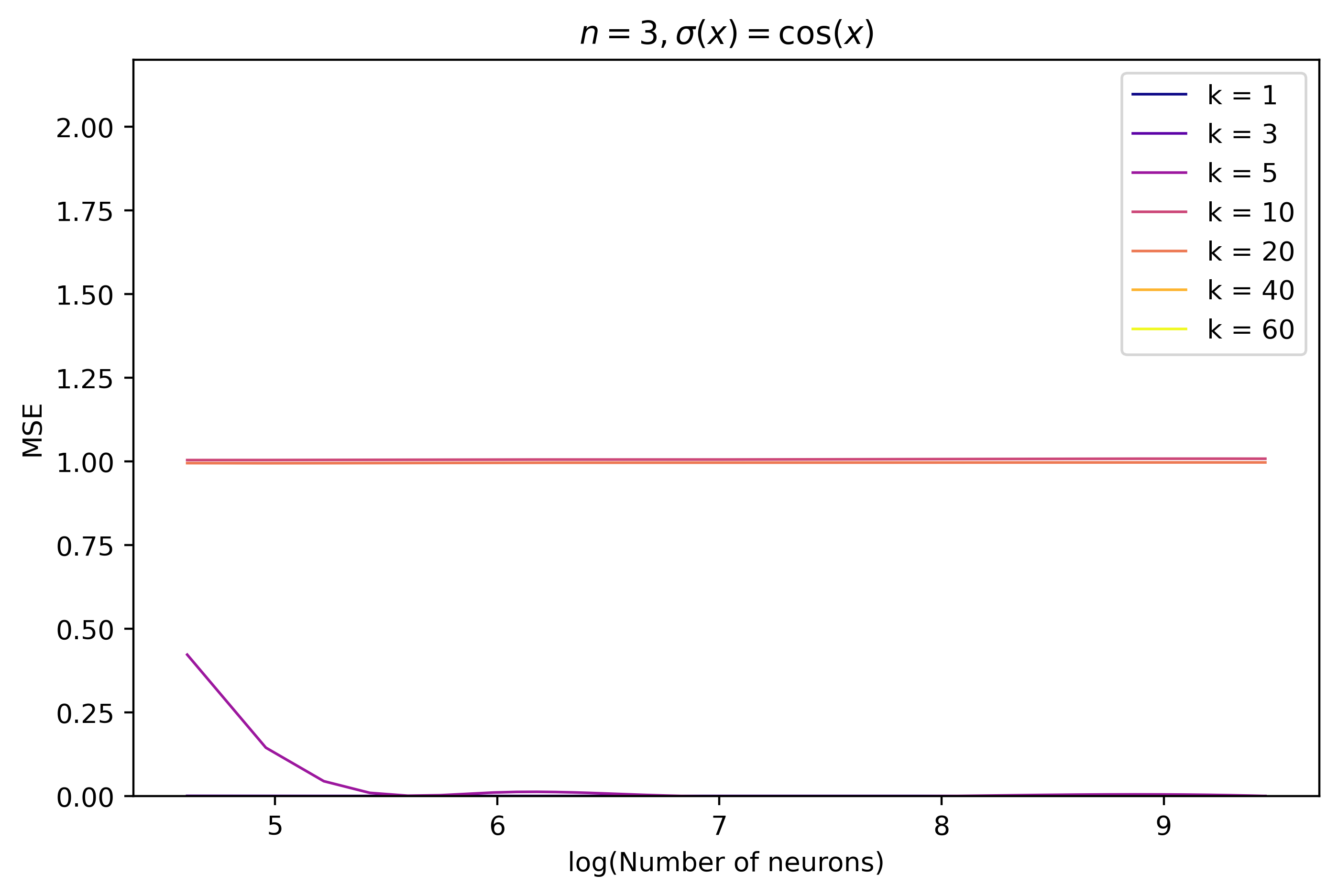}\includegraphics[width=.19\textwidth]{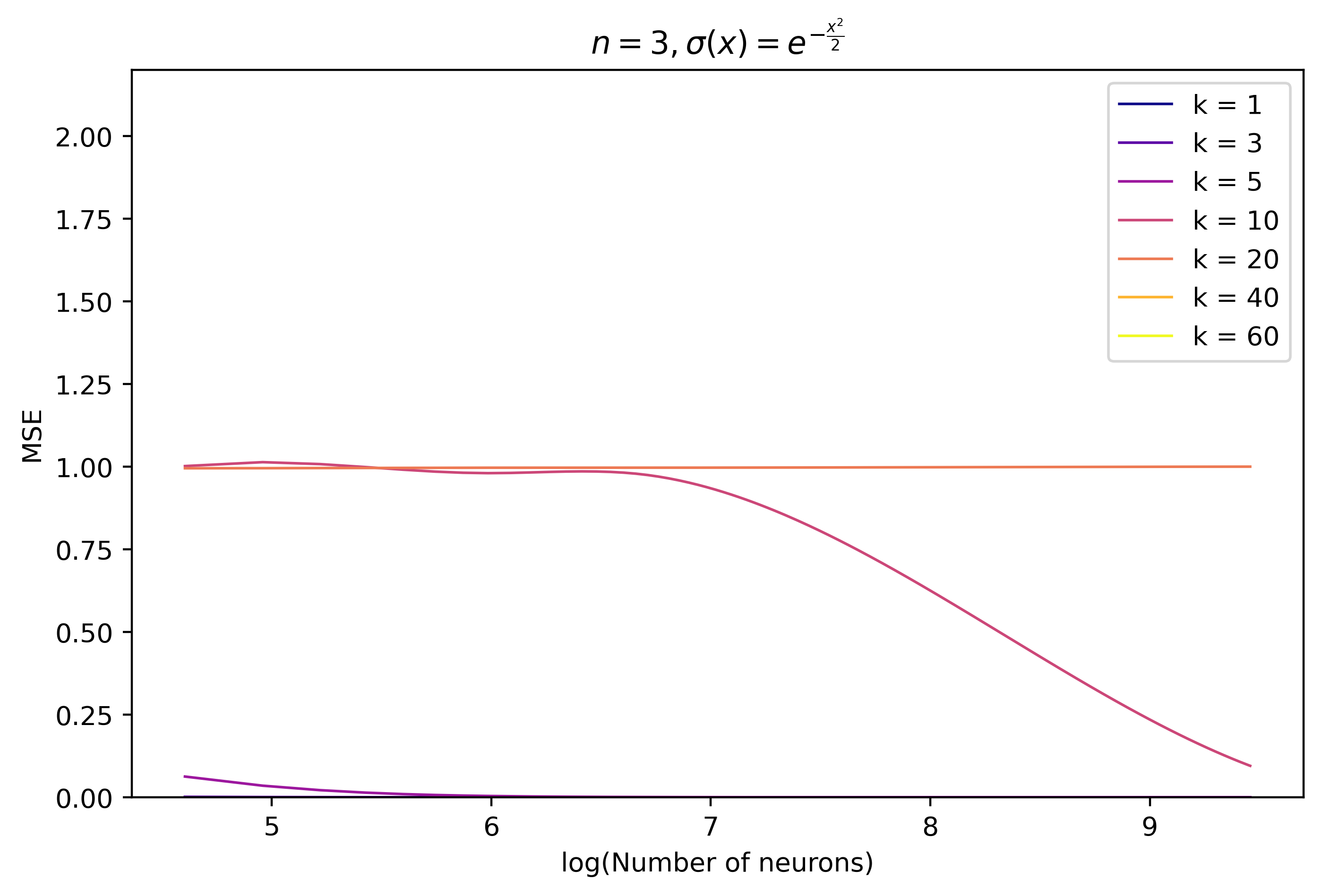}\includegraphics[width=.19\textwidth]{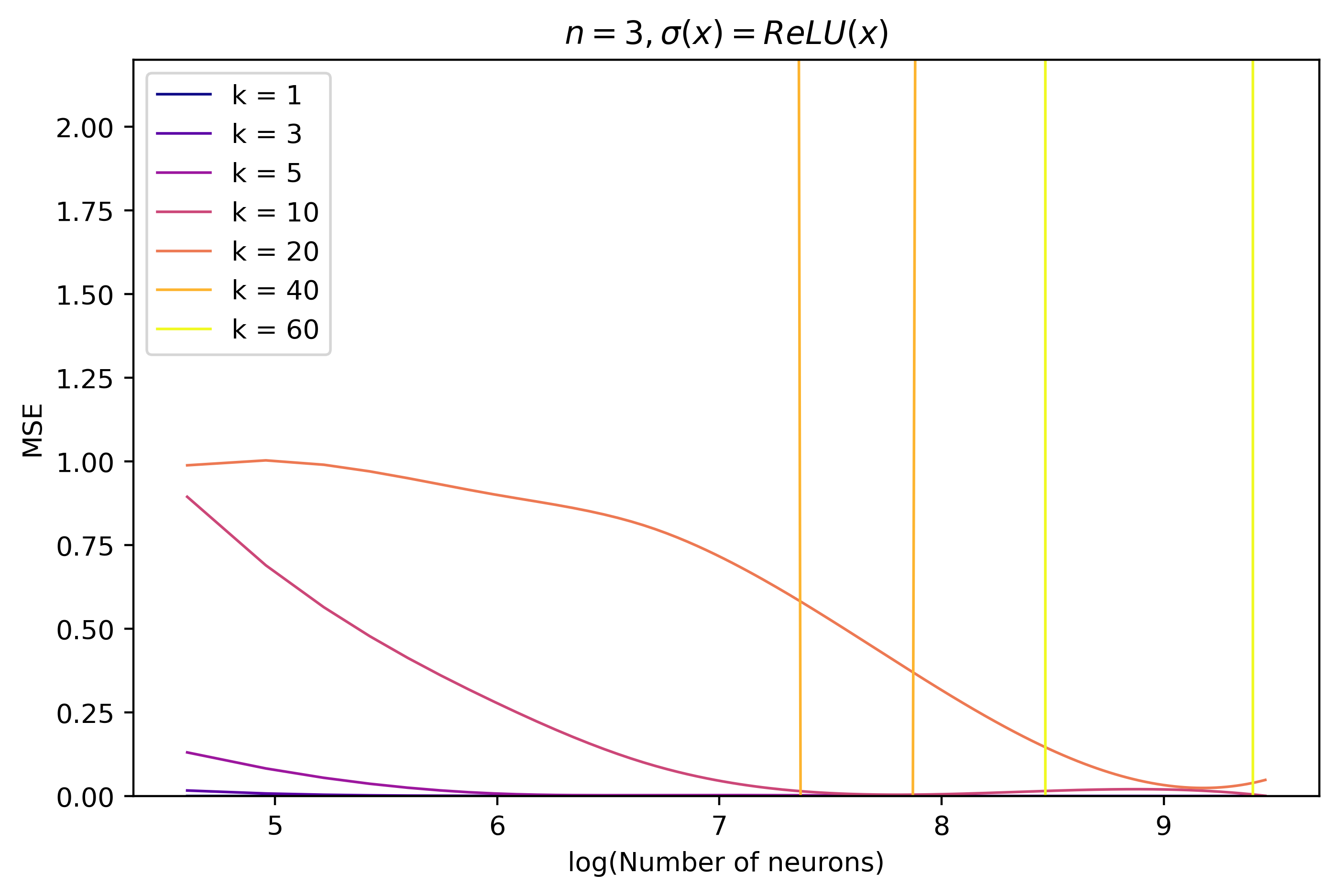}\includegraphics[width=.19\textwidth]{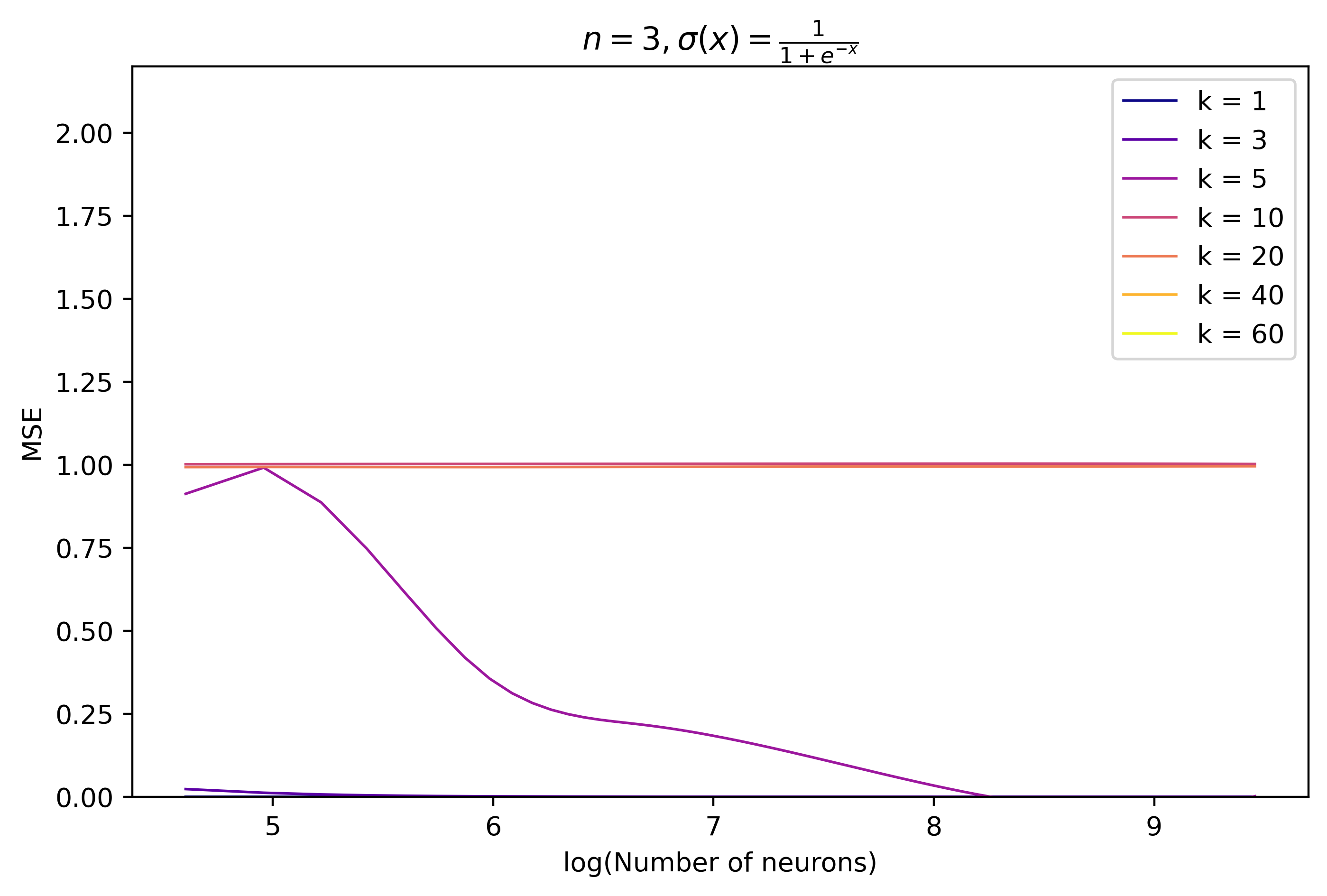}\includegraphics[width=.19\textwidth]{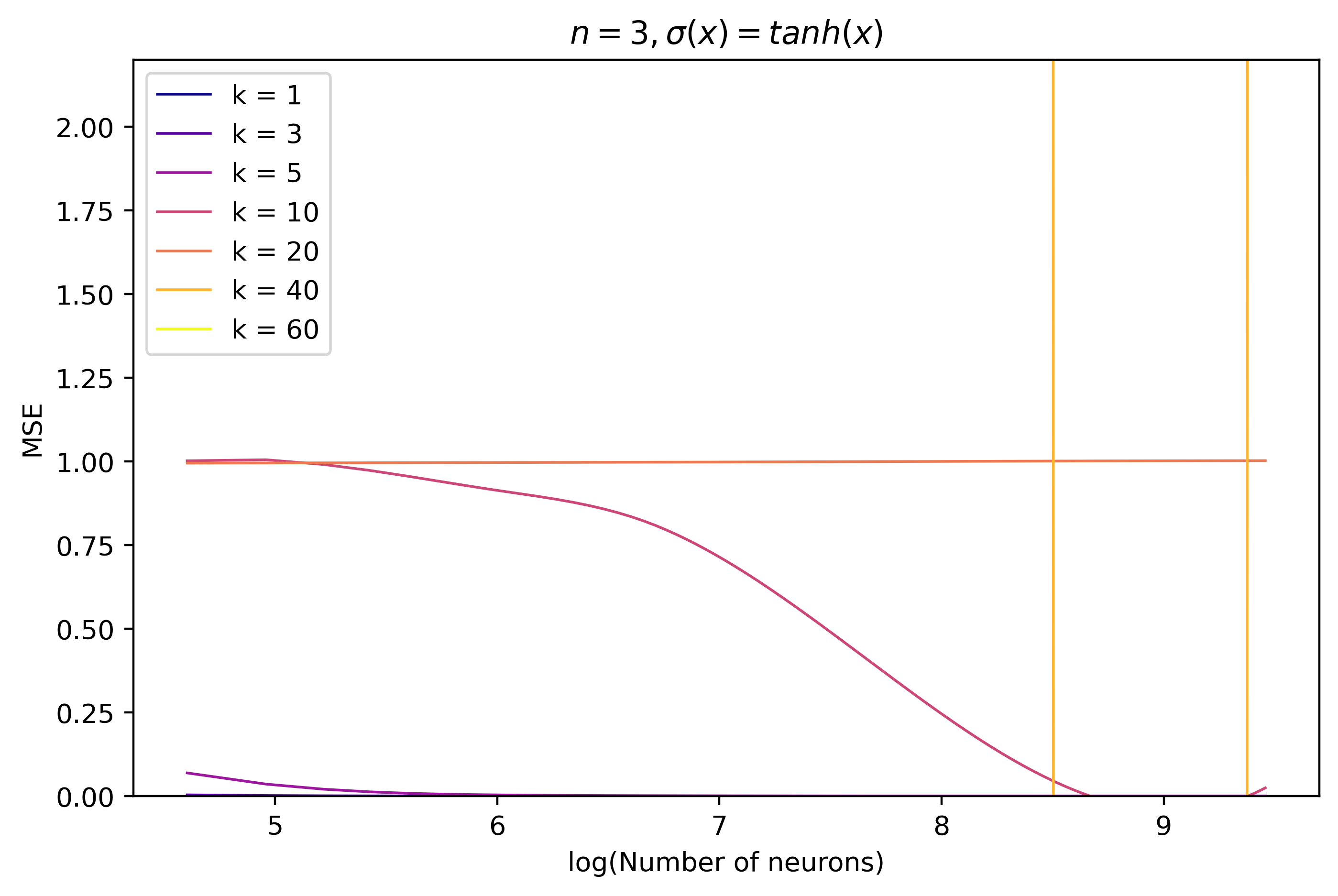}\qquad \\
\,\includegraphics[width=.19\textwidth]{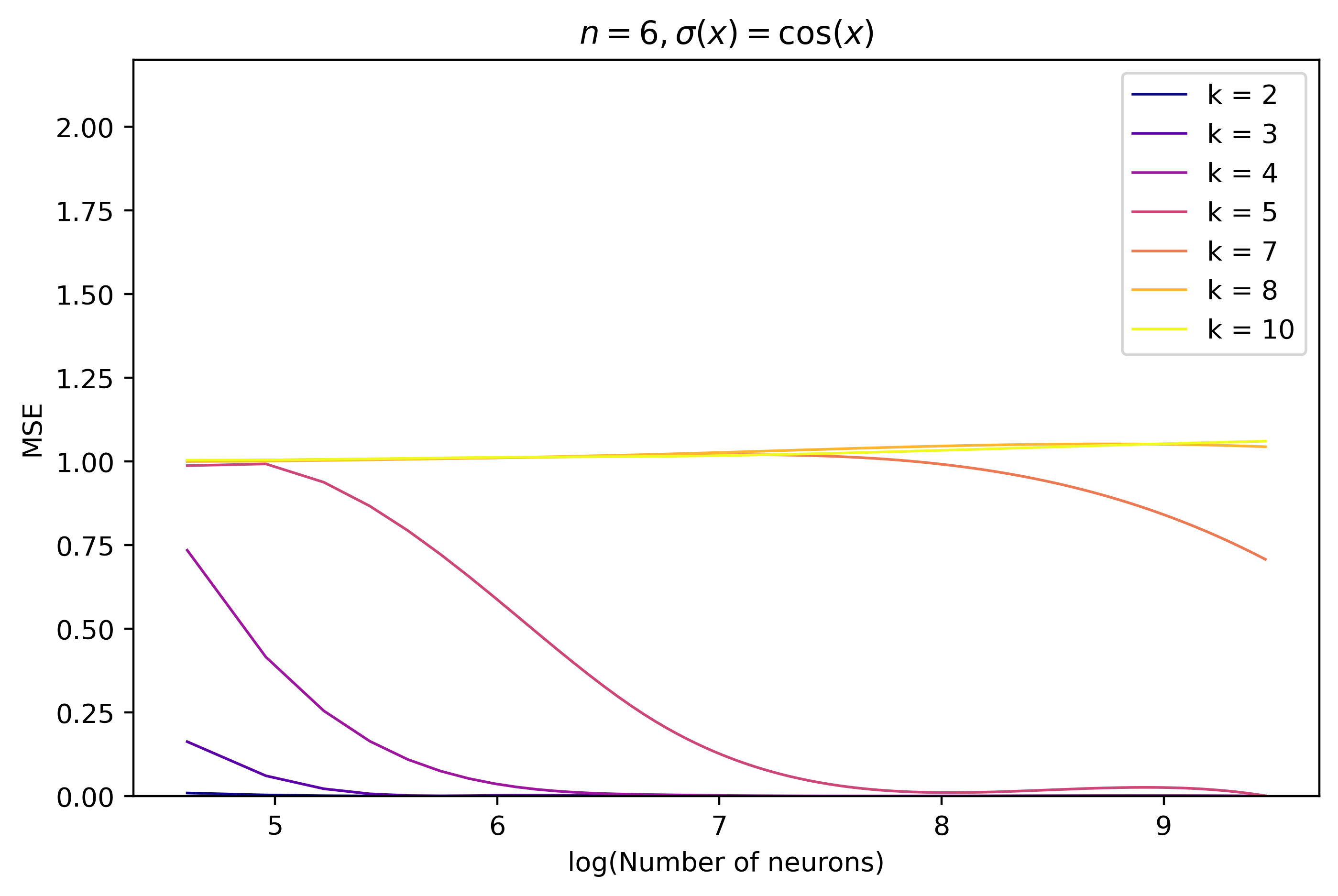}\includegraphics[width=.19\textwidth]{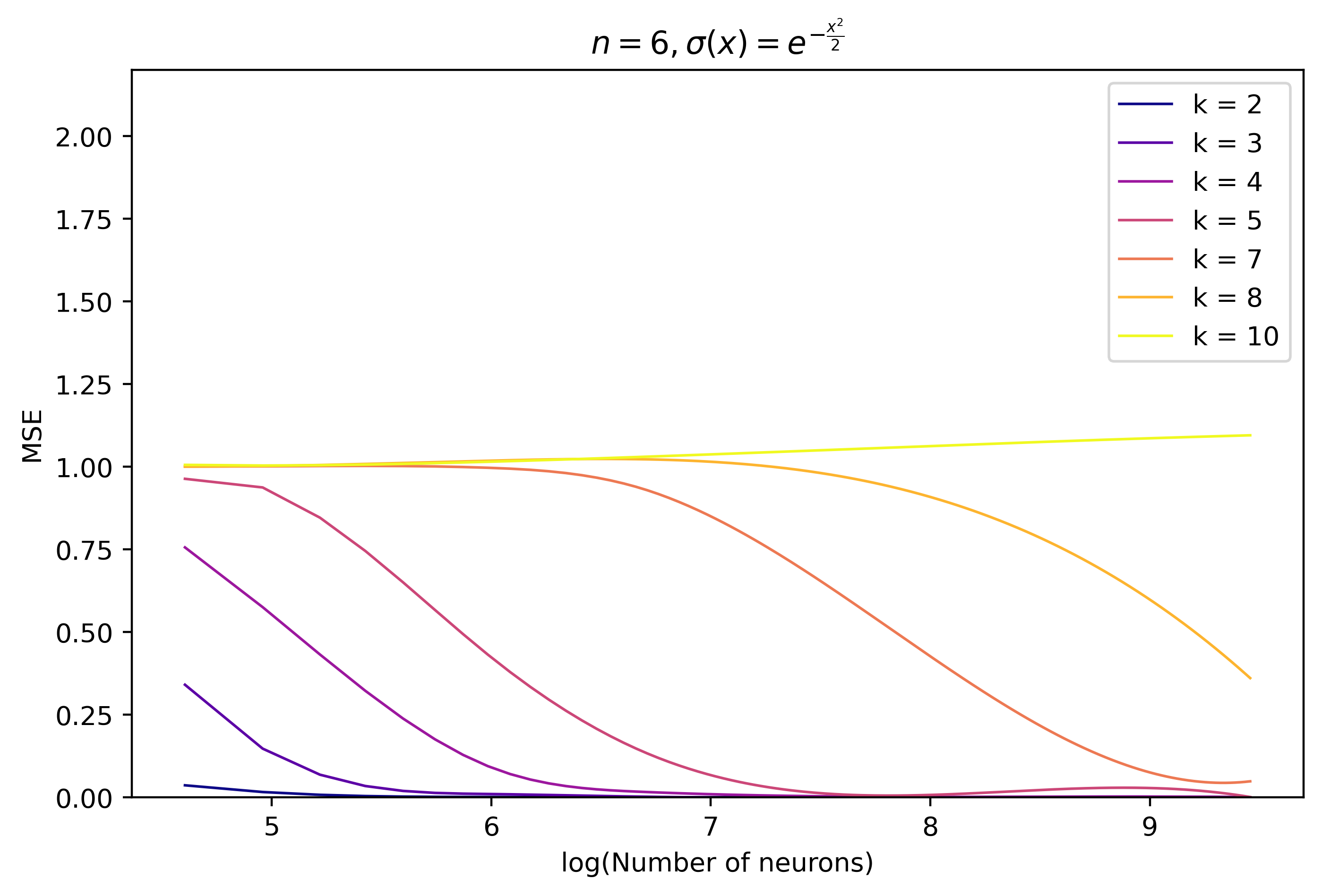}\includegraphics[width=.19\textwidth]{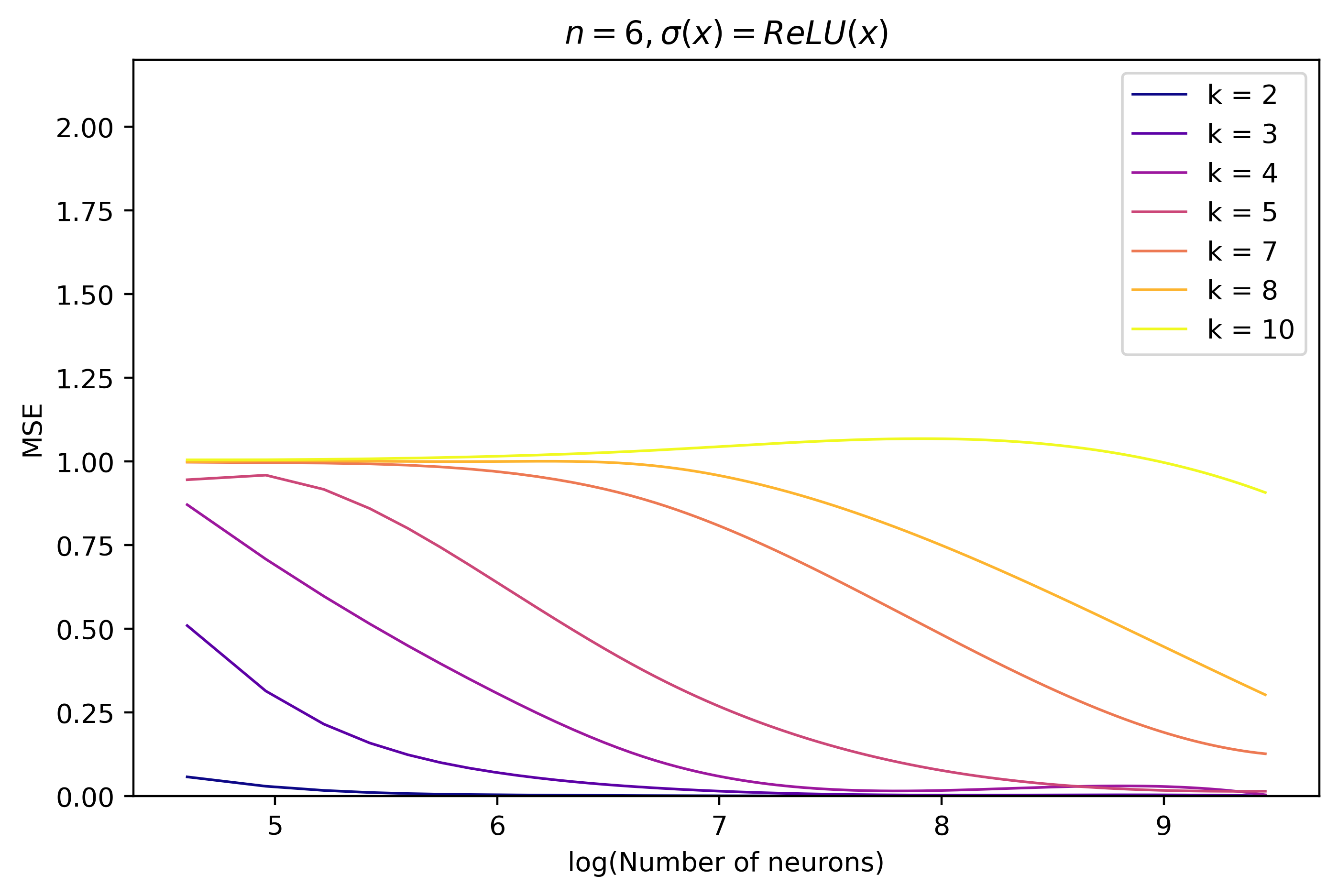}\includegraphics[width=.19\textwidth]{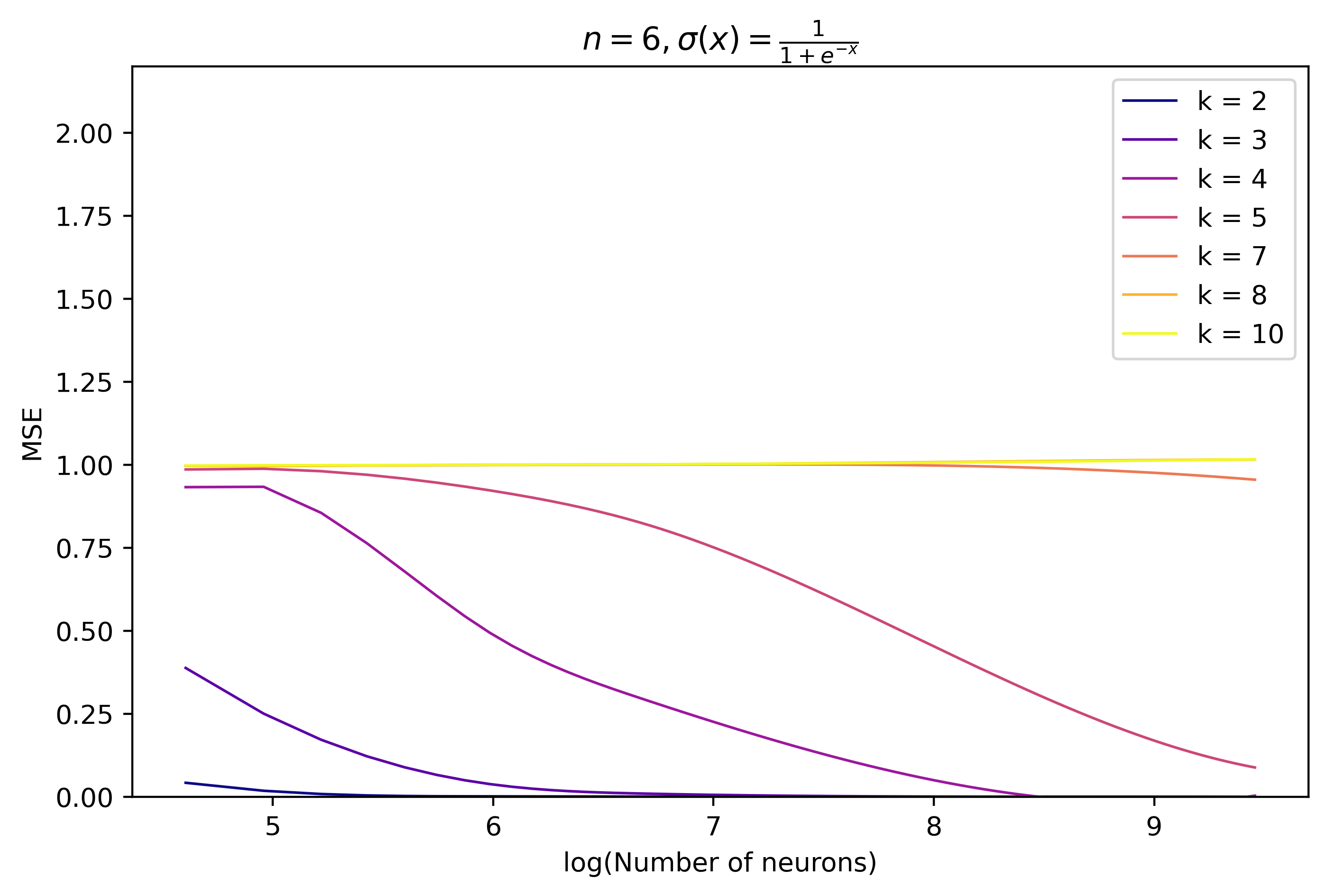}\includegraphics[width=.19\textwidth]{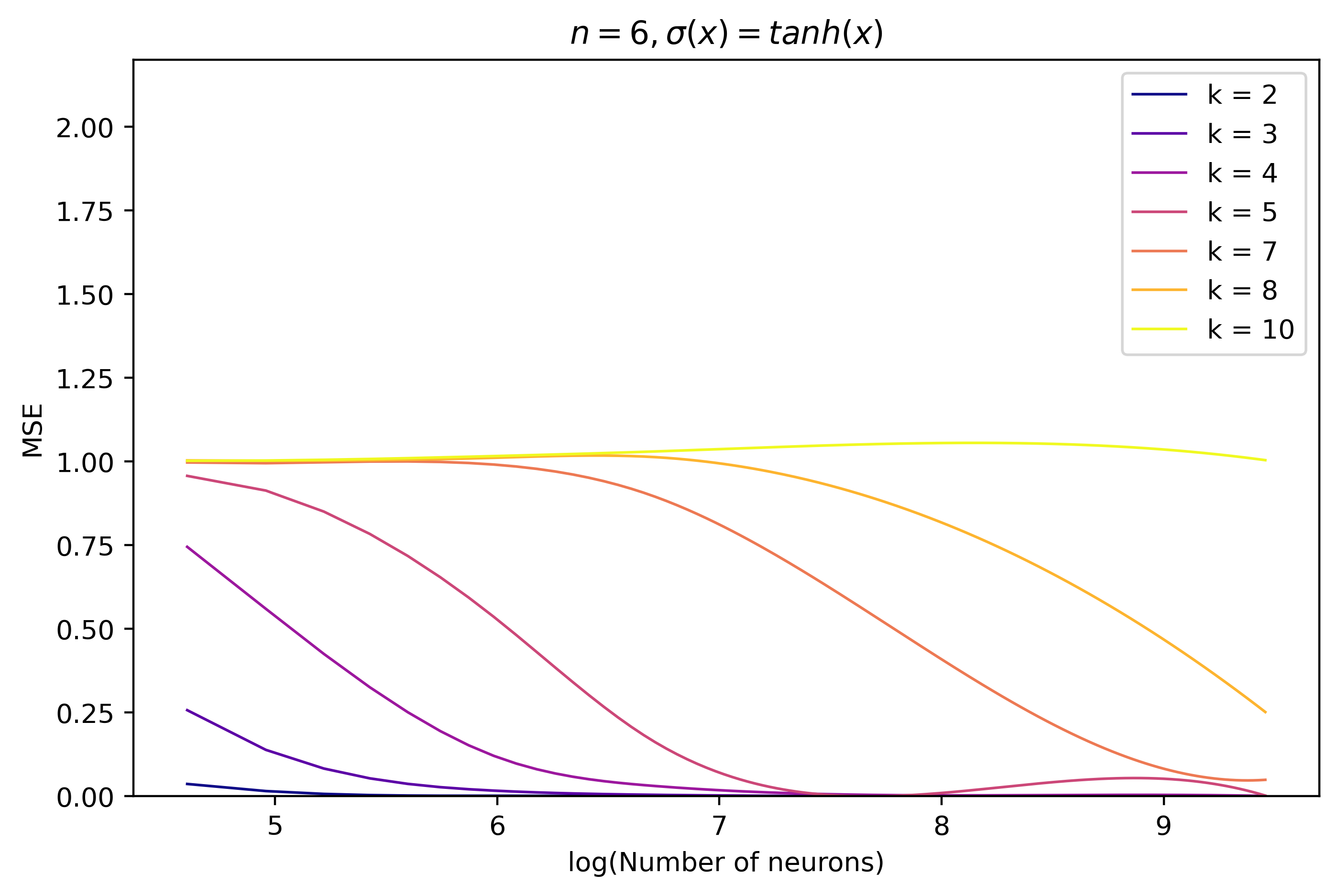}\qquad \\
\caption{Achieved MSE when learning $Y_k$ by random features model as a function of the number of hidden neurons ($n=3,6$). }
\label{fig:RFMfull}
\end{figure*}
\else
\fi

\ifTR
\begin{figure*}
\centering
\,\includegraphics[width=.19\textwidth]{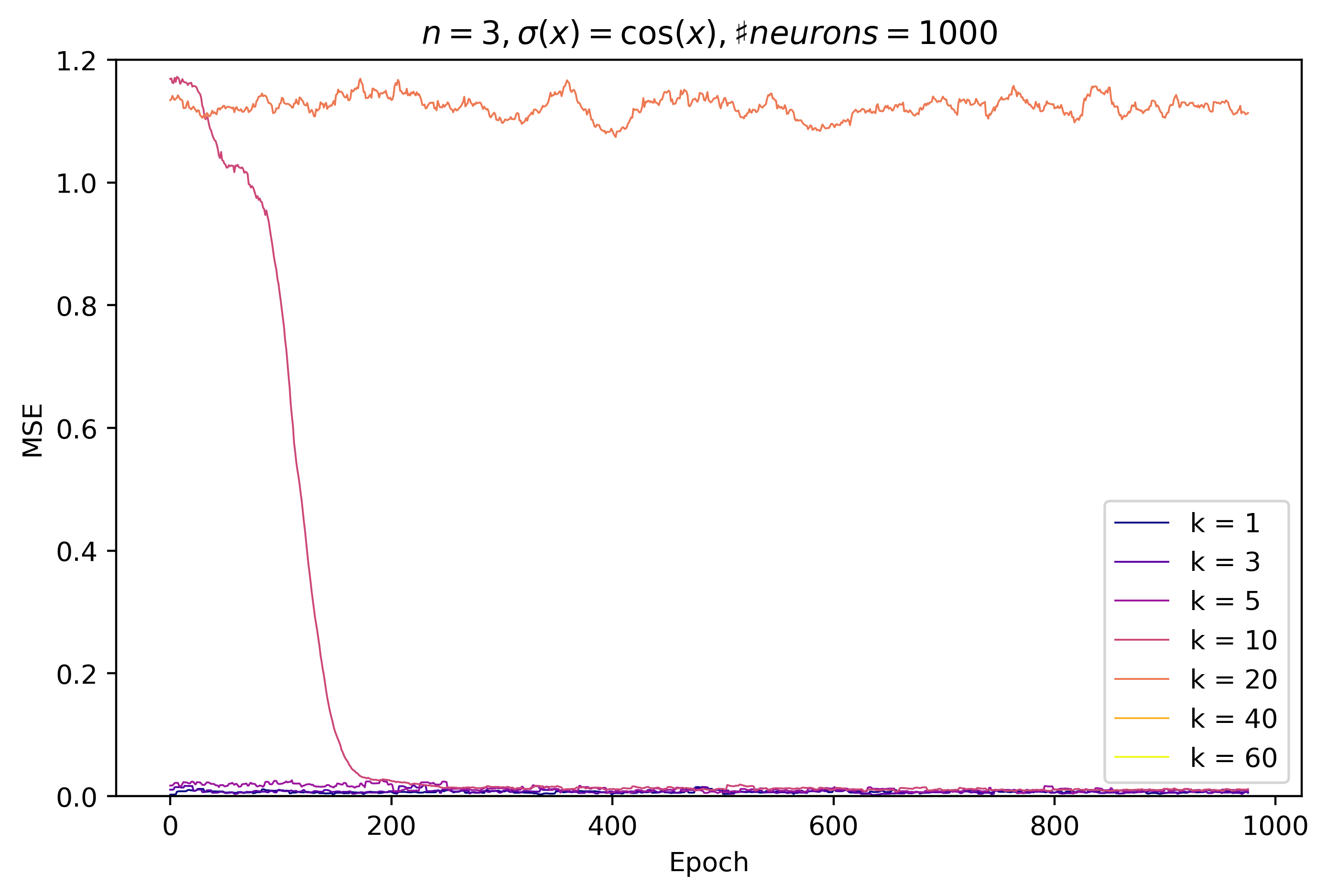}\includegraphics[width=.19\textwidth]{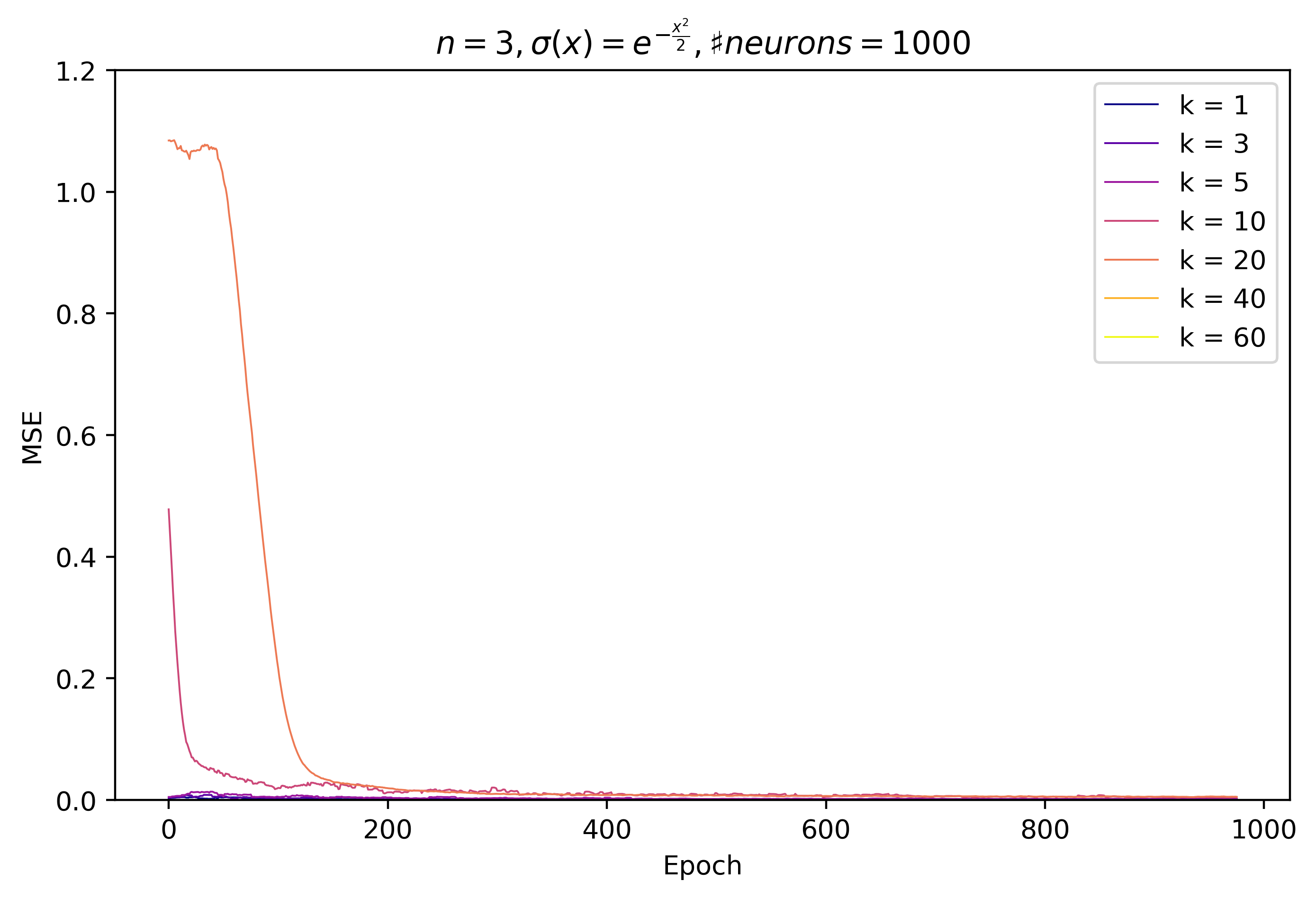}\includegraphics[width=.19\textwidth]{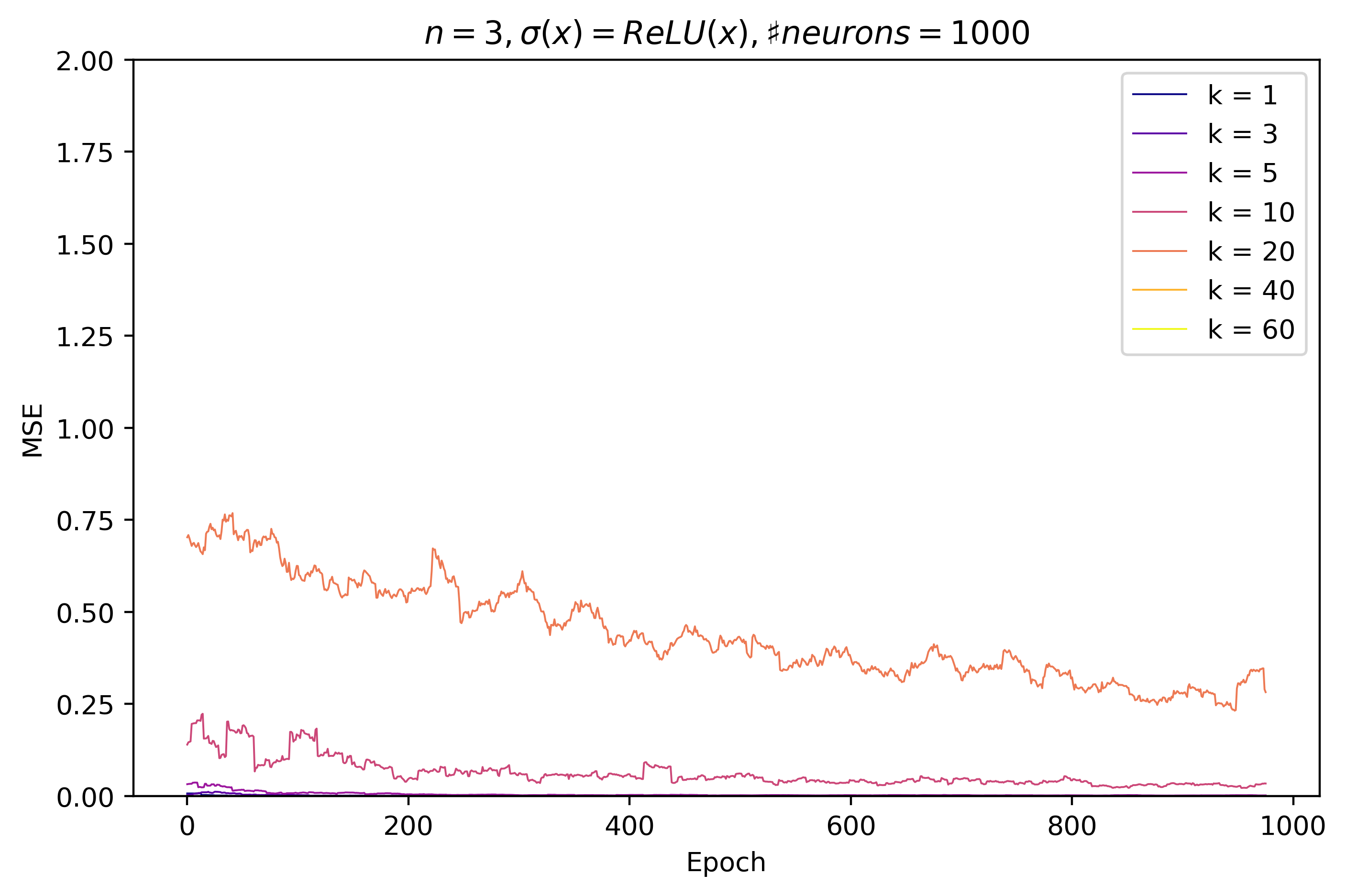}\includegraphics[width=.19\textwidth]{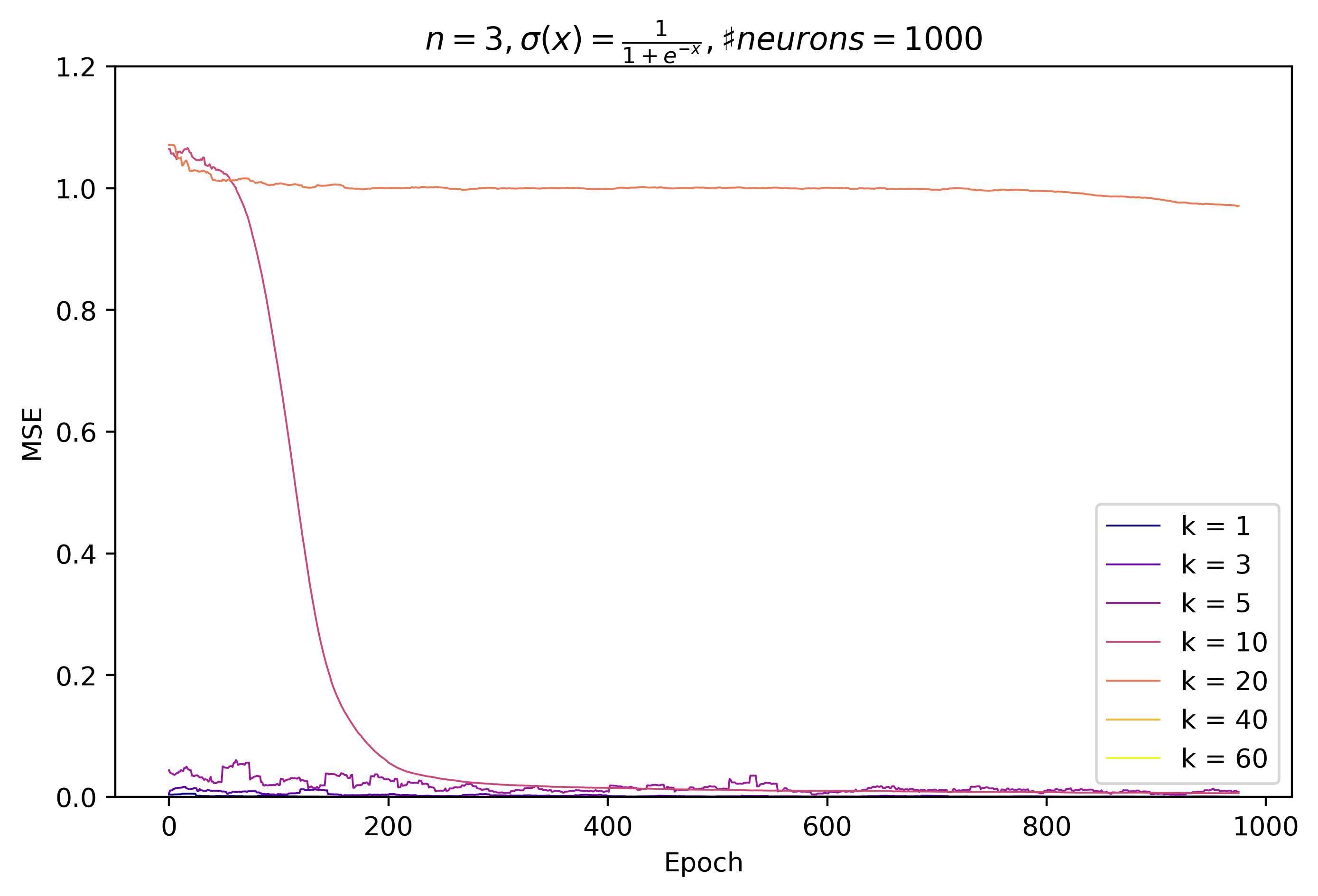}\includegraphics[width=.19\textwidth]{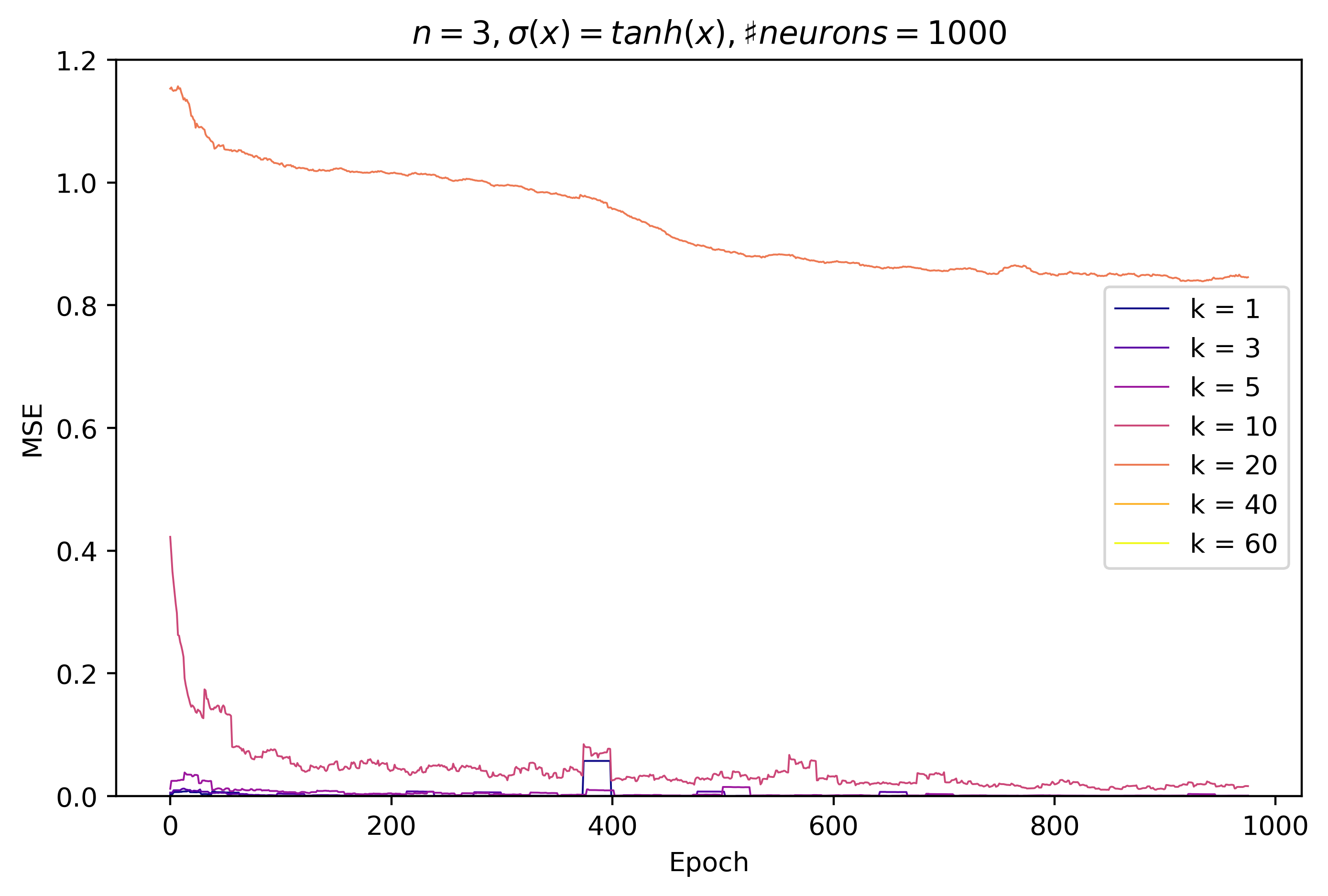}\qquad \\
\,\includegraphics[width=.19\textwidth]{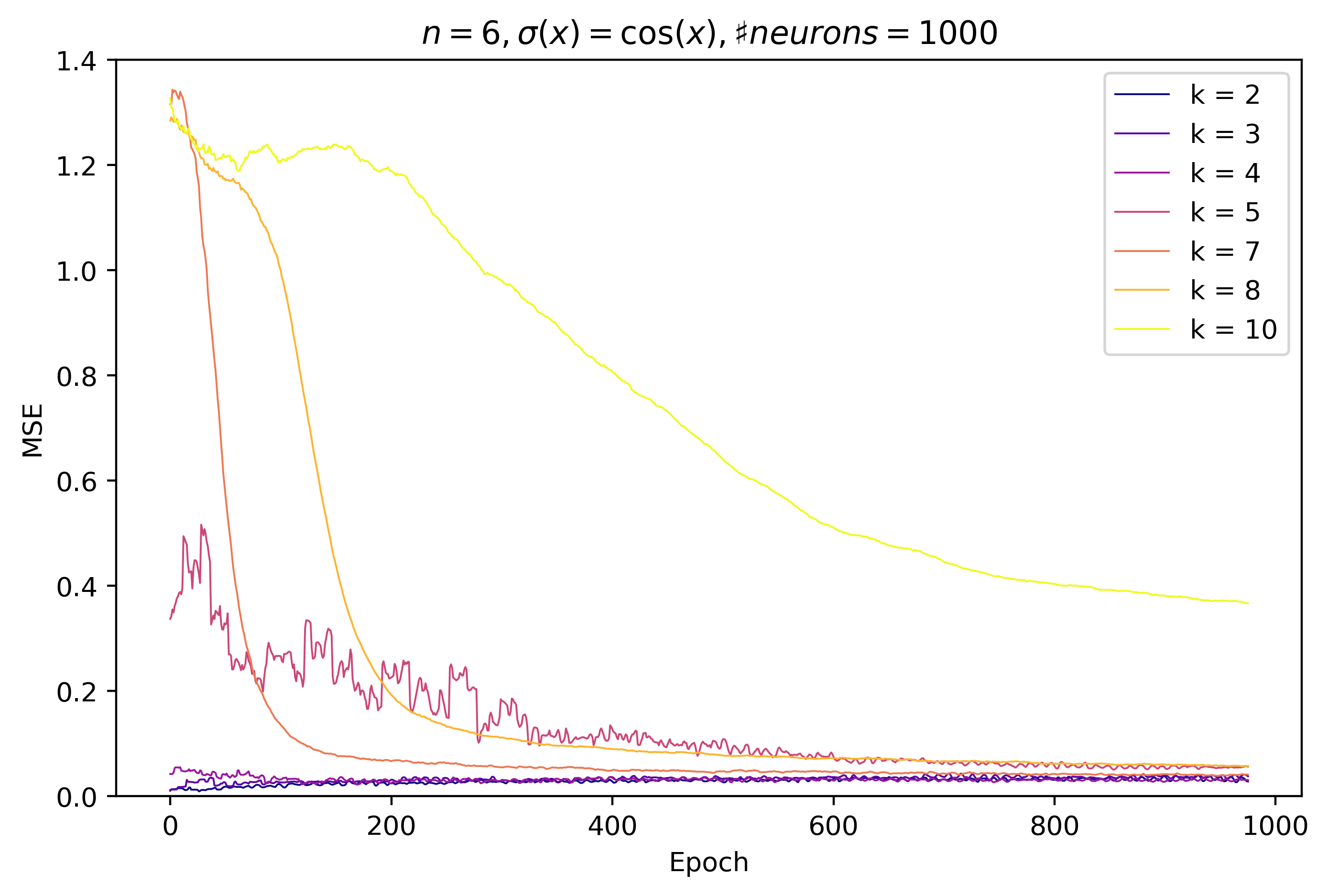}\includegraphics[width=.19\textwidth]{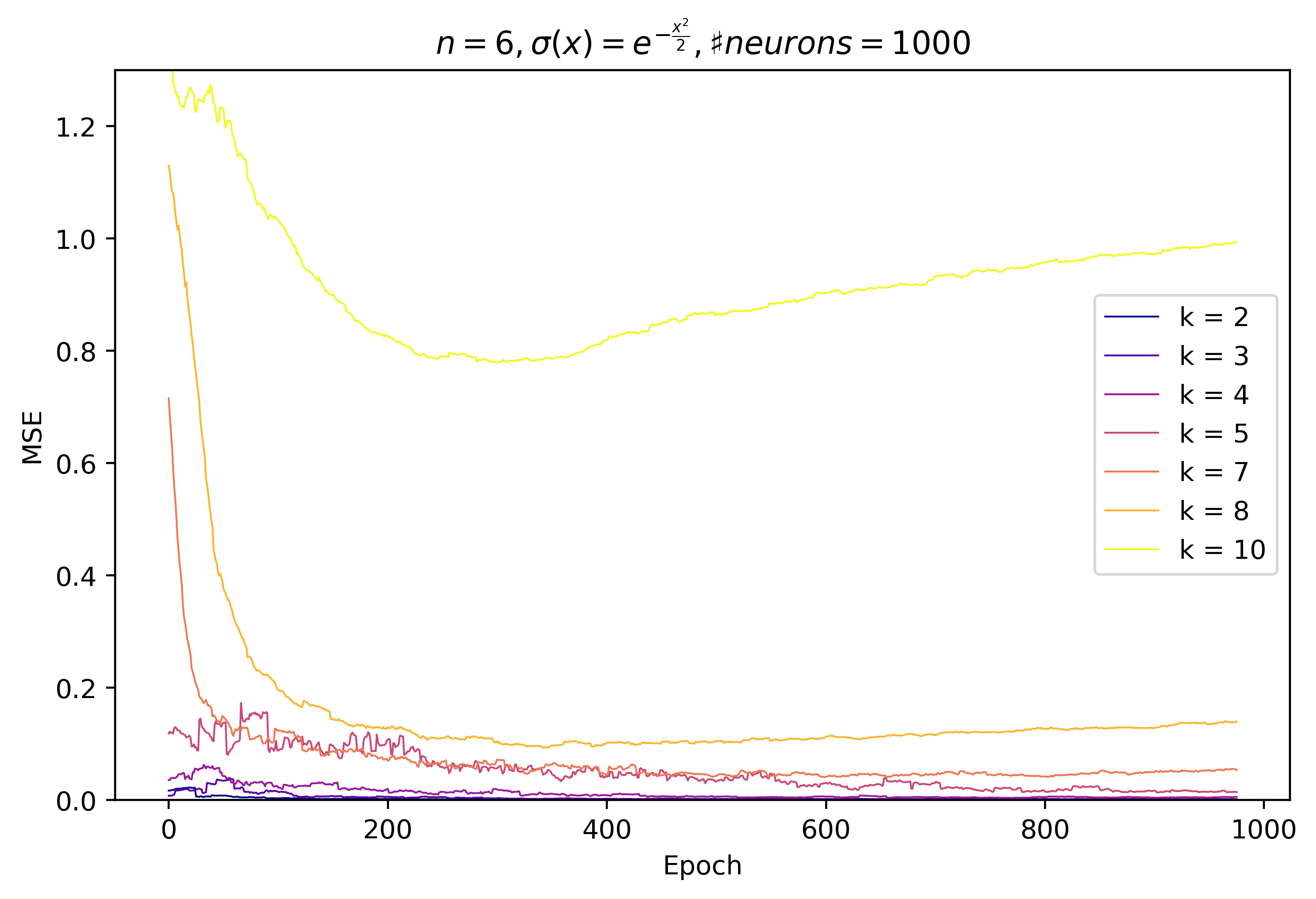}\includegraphics[width=.19\textwidth]{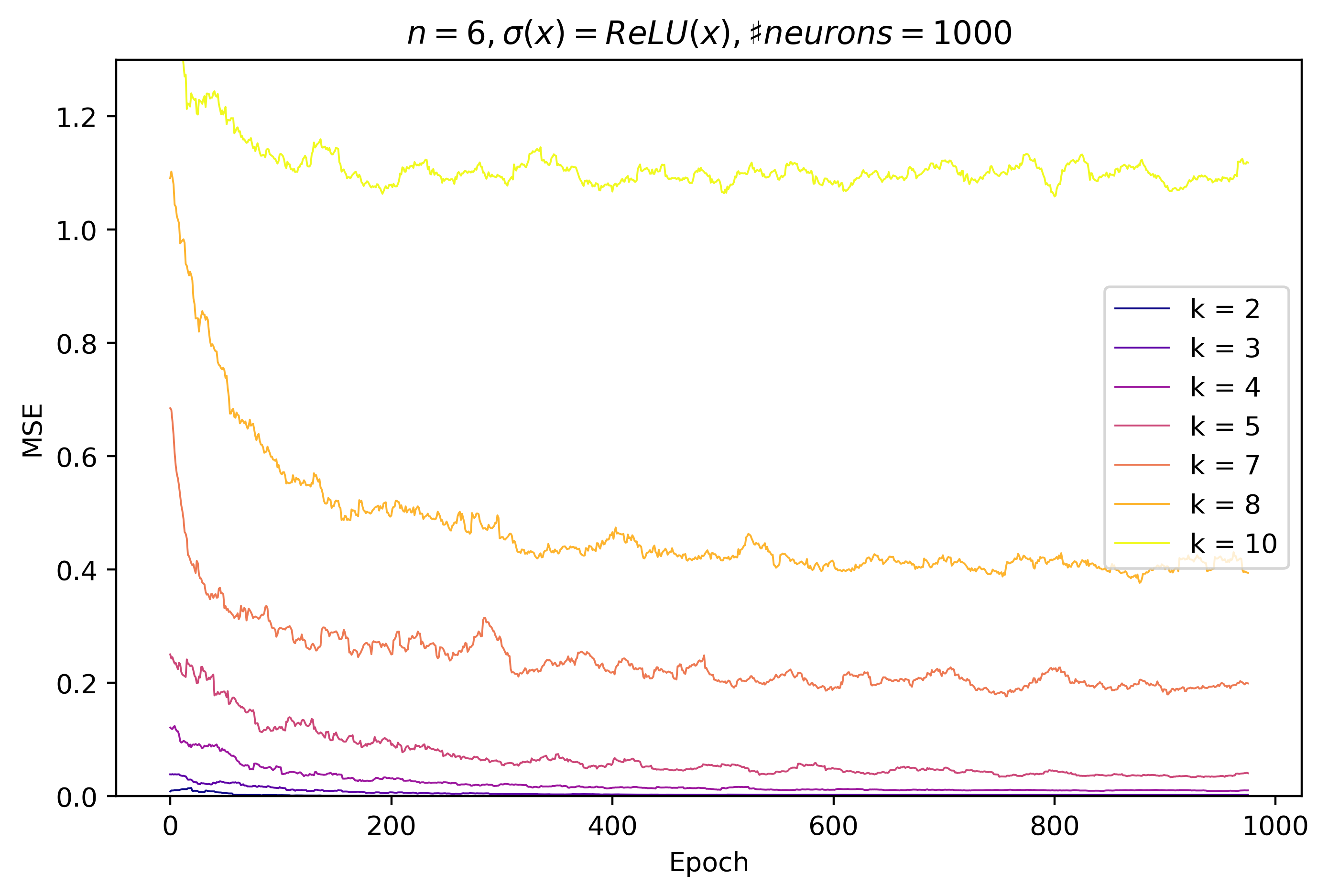}\includegraphics[width=.19\textwidth]{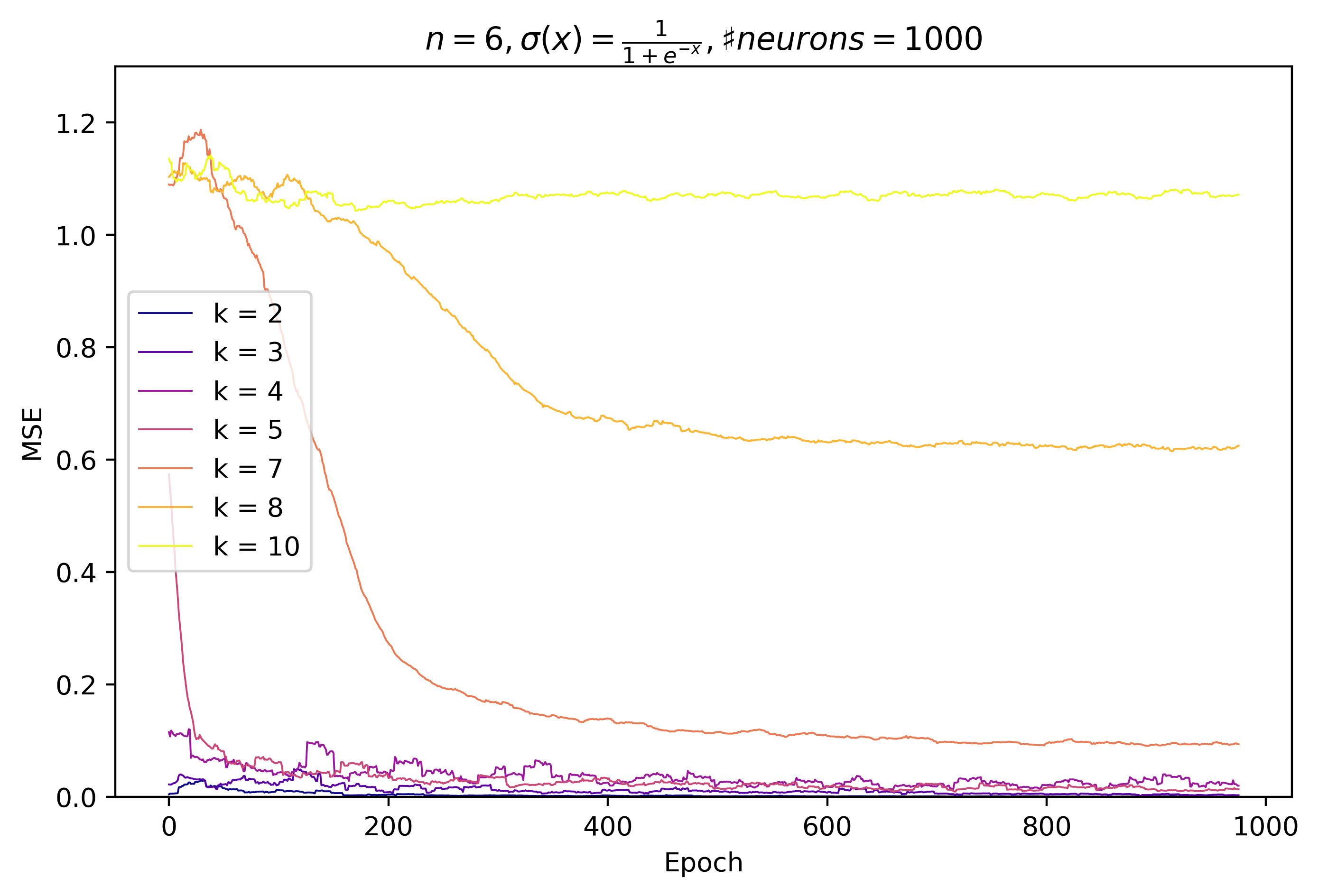}\includegraphics[width=.19\textwidth]{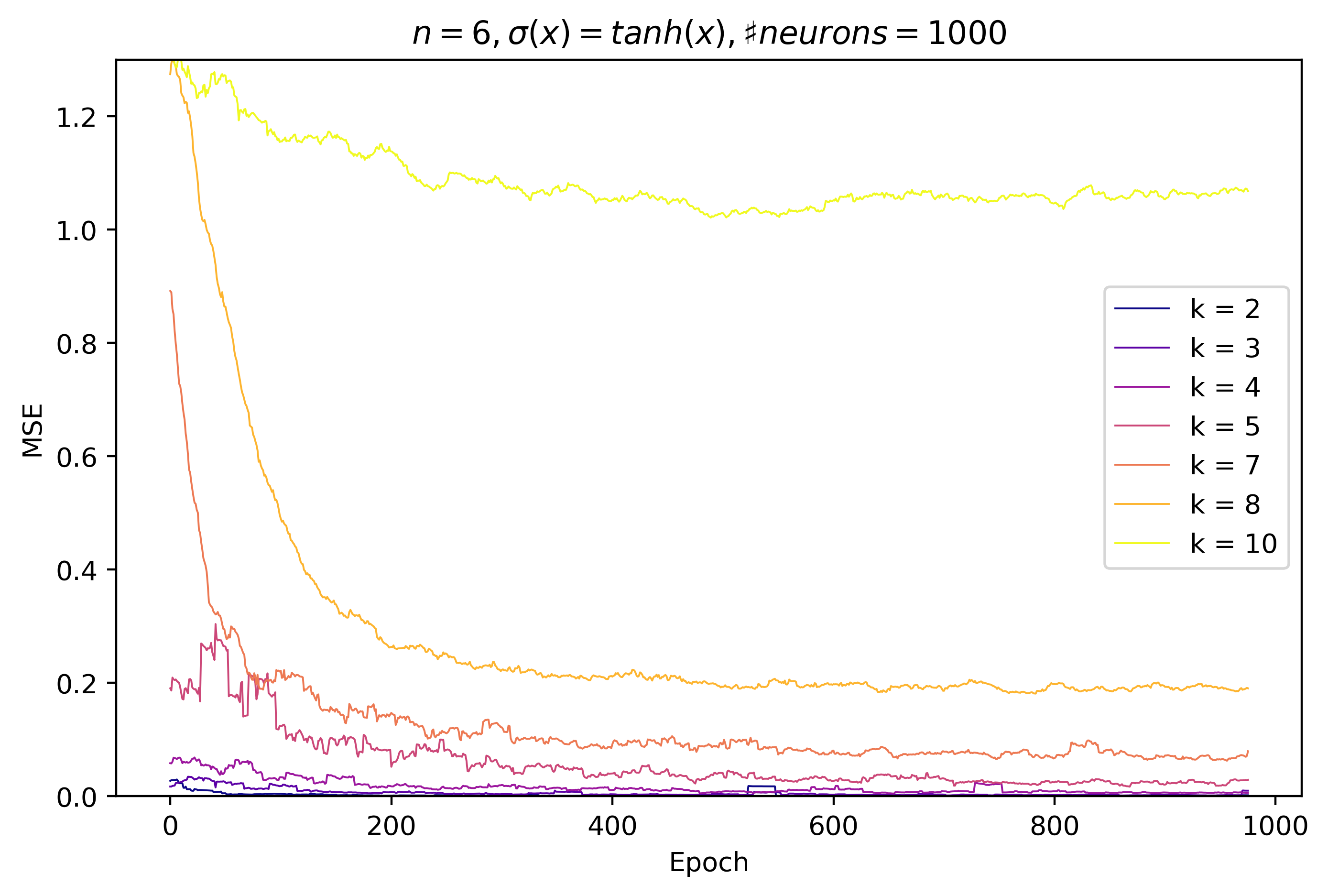}\qquad \\
\caption{MSE dynamics during learning $Y_k$ with a 2-NN (1000 hidden neurons) after an initialization of weights by RFM for $n=3,6$ (rows) and using different activation functions (columns). }
\label{fig:RFMopt}
\end{figure*}
\else
\fi

\ifTR
\begin{figure*}
\centering
\,\includegraphics[width=.3\textwidth]{_train_loss_activationexp_dim3}\hfill\includegraphics[width=.3\textwidth]{_train_loss_activationcos_dim3}\hfill\includegraphics[width=.3\textwidth]{_train_loss_activationrelu_dim3}\qquad \\
\,\includegraphics[width=.3\textwidth]{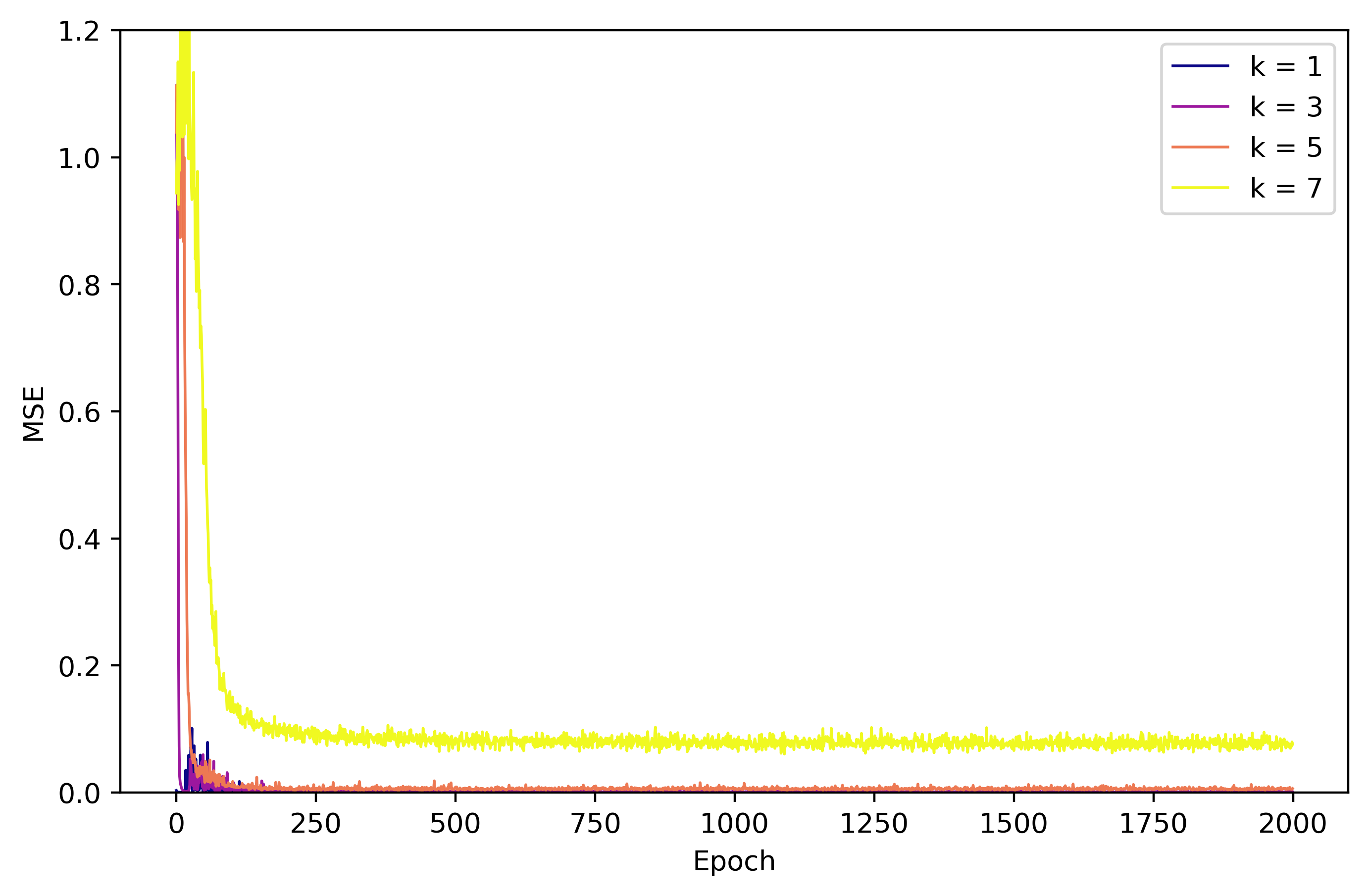}\hfill\includegraphics[width=.3\textwidth]{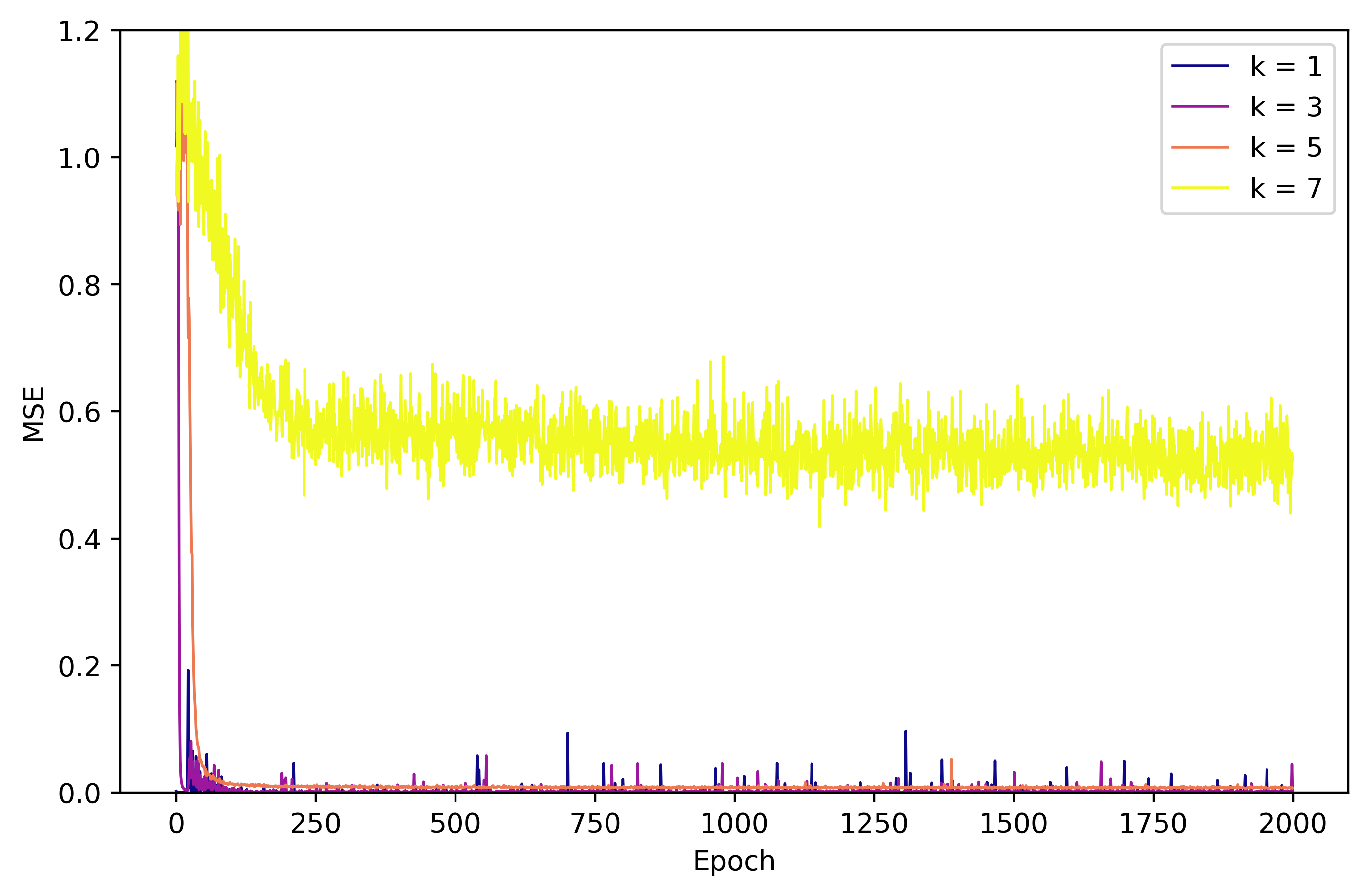}\hfill\includegraphics[width=.3\textwidth]{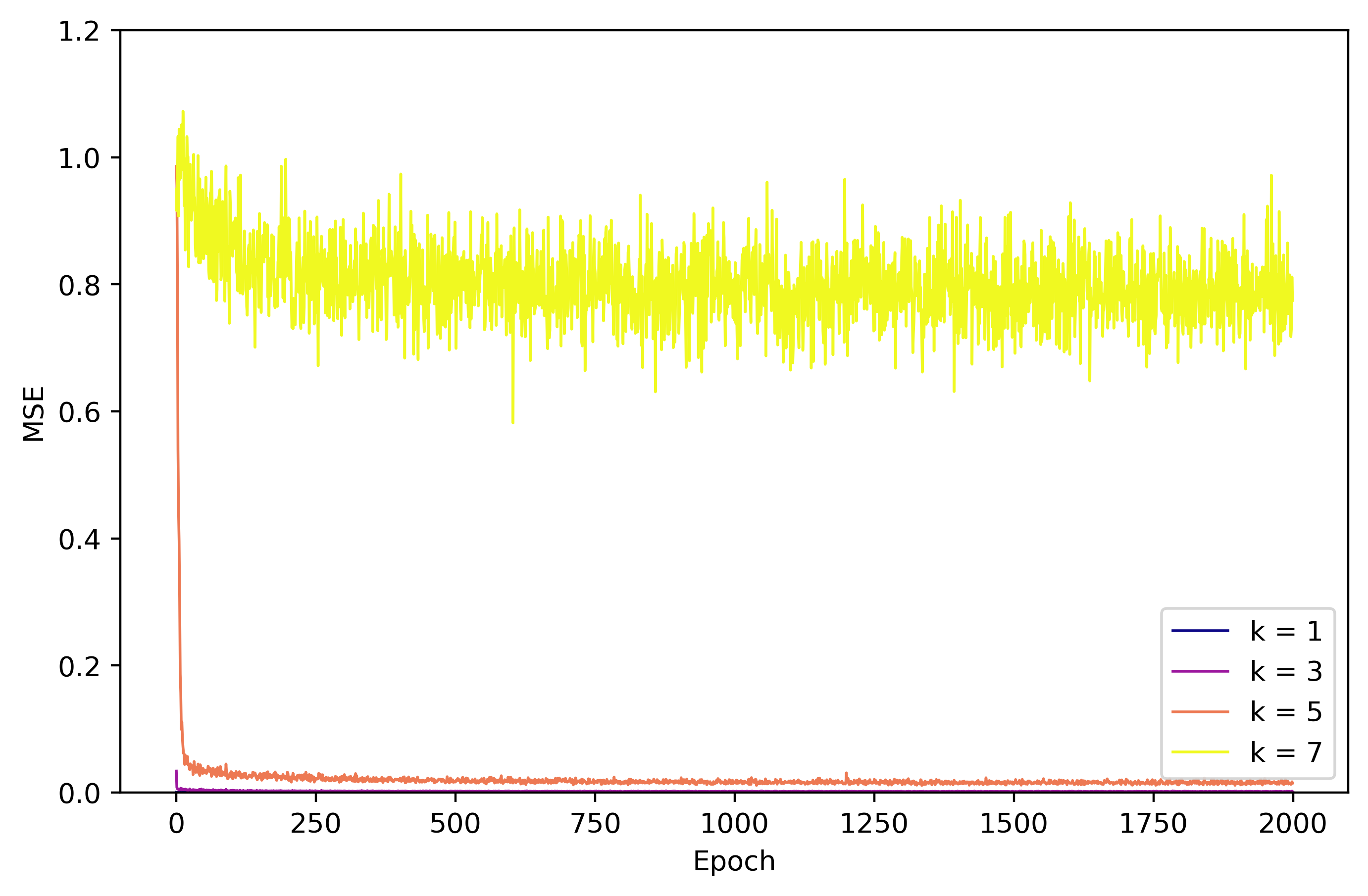}\qquad \\
\caption{MSE dynamics during learning $Y_k$ with a 2-NN (with a standard initialization) for $n=3,6$ (rows) and using activation functions (columns): (a) $\sigma(x) = e^{-\frac{x^2}{2}}$, (b) $\sigma(x) = \cos(x)$, (c) $\sigma(x) = {\rm ReLU}(x)$.}
\label{fig:spherical2}
\end{figure*}
\else
\fi
\end{document}